\newcommand{\kerP}{\mathbf{P}}
\newcommand\numberthis{\addtocounter{equation}{1}\tag{\theequation}}
\newcommand{\EE}{\mathbb{E}}
\newcommand{\calA}{\mathcal{A}}
\newcommand{\calF}{\mathcal{F}}
\newcommand{\calO}{\mathcal{O}}
\newcommand{\calS}{\mathcal{S}}
\newcommand{\PP}{\mathbb{P}}
\newcommand{\dd}{\mathrm{d}}
\newcommand{\RR}{\mathbb{R}}
\newcommand*\dotp{\mathpalette\dotp@{.5}}
\newcommand*\dotp@[2]{\mathbin{\vcenter{\hbox{\scalebox{#2}{$\m@th#1\bullet$}}}}}
\newcommand{\Var}{\mathrm{Var}}
\newcommand{\nn}{\nonumber}
\newcommand{\blue}[1]{{\color{black} #1}}
\newtheorem{definition}{Definition}[section]
\newtheorem*{definition*}{Definition}
\newtheorem*{fact*}{Fact} 
\newtheorem{theorem}[definition]{Theorem}
\newtheorem{lemma}[definition]{Lemma}
\newtheorem{corollary}[definition]{Corollary}
\newtheorem{assumption}[definition]{Assumption}
\newtheorem*{theorem*}{Theorem}
\newtheorem*{lemma*}{Lemma}
\newtheorem*{proposition*}{Proposition}
\newtheorem{corollary*}{Corollary}   
\newtheorem{assumption*}{Assumption}
\newtheorem{condition*}{Condition}
\newtheorem{exercise*}{Exercise}
\newtheorem*{example*}{Example}
\title{Variance-Reduced Off-Policy TDC Learning: Non-Asymptotic Convergence Analysis}
\author{%
  Shaocong Ma \\
  Department of ECE\\
  University of Utah\\
  Salt Lake City, UT 84112 \\
  \texttt{s.ma@utah.edu}\\
  \And
  Yi Zhou \\
  Department of ECE\\
  University of Utah\\
  Salt Lake City, UT 84112 \\
  \texttt{yi.zhou@utah.edu}\\
  \And
  Shaofeng Zou \\
  Department of EE\\
  University at Buffalo\\
  Buffalo, NY 14260 \\
  \texttt{szou3@buffalo.edu}\\}
\begin{document}

\doparttoc 
\faketableofcontents 
\part{} 

\maketitle

\begin{abstract}
   Variance reduction techniques have been successfully applied to temporal-difference (TD) learning and help to improve the sample complexity in policy evaluation. However, the existing work applied variance reduction to either the less popular one time-scale TD algorithm or the two time-scale GTD algorithm but with a finite number of i.i.d.\ samples, and both algorithms apply to only the on-policy setting. In this work, we develop a variance reduction scheme for the two time-scale TDC algorithm in the off-policy setting and analyze its non-asymptotic convergence rate over both i.i.d.\ and Markovian samples. In the i.i.d.\ setting, our algorithm \blue{matches the best-known lower bound $\Tilde{\calO}(\epsilon^{-1}$).} In the Markovian setting, our algorithm achieves the state-of-the-art sample complexity $\calO(\epsilon^{-1} \log {\epsilon}^{-1})$ that is near-optimal. Experiments demonstrate that the proposed variance-reduced TDC achieves a smaller asymptotic convergence error than both the conventional TDC and the variance-reduced TD. 
\end{abstract}

\section{Introduction}
\vspace{-5pt}

In reinforcement learning applications, we often need to evaluate the value function of a target policy by sampling the Markov decision process (MDP) generated by either the target policy (on-policy) or a certain behavior policy (off-policy) \cite{bertsekas1995neuro,dayan1992q,rummery1994line,Sutton2018,sun2020finite,wang2019multistep}. In the on-policy setting, temporal-difference (TD) learning \cite{sutton1988learning,Sutton2018} and Q-learning \cite{dayan1992q} algorithms have been developed with convergence guarantee. However, in the more popular off-policy setting, these conventional policy evaluation algorithms have been shown to possibly diverge under linear function approximation \cite{BAIRD1995}. To address this issue, \cite{maei2011gradient,sutton2009fast,sutton2008convergent} developed a family of gradient-based TD (GTD) algorithms that have convergence guarantee in the off-policy setting. In particular, the TD with gradient correction (TDC) algorithm has been shown to have superior performance and is widely used in practice.

Although TD-type algorithms achieve a great success in policy evaluation, their convergence suffer from a large variance caused by the stochastic samples obtained from a dynamic environment. A conventional approach that addresses this issue is to use a diminishing stepsize \cite{bhandari2018finite,srikant2019finite}, but it significantly slows down the convergence in practice.
Recently, several work proposed to apply the variance reduction technique developed in the stochastic optimization literature to reduce the variance of TD learning. Specifically, \cite{du2017stochastic} considered a convex-concave TD learning problem with a finite number of i.i.d.\ samples, and they applied the SVRG \cite{johnson2013accelerating} and SAGA \cite{defazio2014saga} variance reduction techniques to develop variance-reduced primal-dual batch gradient algorithms for solving the problem. In \cite{peng2019svrg}, two variants of SVRG-based GTD2 algorithms for i.i.d.\ samples were proposed to save the computation cost. While these work developed variance-reduced TD-type algorithms for i.i.d.\ samples, practical reinforcement learning applications often involve {\em non-i.i.d.} samples that are generated by an underlying MDP. In \cite{korda2015td}, the authors proposed a variance-reduced TD (VRTD) algorithm for Markovian samples in the on-policy setting. However, their analysis of the algorithm has a technical error and has been corrected in the recent work \cite{Xu2020Reanalysis}. To summarize, the existing developments of variance-reduced TD-type algorithms  consider only the on-policy setting, and only the one time-scale VRTD algorithm applies to Markovian samples. Therefore, it is very much desired to develop a two time-scale variance-reduced algorithm in the off-policy setting for Markovian samples, which constitutes to \textbf{the goal of this paper:} we develop two variance-reduced TDC algorithms in the off-policy setting for i.i.d.\ samples and Markovian samples, respectively, and analyze their non-asymptotic convergence rates. We summarize our contributions as follows.
\vspace{-3pt}
\subsection{Our Contributions}
\vspace{-3pt}
We develop two variance-reduced TDC algorithms (named VRTDC) respectively for i.i.d.\ samples and Markovian samples in the off-policy setting and analyze their non-asymptotic convergence rates as well as sample complexities. To the best of our knowledge, our work provides the first study on variance reduction for two time-scale TD learning over Markovian samples.

For i.i.d.\ samples with constant step sizes $\alpha, \beta$, we show that VRTDC converges at a linear rate to a neighborhood of the optimal solution with an asymptotic convergence error $\calO(\beta M^{-1}+\beta^4)$, where $M$ denotes the batch size of the outer-loops. Consequently, to achieve an $\epsilon$-accurate solution, the required sample complexity for VRTDC is \blue{$\calO(\epsilon^{-1} \log{\epsilon}^{-1})$, which matches VRTD for i.i.d.\ samples \cite{Xu2020Reanalysis}}. Also, the tracking error of VRTDC diminishes linearly with an asymptotic error \blue{$\calO(  {M}^{-1}+\beta^3)$ and has a corresponding sample complexity $\calO(\epsilon^{-1} \log{\epsilon}^{-1})$}. For Markovian samples with constant step sizes $\alpha, \beta$, we show that VRTDC converges at a linear rate to a neighborhood of the optimal solution with an asymptotic convergence error $\calO({M}^{-1}+\beta^2)$, and the required sample complexity to achieve an $\epsilon$-accurate solution is  $\calO(\epsilon^{-1} \log {\epsilon}^{-1})$, which matches the best existing result of VRTD \cite{Xu2020Reanalysis} and TDC \cite{kaledin2020finite} and nearly matches the theoretical lower bound $\calO(\epsilon^{-1})$ \cite{kaledin2020finite}. Also, the tracking error of VRTDC diminishes linearly with an asymptotic error $\calO( {M}^{-1}+\beta^3)$ and has a corresponding sample complexity $\calO(\epsilon^{-1} \log{\epsilon}^{-1})$.
Furthermore, our experiments on the Garnet problem and frozen lake game demonstrate that VRTDC achieves a smaller asymptotic convergence error than both the conventional TDC and the variance-reduced TD. 

Our analysis of VRTDC requires substantial developments of bounding techniques. Specifically, we develop much refined bounds for the tracking error and the convergence error via a recursive refinement strategy: we first develop a preliminary bound for the tracking error $\|\tilde{z}^{(m)}\|^2$ and then use it to develop a preliminary bound for the convergence error $\|\theta^{(m)}-\theta^\ast\|^2$. Then, by leveraging the relation between tracking error and convergence error induced by the two time-scale updates of VRTDC, we further utilize the preliminary bound for the convergence error to obtain a refined bound for the tracking error. Finally, we apply the refined bound for the tracking error to develop a refined bound for the convergence error by leveraging the two time-scale updates. These refined bounds are the key to establish the reported sample complexities of VRTDC.

\vspace{-3pt}
\subsection{Related Work}
\vspace{-3pt}
\paragraph{Off-policy two time-scale TDC and SA:} Two time-scale policy evaluation algorithms such as TDC and GTD2 were first introduced in \cite{sutton2008convergent}, where the asymptotic convergence of both algorithms with i.i.d.\ samples were established. Their non-asymptotic convergence rates were established in \cite{dalal2017finite} as special cases of a two time-scale linear stochastic approximation (SA) algorithm. Recently, the non-asymptotic convergence analysis of TDC and two-time scale linear SA over Makovian samples were established in \cite{xu2019two} and \cite{kaledin2020finite}, respectively. 

\paragraph{TD learning with variance reduction:} In the existing literature, two settings of variance reduction for TD learning have been considered. The first setting considers evaluating the value function based on a fixed number of samples. In this setting, it is preferred to use the batch TD algorithm \cite{lange2012batch}. \cite{du2017stochastic} rewrote the original MSPBE minimization problem into a convex-concave saddle-point optimization problem and applied SVRG and SAGA to the primal-dual batch gradient algorithm. The second setting considers online policy evaluation problem, where the trajectory follows from an MDP. In \cite{bhandari2018finite}, the variance-reduced TD algorithm was introduced for solving the MSPBE minimization problem. \cite{Xu2020Reanalysis} pointed out a technical error in the analysis of \cite{bhandari2018finite} and provided a correct non-asymptotic analysis for the variance-reduced TD algorithm over Markovian samples. 

\section{Preliminaries: Off-Policy TDC with Linear Function Approximation}
\vspace{-5pt}
In this section, we provide an overview of TDC learning with linear function approximation in the off-policy setting and define the notations that are used throughout the paper.
\vspace{-3pt}
\subsection{Off-Policy Value Function Evaluation} 
\vspace{-3pt}
In reinforcement learning, an agent interacts with the environment via a Markov decision process (MDP) that is denoted as $(\calS, \calA, \kerP, r, \gamma)$. Specifically, $\calS$ denotes a state space and $\calA$ denotes an action space, both of which are assumed to have finite number of elements. Then, a given policy $\pi$ maps a state $s\in \calS$ to a certain action in $\calA$ following a conditional distribution $\pi(\cdot|s)$. The associated Markov chain is denoted as $p(s^\prime|s):=\sum_{a\in\calA}\kerP(s^\prime|s,a)\pi(a|s)$ and is assumed to be ergodic, and the induced stationary distribution  is denoted as $\mu_{\pi}(s^\prime) :=\sum_{s}p(s^\prime|s)\mu_{\pi}(s)$. 

In the off-policy setting, the action of the agent is determined by a behavior policy $\pi_b$, which controls the behavior of the agent as follows: Suppose the agent is in a certain state $s_t$ at time-step $t$ and takes an action $a_t$ based on the policy $\pi_b(\cdot|s_t)$. Then, the agent transfers to a new state $s_{t+1}$ in the next time-step according to the transition kernel $\kerP(s_{t+1} |s_t,a_t)$ and receives a reward $r_t=r(s_t, a_t, s_{t+1})$. The goal of off-policy value function evaluation is to use samples of the MDP to estimate the following value function $V^\pi(s)$ of the target policy $\pi$.  
\[V^\pi(s)=\mathbb{E}\Big[\sum_{t=0}^{\infty}\gamma^t r(s_t,a_t, s_{t+1})|s_0=s,\pi\Big],\]  
where $\gamma\in(0,1)$ is a discount factor. Define the Bellman operator $T^\pi$ for any function $\xi(s)$ as $T^\pi \xi(s):= r^\pi(s)+\gamma\mathbb{E}_{s^\prime |s} \xi(s^\prime)$, where $r^\pi(s)=\mathbb{E}_{a, s^\prime|s}r(s,a,s^\prime)$ is the expected reward of the Markov chain induced by the policy $\pi$. It is known that the value function $V^\pi(s)$ is the unique fixed point of the Bellman operator $T^\pi$, i.e., $V^\pi(s) = T^\pi V^\pi(s)$. 


\vspace{-3pt}
\subsection{TDC Learning with Linear Function Approximation} \label{subsec: TDC}
\vspace{-3pt}
In practice, the state space $\calS$ usually contains a large number of states that induces much computation overhead in policy evaluation. To address this issue, a popular approach is to approximate the value function via linear function approximation. Specifically, given a set of feature functions $\phi_i: \calS \to \RR$ for $i=1,2,\dots,d$ and define $\phi=[\phi_1,...,\phi_d]^\top$,
the value function of a given state $s$ can be approximated via the linear model
$\widehat{V}_\theta(s) := \phi(s)^\top \theta,$
where $\theta=[\theta_1, \theta_2, ..., \theta_d]^\top$ denotes all the model parameters. Suppose the state space includes states $s_1,...,s_n$, we denote the total value function as $\widehat{V}_\theta:=[\widehat{V}_\theta(s_1), ..., \widehat{V}_\theta(s_n)]^\top=\Phi\theta$, where $\Phi:=[\phi(s_1), ..., \phi(s_n)]^\top$.
In TDC learning, the goal is to evaluate the value function under linear function approximation via minimizing the following mean square projected Bellman error (MSPBE).
\begin{align*}
\text{MSPBE}(\theta) := \EE_{\mu_b}\| \widehat{V}_\theta - \Pi_{R_{\theta}} T^{\pi} \widehat{V}_\theta\|^2,
\end{align*} 
where $\Pi_{R_{\theta}}$ is the projection operator to the Euclidean ball with radius $R_{\theta}$.


In the off-policy TDC learning, we sample a trajectory of the MDP induced by the behavior policy $\pi_b$ and obtain samples $\{s_0, a_0, r_0, s_1, \dots, s_t, a_t, r_t, s_{t+1}, ...\}$. For the $t$-th sample $x_t=(s_t,a_t,r_t,s_{t+1})$, we define the following parameters 
\begin{align}\label{eq:def_abc}
    A_t:=  \rho(s_{t}, a_{t}) \phi(s_{t})(\gamma\phi(s_{t+1}) - \phi(s_{t}))^\top, \quad b_t:=  r_t \rho(s_{t}, a_{t})\phi(s_{t}),\\
    B_t :=  -\gamma  \rho(s_{t}, a_{t}) \phi(s_{t+1})\phi(s_{t})^\top, \quad
    C_t:=-\phi(s_{t})\phi(s_{t})^\top \nn, 
\end{align}
 where $\rho(s,a):= \frac{\pi(a|s)}{\pi_b(a|s)}$ is the importance sampling ratio. 
Then, with learning rates $\alpha, \beta>0$ and initialization parameters $\theta_0, w_0$, the two time-scale off-policy TDC algorithm takes the following recursive updates for $t=0,1,2,...$
\begin{equation}
\text{(Off-Policy TDC):}
\left\{
\begin{aligned}
\theta_{t+1} &= \theta_t + \alpha (A_t \theta_t + b_t + B_t w_t),  \\
w_{t+1} &= w_t + \beta (A_t \theta_t + b_t + C_t w_t).
\end{aligned}
\right.
\end{equation}
Also, for an arbitrary sample $(s,a,r,s')$, we define the following expectation terms for convenience of the analysis: $A:=\EE_{\mu_{\pi_b}} [ \rho(s,a) \phi(s)(\gamma\phi(s') - \phi(s))^\top ]$, $B:=  -\gamma\EE_{\mu_{\pi_b}}[ \rho(s,a) \phi(s')\phi(s)^\top],$ $C:=-\EE_{\mu_{\pi_b}}[ \phi(s)\phi(s)^\top]$ and $b:= \EE_{\mu_{\pi_b}}\left[r \rho(s,a)\phi(s) \right].$ 

With these notations, we introduce the following standard assumptions for our analysis \cite{Xu2020Reanalysis}.
\begin{assumption}[Problem solvability]\label{ass: solvability}
	The matrix $A$ and $C$ are non-singular. 
\end{assumption}

\begin{assumption}[Boundedness]\label{ass: boundedness} For all states $s,s'\in \calS$ and all actions $a\in \calA$,
\begin{enumerate}[leftmargin=*,noitemsep,topsep=0pt]
    \item The feature function is uniformly bounded as $\|\phi(s)\|\leq 1$;
    \item The reward is uniformly bounded as $r(s,a,s') \le r_{\max}$;
    \item The importance sampling ratio is uniformly bounded as
	$\rho(s,a) \leq \rho_{\max}$.
\end{enumerate}
\end{assumption}

\begin{assumption}[Geometric ergodicity]\label{ass: ergodicity}
	There exists $\kappa>0$ and $\rho \in (0,1)$ such that 	for all $t \geq 0$,
	\begin{align}
	    \sup_{s\in \calS} d_{\text{TV}}\left(  \PP(s_t \in \cdot | s_0 = s), \mu_{\pi_b}  \right) \leq \kappa \rho^t, \nonumber
	\end{align}
	where $d_{\text{TV}}(P,Q)$ denotes the total-variation distance between the probability measures $P$ and $Q$.
\end{assumption}

 Under \Cref{ass: solvability}, the optimal parameter $\theta^\ast$ can be written as $\theta^\ast = - A^{-1}b$.


\section{Variance-Reduced TDC for I.I.D.\ Samples}
\vspace{-5pt}
In this section, we propose a variance reduction scheme for the off-policy TDC over i.i.d.\ samples and analyze its non-asymptotic convergence rate.

\begin{algorithm}
	\SetAlgoLined 
	\textbf{Input:} learning rates $\alpha,\beta$, batch size $M$, initial parameters $\tilde{\theta}^{(0)},\tilde{w}^{(0)}$.
	
	\For{$m=1,2,\dots$}{
		Initialize $\theta^{(m)}_0 = \tilde{\theta}^{(m-1)}, w^{(m)}_0 = \tilde{w}^{(m-1)}$.
		
		Query a set $B_m$ of $M$ independent samples  from $\mu_{\pi_b}$ and compute
		\begin{align}
		    \widetilde{G}^{(m)} \!=\! \frac{1}{M} \!\!\sum_{x\in B_m}\!\!\! \big( A_x \tilde{\theta}^{(m-1)} \!+\! b_x \!+\! B_x \tilde{w}^{(m-1)}\big),
		    ~\widetilde{H}^{(m)} \!=\! \frac{1}{M}\!\! \sum_{x\in B_m} \!\!\!\big( A_x \tilde{\theta}^{(m-1)} \!+\! b_x \!+\! C_x \tilde{w}^{(m-1)} \big). \nonumber
		\end{align}
	
		\For{$t=0,1,\dots, M-1$}{
			Query a new sample $x_t^{(m)}$ from $\mu_{\pi_b}$ and compute
			\begin{align}
			\theta_{t+1}^{(m)} &= \Pi_{R_\theta}\Big[ \theta_{t}^{(m)} + \alpha\big(G_{t}^{(m)}(\theta_{t}^{(m)}, w_{t}^{(m)})  - G_{t}^{(m)}(\tilde{\theta}^{(m-1)}, \tilde{w}^{(m-1)})  +  \widetilde{G}^{(m)}\big)  \Big], \nonumber\\
			w_{t+1}^{(m)} &= \Pi_{R_w} \Big[ w_{t}^{(m)} + \beta \big(   H_{t}^{(m)}(\theta_{t}^{(m)}, w_{t}^{(m)})  - H_{t}^{(m)}(\tilde{\theta}^{(m-1)}, \tilde{w}^{(m-1)})  +  \widetilde{H}^{(m)}\big) \Big], \nonumber
			\end{align}
		where for any $\theta, w$, 
		$
		    G_{t}^{(m)}(\theta, w) =  A^{(m)}_t \theta + b^{(m)}_t + B^{(m)}_t w, ~~H_{t}^{(m)}(\theta, w) =  A^{(m)}_t \theta + b^{(m)}_t + C^{(m)}_t w. 
		$
		}
		Set $\tilde{\theta}^{(m)} = \frac{1}{M}\sum_{t=0}^{M-1} \theta_{t}^{(m-1)}, ~\tilde{w}^{(m)} = \frac{1}{M}\sum_{t=0}^{M-1} w_{t}^{(m-1)}$.
	}
	\textbf{Output:} $\tilde{\theta}^{(m)}$.
	\caption{Variance-Reduced TDC for I.I.D. Samples}
	 \label{alg: iid}
\end{algorithm}

\vspace{-3pt}
\subsection{Algorithm Design}
\vspace{-3pt}
In the i.i.d.\ setting, we assume that one can query independent samples $x=(s,a,r,s')$ from the stationary distribution $\mu_{\pi_b}$ induced by the behavior policy $\pi_b$. In particular, we define $A_x, B_x, C_x, b_x$ based on the sample $x$ and define $A_t^{(m)}, B_t^{(m)}, C_t^{(m)}, b_t^{(m)}$ based on the sample $x_t^{(m)}$ in a similar way as how we define $A_t, B_t, C_t, b_t$ based on the sample $x_t$ in \cref{eq:def_abc}. 

We then propose the variance-reduced TDC algorithm for i.i.d.\ samples in \Cref{alg: iid}. 
To elaborate, the algorithm runs for $m$ outer-loops, each of which consists of $M$ inner-loops. Specifically, in the $m$-th outer-loop, we first initialize the parameters $\theta_0^{(m)}, w_0^{(m)}$ with $\tilde{\theta}^{(m-1)}, \tilde{w}^{(m-1)}$, respectively, which are the output of the previous outer-loop. Then, we query $M$ independent samples from $\mu_{\pi_b}$ and compute a pair of batch pseudo-gradients $\widetilde{G}^{(m)}, \widetilde{H}^{(m)}$ to be used in the inner-loops. In the $t$-th inner-loop, we query a new independent sample $x_t^{(m)}$ and compute the corresponding stochastic pseudo-gradients $G_t^{(m)}(\theta_t^{(m)}, w_t^{(m)}), G_t^{(m)}(\tilde{\theta}^{(m-1)}, \tilde{w}^{(m-1)})$. Then, we update the parameters $\theta_{t+1}^{(m)}$ and  $w_{t+1}^{(m)}$ using the batch pseudo-gradient and stochastic pseudo-gradients via the SVRG variance reduction scheme. At the end of each outer-loop, we set the parameters $\tilde{\theta}^{(m)}, \tilde{w}^{(m)}$ to be the average of the parameters $\{\theta_t^{(m-1)}, w_t^{(m-1)}\}_{t=0}^{M-1}$ obtained in the inner-loops, respectively. 
We note that the updates of $\theta_{t+1}^{(m)}$ and  $w_{t+1}^{(m)}$ involve two projection operators, which are widely adopted in the literature, e.g., \cite{bhandari2018finite,bubeck2015convex,dalal2018finite,dalal2017finite,kushner2010stochastic,lacoste2012simpler,xu2019two,zou2019finite}. Throughout this paper, we assume the radius $R_\theta,R_w$ of the projected Euclidean balls satisfy that $R_\theta \geq \max\{\|A\| \|b\|, \|\theta^\ast\| \}$, $R_w \geq 2 \|C^{-1} \|\|A\|R_\theta$.

Compare to the conventional variance-reduced TD for i.i.d.\ samples that applies variance reduction to only the one time-scale update of $\theta_{t}^{(m)}$ \cite{korda2015td,Xu2020Reanalysis}, our VRTDC for i.i.d.\ samples applies variance reduction to both $\theta_{t}^{(m)}$ and $w_{t}^{(m)}$ that are in two different time-scales. As we show in the following subsection, such a two time-scale variance reduction scheme leads to an improved sample complexity of VRTDC over that of VRTD \cite{Xu2020Reanalysis}.


\vspace{-3pt}
\subsection{Non-Asymptotic Convergence Analysis}
\vspace{-3pt}

The following theorem presents the convergence result of VRTDC for i.i.d.\ samples. Due to space limitation, we omit other constant factors in the bound. The exact bound can be found in \Cref{appendix: iid proof}.

\begin{theorem}\label{thm: iid}
	Let Assumptions \ref{ass: solvability}, \ref{ass: boundedness} and \ref{ass: ergodicity} hold. Connsider the VRTDC for i.i.d.\ samples in \Cref{alg: iid}. If the learning rates $\alpha$, $\beta$ and the batch size $M$ satisfy the conditions specified in \cref{eq: lr iid 1,eq: lr iid 2,eq: lr iid 3,eq: lr iid 4,eq: lr iid 5,eq: lr iid 6,eq: lr iid 7} (see the appendix) and $\beta = \calO(\alpha^{\frac{2}{3}})$, then, the output of the algorithm satisfies
 \blue{
	\begin{align}
	     \EE  \| \tilde{\theta}^{(m)} - \theta^\ast\|^2 &\leq \calO(m D^m+\beta^4+ {M}^{-1}), \nonumber
	\end{align}}
	where $D= \frac{12}{\lambda_{\widehat{A}}}\Big\{  \frac{1}{\alpha M} +  \alpha  \cdot 5(1+\gamma)^2\rho^2_{\max}\big(1 + \frac{\gamma \rho_{\max}}{ \min |\lambda(C)|}\big)^2   +     \frac{\alpha^2}{\beta^2} \cdot C_1  + \beta \cdot C_2   \Big\}\in (0,1)$ (see \Cref{supp: notation} for the definitions of $\lambda_{\widehat{A}}, C_1, C_2$). 
	In particular, choose $\alpha = \calO(\epsilon^{\frac{3}{5}}), \beta = \calO({\epsilon^{\frac{2}{5}}})$, $m = \calO(\log{\epsilon}^{-1})$ and \blue{$M = \calO(\epsilon^{-1})$}, then the {total sample complexity} to achieve $\EE \| \tilde{\theta}^{(m)} - \theta^\ast\|^2\le \epsilon$ is in the order of \blue{$\calO({\epsilon^{-1}}\log{\epsilon}^{-1})$}.
\end{theorem}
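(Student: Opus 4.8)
The plan is to control, across the outer-loop index $m$, two coupled quantities: the \emph{convergence error} $\EE\|\tilde\theta^{(m)}-\theta^\ast\|^2$ of the slow variable and the \emph{tracking error} $\EE\|\tilde z^{(m)}\|^2$, where I set $\tilde z^{(m)} := \tilde w^{(m)} - w^\ast(\tilde\theta^{(m)})$ with $w^\ast(\theta) := -C^{-1}(A\theta+b)$ the time-scale-separated target of the fast variable (note $w^\ast(\theta^\ast)=0$, since $A\theta^\ast+b=0$). First I would record the variance-reduction properties of the SVRG pseudo-gradients. Since the samples are i.i.d.\ and $A_x,B_x,C_x,b_x$ are linear in the sample, conditioned on the outer-loop history the inner increment is an unbiased estimate of the population field $G(\theta_t^{(m)},w_t^{(m)})$, $H(\theta_t^{(m)},w_t^{(m)})$ up to the batch error $\widetilde G^{(m)}-G(\tilde\theta^{(m-1)},\tilde w^{(m-1)})$, whose second moment is $\calO(M^{-1})$ by independence; its conditional variance is controlled by the SVRG bound $\EE\|A^{(m)}_t(\theta_t^{(m)}-\tilde\theta^{(m-1)})+B^{(m)}_t(w_t^{(m)}-\tilde w^{(m-1)})\|^2\le L^2(\|\theta_t^{(m)}-\tilde\theta^{(m-1)}\|^2+\|w_t^{(m)}-\tilde w^{(m-1)}\|^2)$, which shrinks as the iterates approach the anchor.

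Next I would derive one-step inner-loop recursions. Expanding $\|\theta_{t+1}^{(m)}-\theta^\ast\|^2$ and using non-expansiveness of $\Pi_{R_\theta}$ yields a contraction $(1-c\alpha)\|\theta_t^{(m)}-\theta^\ast\|^2$ coming from the (negative) definiteness of the Schur complement $\widehat A := A-BC^{-1}A$ (whose spectral gap is the $\lambda_{\widehat A}$ appearing in $D$), a cross term coupling to $\|z_t^{(m)}\|$, and a variance term of order $\alpha^2$ times the SVRG variance and the batch error. Symmetrically, the $w$-update contracts the tracking error at the faster rate set by $\beta$ and $\min|\lambda(C)|$, with its own cross term coupling back to $\|\theta_t^{(m)}-\theta^\ast\|$ because drifting $\theta$ drags the target $w^\ast(\theta)$. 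The scaling $\beta=\calO(\alpha^{2/3})\gg\alpha$ is exactly what lets the fast variable equilibrate faster than the slow variable moves.

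The technical heart is the recursive-refinement scheme advertised in the introduction, which I expect to be the main obstacle. A tracking bound that uses only the projection radius $R_w$ is too crude and forfeits the improved rate, so instead I would: (i) sum the crude tracking recursion over $t=0,\dots,M-1$ and invoke the averaging definition of $\tilde z^{(m)}$ to obtain a preliminary per-outer-loop tracking bound; (ii) feed it into the convergence recursion to get a preliminary bound on $\EE\|\tilde\theta^{(m)}-\theta^\ast\|^2$; (iii) insert this convergence bound back into the cross term of the tracking recursion---since the tracking error is forced only through the drift of $\theta$, the sharper convergence estimate sharpens the tracking bound; and (iv) reinsert the refined tracking bound into the convergence recursion to produce the final sharp estimate. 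Each pass trades a factor of the step-size ratio $\alpha^2/\beta^2$ or $\beta$ against the coupling, and the delicate bookkeeping is keeping the batch contribution $\calO(\beta M^{-1})$ and the bias $\calO(\beta^4)$ separated while chasing which error feeds which.

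Finally I would assemble the outer-loop recursion. The refined estimates yield a coupled linear system $a_m\le D\,a_{m-1}+c\,b_{m-1}+e$ and $b_m\le D\,b_{m-1}+e'$ in the two errors, both contracting at the common factor $D\in(0,1)$; solving it produces precisely the anomalous transient $mD^m$ (through the convolution $\sum_{k}D^{m-1-k}D^{k}=mD^{m-1}$) on top of the geometric series summing to the floor $\calO(\beta^4+\beta M^{-1})$. Checking $D<1$ reduces to making each bracketed term in its definition small under \cref{eq: lr iid 1,eq: lr iid 2,eq: lr iid 3,eq: lr iid 4,eq: lr iid 5,eq: lr iid 6,eq: lr iid 7}: the $1/(\alpha M)$ term via a large constant in $M$, the $\alpha$ and $\beta$ terms via small step sizes, and the $\alpha^2/\beta^2$ term precisely via $\beta=\calO(\alpha^{2/3})$. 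The sample-complexity claim then follows by plugging $\alpha=\calO(\epsilon^{3/5})$, $\beta=\calO(\epsilon^{2/5})$, $M=\calO(\epsilon^{-3/5})$, $m=\calO(\log\epsilon^{-1})$: the floor is $\beta M^{-1}=\calO(\epsilon)$ and $\beta^4=\calO(\epsilon^{8/5})$, the transient is $mD^m=\calO(\epsilon\log\epsilon^{-1})$, and the total number of queried samples is $\calO(mM)=\calO(\epsilon^{-3/5}\log\epsilon^{-1})$.
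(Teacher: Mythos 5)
Your plan follows essentially the same route as the paper's proof: one-step inner-loop recursions giving preliminary bounds on $\sum_t\|\theta_t^{(m)}-\theta^\ast\|^2$ and $\sum_t\|z_t^{(m)}\|^2$, a crude geometric tracking bound, then the same recursive refinement in the same order (preliminary tracking $\to$ preliminary convergence $\to$ refined tracking $\to$ refined convergence, matching Lemmas \ref{lemma: iid conv-z}, \ref{lemma: iid theta}, \ref{lemma: iid refined z}), and finally the coupled outer-loop recursion whose convolution of geometric terms produces the $mD^m$ transient on top of the $\calO(\beta^4+\beta M^{-1})$ floor, with the identical parameter choices $\alpha=\calO(\epsilon^{3/5})$, $\beta=\calO(\epsilon^{2/5})$, $M=\calO(\epsilon^{-3/5})$, $m=\calO(\log\epsilon^{-1})$ yielding $\calO(\epsilon^{-3/5}\log\epsilon^{-1})$. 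The only differences are cosmetic, e.g., you state the SVRG variance against the anchor $(\tilde\theta^{(m-1)},\tilde w^{(m-1)})$ whereas the paper decomposes through $\theta^\ast$ and bounds the batch term $\|\widehat{A}^{(m)}\theta^\ast+\widehat{b}^{(m)}\|^2$ by $K_1/M$, so the approach is the same.
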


Theorem \ref{thm: iid} shows that in the i.i.d.\ setting, VRTDC with constant stepsizes converges to a neighborhood of the optimal solution $\theta^*$ at a linear convergence rate. In particular, the asymptotic convergence error is in the order of \blue{$\calO(\beta^4+M^{-1})$}, which can be driven arbitrarily close to zero by choosing a sufficiently small stepsize $\beta$. Also, the required sample complexity for VRTDC to achieve an $\epsilon$-accurate solution is \blue{$\calO(\epsilon^{-1}\log{\epsilon}^{-1})$, which matches the best-known sample complexity $\calO(\epsilon^{-1}\log{\epsilon}^{-1})$ required by the conventional VRTD for i.i.d.\ samples \cite{Xu2020Reanalysis}.} 

Our analysis of VRTDC in the i.i.d.\ setting requires substantial developments of new bounding techniques. To elaborate, note that in the analysis of the one time-scale VRTD \cite{Xu2020Reanalysis}, they only need to deal with the parameter $\tilde{\theta}^{(m)}$ and bound its variance error using constant-level bounds. As a comparison, in the analysis of VRTDC we need to develop much refined variance reduction bounds for both $\tilde{\theta}^{(m)}$ and $\tilde{w}^{(m)}$ that are correlated with each other. Specifically, we first develop a preliminary bound for the tracking error $\|\tilde{z}^{(m)}\|^2= \|\tilde{w}^{(m)} + C^{-1}(b + A(\tilde{\theta}^{(m)}))\|^2$ that is in the order of $\calO(D^m+\beta)$ (Lemma \ref{lemma: iid conv-z}), which is further used to develop a preliminary bound for the convergence error $\|\theta^{(m)}-\theta^\ast\|^2\le \calO(D^m+{M}^{-1})$ (Lemma \ref{lemma: iid theta}). Then, by leveraging the relation between tracking error and convergence error induced by the
two time-scale updates of VRTDC (Lemma \ref{lemma: iid pre-bound z}), we further apply the preliminary bound of $\|\theta^{(m)}-\theta^\ast\|^2$ to obtain a refined bound for the tracking error \blue{$\|z^{(m)}\|^2\leq \calO(D^m+\beta^3+{M}^{-1})$} (Lemma \ref{lemma: iid refined z}). Finally, the refined bound of $\|z^{(m)}\|^2$ is applied to derive a refined bound for the convergence error \blue{$\|\theta^{(m)}-\theta^\ast\|^2\le \calO(D^m+\beta^4+ {M}^{-1})$} by leveraging the two time-scale updates (Lemma \ref{lemma: iid pre-bound theta}). These refined bounds are the key to establish the improved sample complexity of VRTDC for i.i.d. samples over the state-of-the-art result.

We also obtain the following convergence rate of the tracking error of VRTDC in the i.i.d.\ setting. 
\begin{corollary}\label{cor: iid}
Under the same settings and parameter choices as those of \Cref{thm: iid}, the tracking error of VRTDC for i.i.d.\ samples satisfies
\blue{\begin{align}
    \EE \|\tilde{z}^{(m)}\|^2 &\leq \calO(D^m + \beta^3 +  {M}^{-1}). \nonumber
\end{align}}
Moreover, the total sample complexity to achieve $\EE \|\tilde{z}^{(m)}\|^2\le \epsilon$ is in the order of \blue{$\calO(\epsilon^{-1} \log{\epsilon}^{-1})$}.
\end{corollary}

\section{Variance-Reduced TDC for Markovian Samples}
\vspace{-5pt}
In this section, we propose a variance-reduced TDC algorithm for Markovian samples and characterize its non-asymptotic convergence rate.

\vspace{-5pt}
\begin{algorithm}
	\SetAlgoLined 
	\textbf{Input:} learning rates $\alpha,\beta$, batch size $M$, initial parameters $\tilde{\theta}^{(0)},\tilde{w}^{(0)}$,
	
	\qquad\quad Markovian samples of MDP $\{x_1, x_2, x_3,...\}$.
	
	\For{$m=1,2,\dots$}{
		Initialize $\theta^{(m)}_0 = \tilde{\theta}^{(m-1)}$,
		$w^{(m)}_0 = \tilde{w}^{(m-1)}$.
		
		Query the set of samples $B_m=\{x_{(m-1)M},..., x_{mM-1}\}$ and compute 
		\begin{align}
		    \widetilde{G}^{(m)} \!=\! \frac{1}{M} \!\!\!\sum_{x\in B_m} \!\!\!\big( A_{x} \tilde{\theta}^{(m-1)} \!+\! b_{x} \!+\! B_{x} \tilde{w}^{(m-1)}\big), ~
		    \widetilde{H}^{(m)} \!=\! \frac{1}{M} \!\!\!\sum_{x\in B_m}\!\!\!\big( A_{x} \tilde{\theta}^{(m-1)} \!+\! b_{x} \!+\! C_{x} \tilde{w}^{(m-1)} \big). \nonumber
		\end{align}
		
		\For{$t=0,1,\dots, M-1$}{
			Query a sample $x^{(m)}_t$ from $B_m$ uniformly at random and update
			\begin{align}
			\theta_{t+1}^{(m)} &= \Pi_{R_\theta}\Big[ \theta_{t}^{(m)} + \alpha\big(G_{t}^{(m)}(\theta_{t}^{(m)}, w_{t}^{(m)})  - G_{t}^{(m)}(\tilde{\theta}^{(m-1)}, \tilde{w}^{(m-1)})  +  \widetilde{G}^{(m)}\big)  \Big], \nonumber\\
			w_{t+1}^{(m)} &= \Pi_{R_w} \Big[ w_{t}^{(m)} + \beta \big(   H_{t}^{(m)}(\theta_{t}^{(m)}, w_{t}^{(m)})  - H_{t}^{(m)}(\tilde{\theta}^{(m-1)}, \tilde{w}^{(m-1)})  +  \widetilde{H}^{(m)}\big) \Big], \nonumber
			\end{align}
		where for any $\theta, w$, we define
		\begin{align}
		    G_{t}^{(m)}(\theta, w) =  A^{(m)}_t \theta + b^{(m)}_t + B^{(m)}_t w, ~~H_{t}^{(m)}(\theta, w) =  A^{(m)}_t \theta + b^{(m)}_t + C^{(m)}_t w. \nonumber
		\end{align}
		}
		Set $\tilde{\theta}^{(m)} = \frac{1}{M}\sum_{t=0}^{M-1} \theta_{t}^{(m-1)}, ~\tilde{w}^{(m)} = \frac{1}{M}\sum_{t=0}^{M-1} w_{t}^{(m-1)}$.
	}
	\textbf{Output:} $\tilde{\theta}^{(m)}$.
	\caption{TDC with Variance Reduction for Markovian Samples}\label{alg: markov}
\end{algorithm}

\vspace{-13pt}
\subsection{Algorithm Design}
\vspace{-3pt}
In the Markovian setting, we generate a single trajectory of the MDP and obtain a series of Markovian samples $\{x_1, x_2, x_3,...\}$, where the $t$-th sample is $x_t=(s_t,a_t,r_t,s_{t+1})$. The detailed steps of variance-reduced TDC for Markovian samples are presented in \Cref{alg: markov}.
To elaborate, in the Markovian case, we divide the samples of the Markovian trajectory into $m$ batches. In the $m$-th outer-loop, we query the $m$-th batch of samples to compute the batch pseudo-gradients $\widetilde{G}^{(m)},\widetilde{H}^{(m)}$. Then, in each of the corresponding inner-loops we query a sample from the same $m$-th batch of samples and perform variance-reduced updates on the parameters $\theta_t^{(m)},w_t^{(m)}$. As a comparison, in our design of VRTDC for i.i.d.\ samples, the samples used in both the outer-loops and the inner-loops are independently drawn from the stationary distribution.

\vspace{-3pt}
\subsection{Non-Asymptotic Convergence Analysis}
\vspace{-3pt}
In the following theorem, we present the convergence result of VRTDC for Markovian samples. Due to space limitation, we omit the constants in the bound. The exact bound can be found in \Cref{appendix: markov proof}.
\begin{theorem}\label{thm: markov}
	Let Assumptions \ref{ass: solvability}, \ref{ass: boundedness} and \ref{ass: ergodicity} hold and consider the VRTDC in  \Cref{alg: markov} for Markovian samples. Choose learning rates $\alpha$, $\beta$ and the batch size $M$ that satisfy the conditions specified in \cref{eq: lr markov 1,eq: lr markov 2,eq: lr markov 3,eq: lr markov 4,eq: lr markov 5,eq: lr markov 6} (see the appendix) and $\beta = \calO(\alpha^{\frac{2}{3}})$. Then, the output of the algorithm satisfies
	\begin{align*}
	\EE \| \tilde{\theta}^{(m)} - \theta^\ast\|^2 &\leq  \calO(D^m + {M}^{-1}+\beta^2), \nonumber 
	\end{align*}
	where $D= \frac{16}{\lambda_{\widehat{A}}}\Big[\frac{ 1}{\alpha M} +  \alpha  \cdot 5(1+\gamma)^2\rho^2_{\max}\Big(1 + \frac{\gamma \rho_{\max}}{ \min |\lambda(C)|}\Big)^2 + \frac{\alpha^2}{\beta^2}  \cdot C_1   +   \beta  \cdot C_2   \Big] \in (0,1)$ (see \Cref{supp: notation} for the definitions of $\lambda_{\widehat{A}}, C_1, C_2$).
	In particular, choose $\alpha = \calO({\epsilon^{\frac{3}{4}}}),\beta = \calO({\epsilon^{\frac{1}{2}}})$, $m=\calO(\log{\epsilon}^{-1})$ and $M=\calO({\epsilon}^{-1})$, the {total sample complexity} to achieve $\EE \| \tilde{\theta}^{(m)} - \theta^\ast\|^2\le \epsilon$ is in the order of $\calO(\epsilon^{-1}\log{\epsilon}^{-1})$.
\end{theorem}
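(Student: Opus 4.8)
The plan is to mirror the four-step recursive refinement strategy described for the i.i.d.\ case in \Cref{thm: iid}, while inserting an extra layer of analysis to control the statistical dependence of Markovian samples. The structural observation that makes this tractable is that, \emph{conditioned on} the batch $B_m=\{x_{(m-1)M},\dots,x_{mM-1}\}$ and on the initialization $\tilde\theta^{(m-1)},\tilde w^{(m-1)}$, the inner-loop samples $x_t^{(m)}$ are drawn i.i.d.\ uniformly from $B_m$, so the batch pseudo-gradients $\widetilde G^{(m)},\widetilde H^{(m)}$ are exactly the conditional expectations of the inner-loop pseudo-gradients at the anchor point. Hence the SVRG variance-reduction identity holds conditionally, and the inner loop behaves like an i.i.d.\ SVRG update with respect to the \emph{empirical} batch distribution. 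I would therefore first carry out the entire inner-loop contraction analysis conditionally on $B_m$, replacing the stationary operators $A,B,C,b$ by their batch-empirical counterparts $\bar A^{(m)}:=\frac1M\sum_{x\in B_m}A_x$ (and likewise $\bar B^{(m)},\bar C^{(m)},\bar b^{(m)}$), and tracking both the convergence error $\|\theta_t^{(m)}-\theta^\ast\|^2$ and the tracking error driven by the two time-scale updates.

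The second ingredient, and the one I expect to be the principal obstacle, is bounding the discrepancy between the empirical batch operators and their stationary limits. Using the geometric ergodicity of \Cref{ass: ergodicity} together with the uniform boundedness of \Cref{ass: boundedness}, I would show $\EE\|\bar A^{(m)}-A\|^2,\ \EE\|\bar B^{(m)}-B\|^2,\ \EE\|\bar C^{(m)}-C\|^2,\ \EE\|\bar b^{(m)}-b\|^2=\calO(M^{-1})$, where the constant absorbs a mixing factor of order $\sum_{t\geq0}\kappa\rho^t$ arising from the autocorrelation of consecutive samples (the transient bias of the early, unmixed batches decays geometrically and is folded into the $D^m$ term). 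This is precisely the source of the $\calO(M^{-1})$ asymptotic floor that replaces the sharper $\calO(\beta M^{-1})$ term of the i.i.d.\ analysis: the finite, correlated batch induces a bias in the empirical ``full gradient'' and shifts the empirical fixed point $-(\bar A^{(m)})^{-1}\bar b^{(m)}$ away from $\theta^\ast=-A^{-1}b$ by $\calO(M^{-1})$ in squared norm, neither of which shrinks with the step size. The delicate part is propagating these $\calO(M^{-1})$ perturbations through the coupled $(\theta,w)$ recursion without incurring spurious factors of $\alpha^{-1}$ or $\beta^{-1}$, which requires repeatedly invoking the two time-scale relation $\|z^{(m)}\|^2\approx\|w^{(m)}+C^{-1}(b+A\theta^{(m)})\|^2$ to convert tracking-error bounds into convergence-error bounds and vice versa.

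With these pieces in hand I would execute the recursive refinement as in the i.i.d.\ case: establish a preliminary bound on the tracking error, use it to obtain a preliminary bound $\calO(D^m+M^{-1})$ on $\EE\|\tilde\theta^{(m)}-\theta^\ast\|^2$, feed the latter back through the two time-scale coupling to sharpen the tracking-error bound to $\calO(D^m+M^{-1}+\beta^3)$, and finally substitute this refined tracking bound into the convergence recursion to obtain $\EE\|\tilde\theta^{(m)}-\theta^\ast\|^2\leq\calO(D^m+M^{-1}+\beta^2)$. Throughout, the per-outer-loop contraction constant $D$ is assembled from the inner-loop decay $1/(\alpha M)$, the bias-and-stochastic terms of order $\alpha$, the two time-scale coupling term $\alpha^2/\beta^2$, and the tracking-error leakage of order $\beta$; verifying $D\in(0,1)$ reduces to checking the learning-rate conditions \cref{eq: lr markov 1,eq: lr markov 2,eq: lr markov 3,eq: lr markov 4,eq: lr markov 5,eq: lr markov 6} together with $\beta=\calO(\alpha^{2/3})$.

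Finally, unrolling the one-step recursion over the $m$ outer loops yields the geometric term $D^m$ plus the non-vanishing floor $M^{-1}+\beta^2$. Balancing the three contributions under the prescribed schedule $\alpha=\calO(\epsilon^{3/4})$, $\beta=\calO(\epsilon^{1/2})$, $M=\calO(\epsilon^{-1})$ and $m=\calO(\log\epsilon^{-1})$ makes each of $D^m$, $M^{-1}$, and $\beta^2$ of order $\epsilon$; since each outer loop consumes exactly $M$ fresh Markovian samples while the inner loop only resamples from the existing batch, the total sample count is $mM=\calO(\epsilon^{-1}\log\epsilon^{-1})$, as claimed.
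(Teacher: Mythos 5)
Your proposal hits the two ingredients that actually carry the paper's proof, and in the same way: (i) since the inner loop draws $x_t^{(m)}$ uniformly from $B_m$, the SVRG cancellation holds conditionally on the batch, with the batch-empirical operators $\widehat A^{(m)},\widehat b^{(m)},B^{(m)},C^{(m)}$ replacing the stationary ones --- this is exactly how \Cref{lemma: pre-bound theta}, \Cref{lemma: pre-bound z} and \Cref{lemma: conv-z} are organized; and (ii) the empirical-vs-stationary discrepancy is bounded by $\calO(M^{-1})$ times a mixing factor $1+\kappa\rho/(1-\rho)$, obtained by bounding the $i\neq j$ cross terms by $\kappa\rho^{j-i}$ via \Cref{ass: ergodicity} (the paper's \Cref{lemma: K1}--\Cref{lemma: K5}), instead of having those cross terms vanish as in the i.i.d.\ case. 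Where you diverge is the recursive refinement: the paper proves \Cref{thm: markov} from the preliminary bounds alone (plug \Cref{lemma: pre-bound z} into \Cref{lemma: pre-bound theta}, insert \Cref{lemma: conv-z}, telescope) and explicitly remarks that refinement cannot help here because the error floor is dominated by $M^{-1}$; the refinement machinery is reserved for the tracking-error result (\Cref{lemma: refined z}, \Cref{cor: markov}). Your intermediate bookkeeping shows the cost of importing the i.i.d.\ outline wholesale: the preliminary convergence bound is not $\calO(D^m+M^{-1})$ but already $\calO(D^m+M^{-1}+\beta^2)$ --- the $\beta^2$ is born at the preliminary stage, as the product of the $\calO(\beta)$ coefficient multiplying $\EE\|\tilde z^{(m-1)}\|^2$ with the $\calO(\beta)$ variance term of \Cref{lemma: conv-z} --- and that is precisely the theorem's bound, so your last two steps are redundant (they would sharpen $\beta^2$ to roughly $\beta^4+\beta M^{-1}$, which changes nothing once $\beta^2\asymp M^{-1}\asymp\epsilon$). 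A final small point: no transient bias needs to be ``folded into $D^m$''; \Cref{ass: ergodicity} is uniform over the initial state, so every batch, early or late, concentrates at the same $\calO(M^{-1})$ rate.
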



 
 
The above theorem shows that in the Markovian setting, VRTDC with constant learning rates converges linearly to a neighborhood of $\theta^*$ with an asymptotic convergence error $\calO({M}^{-1}  + \beta^2)$. As expected, such an asymptotic convergence error is larger than that of VRTDC for i.i.d. samples established in \Cref{thm: iid}. 
We also note that the overall sample complexity of VRTDC for Markovian samples is in the order of $\calO(\epsilon^{-1}\log{\epsilon}^{-1})$, which matches that of TDC for Markovian samples \cite{kaledin2020finite} and VRTD for Markovian samples \cite{Xu2020Reanalysis}. Such a complexity result nearly matches the theoretical lower bound $\calO(\epsilon^{-1})$ given in \cite{kaledin2020finite}. Moreover, as we show later in the experiments, our VRTDC always achieves a smaller asymptotic convergence error than that of TDC and VRTD in practice.

 
We note that in the Markovian case, one can also develop refined error bounds following the recursive refinement strategy used in the proof of \Cref{thm: iid}. However, the proof of \Cref{thm: markov} only applies the preliminary bounds for the tracking error and the convergence error, which suffices to obtain the above desired near-optimal sample complexity result. This is because the error terms in \Cref{thm: markov} are dominated by $\frac{1}{M}$ and hence applying the refined bounds does not lead to a better sample complexity result.

We also obtain the following convergence rate of the tracking error of VRTDC in the Markovian setting, where the total sample complexity matches that of the TDC for Markovian samples \cite{kaledin2020finite,xu2019two}.
\begin{corollary}\label{cor: markov}
	Under the same settings and parameter choices as those of \Cref{thm: markov}, the tracking error of VRTDC for Markovian samples  satisfies
	\begin{align}
	\EE \|\tilde{z}^{(m)}\|^2 &\leq \calO(D^m +  \beta^3 + {M}^{-1}). \nonumber
	\end{align}
	Moreover, the total sample complexity to achieve $\EE \|\tilde{z}^{(m)}\|^2\le \epsilon$ is in the order of $\calO(\epsilon^{-1} \log{\epsilon}^{-1})$.
\end{corollary}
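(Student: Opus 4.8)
The plan is to prove the tracking-error bound by combining the refined tracking-error recursion (the Markovian analogue of the recursion underlying \Cref{thm: markov}) with the already-established convergence-error bound $\EE\|\tilde\theta^{(m)}-\theta^\ast\|^2\le\calO(D^m+M^{-1}+\beta^2)$, and then converting the resulting bound into a sample complexity. The starting observation is that $\tilde{z}^{(m)}=\tilde{w}^{(m)}+C^{-1}(b+A\tilde\theta^{(m)})$ measures how far the fast iterate $\tilde w^{(m)}$ is from its quasi-stationary target $-C^{-1}(b+A\tilde\theta^{(m)})$. Since $\beta=\calO(\alpha^{2/3})\gg\alpha$, the $w$-update runs on the faster time-scale, and because $C$ is negative definite under Assumptions \ref{ass: solvability} and \ref{ass: boundedness}, the inner-loop $w$-dynamics contract the tracking error geometrically; aggregating over one outer-loop and over the averaging step yields the same contraction factor $D$ that appears in \Cref{thm: markov}.

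Second, I would derive a one-outer-loop recursion of the form
\[\EE\|\tilde z^{(m)}\|^2 \le D\,\EE\|\tilde z^{(m-1)}\|^2 + c_1\beta\,\EE\|\tilde\theta^{(m-1)}-\theta^\ast\|^2 + c_2 M^{-1},\]
where the middle term is the two time-scale coupling caused by the drift of the tracking target $-C^{-1}(b+A\theta)$ as $\theta$ moves toward $\theta^\ast$ (the prefactor $\beta$ being the net effect of the time-scale ratio $\alpha^2/\beta^2=\calO(\beta)$, since the target drift is proportional to $\alpha\|\theta-\theta^\ast\|$ while the contraction rate is $\beta$), and the last term is the irreducible Markovian bias and variance of the batch pseudo-gradients $\widetilde G^{(m)},\widetilde H^{(m)}$ together with the inner-loop estimates. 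In contrast to the i.i.d.\ case, where the per-batch estimates are unbiased and contribute only $\calO(\beta M^{-1})$, here the samples in $B_m$ are correlated and the inner-loop samples are drawn from the same batch, so geometric ergodicity (Assumption \ref{ass: ergodicity}) only controls the residual bias down to order $M^{-1}$.

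Third, I would close the recursion by inserting the convergence-error bound of \Cref{thm: markov}. The coupling term then becomes $c_1\beta(D^{m-1}+M^{-1}+\beta^2)=\calO(\beta D^{m-1}+\beta M^{-1}+\beta^3)$, in which $\beta M^{-1}$ is dominated by the intrinsic $M^{-1}$, $\beta D^{m-1}$ is absorbed into the geometric term, and the $\beta^3$ (the product of the $\calO(\beta)$ coupling and the $\beta^2$ part of the convergence error) survives. Unrolling the resulting geometric recursion over $m$ gives $\EE\|\tilde z^{(m)}\|^2\le\calO(D^m+\beta^3+M^{-1})$, as claimed. Finally, substituting $\alpha=\calO(\epsilon^{3/4})$, $\beta=\calO(\epsilon^{1/2})$, $M=\calO(\epsilon^{-1})$ and $m=\calO(\log\epsilon^{-1})$ gives $D^m\le\epsilon$, $\beta^3=\calO(\epsilon^{3/2})\le\epsilon$ and $M^{-1}=\epsilon$, so $\EE\|\tilde z^{(m)}\|^2\le\calO(\epsilon)$; since each outer-loop consumes $\calO(M)$ samples, the total sample complexity is $\calO(mM)=\calO(\epsilon^{-1}\log\epsilon^{-1})$.

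The hard part will be establishing the tracking-error recursion with the correct powers — in particular, isolating the $\beta^3$ term and proving that the Markovian contribution is exactly $\calO(M^{-1})$ and no worse. This requires decomposing each variance-reduced pseudo-gradient into its stationary-expectation part plus a mixing-error part, bounding the latter via Assumption \ref{ass: ergodicity}, and tracking the cross-correlations between the outer-loop batch and the inner-loop samples, which are the source of the $M^{-1}$ (rather than $\beta M^{-1}$) noise floor. Verifying that the coupling prefactor is genuinely $\calO(\beta)$, so that the $M^{-1}$ part of the convergence error is downgraded to the negligible $\beta M^{-1}$ and does not inflate the final bound, is the delicate step that makes the refinement go through.
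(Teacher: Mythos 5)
Your proposal is correct and follows essentially the same route as the paper's proof: the paper likewise builds an outer-loop recursion for the tracking error (by combining its preliminary bounds, Lemmas \ref{lemma: pre-bound theta} and \ref{lemma: pre-bound z}), inserts the convergence-error bound of \Cref{thm: markov} into the $\calO(\beta)$-scale coupling term, telescopes, and reads off $\calO(D^m+\beta^3+M^{-1})$ and the $\calO(\epsilon^{-1}\log\epsilon^{-1})$ complexity. The only (inconsequential) imprecision is that the paper's tracking recursion contracts with a separate factor $F<D$ (so the geometric term is cleanly $\calO(D^m)$ after telescoping), rather than literally "the same $D$" as you assert.
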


\vspace{-5pt}
\section{Experiments}
\vspace{-5pt}
\label{sec:exp}
In this section, we conduct two reinforcement learning experiments, Garnet problem and Frozen Lake game, to explore the performance of VRTDC in the off-policy setting with Markovian samples, and compare it with  TD, TDC, VRTD and VRTDC in the Markovian setting. 

\vspace{-3pt}
\subsection{Garnet Problem} 
\vspace{-3pt}
We first consider the Garnet problem \cite{archibald1995generation, xu2019two} that is specified as $\mathcal{G}(n_\calS, n_\calA, b, d)$, where $n_\calS$ and $n_\calA$ denote the cardinality of the state and action spaces, respectively, $b$ is referred to as the branching factor--the number of states that have strictly positive probability to be visited after an action is taken, and $d$ denotes the dimension of the features. We set $n_\calS = 500$, $n_\calA = 20$, $b=50, d=15$ and generate the features $\Phi \in \RR^{n_\calS \times d}$ via the uniform distribution on $[0,1]$. We then normalize its rows to have unit norm. Then, we randomly generate a state-action transition kernel $\kerP \in \RR^{n_\calS \times n_\calA \times n_\calS}$ via the uniform distribution on $[0,1]$ (with proper normalization). We set the behavior policy as the uniform policy, i.e., $\pi_b(a|s) = n_\calA^{-1}$ for any $s$ and $a$, and we generate the target policy $\pi(\cdot|s)$ via the uniform distribution on $[0,1]$ with proper normalization for every state $s$. The discount factor is set to be $\gamma = 0.95$. As the transition kernel and the features are known, we compute $\theta^\ast$ and use $\|\theta- \theta^\ast\|$ to evaluate the performance of all the algorithms.

We set the learning rate $\alpha = 0.1$ for all the four algorithms, and set the other learning rate $\beta = 0.02$ for both VRTDC and TDC. For VRTDC and VRTD, we set the batch size $M=3000$. In \Cref{fig:example} (Left), we plot the convergence error as a function of number of pseudo stochastic gradient computations for all these algorithms. Specifically, we use $100$ Garnet trajectories with length $50$k to obtain 100 convergence error curves for each algorithm. 
The upper and lower envelopes of the curves correspond to the $95\%$ and $5\%$ percentiles of the $100$ curves, respectively. 
It can be seen that both VRTD and VRTDC outperform TD and TDC and asymptotically achieve smaller mean convergence errors with reduced numerical variances of the curves. This demonstrates the advantage of applying variance reduction to policy evaluation. Furthermore, comparing VRTDC with VRTD, one can observe that VRTDC achieves a smaller mean convergence error than that achieved by VRTD, and the numerical variance of the VRTDC curves is smaller than that of the VRTD curves. This demonstrates the effectiveness of applying variance reduction to two time-scale updates.    

We further compare the asymptotic error of VRTDC and VRTD under different batch sizes $M$. We use the same learning rate setting as mentioned above and run $100$k iterations for each of the $250$ Garnet trajectories. For each trajectory, we use the mean of the convergence error of the last $10$k iterations as an estimate of the asymptotic convergence error (the training curves are already  flattened).
\Cref{fig:example} (Right) shows the box plot of the 250 samples of the asymptotic convergence error of VRTDC and VRTD under different batch sizes $M$. It can be seen that the two time-scale VRTDC achieves a smaller mean asymptotic convergence error with a smaller numerical variance than that achieved by the one time-scale VRTD.


\begin{figure}[tbh]
\captionsetup[subfigure]{justification=centering}
	\centering
	\vspace{-10pt}
	\begin{subfigure}{0.35\linewidth}
		\includegraphics[width=\linewidth]{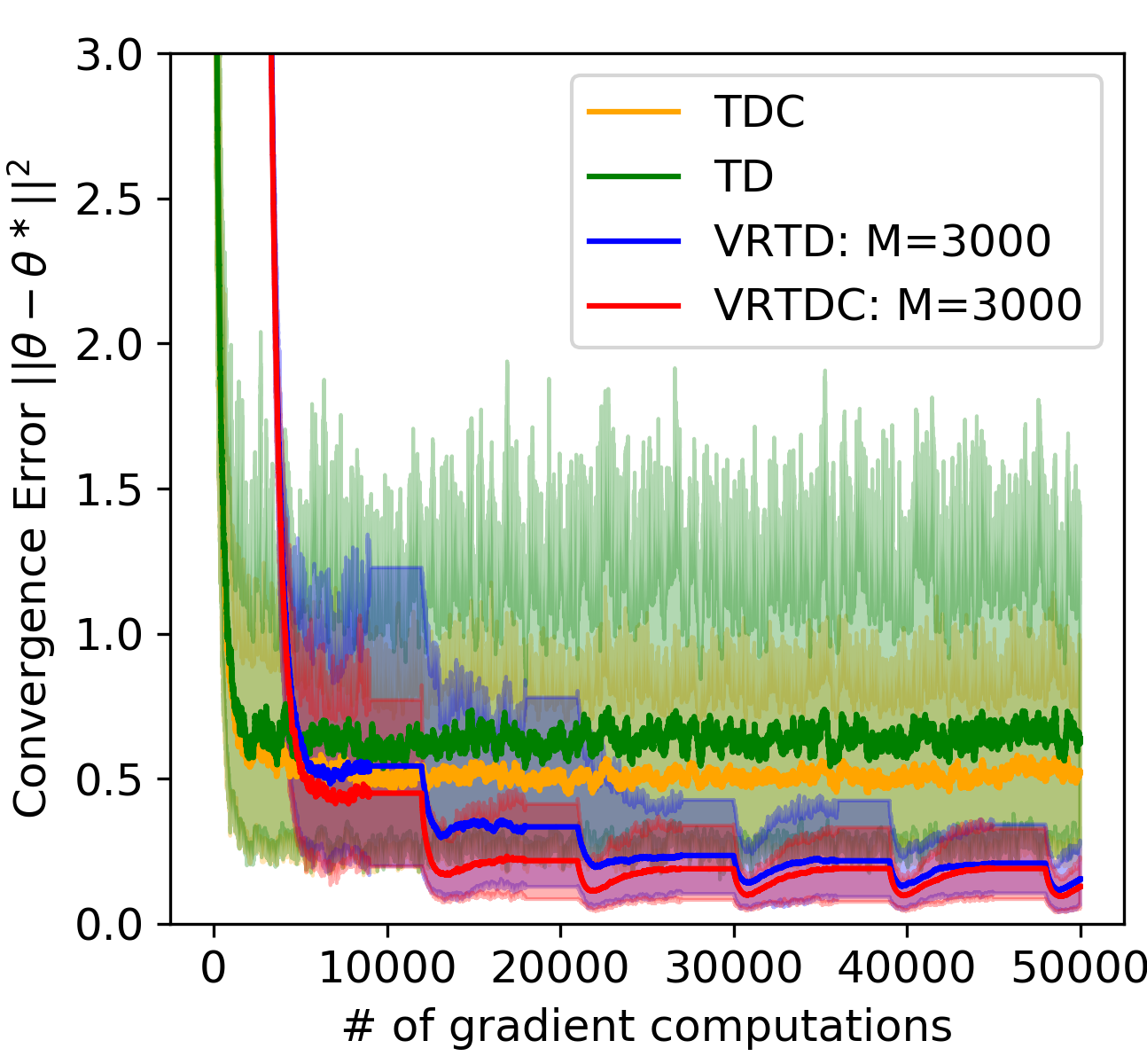}
	\end{subfigure}
	\hspace{20pt}
	\begin{subfigure}{0.35\linewidth}
		\includegraphics[width=\linewidth]{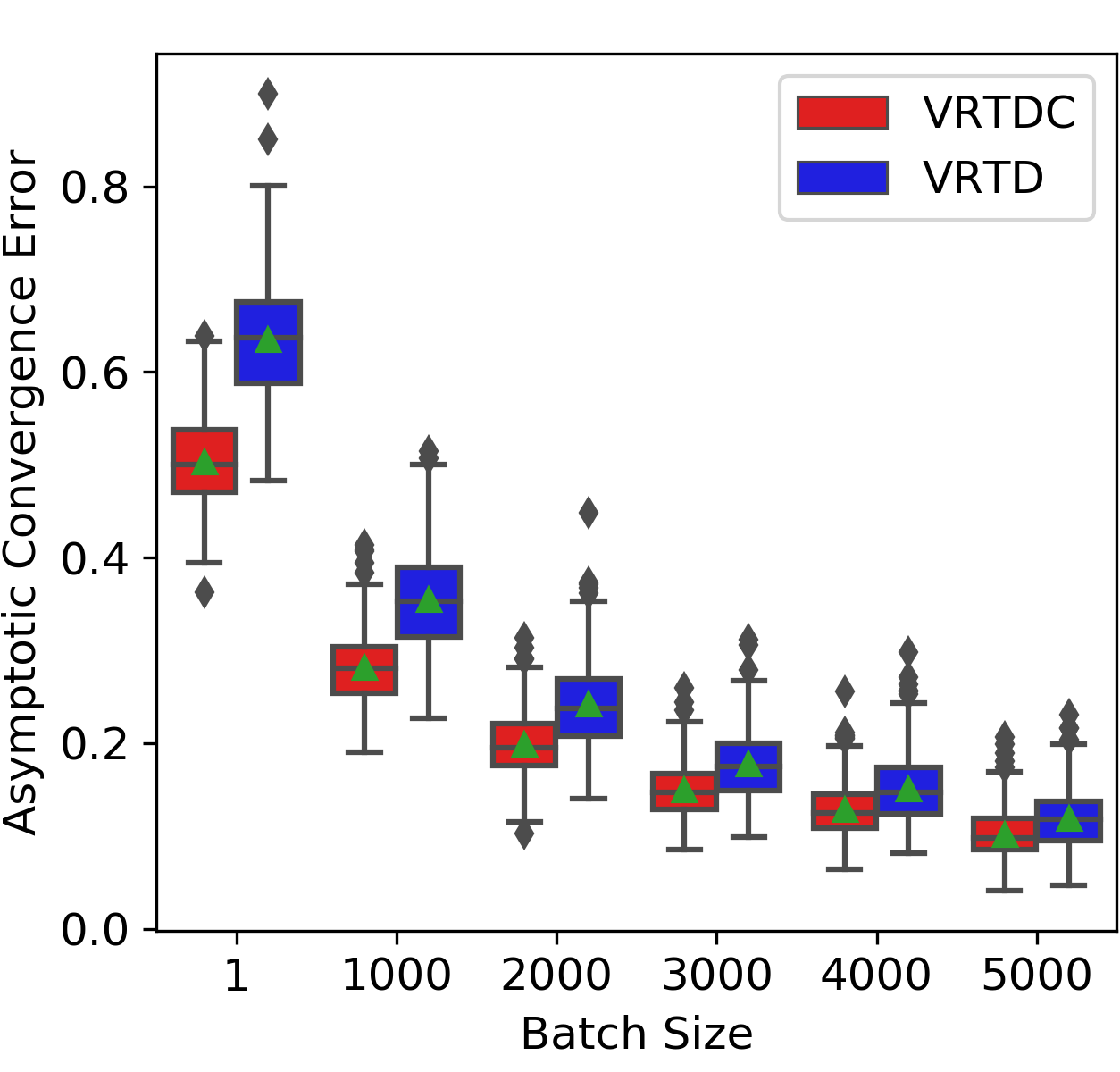}
	\end{subfigure}
	\vspace{-5pt}
	\caption{Comparison of TD, TDC, VRTD, VRTDC in solving the Garnet problem.}%
	\label{fig:example}%
	\vspace{-5pt}
\end{figure}  

We further study the variance reduction effect of VRTDC. We plot the estimated variance of the stochastic updates of $\theta$ (see \Cref{fig:example3} left) and $w$ (see \Cref{fig:example3} right) for different algorithms. It can be seen that VRTDC significantly reduces the variance of TDC in both time-scales. For each step, we estimate the variance of each pseudo-gradient update using Monte-Carlo method with $500$ additional samples under the same learning rates and batch size setting as mentioned previously.
\begin{figure}[tbh]
\captionsetup[subfigure]{justification=centering}
	\centering
	\vspace{-10pt}
	\begin{subfigure}{0.35\linewidth}
		\includegraphics[width=\linewidth]{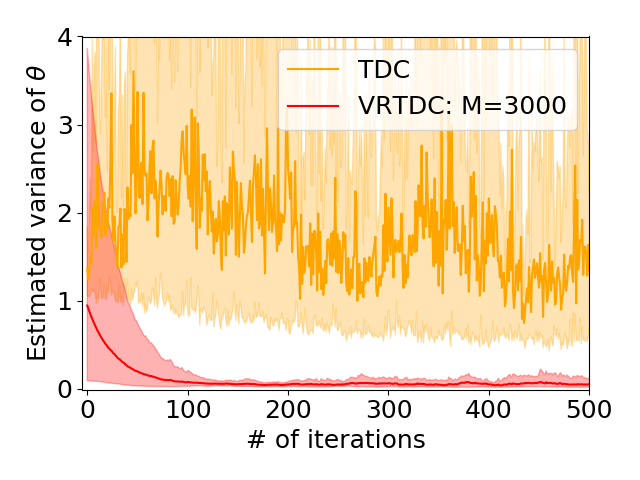}
	\end{subfigure}
	\hspace{20pt}
	\begin{subfigure}{0.35\linewidth}
		\includegraphics[width=\linewidth]{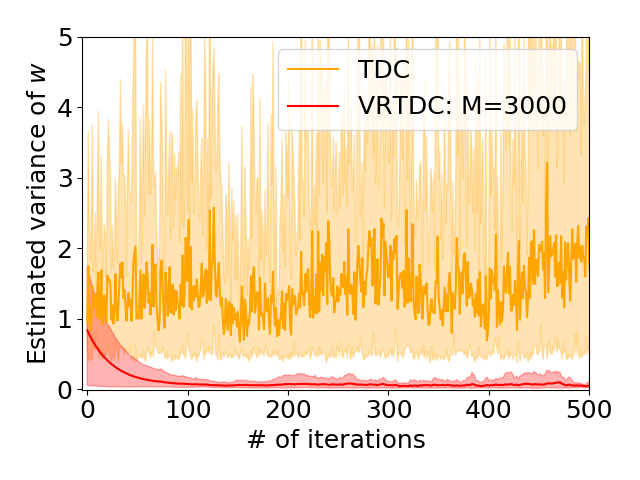}
	\end{subfigure}

	\vspace{-5pt}
	\caption{Comparison of TDC, VRTDC in solving the Garnet problem.}%
	\label{fig:example3}%
	\vspace{-5pt}
\end{figure}  

\vspace{-3pt}
\subsection{Frozen Lake Game}
\vspace{-3pt}
Our second experiment considers the frozen lake game in the OpenAI Gym \cite{1606.01540}. We generate a Gaussian feature matrix with dimension $4$ to linearly approximate the value function and we aim to evaluate a target policy based on a behavior policy,  generated via the uniform distribution. We set the learning rates $\alpha = 0.1$ and $\beta = 0.01$ for all the algorithms and set the batch size $M=3000$ for the variance-reduced algorithms. We run $50$k iterations for each of the $100$ trajectories. \Cref{fig:example2} (Left) plots the convergence error as a function of number of gradient computations for the four algorithms using $5\%$ and $95\%$ percentiles of the 100 curves. One can see that our VRTDC asymptotically achieves the smallest mean convergence error with the smallest numerical variance of the curves. In particular, one can see that TDC achieves a comparable asymptotic error to that of VRTD, and VRTDC outperforms both of the algorithms.
\Cref{fig:example2} (Right) further compares the asymptotic convergence error of VRTDC with that of VRTD under different batch sizes $M$. Similar to the Garnet experiment, for each of the 100 trajectories we use the mean of the convergence errors of the last $10$k iterations as an estimate of the asymptotic error, and the boxes include the samples between $25\%$ percentile and $75\%$ percentile. One can see from the figure that VRTDC achieves smaller asymptotic errors with smaller numerical variances than VRTD under all choices of batch size.

\begin{figure}[tbh]
\captionsetup[subfigure]{justification=centering}
	\centering
	\vspace{-10pt}
	\begin{subfigure}{0.35\linewidth}
		\includegraphics[width=\linewidth]{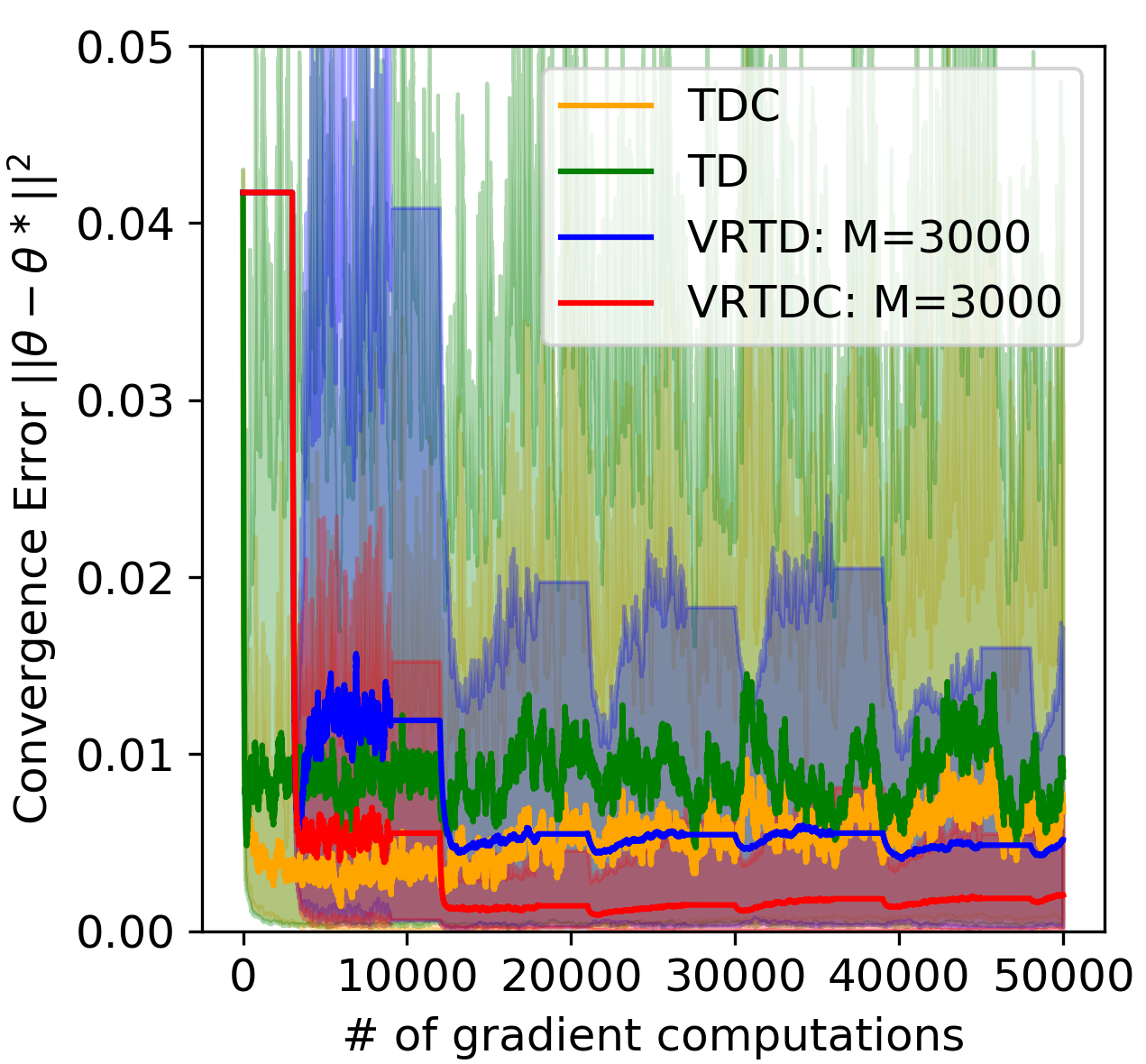}
	\end{subfigure}
	\hspace{20pt}
	\begin{subfigure}{0.35\linewidth}
		\includegraphics[width=\linewidth]{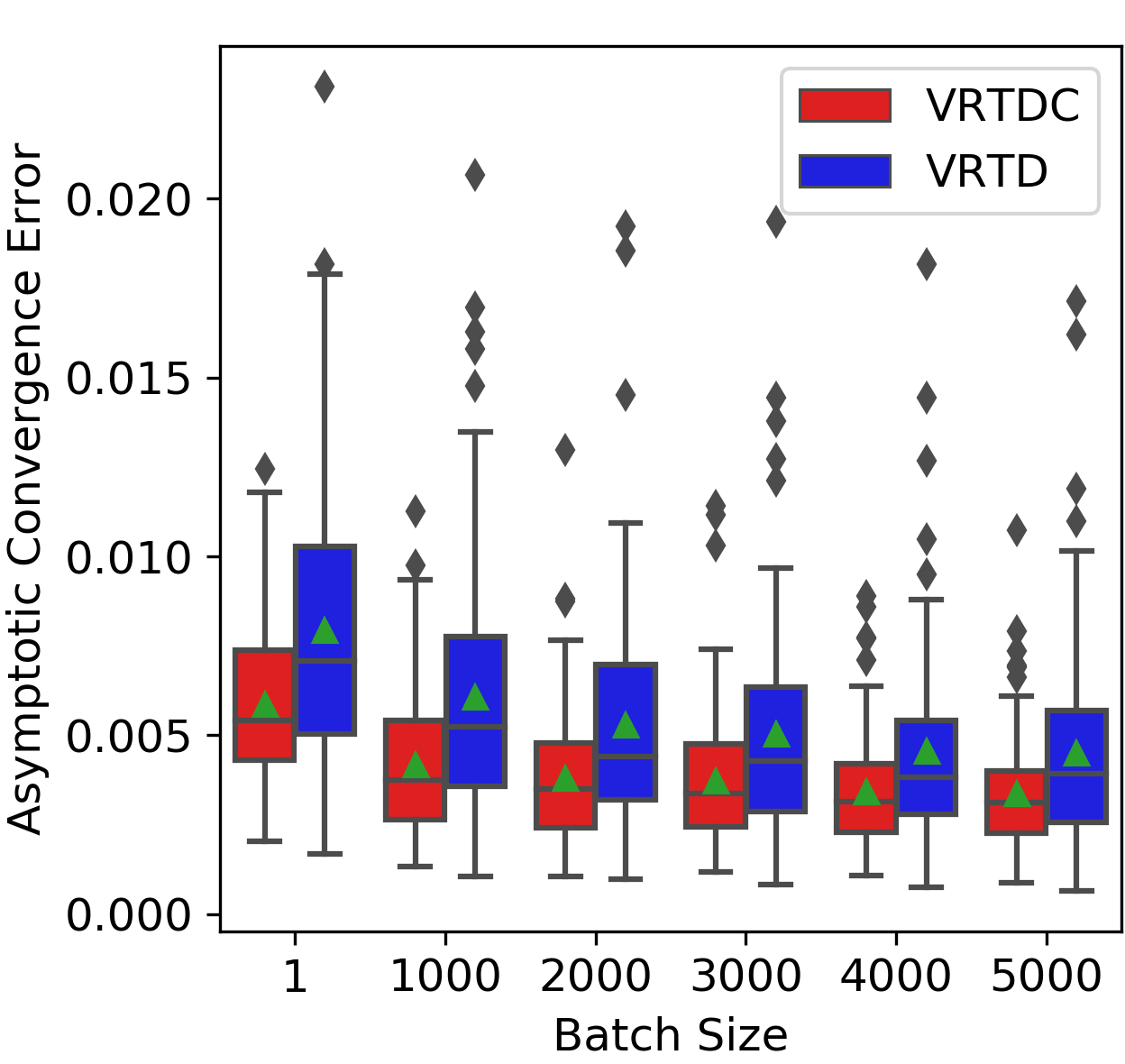}
	\end{subfigure}

	\vspace{-5pt}
	\caption{Comparison of TD, TDC, VRTD, VRTDC in solving the frozen lake problem.}%
	\label{fig:example2}%
	\vspace{-5pt}
\end{figure}

Similar to Figure \ref{fig:example3}, we also estimate the variance of each pseudo-gradient update using $500$ Monte Carlo samples for the Frozen Lake problem. We can find that VRTDC also has lower variance for the Frozen Lake problem.
\begin{figure}[tbh]
\captionsetup[subfigure]{justification=centering}
	\centering
	\vspace{-10pt}
	\begin{subfigure}{0.35\linewidth}
		\includegraphics[width=\linewidth]{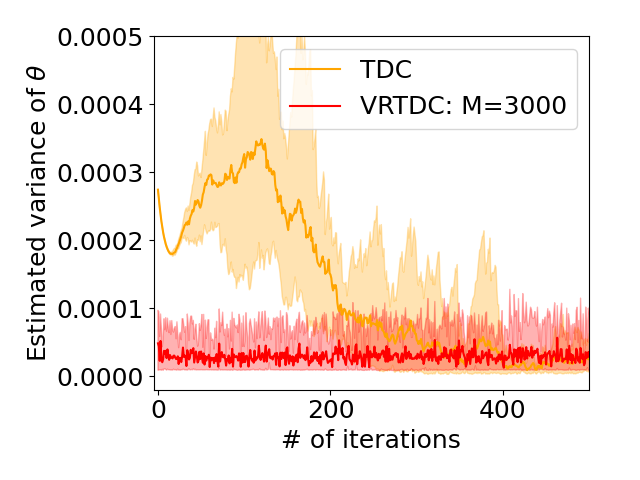}
	\end{subfigure}
	\hspace{20pt}
	\begin{subfigure}{0.35\linewidth}
		\includegraphics[width=\linewidth]{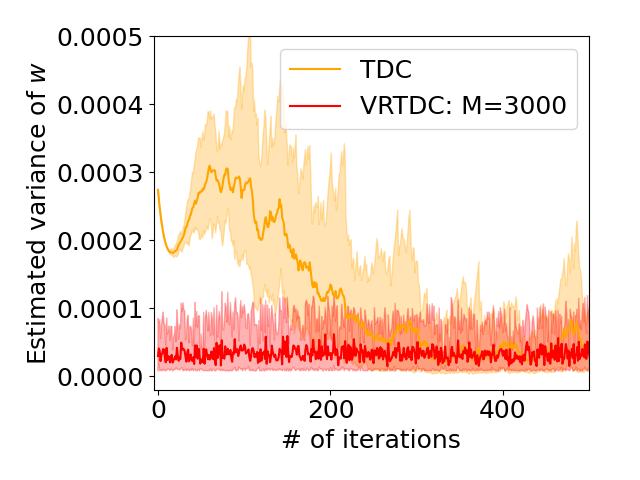}
	\end{subfigure}

	\vspace{-5pt}
	\caption{Comparison of TDC, VRTDC in solving the frozen lake problem.}%
	\label{fig:example4}%
	\vspace{-5pt}
\end{figure}  
\vspace{-10pt}

\section{Conclusion}
\vspace{-5pt}
In this paper, we proposed two variance-reduced off-policy TDC algorithms for policy evaluation with i.i.d.\ samples and Markovian samples, respectively. We developed new analysis techniques and showed that VRTDC for i.i.d.\ samples achieves  \blue{a state-of-the-art sample complexity}, and VRTDC for Markovian samples achieves the best existing sample complexity. We expect that the developed VRTDC algorithm can help reduce the stochastic variance in reinforcement learning applications and improve the solution accuracy.

\section*{Acknowledgement}
The work of S. Zou was supported by the National Science Foundation under Grant CCF-2007783.

\section*{Broader Impact}
This work exploits techniques in multidisciplinary areas including reinforcement learning, stochastic optimization and statistics, and contributes new technical developments to analyze TD learning algorithm under stochastic variance reduction.
The proposed two time-scale VRTDC  significantly improves the solution quality of TD learning and reduces the variance and uncertainty in training reinforcement learning policies. Therefore it has the potential to be applied to reinforcement learning applications such as autonomous driving, decision making and control to reduce the risk caused by uncertainty of the policy.

\bibliographystyle{abbrvnat}

\newpage
\appendix

\addcontentsline{toc}{section}{Appendix} 
\part{Appendix} 
\parttoc 
\allowdisplaybreaks

\section{Filtration, Additional Notations and List of Constants}\label{supp: notation}
\subsection*{Filtration for I.I.D. samples}
The definition of filtration is similar as that in VRTD (Appendix D, \cite{Xu2020Reanalysis}). Recall that in Algorithm \ref{alg: iid}, $B_m$ consists of $M$ independent samples that are sampled from $\mu_{\pi_b}$, and $x_t^{(m)}$ is another independent sample sampled in the $t$-th iteration of the $m$-th epoch. Let $\sigma(A \cup B)$ be the smallest $\sigma$-field that includes both $A$ and $B$. Then we define the filtration for I.I.D. samples as follow
\begin{align*}
    & F_{1,0} = \sigma(\tilde{\theta}^{(0)}, \tilde{w}^{(0)} ), F_{1,1}=\sigma(F_{1,0} \cup \sigma(B_1) \cup \sigma(x_{0}^{(1)})),\dots, F_{1,M}= \sigma(F_{1,M-1} \cup \sigma(x_{M-1}^{(1)})) \\
      & F_{2,0} = \sigma\big( F_{1,M} \cup \sigma(\tilde{\theta}^{(1)}, \tilde{w}^{(1)} )  \big), F_{2,1}=\sigma(F_{2,0} \cup \sigma(B_2) \cup \sigma(x_{0}^{(2)})),\dots, F_{2,M}= \sigma(F_{2,M-1} \cup \sigma(x_{M-1}^{(2)})) \\
    &\vdots\\
      & F_{m,0} = \sigma\big( F_{m-1,M} \cup \sigma(\tilde{\theta}^{(m-1)}, \tilde{w}^{(m-1)} )  \big), F_{m,1}=\sigma(F_{m,0} \cup \sigma(B_m) \cup \sigma(x_{0}^{(m)})),\dots, \\
      &\quad F_{m,M}= \sigma(F_{m,M-1} \cup \sigma(x_{M-1}^{(m)})) .
\end{align*}
Moreover, we define $\EE_{t,m}$ as the conditional expectation with respect to the $\sigma$-field $F_{t,m}$.

\subsection*{Filtration for Markovian samples}
The definition of filtration is similar as that in VRTD (Appendix E, \cite{Xu2020Reanalysis}). We first recall that $B_m$ denotes the set of Markovian samples used in the $m$-th epoch, and we also abuse the notation here by letting $x^{(m)}_t$ be the sample picked in the $t$-th iteration of the $m$-th epoch. Then, we define the filtration for Markovian samples as follows
\begin{align*}
    & F_{1,0} = \sigma( B_0 \cup \sigma(\tilde{\theta}^{(0)}, \tilde{w}^{(0)} ) ), F_{1,1}=\sigma(F_{1,0} \cup  \sigma(x_{0}^{(1)})),\dots, F_{1,M}= \sigma(F_{1,M-1} \cup \sigma(x_{M-1}^{(1)})) \\
      & F_{2,0} = \sigma\big( B_1  \cup F_{1,M}\cup \sigma(\tilde{\theta}^{(1)}, \tilde{w}^{(1)} )  \big), F_{2,1}=\sigma(F_{2,0}  \cup \sigma(x_{0}^{(2)})),\dots, F_{2,M}= \sigma(F_{2,M-1} \cup \sigma(x_{M-1}^{(2)})) \\
    &\vdots\\
      & F_{m,0} = \sigma\big(B_{m-1}  \cup F_{m-1,M} \cup \sigma(\tilde{\theta}^{(m-1)}, \tilde{w}^{(m-1)} )  \big), F_{m,1}=\sigma(F_{m,0}   \cup \sigma(x_{0}^{(m)})),\dots,\\ 
      &\quad F_{m,M}= \sigma(F_{m,M-1} \cup \sigma(x_{M-1}^{(m)})) .
\end{align*}
Moreover, we define $\EE_{t,m}$ as the conditional expectation with respect to the $\sigma$-field $F_{t,m}$.

\subsection*{Additional Notations}
Recall the one-step TDC update at $\theta_{t}^{(m)}$:
	\begin{align*}
		\ A_{t}^{(m)}\theta + b_{t}^{(m)}+ B_{t}^{(m)} w 
		 =& \  A_{t}^{(m)}\theta + b_{t}^{(m)}+ B_{t}^{(m)} z + B_{t}^{(m)}\left( -C^{-1} (b + A\theta) \right) \\
		=& \ \left( A_{t}^{(m)} - B_{t}^{(m)} C^{-1}A  \right) \theta + b_{t}^{(m)} - B_{t}^{(m)} C^{-1} b + B_{t}^{(m)} z.
	\end{align*}
	Define $\widehat{A}_{t}^{(m)}:=A_{t}^{(m)} - B_{t}^{(m)} C^{-1}A$ and $\hat{b}_{t}^{(m)}:=b_{t}^{(m)} - B_{t}^{(m)} C^{-1} b$. Then, we further define
	\begin{align*}
		G_{t}^{(m)}(\theta, z):= \widehat{A}_{t}^{(m)}\theta + \hat{b}_{t}^{(m)}+ B_{t}^{(m)} z.
	\end{align*}
	Moreover, we define
	\[\lambda_{\widehat{A}} := -\lambda_{\max} (\widehat{A} + \widehat{A}^\top) = -\lambda_{\max}(2 A^\top C^{-1} A) > 0 .\] 
Similarly, recall the one-step TDC update at $w_{t}^{(m)}$:
	\begin{align*}
		 A_{t}^{(m)}\theta + b_{t}^{(m)}+ C_{t}^{(m)} w 
		=& \ A_{t}^{(m)}\theta + b_{t}^{(m)}+ C_{t}^{(m)} z + C_{t}^{(m)}\left( -C^{-1} (b + A\theta) \right) \\
		= & \ \left( A_{t}^{(m)} - C_{t}^{(m)} C^{-1}A  \right) \theta + b_{t}^{(m)} - C_{t}^{(m)} C^{-1} b + C_{t}^{(m)} z.
	\end{align*}
	Define $\bar{A}_{t}^{(m)}:=A_{t}^{(m)} - C_{t}^{(m)} C^{-1}A$ and $\bar{b}_{t}^{(m)}:=b_{t}^{(m)} - C_{t}^{(m)} C^{-1} b$. Then, we further define
	\begin{align*}
	H_{t}^{(m)}(\theta, z):= \bar{A}_{t}^{(m)}\theta + \bar{b}_{t}^{(m)}+ C_{t}^{(m)} z.
	\end{align*} 
	Moreover, we define
	$$\lambda_{C} := -\lambda_{\max} ({C} + {C}^\top) = -\lambda_{\max}(2C) > 0 .$$
	
	\subsection*{List of Constants}
	We summerize all the constants that are used in the proof as follows. 
	
	\textbf{Constants for both i.i.d.\ and Markovian setting:}
	\begin{itemize}
		\item $G_{\text{VR}} := 3 \left[ (1+\gamma)R_\theta + r_{\max} \right]\rho_{\max}\left(1 + \frac{\gamma \rho_{\max}}{ \min |\lambda(C)|}\right).$
		
		\item $H_{\text{VR}} := 3 \left[(1+\gamma)R_\theta + r_{\max}\right]\rho_{\max}\left( 1 + \frac{1}{ \min |\lambda(C)|} \right).$
	\end{itemize}
	
	\textbf{Constants for i.i.d.\ setting:} 
	\begin{itemize} 
	\item $K_1 =\big[ (1+\gamma)R_\theta + r_{\max} \big]^2\rho^2_{\max}\big(1 + \frac{\gamma \rho_{\max}}{ \min |\lambda(C)|}\big)^2$,
	
	\item $K_2 = \big[  \big(  1 + \gamma\big)R_\theta  + r_{\max}\big]^2\big(  1 + \frac{1}{\min|\lambda_{C}|} \big)^2$,
	\item $C_1 = \frac{ 2 \rho^2_{\max} \gamma^2}{\lambda_{\widehat{A}}} \frac{3}{\lambda_{C}}  \cdot 10 (1+\gamma)^2 \rho^2 _{\max} \cdot \big(1 + \frac{\gamma \rho_{\max}}{ \min |\lambda(C)|}\big)^2 \big(1 + \frac{2 }{\lambda_C } \big)\cdot  \big(\rho_{\max} \frac{ 1+\gamma }{ \min |\lambda(C)|}\big)^2,$
	\item $ C_2 = \frac{ 2 \rho^2_{\max} \gamma^2}{\lambda_{\widehat{A}}} \frac{3}{\lambda_{C}}  \cdot 10 (1+\gamma)^2 \rho^2 _{\max} \cdot \big( 1 + \frac{1}{ \min |\lambda(C)|} \big)^2.$
	
    \item 	$C_3 =  10 (1+\gamma)^2 \rho^2 _{\max} \cdot  \big(1 + \frac{\gamma \rho_{\max}}{ \min |\lambda(C)|}\big)^2 \big(1 + \frac{2 }{\lambda_C } \big)\cdot  \big(\rho_{\max} \frac{ 1+\gamma }{ \min |\lambda(C)|}\big)^2 $,
	
	\item
	$C_4 =  10 (1+\gamma)^2 \rho^2 _{\max} \cdot\big( 1 + \frac{1}{ \min |\lambda(C)|} \big)^2$,
	
	\item $D= \frac{12}{\lambda_{\widehat{A}}}\Big\{  \frac{1}{\alpha M} +  \alpha  \cdot 5(1+\gamma)^2\rho^2_{\max}\big(1 + \frac{\gamma \rho_{\max}}{ \min |\lambda(C)|}\big)^2   +     \frac{\alpha^2}{\beta^2} \cdot C_1  + \beta \cdot C_2   \Big\},$

\item $E= \frac{1}{M\beta} \cdot \frac{2}{\lambda_C },$ 
\item $F= \frac{4}{\lambda_{C}}\Big[\frac{1}{\beta M}+  \beta \cdot 10 + \frac{\alpha^2}{\beta^2} \cdot 10\gamma^2\rho_{\max}^2 \big(1 + \frac{2 }{\lambda_C } \big)\cdot  \big(\rho_{\max} \frac{ 1+\gamma }{ \min |\lambda(C)|}\big)^2    + 
	\big(  \frac{\alpha^3}{\beta^2}\cdot C_3 + \alpha \beta \cdot C_4 \big) \frac{30}{\lambda_{\widehat{A}}} \gamma^2\rho_{\max}^2  
	\Big]$.
	\end{itemize}

\subsection*{Constants for Markovian setting:} 
\begin{itemize} 
		\item $K_1 := \left[ (1+\gamma)R_\theta + r_{\max} \right]^2\rho^2_{\max}\left(1 + \frac{\gamma \rho_{\max}}{ \min |\lambda(C)|}\right)^2  \cdot \left( 1+\kappa \frac{2 \rho}{1-\rho} \right)$,
		
		\item $K_2 :=  \frac{2}{\lambda_{\hat{A}}}\left[R_\theta^2 (1+\gamma)^2 + r^2_{\max}\right] \cdot 4 \rho_{\max}^2  \left(1 + \frac{\gamma \rho_{\max}}{ \min |\lambda(C)|}\right)^2 \left[1 + \kappa \frac{\rho}{1-\rho} \right]$,
		
		\item $K_3 := \left( \frac{32}{\lambda_C} \left[R_\theta^2(1+\gamma)^2+   r^2_{\max}\right]\cdot \rho_{\max}^2 +  \frac{16}{\lambda_C}  \frac{\rho_{\max}(1+\gamma) R_\theta + \rho_{\max} r_{\max}}{\min |\lambda(C)|}\right)  \left[ 1 + \kappa \frac{\rho}{1-\rho} \right]$,
		
		\item $K_4 := \frac{12}{\lambda_C} R_w^2 \left[ 1 + \kappa \frac{\rho}{1-\rho} \right]$,
		
		\item $K_5:=\left[(1+\gamma)R_\theta + r_{\max}\right]^2\rho_{\max}^2\left( 1 + \frac{1}{ \min |\lambda(C)|} \right)^2  \cdot \left( 1   +    \kappa \frac{2\rho}{1-\rho}   \right)$,
		
		\item $C_1 = \Big(1 + \frac{\gamma \rho_{\max}}{ \min |\lambda(C)|}\Big)^2 \Big(1 + \frac{2 }{\lambda_C }  \Big)\cdot  \Big(\rho_{\max} \frac{ 1+\gamma }{ \min |\lambda(C)|}\Big)^2 \cdot  \frac{96}{\lambda_{\widehat{A}} \lambda_{C}}\gamma^2 \rho_{\max}^2 \cdot 10 (1+\gamma)^2 \rho^2 _{\max} $,
		
		\item $C_2 = \Big( 1 + \frac{1}{ \min |\lambda(C)|} \Big)^2 \cdot  \frac{96}{\lambda_{\widehat{A}} \lambda_{C}}\gamma^2 \rho_{\max}^2 \cdot 10 (1+\gamma)^2 \rho^2 _{\max} $,
		
		\item $D= \frac{16}{\lambda_{\widehat{A}}}\Big[\frac{ 1}{\alpha M} +  \alpha  \cdot 5(1+\gamma)^2\rho^2_{\max}\Big(1 + \frac{\gamma \rho_{\max}}{ \min |\lambda(C)|}\Big)^2 + \frac{\alpha^2}{\beta^2}  \cdot C_1   +   \beta  \cdot C_2   \Big] $, 
		
		\item $E= \frac{1}{M\beta} \cdot \frac{12}{\lambda_C }$,
		
		\item $F= \frac{24}{\lambda_{C}} \cdot  \Big[  \frac{1}{\beta M}+ 10\beta  + 10\gamma^2\rho_{\max}^2 \big(1 + \frac{1}{\lambda_C }  \big)\cdot  \Big(\rho_{\max} \frac{ 1+\gamma }{ \min |\lambda(C)|}\Big)^2    \frac{\alpha^2}{\beta^2}    +  
\alpha \cdot 120 (1+\gamma)^2 \rho^2 _{\max} \frac{1}{\lambda_{\widehat{A}}} \cdot \Big[\Big(1 + \frac{\gamma \rho_{\max}}{ \min |\lambda(C)|}\Big)^2 \Big(1+ \frac{2  }{\lambda_C }   \Big)\cdot  \Big(\rho_{\max} \frac{ 1+\gamma }{ \min |\lambda(C)|}\Big)^2   \frac{\alpha^2}{\beta^2}     +
+  \Big( 1 + \frac{1}{ \min |\lambda(C)|} \Big)^2 \beta   \Big]  \cdot  5 \gamma^2\rho_{\max}^2 
\Big]$.
\end{itemize}
\section{Proof of Theorem \ref{thm: iid}}\label{appendix: iid proof}
Throughout the proof, we assume the learning rates $\alpha,\beta$ and the batch size $M$ satisfy the following conditions.
\begin{align}
    &\alpha \leq \min \Big\{  \frac{1}{5 \lambda_{\widehat{A}}} , \frac{\lambda_{\widehat{A}}}{60} / \Big[(1+\gamma)^2\rho^2_{\max}\big(1 + \frac{\gamma \rho_{\max}}{ \min |\lambda(C)|}\big)^2\Big]  \Big\}, \label{eq: lr iid 1}\\
& \frac{\alpha^2}{\beta^2}\cdot C_3 + \beta  \cdot C_4      \leq \min \Big\{  	\frac{1-D}{144}\frac{\lambda_{\widehat{A}}^2 \lambda_{C}}{   \rho^2_{\max} \gamma^2} ,5 (1-D), C_4\Big\}  , \label{eq: lr iid 2}\\
& M\beta > \frac{4}{\lambda_{C}},\label{eq: lr iid 3}\\
	&\frac{\lambda_{C}}{6}\beta - 10\beta^2 - 10\gamma^2\rho_{\max}^2 \big(\alpha^2 + \frac{2 \alpha^2}{\lambda_C } \frac{1}{\beta} \big)\cdot  \big(\rho_{\max} \frac{ 1+\gamma }{ \min |\lambda(C)|}\big)^2 \geq 0, \label{eq: lr iid 4}\\
	&\frac{1}{\alpha M}   +  \alpha  \cdot 5(1+\gamma)^2\rho^2_{\max}\big(1 + \frac{\gamma \rho_{\max}}{ \min |\lambda(C)|}\big)^2    \leq  \frac{\lambda_{\widehat{A}}}{6}, \label{eq: lr iid 5}\\
	&\frac{\alpha}{\beta^2 M} \cdot \frac{ 72 \rho^2_{\max} \gamma^2}{\lambda_{\widehat{A}}^2} \frac{1}{\lambda_{C}} +  \beta \cdot \frac{ 720 \rho^2_{\max} \gamma^2}{\lambda_{\widehat{A}}^2} \frac{1}{\lambda_{C}} \nonumber \\ 
	&\quad + \frac{\alpha^2}{\beta^2} \cdot \frac{ 720 \rho^2_{\max} \gamma^4}{\lambda_{\widehat{A}}^2} \frac{1}{\lambda_{C}}  \rho_{\max}^2 \big(1 + \frac{2 }{\lambda_C } \big)\cdot  \big(\rho_{\max} \frac{ 1+\gamma }{ \min |\lambda(C)|}\big)^2   +  \alpha\cdot\frac{60}{\lambda_{\widehat{A}}} \gamma^2\rho_{\max}^2 \leq 1, \label{eq: lr iid 6}\\
	&\max\{D,E,F\} < 1, \label{eq: lr iid 7}
\end{align} 
where $C_3$ and $C_4$ are specified in eq.(\ref{eq: iid def-C3}) and eq.(\ref{eq: iid def-C4}), respectively, and $D,E,F$ are specified in eq.(\ref{eq: iid def-D}), eq.(\ref{eq: iid def-E}), and eq.(\ref{eq: iid def-F}), respectively. We note that under the above conditions, all the supporting lemmas for proving the theorem are satisfied. Also, we note that for a sufficiently small target accuracy $\epsilon$, our choices of learning rates and batch size $\alpha = \calO(\epsilon^{\frac{3}{5}}), \beta = \calO({\epsilon^{\frac{2}{5}}})$, $M = \calO(\epsilon^{-\frac{3}{5}})$ that are stated in the theorem satisfy the above conditions \cref{eq: lr iid 1,eq: lr iid 2,eq: lr iid 3,eq: lr iid 4,eq: lr iid 5,eq: lr iid 6,eq: lr iid 7}. 

\paragraph{Proof Sketch.} The proof consists of the following key steps. 
\begin{enumerate}
	\item Develop \textit{preliminary bound for  $\sum_{t=0}^{M-1} \| \theta_{t}^{(m)} - \theta^\ast\|^2$} (Lemma \ref{lemma: iid pre-bound theta}).
	
	We bound $\sum_{t=0}^{M-1}\| \theta_{t}^{(m)}- \theta^\ast\|^2$ in terms of $\sum_{t=0}^{M-1} \|z_{t}^{(m)}\|^2$, $\|\tilde{z}^{(m-1)}\|^2$, and $\|\tilde{\theta}^{(m-1)}- \theta^\ast\|^2$. 
	
	\item Develop \textit{preliminary bound for  $\sum_{t=0}^{M-1} \| z_{t}^{(m)}\|^2$} (Lemma \ref{lemma: iid pre-bound z}).
	
	We bound $\sum_{t=0}^{M-1} \| z_{t}^{(m)}\|^2$ in terms of  $\sum_{t=0}^{M-1} \|\theta_{t}^{(m)}- \theta^\ast\|^2$, $\|\tilde{z}^{(m-1)}\|^2$, and $\|\tilde{\theta}^{(m-1)}- \theta^\ast\|^2$, and plug it into the preliminary bound of  $\sum_{t=0}^{M-1} \| \theta_{t}^{(m)}- \theta^\ast\|^2$. Then, we obtain an upper bound of $\sum_{t=0}^{M-1} \| \theta_{t}^{(m)}- \theta^\ast\|^2$ in terms of $\|\tilde{z}^{(m-1)}\|^2$, and $\|\tilde{\theta}^{(m-1)}- \theta^\ast\|^2$.  
	
	\item Develop \textit{preliminary non-asymptotic bound for $\|\tilde{z}^{(m)}\|^2$} (Lemma \ref{lemma: iid conv-z}).
	
	We develop a non-asymptotic bound for $\|\tilde{z}^{(m)}\|^2$. 
	
	\item Develop \textit{preliminary non-asymptotic bound for $\|\tilde{\theta}^{m} - \theta^\ast\|^2$} (Lemma \ref{lemma: iid theta}).
	
	We plug the bound in Lemma \ref{lemma: iid conv-z} into the previous upper bounds. Then, we obtain an inequality between $\EE \| \tilde{\theta}^{(m )}- \theta^\ast\|$ and $\EE \| \tilde{\theta}^{(m-1 )}- \theta^\ast\|$. Telescoping this inequality leads to the final result.

    \item Develop \textit{refined bound for $\|\tilde{z}^{(m)}\|^2$} (Lemma \ref{lemma: iid refined z}).
    
    We bound $\sum_{t=0}^{M-1} \| z_{t}^{(m)}\|^2$ in terms of $\sum_{t=0}^{M-1} \|\theta_{t}^{(m)}- \theta^\ast\|^2$, $\|\tilde{z}^{(m-1)}\|^2$, and $\|\tilde{\theta}^{(m-1)}- \theta^\ast\|^2$. Then, we apply Lemma \ref{lemma: iid pre-bound theta} and obtain an upper bound of $\sum_{t=0}^{M-1} \| z_{t}^{(m)}\|^2$ in terms of  $\|\tilde{z}^{(m-1)}\|^2$ and $\|\tilde{\theta}^{(m-1)}- \theta^\ast\|^2$. Moreover, we apply Lemma \ref{lemma: iid conv-z} and the preliminary non-asymptotic bound of $\|\tilde{\theta}^{m} - \theta^\ast\|^2$ to obtain an upper bound of $\sum_{t=0}^{M-1} \| z_{t}^{(m)}\|^2$ in terms of $\|\tilde{z}^{(m-1)}\|^2$. This gives the desired refined bound of $\|\tilde{z}^{m}\|^2$.
    
    \item Develop \textit{refined bound for $\|\tilde{\theta}^{m} - \theta^\ast\|^2$} (Theorem \ref{thm: iid}).
    
    We use the refined bound of $\|\tilde{z}^{m}\|^2$ instead of the preliminary bound obtained in the step 4.
\end{enumerate} 

First, based on Lemma \ref{lemma: iid pre-bound theta}, we have the following result
\begin{align*}
&  \frac{\lambda_{\widehat{A}}}{6}\alpha   \sum_{t=0}^{M-1}\EE_{m,0} \| \theta_{t}^{(m)} - \theta^\ast\|^2 \\
\leq & \Big[ 1 +  \alpha^2M \cdot 5(1+\gamma)^2\rho^2_{\max}\big(1 + \frac{\gamma \rho_{\max}}{ \min |\lambda(C)|}\big)^2 \Big] \EE_{m,0}\| \tilde{\theta}^{(m-1)} - \theta^\ast\|^2    + \alpha^2 \cdot5K_1 \\
&+ \alpha \cdot  \frac{ 2 \rho^2_{\max} \gamma^2}{\lambda_{\widehat{A}}}  \sum_{t=0}^{M-1}\EE_{m,0}\| z_{t}^{(m)}\|^2+ \alpha^2 M \cdot 5 \gamma^2\rho_{\max}^2 \EE_{m,0}\|\tilde{z}^{(m-1)}\|^2.
\end{align*}
Apply Lemma  \ref{lemma: iid pre-bound z} to  bound the term $\sum_{t=0}^{M-1} \EE_{m,0}\| z_{t}^{(m)}\|^2$ in the above inequality, re-arrange the obtained result and note that $\frac{\lambda_{C}}{6}\beta - 10\beta^2 - 10\gamma^2\rho_{\max}^2 \big(\alpha^2 + \frac{2 \alpha^2}{\lambda_C } \frac{1}{\beta} \big)\cdot  \big(\rho_{\max} \frac{ 1+\gamma }{ \min |\lambda(C)|}\big)^2 \geq 0$. Then, we obtain the following inequality,
\begin{align*}
&  \frac{\lambda_{\widehat{A}}}{12}\alpha  \sum_{t=0}^{M-1} \EE_{m,0} \| \theta_{t}^{(m)} - \theta^\ast \|^2\\
\leq &  \Big\{  1 +  \alpha^2M \cdot 5(1+\gamma)^2\rho^2_{\max}\big(1 + \frac{\gamma \rho_{\max}}{ \min |\lambda(C)|}\big)^2   +  \alpha M \cdot  \big(  \frac{\alpha^2}{\beta^2} \cdot C_1  + \beta \cdot C_2 \big)    \Big\} \EE_{m,0}\| \tilde{\theta}^{(m-1)} - \theta^\ast\|^2     \\
&+ \Big\{ \frac{\alpha}{\beta} \cdot  \frac{ 2 \rho^2_{\max} \gamma^2}{\lambda_{\widehat{A}}} \frac{3}{\lambda_{C}}  \Big[1+ \Big[10\beta^2 + 10\gamma^2\rho_{\max}^2 \big(1 + \frac{2 }{\lambda_C } \big)\cdot  \big(\rho_{\max} \frac{ 1+\gamma }{ \min |\lambda(C)|}\big)^2 \frac{\alpha^2}{\beta} \Big]M \Big] + \alpha^2 M \cdot 5 \gamma^2\rho_{\max}^2   \Big\}\\
&\quad \cdot\EE_{m,0}\|  \tilde{z}^{(m-1)}\|^2\\
&+   \alpha\beta  \cdot  \frac{ 60 \rho^2_{\max} \gamma^2}{\lambda_{\widehat{A}}\lambda_{C} }   \cdot K_2   + \frac{\alpha^3}{\beta^2} \cdot  \frac{ 60 \rho^2_{\max} \gamma^2}{\lambda_{\widehat{A}}\lambda_{C}}   \cdot \big(\rho_{\max} \frac{ 1+\gamma }{ \min |\lambda(C)|}\big)^2  K_1 \big(1+ \frac{2  }{\lambda_C }  \big)   + \alpha^2 \cdot5K_1.
\end{align*}
where $C_1$ and $C_2$ {are specified in eq.(\ref{eq: iid def-C1}) and eq.(\ref{eq: iid def-C2}), respectively}. 
Dividing $\frac{\lambda_{\widehat{A}}}{12}\alpha M$ and taking total expectation on both sides of the above inequality, and applying Lemma  \ref{lemma: iid refined z} to  bound the term $\EE \|  \tilde{z}^{(m-1)}\|^2$, we obtain that
 \begin{align*}
 &  \EE  \| \tilde{ \theta}^{(m)} - \theta^\ast \|^2\\
 \leq & D \cdot \EE_{m,0}\| \tilde{\theta}^{(m-1)} - \theta^\ast\|^2     \\
 &+ \frac{12}{\lambda_{\widehat{A}}} \frac{1}{\alpha M}\Big\{ \frac{\alpha}{\beta} \cdot  \frac{ 2 \rho^2_{\max} \gamma^2}{\lambda_{\widehat{A}}} \frac{3}{\lambda_{C}}  \Big[1+ \Big[10\beta^2 + 10\gamma^2\rho_{\max}^2 \big(1 + \frac{2 }{\lambda_C } \big)\cdot  \big(\rho_{\max} \frac{ 1+\gamma }{ \min |\lambda(C)|}\big)^2 \frac{\alpha^2}{\beta} \Big]M \Big] + \alpha^2 M \cdot 5 \gamma^2\rho_{\max}^2   \Big\}   \\
 & \times \Big\{ F^{m-1} \cdot \EE \|  \tilde{z}^{(0)}\|^2 +  \frac{D^{m-1} - F^{m-1}}{D - F} \cdot \EE\| \tilde{\theta}^{(0)} - \theta^\ast\|^2    + \frac{\frac{D^{m-1} - F^{m-1}}{D - F} - \frac{E^{m-1} - F^{m-1}}{E - F}}{D - E}\EE \|  \tilde{z}^{(0)}\|^2   \\
 &+   \frac{1}{1 - F}\frac{1}{1-D} \frac{1152}{\lambda_{\widehat{A}}^2}  \frac{   \rho^2_{\max} \gamma^2}{\lambda_C^2}  \big(  \frac{\alpha^2}{\beta^2}\cdot C_3 + \beta  \cdot C_4 \big)    \Big[  \frac{1}{\beta M}  + \beta \cdot 20 + \frac{\alpha^2}{\beta^2} \cdot 20\gamma^2\rho_{\max}^2 \big(1 + \frac{2 }{\lambda_C } \big)\cdot  \big(\rho_{\max} \frac{ 1+\gamma }{ \min |\lambda(C)|}\big)^2   \Big]   \\
 &\times  \big[ \beta \cdot \frac{24}{\lambda_C }   H_{\text{VR}}^2 + \frac{\alpha^2}{\beta^2} \cdot \big(1 + \frac{2 }{\lambda_C }  \big) \cdot \frac{2}{\lambda_C }\big(\rho_{\max} \frac{ 1+\gamma }{ \min |\lambda(C)|}\big)^2 G_{\text{VR}}^2 \big] \\  
 &+   \frac{1}{1 - F} \frac{1}{1-D}\Big[ \frac{\beta}{M} \cdot\big(  \frac{80}{\lambda_{C}} K_2 + C_4  \frac{600}{\lambda_{C}} \frac{K_1}{\lambda_{\widehat{A}}}  \big)  +  \frac{\alpha^2}{\beta^2}\frac{1}{M} \cdot \big(\frac{80}{\lambda_{C}}\big(\rho_{\max} \frac{ 1+\gamma }{ \min |\lambda(C)|}\big)^2  K_1 \big(1+ \frac{2  }{\lambda_C }  \big) + C_3  \frac{600}{\lambda_{C}} \frac{K_1}{\lambda_{\widehat{A}}}\big)      \Big] \Big\}  \\
 &+ \frac{12}{\lambda_{\widehat{A}}} \frac{1}{\alpha M} \Big\{  \alpha\beta  \cdot  \frac{ 60 \rho^2_{\max} \gamma^2}{\lambda_{\widehat{A}}\lambda_{C} }   \cdot K_2   + \frac{\alpha^3}{\beta^2} \cdot  \frac{ 60 \rho^2_{\max} \gamma^2}{\lambda_{\widehat{A}}\lambda_{C}}   \cdot \big(\rho_{\max} \frac{ 1+\gamma }{ \min |\lambda(C)|}\big)^2  K_1 \big(1+ \frac{2  }{\lambda_C }  \big)   + \alpha^2 \cdot5K_1 \Big\}.
 \end{align*} 
 Note that the second coefficient in the above inequality can be simplified as 
 \begin{align*}
 	&\frac{12}{\lambda_{\widehat{A}}} \frac{1}{\alpha M}\Big\{ \frac{\alpha}{\beta} \cdot  \frac{ 2 \rho^2_{\max} \gamma^2}{\lambda_{\widehat{A}}} \frac{3}{\lambda_{C}}  \Big[1+ \Big[10\beta^2 + 10\gamma^2\rho_{\max}^2 \big(1 + \frac{2 }{\lambda_C } \big)\cdot  \big(\rho_{\max} \frac{ 1+\gamma }{ \min |\lambda(C)|}\big)^2 \frac{\alpha^2}{\beta} \Big]M \Big] + \alpha^2 M \cdot 5 \gamma^2\rho_{\max}^2   \Big\}\\
 	=& \frac{12}{\lambda_{\widehat{A}}}  \Big\{ \frac{\alpha}{\beta} \cdot  \frac{ 2 \rho^2_{\max} \gamma^2}{\lambda_{\widehat{A}}} \frac{3}{\lambda_{C}}  \Big[\frac{1}{\beta M}+  10 \beta + 10\gamma^2\rho_{\max}^2 \big(1 + \frac{2 }{\lambda_C } \big)\cdot  \big(\rho_{\max} \frac{ 1+\gamma }{ \min |\lambda(C)|}\big)^2 \frac{\alpha^2}{\beta^2}  \Big] + \alpha \cdot 5 \gamma^2\rho_{\max}^2   \Big\} \\ 
 	=& \frac{\alpha}{\beta^2 M} \cdot \frac{ 72 \rho^2_{\max} \gamma^2}{\lambda_{\widehat{A}}^2} \frac{1}{\lambda_{C}} +  \beta \cdot \frac{ 720 \rho^2_{\max} \gamma^2}{\lambda_{\widehat{A}}^2} \frac{1}{\lambda_{C}} + \frac{\alpha^2}{\beta^2} \cdot \frac{ 720 \rho^2_{\max} \gamma^4}{\lambda_{\widehat{A}}^2} \frac{1}{\lambda_{C}}  \rho_{\max}^2 \big(1 + \frac{2 }{\lambda_C } \big)\cdot  \big(\rho_{\max} \frac{ 1+\gamma }{ \min |\lambda(C)|}\big)^2 \\
 	&\quad+  \alpha\cdot\frac{60}{\lambda_{\widehat{A}}} \gamma^2\rho_{\max}^2,
 \end{align*} 
and note that we have assumed that
 \begin{align}\label{eq: tmp}
 	&\frac{\alpha}{\beta^2 M} \cdot \frac{ 72 \rho^2_{\max} \gamma^2}{\lambda_{\widehat{A}}^2} \frac{1}{\lambda_{C}} +  \beta \cdot \frac{ 720 \rho^2_{\max} \gamma^2}{\lambda_{\widehat{A}}^2} \frac{1}{\lambda_{C}} \nonumber\\
 	&\quad+ \frac{\alpha^2}{\beta^2} \cdot \frac{ 720 \rho^2_{\max} \gamma^4}{\lambda_{\widehat{A}}^2} \frac{1}{\lambda_{C}}  \rho_{\max}^2 \big(1 + \frac{2 }{\lambda_C } \big)\cdot  \big(\rho_{\max} \frac{ 1+\gamma }{ \min |\lambda(C)|}\big)^2 +  \alpha\cdot\frac{60}{\lambda_{\widehat{A}}} \gamma^2\rho_{\max}^2 \leq 1.
 \end{align}
 Then, we obtain that 
  \begin{align*}
 &  \EE  \| \tilde{ \theta}^{(m)} - \theta^\ast \|^2\\
 \leq & D \cdot \EE_{m,0}\| \tilde{\theta}^{(m-1)} - \theta^\ast\|^2 +  F^{m-1} \cdot \EE \|  \tilde{z}^{(0)}\|^2 +  \frac{D^{m-1} - F^{m-1}}{D - F} \cdot \EE\| \tilde{\theta}^{(0)} - \theta^\ast\|^2  \\
 &\quad+ \frac{\frac{D^{m-1} - F^{m-1}}{D - F} - \frac{E^{m-1} - F^{m-1}}{E - F}}{D - E}\EE \|  \tilde{z}^{(0)}\|^2     \\
 &+ \Big\{ \frac{\alpha}{\beta^2 M} \cdot \frac{ 72 \rho^2_{\max} \gamma^2}{\lambda_{\widehat{A}}^2} \frac{1}{\lambda_{C}} +  \beta \cdot \frac{ 720 \rho^2_{\max} \gamma^2}{\lambda_{\widehat{A}}^2} \frac{1}{\lambda_{C}} \\
 &\quad+ \frac{\alpha^2}{\beta^2} \cdot \frac{ 720 \rho^2_{\max} \gamma^4}{\lambda_{\widehat{A}}^2} \frac{1}{\lambda_{C}}  \rho_{\max}^2 \big(1 + \frac{2 }{\lambda_C } \big)\cdot  \big(\rho_{\max} \frac{ 1+\gamma }{ \min |\lambda(C)|}\big)^2 +  \alpha\cdot\frac{60}{\lambda_{\widehat{A}}} \gamma^2\rho_{\max}^2  \Big\}   \\
 & \times \Big\{  \frac{1}{1 - F}\frac{1}{1-D} \frac{1152}{\lambda_{\widehat{A}}^2}  \frac{   \rho^2_{\max} \gamma^2}{\lambda_C^2}  \big(  \frac{\alpha^2}{\beta^2}\cdot C_3 + \beta  \cdot C_4 \big) \Big[  \frac{1}{\beta M}  + \beta \cdot 20 + \frac{\alpha^2}{\beta^2} \cdot 20\gamma^2\rho_{\max}^2 \big(1 + \frac{2 }{\lambda_C } \big)\cdot  \big(\rho_{\max} \frac{ 1+\gamma }{ \min |\lambda(C)|}\big)^2   \Big]  \\
 &\quad\cdot\big[ \beta \cdot \frac{24}{\lambda_C }   H_{\text{VR}}^2 + \frac{\alpha^2}{\beta^2} \cdot \big(1 + \frac{2 }{\lambda_C }  \big) \cdot \frac{2}{\lambda_C }\big(\rho_{\max} \frac{ 1+\gamma }{ \min |\lambda(C)|}\big)^2 G_{\text{VR}}^2 \big] \\  
 &\quad+   \frac{1}{1 - F} \frac{1}{1-D}\Big[ \frac{\beta}{M} \cdot\big(  \frac{80}{\lambda_{C}} K_2 + C_4  \frac{600}{\lambda_{C}} \frac{K_1}{\lambda_{\widehat{A}}}  \big)  +  \frac{\alpha^2}{\beta^2}\frac{1}{M} \cdot \big(\frac{80}{\lambda_{C}}\big(\rho_{\max} \frac{ 1+\gamma }{ \min |\lambda(C)|}\big)^2  K_1 \big(1+ \frac{2  }{\lambda_C }  \big) + C_3  \frac{600}{\lambda_{C}} \frac{K_1}{\lambda_{\widehat{A}}}\big)      \Big] \Big\}  \\
 &\quad+ \frac{12}{\lambda_{\widehat{A}}} \frac{1}{\alpha M} \Big\{  \alpha\beta  \cdot  \frac{ 60 \rho^2_{\max} \gamma^2}{\lambda_{\widehat{A}}\lambda_{C} }   \cdot K_2   + \frac{\alpha^3}{\beta^2} \cdot  \frac{ 60 \rho^2_{\max} \gamma^2}{\lambda_{\widehat{A}}\lambda_{C}}   \cdot \big(\rho_{\max} \frac{ 1+\gamma }{ \min |\lambda(C)|}\big)^2  K_1 \big(1+ \frac{2  }{\lambda_C }  \big)   + \alpha^2 \cdot5K_1 \Big\}\\
 \leq & D \cdot \EE_{m,0}\| \tilde{\theta}^{(m-1)} - \theta^\ast\|^2 +  F^{m-1} \cdot \EE \|  \tilde{z}^{(0)}\|^2 +  \frac{D^{m-1} - F^{m-1}}{D - F} \cdot \EE\| \tilde{\theta}^{(0)} - \theta^\ast\|^2  \\
 &\quad+ \frac{\frac{D^{m-1} - F^{m-1}}{D - F} - \frac{E^{m-1} - F^{m-1}}{E - F}}{D - E}\EE \|  \tilde{z}^{(0)}\|^2     \\
 &+ \Big\{ \frac{\alpha}{\beta^2 M} \cdot \frac{ 72 \rho^2_{\max} \gamma^2}{\lambda_{\widehat{A}}^2} \frac{1}{\lambda_{C}} +  \beta \cdot \frac{ 720 \rho^2_{\max} \gamma^2}{\lambda_{\widehat{A}}^2} \frac{1}{\lambda_{C}} \\
 &\quad+ \frac{\alpha^2}{\beta^2} \cdot \frac{ 720 \rho^2_{\max} \gamma^4}{\lambda_{\widehat{A}}^2} \frac{1}{\lambda_{C}}  \rho_{\max}^2 \big(1 + \frac{2 }{\lambda_C } \big)\cdot  \big(\rho_{\max} \frac{ 1+\gamma }{ \min |\lambda(C)|}\big)^2 +  \alpha\cdot\frac{60}{\lambda_{\widehat{A}}} \gamma^2\rho_{\max}^2  \Big\}   \\
 & \quad\cdot \Big\{  \frac{1}{1 - F}\frac{1}{1-D} \frac{1152}{\lambda_{\widehat{A}}^2}  \frac{   \rho^2_{\max} \gamma^2}{\lambda_C^2}  \big(  \frac{\alpha^2}{\beta^2}\cdot C_3 + \beta  \cdot C_4 \big)    \Big[  \frac{1}{\beta M}  + \beta \cdot 20 + \frac{\alpha^2}{\beta^2} \cdot 20\gamma^2\rho_{\max}^2 \big(1 + \frac{2 }{\lambda_C } \big)\cdot  \big(\rho_{\max} \frac{ 1+\gamma }{ \min |\lambda(C)|}\big)^2   \Big]    \\
 &\quad\cdot \big[ \beta \cdot \frac{24}{\lambda_C }   H_{\text{VR}}^2 + \frac{\alpha^2}{\beta^2} \cdot \big(1 + \frac{2 }{\lambda_C }  \big) \cdot \frac{2}{\lambda_C }\big(\rho_{\max} \frac{ 1+\gamma }{ \min |\lambda(C)|}\big)^2 G_{\text{VR}}^2 \big] \Big\}\\  
 &\quad+   \frac{1}{1 - F} \frac{1}{1-D}\Big[ \frac{\beta}{M} \cdot\big(  \frac{80}{\lambda_{C}} K_2 + C_4  \frac{600}{\lambda_{C}} \frac{K_1}{\lambda_{\widehat{A}}}  \big)  +  \frac{\alpha^2}{\beta^2}\frac{1}{M} \cdot \big(\frac{80}{\lambda_{C}}\big(\rho_{\max} \frac{ 1+\gamma }{ \min |\lambda(C)|}\big)^2  K_1 \big(1+ \frac{2  }{\lambda_C }  \big) + C_3  \frac{600}{\lambda_{C}} \frac{K_1}{\lambda_{\widehat{A}}}\big)      \Big]   \\
 &\quad+   \frac{1}{1 - F} \frac{1}{1-D} \Big\{  \frac{\beta}{  M}  \cdot  \frac{ 720 \rho^2_{\max} \gamma^2}{\lambda_{\widehat{A}}^2\lambda_{C} }   \cdot K_2   + \frac{\alpha^2}{\beta^2} \frac{1}{M}\cdot  \frac{ 720 \rho^2_{\max} \gamma^2}{\lambda_{\widehat{A}}^2\lambda_{C}}   \cdot \big(\rho_{\max} \frac{ 1+\gamma }{ \min |\lambda(C)|}\big)^2  K_1 \big(1+ \frac{2  }{\lambda_C }  \big)   + \frac{\alpha }{M} \cdot \frac{60}{\lambda_{\widehat{A}}} K_1 \Big\}\\
= & D \cdot \EE_{m,0}\| \tilde{\theta}^{(m-1)} - \theta^\ast\|^2 +  F^{m-1} \cdot \EE \|  \tilde{z}^{(0)}\|^2 +  \frac{D^{m-1} - F^{m-1}}{D - F} \cdot \EE\| \tilde{\theta}^{(0)} - \theta^\ast\|^2  \\
&\quad+ \frac{\frac{D^{m-1} - F^{m-1}}{D - F} - \frac{E^{m-1} - F^{m-1}}{E - F}}{D - E}\EE \|  \tilde{z}^{(0)}\|^2     \\
&+ \Big\{ \frac{\alpha}{\beta^2 M} \cdot \frac{ 72 \rho^2_{\max} \gamma^2}{\lambda_{\widehat{A}}^2} \frac{1}{\lambda_{C}} +  \beta \cdot \frac{ 720 \rho^2_{\max} \gamma^2}{\lambda_{\widehat{A}}^2} \frac{1}{\lambda_{C}} \\
&\quad+ \frac{\alpha^2}{\beta^2} \cdot \frac{ 720 \rho^2_{\max} \gamma^4}{\lambda_{\widehat{A}}^2} \frac{1}{\lambda_{C}}  \rho_{\max}^2 \big(1 + \frac{2 }{\lambda_C } \big)\cdot  \big(\rho_{\max} \frac{ 1+\gamma }{ \min |\lambda(C)|}\big)^2 +  \alpha\cdot\frac{60}{\lambda_{\widehat{A}}} \gamma^2\rho_{\max}^2  \Big\}   \\
& \quad\cdot \Big\{  \frac{1}{1 - F}\frac{1}{1-D} \frac{1152}{\lambda_{\widehat{A}}^2}  \frac{   \rho^2_{\max} \gamma^2}{\lambda_C^2}  \big(  \frac{\alpha^2}{\beta^2}\cdot C_3 + \beta  \cdot C_4 \big)    \Big[  \frac{1}{\beta M}  + \beta \cdot 20 + \frac{\alpha^2}{\beta^2} \cdot 20\gamma^2\rho_{\max}^2 \big(1 + \frac{2 }{\lambda_C } \big)\cdot  \big(\rho_{\max} \frac{ 1+\gamma }{ \min |\lambda(C)|}\big)^2   \Big]    \\
&\quad\cdot \big[ \beta \cdot \frac{24}{\lambda_C }   H_{\text{VR}}^2 + \frac{\alpha^2}{\beta^2} \cdot \big(1 + \frac{2 }{\lambda_C }  \big) \cdot \frac{2}{\lambda_C }\big(\rho_{\max} \frac{ 1+\gamma }{ \min |\lambda(C)|}\big)^2 G_{\text{VR}}^2 \big] \Big\}\\   
&\quad+   \frac{1}{1 - F} \frac{1}{1-D} \Big\{  \frac{\beta}{  M}  \cdot \Big[   \frac{ 720 \rho^2_{\max} \gamma^2}{\lambda_{\widehat{A}}^2\lambda_{C} }     K_2 + \big(  \frac{80}{\lambda_{C}} K_2 + C_4  \frac{600}{\lambda_{C}} \frac{K_1}{\lambda_{\widehat{A}}}  \big) \Big] \\
&\quad+ \frac{\alpha^2}{\beta^2} \frac{1}{M}\cdot  \Big[ \frac{ 720 \rho^2_{\max} \gamma^2}{\lambda_{\widehat{A}}^2\lambda_{C}}   \cdot \big(\rho_{\max} \frac{ 1+\gamma }{ \min |\lambda(C)|}\big)^2  K_1 \big(1+ \frac{2  }{\lambda_C }  \big) +  \big(\frac{80}{\lambda_{C}}\big(\rho_{\max} \frac{ 1+\gamma }{ \min |\lambda(C)|}\big)^2  K_1 \big(1+ \frac{2  }{\lambda_C }  \big) + C_3  \frac{600}{\lambda_{C}} \frac{K_1}{\lambda_{\widehat{A}}}\big)  \Big] \\
&\quad+ \frac{\alpha }{M} \cdot \frac{60}{\lambda_{\widehat{A}}} K_1 \Big\},
 \end{align*} 
 where we use eq.(\ref{eq: tmp}) in the first step, use $1< \frac{1}{1-D} \frac{1}{1-F}$ in the second step, and rearrange the terms in the last step. Next, we telescope the above inequality over $m$. To further simplify the result, we choose the optimal relation between $\alpha$ and $\beta$, i.e., $\beta = \calO(\alpha^{2/3})$. Then, for sufficiently small $\alpha$ and $\beta$, we have $D> E$ and $D > F$, and we obtain that
 \begin{align*}
 &  \EE  \| \tilde{ \theta}^{(m)} - \theta^\ast \|^2\\
 \leq & D^m \cdot  \| \tilde{\theta}^{(0)} - \theta^\ast\|^2 +  D^{m-1} \cdot \EE \|  \tilde{z}^{(0)}\|^2 +  \frac{ m D^{m-1} }{D - F} \cdot \EE\| \tilde{\theta}^{(0)} - \theta^\ast\|^2    + \frac{ m D^{m-1}  }{(D-E)(D-F)}\EE \|  \tilde{z}^{(0)}\|^2     \\
 &+ \frac{1}{1-D}\Big\{ \frac{\alpha}{\beta^2 M} \cdot \frac{ 72 \rho^2_{\max} \gamma^2}{\lambda_{\widehat{A}}^2} \frac{1}{\lambda_{C}} \\
 &\quad+  \beta \cdot \frac{ 720 \rho^2_{\max} \gamma^2}{\lambda_{\widehat{A}}^2} \frac{1}{\lambda_{C}} + \frac{\alpha^2}{\beta^2} \cdot \frac{ 720 \rho^2_{\max} \gamma^4}{\lambda_{\widehat{A}}^2} \frac{1}{\lambda_{C}}  \rho_{\max}^2 \big(1 + \frac{2 }{\lambda_C } \big)\cdot  \big(\rho_{\max} \frac{ 1+\gamma }{ \min |\lambda(C)|}\big)^2 +  \alpha\cdot\frac{60}{\lambda_{\widehat{A}}} \gamma^2\rho_{\max}^2  \Big\}   \\
 & \times \Big\{  \frac{1}{1 - F}\frac{1}{1-D} \frac{1152}{\lambda_{\widehat{A}}^2}  \frac{   \rho^2_{\max} \gamma^2}{\lambda_C^2}  \big(  \frac{\alpha^2}{\beta^2}\cdot C_3 + \beta  \cdot C_4 \big)    \Big[  \frac{1}{\beta M}  + \beta \cdot 20 + \frac{\alpha^2}{\beta^2} \cdot 20\gamma^2\rho_{\max}^2 \big(1 + \frac{2 }{\lambda_C } \big)\cdot  \big(\rho_{\max} \frac{ 1+\gamma }{ \min |\lambda(C)|}\big)^2   \Big]    \\
 &\times \big[ \beta \cdot \frac{24}{\lambda_C }   H_{\text{VR}}^2 + \frac{\alpha^2}{\beta^2} \cdot \big(1 + \frac{2 }{\lambda_C }  \big) \cdot \frac{2}{\lambda_C }\big(\rho_{\max} \frac{ 1+\gamma }{ \min |\lambda(C)|}\big)^2 G_{\text{VR}}^2 \big] \Big\}\\   
 &\quad+   \frac{1}{1 - F} \frac{1}{(1-D)^2} \Big\{  \frac{\beta}{  M}  \cdot \Big[   \frac{ 720 \rho^2_{\max} \gamma^2}{\lambda_{\widehat{A}}^2\lambda_{C} }     K_2 + \big(  \frac{80}{\lambda_{C}} K_2 + C_4  \frac{600}{\lambda_{C}} \frac{K_1}{\lambda_{\widehat{A}}}  \big) \Big] \\
 &\quad+ \frac{\alpha^2}{\beta^2} \frac{1}{M}\cdot  \Big[ \frac{ 720 \rho^2_{\max} \gamma^2}{\lambda_{\widehat{A}}^2\lambda_{C}}   \cdot \big(\rho_{\max} \frac{ 1+\gamma }{ \min |\lambda(C)|}\big)^2  K_1 \big(1+ \frac{2  }{\lambda_C }  \big) +  \big(\frac{80}{\lambda_{C}}\big(\rho_{\max} \frac{ 1+\gamma }{ \min |\lambda(C)|}\big)^2  K_1 \big(1+ \frac{2  }{\lambda_C }  \big) \\
 &\quad+ C_3  \frac{600}{\lambda_{C}} \frac{K_1}{\lambda_{\widehat{A}}}\big)  \Big]   + \frac{\alpha }{M} \cdot \frac{60}{\lambda_{\widehat{A}}} K_1 \Big\}.
 \end{align*}
 The first four terms in the right hand side of the above inequality are dominated by the order $\calO(mD^m)$.
 Also, under the choices of learning rates, the { fifth} term is in the order of $\calO(\beta^4)$ and the last term (in the last three lines) is in the order of $\calO(\frac{\beta}{M})$. {To elaborate this, we note that the { fifth} term is a product of two curly brackets: the first one is in the order of $\calO(\frac{\alpha}{\beta^2} \frac{1}{M}) + \calO(\beta)$, the second one is in the order of $\calO(\beta)\times \big(  \calO(\frac{1}{\beta M}) + \calO(\beta) \big) \times \calO(\beta)= \calO(\frac{\beta}{M}) + \calO(\beta^3)$. So, their product is in the order of $\big(\calO(\frac{\alpha}{\beta^2} \frac{1}{M}) + \calO(\beta)\big)\times \big(\calO(\frac{\beta}{M}) + \calO(\beta^3)\big) = \calO(\frac{\alpha}{\beta}\frac{1}{M^2}) + \calO(\frac{\beta^2}{M})+\calO(\frac{\alpha \beta}{M})+\calO(\beta^4) = \calO(\frac{\beta^2}{M}) + \calO(\beta^4)$}. The last term is in the order of 
$\frac{\beta}{M}$. Therefore, the above inequality implies that 
 
 \begin{align*}
  \EE  \| \tilde{ \theta}^{(m)} - \theta^\ast \|^2 \leq   \calO(m D^m) + \calO(\beta^4) + \calO(\frac{\beta}{M}).
 \end{align*}

 Next, we compute the sample complexity for achieving $\EE  \| \tilde{ \theta}^{(m)} - \theta^\ast \|^2 \leq\epsilon$. The above convergence rate implies that, for sufficiently small $\beta$ and sufficiently large $M$, there always exists constants $I_1, I_2, I_3$ such that 
		\[ \EE  \| \tilde{\theta}^{(m)} - \theta^\ast\|^2 \leq m D^m I_1 +  \beta^4 I_2 +  \frac{\beta}{M} I_3.\]
		We require
		\begin{enumerate}[label=(\roman*)]
		    \item $\beta^4 I_2 \leq \epsilon/3\Rightarrow \beta \leq I_2^{1/4}\epsilon^{1/4} = \calO(\epsilon^{1/4})$.
		    \item $\frac{\beta}{M} I_3 \leq \epsilon/3\Rightarrow M\geq \calO(\frac{ \beta}{\epsilon})$.
		    \item $m D^m I_1 \leq \epsilon/3$. We notice that this inequality implies $D^m I_1 \leq \epsilon/(3m)$. 
		\end{enumerate}
		We choose $m = \calO(\log {\epsilon}^{-1})$ so that $m \le \calO(\log \epsilon^{-2})$. Using the upper bound of $m$, the requirement in (iii) suffices to require that $D^m \le \calO(\epsilon/\log \epsilon^{-2})$, which further implies that $m\ge \calO(\log \epsilon^{-1} + \log\log \epsilon^{-2}) / \log D^{-1} = \calO(\log \epsilon^{-1})$ (note that $D<1$). Hence, it is valid to choose $m = \calO(\log {\epsilon}^{-1})$.
		Also, since $\alpha = \calO(\beta^{2/3})$, then $D \leq 1$ requires that $M \geq \calO(\beta^{-3/2})$, which combines with (ii) further requires that 
		\[M \geq \max \{ \calO(\frac{\beta}{\epsilon}), \calO(\beta^{-3/2}) \}.\]
		Let $\calO(\frac{\beta}{\epsilon})$ and $\calO(\beta^{-3/2})$ be of the same order, i.e.,
		$\beta = \calO(\epsilon^{2/5})$,
which satisfies (i). So overall we require that
	 $M\geq \calO(\epsilon^{-3/5})$,
		which leads to the sample complexity
		\[mM \geq \calO(\epsilon^{-3/5}\log {\epsilon}^{-1}).\] 
 
\section{Proof of \Cref{cor: iid}}

\begin{corollary} 
Under the same assumptions as those of \Cref{thm: iid}, choose the learning rates $\alpha,\beta$ and the batch size $M$  such that all requirements of \Cref{thm: iid} are satisfied.
	Then, the following refined bound holds.  
	\begin{align*}
	&  \EE \|\tilde{z}^{(m)}\|^2 \\
	\leq    & F^m \cdot \EE \|  \tilde{z}^{(0)}\|^2 +  \frac{D^{m} - F^m}{D - F} \cdot \EE\| \tilde{\theta}^{(0)} - \theta^\ast\|^2    + \frac{\frac{D^{m} - F^m}{D - F} - \frac{E^{m} - F^m}{E - F}}{D - E}\EE \|  \tilde{z}^{(0)}\|^2   \\
	&+   \frac{1}{1 - F}\frac{1}{1-D} \frac{1152}{\lambda_{\widehat{A}}^2}  \frac{   \rho^2_{\max} \gamma^2}{\lambda_C^2}  \big(  \frac{\alpha^2}{\beta^2}\cdot C_3 + \beta  \cdot C_4 \big)    \Big[  \frac{1}{\beta M}  + \beta \cdot 20 + \frac{\alpha^2}{\beta^2} \cdot 20\gamma^2\rho_{\max}^2 \big(1 + \frac{2 }{\lambda_C } \big)\cdot  \big(\rho_{\max} \frac{ 1+\gamma }{ \min |\lambda(C)|}\big)^2   \Big]   \\
	&\times  \big[ \beta \cdot \frac{24}{\lambda_C }   H_{\text{VR}}^2 + \frac{\alpha^2}{\beta^2} \cdot \big(1 + \frac{2 }{\lambda_C }  \big) \cdot \frac{2}{\lambda_C }\big(\rho_{\max} \frac{ 1+\gamma }{ \min |\lambda(C)|}\big)^2 G_{\text{VR}}^2 \big] \\  
	&+   \frac{1}{1 - F} \frac{1}{1-D}\Big[ \frac{\beta}{M} \cdot\big(  \frac{80}{\lambda_{C}} K_2 + C_4  \frac{600}{\lambda_{C}} \frac{K_1}{\lambda_{\widehat{A}}}  \big)  +  \frac{\alpha^2}{\beta^2}\frac{1}{M} \cdot \big(\frac{80}{\lambda_{C}}\big(\rho_{\max} \frac{ 1+\gamma }{ \min |\lambda(C)|}\big)^2  K_1 \big(1+ \frac{2  }{\lambda_C }  \big) + C_3  \frac{600}{\lambda_{C}} \frac{K_1}{\lambda_{\widehat{A}}}\big)      \Big].
	\end{align*}
	where $K_1$ is specified in eq.(\ref{eq: iid def-K1}) in Lemma \ref{lemma: iid GVR}, $K_2$ is specified in eq.(\ref{eq: iid def-K5}) in Lemma \ref{lemma: iid HVR}, $D$ and $E$ are specified in eq.(\ref{eq: iid def-D}) and eq.(\ref{eq: iid def-E}) in \Cref{lemma: iid theta}.
\end{corollary}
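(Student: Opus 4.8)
The plan is to recognize that the displayed refined bound is exactly the estimate produced in Step~5 of the proof of \Cref{thm: iid} (i.e.\ \Cref{lemma: iid refined z}), so the proof reduces to setting up three nested one-step recursions over the outer index $m$ and telescoping them.

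I would organize the argument around three recursions, each read off from the per-epoch estimates already available. First, the \emph{refined} tracking recursion: combine the per-epoch tracking estimate (\Cref{lemma: iid pre-bound z}) with the preliminary convergence estimate (\Cref{lemma: iid pre-bound theta}) and solve the resulting coupled per-epoch inequalities---which closes because the stepsize conditions (e.g.\ \cref{eq: lr iid 4}) keep the cross-coefficients small---to obtain
\begin{align*}
\EE\|\tilde z^{(m)}\|^2 \le F\,\EE\|\tilde z^{(m-1)}\|^2 + p\,\EE\|\tilde\theta^{(m-1)}-\theta^\ast\|^2 + R_z,
\end{align*}
with contraction factor $F$, explicit coefficient $p$, and constant remainder $R_z$ equal to the last two constant blocks of the statement. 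Second, the convergence recursion (\Cref{lemma: iid theta}), contracting at rate $D$ and driven by the \emph{preliminary} tracking error; third, the preliminary tracking recursion (\Cref{lemma: iid conv-z}), contracting at rate $E$.

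Telescoping the second and third recursions first yields a closed form for $\EE\|\tilde\theta^{(m)}-\theta^\ast\|^2$ containing $D^m$, $E^m$, and their convolution $\tfrac{D^m-E^m}{D-E}$. I would then insert this closed form into the driving term $p\,\EE\|\tilde\theta^{(m-1)}-\theta^\ast\|^2$ of the refined recursion and telescope once more against $F$: convolving $F$ with $D^m$ gives $\tfrac{D^m-F^m}{D-F}$, while convolving $F$ with the $D$--$E$ convolution gives the double term $\tfrac{1}{D-E}\big(\tfrac{D^m-F^m}{D-F}-\tfrac{E^m-F^m}{E-F}\big)$; together with $F^m\,\EE\|\tilde z^{(0)}\|^2$ these are exactly the three homogeneous terms in the statement. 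The inhomogeneous part is the geometric sum of the constant remainders, and bounding each running partial sum by $\tfrac{1}{1-F}\tfrac{1}{1-D}$ (valid since $D,F<1$ by the stepsize conditions) reproduces the remaining constant blocks, completing the claimed inequality. If one further wants the compact order, specializing to $\beta=\calO(\alpha^{2/3})$ makes $\alpha^2/\beta^2=\calO(\beta)$, whence the first block factors as $\calO(\alpha\beta^{-2}M^{-1}+\beta)\times\calO(\beta M^{-1}+\beta^3)=\calO(\beta^3+\beta M^{-1})$ and all homogeneous terms are dominated by $\calO(D^m)$, giving $\EE\|\tilde z^{(m)}\|^2\le\calO(D^m+\beta^3+\beta M^{-1})$.

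The main obstacle is the double telescoping of the coupled $\tilde z\to\tilde\theta\to\tilde z$ system: I must confirm the strict ordering $E,F<D<1$ throughout the admissible range of $(\alpha,\beta,M)$ so that the two- and three-fold convolutions collapse correctly and none of the denominators $D-E$, $D-F$, $E-F$ degenerates, and I must check that substituting the closed form of $\EE\|\tilde\theta^{(m)}-\theta^\ast\|^2$ into the refined recursion keeps every cross term. The remaining work---verifying that $R_z$ and the coefficient $p$ coincide with the explicit constant blocks involving $C_1,\dots,C_4,G_{\mathrm{VR}},H_{\mathrm{VR}}$---is routine but heavy constant bookkeeping.
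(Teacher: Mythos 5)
Your proposal follows essentially the same route as the paper: the stated bound is exactly the paper's refined tracking lemma (\Cref{lemma: iid refined z}), and the paper likewise builds the $F$-contraction recursion by combining \Cref{lemma: iid pre-bound z} with \Cref{lemma: iid pre-bound theta} (absorbing the cross term via the stepsize conditions), inserts the telescoped closed form of $\EE\|\tilde\theta^{(m-1)}-\theta^\ast\|^2$ from \Cref{lemma: iid theta} (itself obtained from the $D$- and $E$-recursions of \Cref{lemma: iid conv-z}), and telescopes once more to produce the $F^m$, $\tfrac{D^m-F^m}{D-F}$, and double-convolution terms together with the $\tfrac{1}{1-F}\tfrac{1}{1-D}$-weighted constant blocks. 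Your concluding asymptotic simplification under $\beta=\calO(\alpha^{2/3})$, using $D>E$ and $D>F$, matches the paper's derivation of $\calO(D^m+\beta^3+\beta M^{-1})$ as well.
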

\begin{proof}
    See Lemma \ref{lemma: iid refined z}. Next, we derive its asymptotic upper bound under the setting $\beta = \calO(\alpha^{2/3})$. We note that all the conditions of \Cref{thm: iid} on the learning rates $\alpha, \beta$ and the batch size $M$ can be satisfied with a sufficiently small $\alpha$ in this setting. The first three terms are in the order of $D^m$ (because $D>E, D>F$).
    Here we mainly discuss the order of the fourth and fifth term in the above bound, and note that this term is a product of three brackets. Since we set $\beta = \calO(\alpha^{2/3})$, the first bracket of this product is in the order of $\calO(\beta)$, and the second bracket of this product is in the order of $\calO(\frac{1}{\beta M}) + \calO(\beta)$. The last bracket of this product is in the order of $\calO(\beta)$. Therefore, the fourth term in the above bound is in the order of $\calO(\frac{\beta}{M}) + \calO(\beta^3)$. Also, the last term of this upper bound is in the order of $\calO(\frac{\beta}{M})$. Overall, we can obtain that
    \begin{align*}
        \EE \|\tilde{z}^{(m)}\|^2 = \calO(D^m) + \calO(\frac{\beta}{M}) + \calO(\beta^3).
    \end{align*}
By following the same proof logic of Theorem \ref{thm: iid}, we obtain the desired complexity result.

\end{proof} 

\section{Key Lemmas for Proving \Cref{thm: iid}} 
\begin{lemma}[Preliminary Bound for $\sum_{t=0}^{M-1} \EE_{m,0}  \|\theta_{t}^{(m)} - \theta^\ast \|^2$] \label{lemma: iid pre-bound theta} 
Under the same assumptions as those of \Cref{thm: iid}, choose the learning rate $\alpha$ such that
\begin{align}
	\alpha \leq \min \Big\{  \frac{1}{5 \lambda_{\widehat{A}}} , \frac{\lambda_{\widehat{A}}}{60} / \Big[(1+\gamma)^2\rho^2_{\max}\big(1 + \frac{\gamma \rho_{\max}}{ \min |\lambda(C)|}\big)^2\Big]  \Big\}.
\end{align}
Then, the following preliminary bound holds, { where $K_1$ is specified in eq.(\ref{eq: iid def-K1}) in Lemma \ref{lemma: iid GVR}.}
\begin{align*}
&  \frac{\lambda_{\widehat{A}}}{6}\alpha   \sum_{t=0}^{M-1}\EE_{m,0} \| \theta_{t}^{(m)} - \theta^\ast\|^2 \\
\leq & \Big[ 1 +  \alpha^2M \cdot 5(1+\gamma)^2\rho^2_{\max}\big(1 + \frac{\gamma \rho_{\max}}{ \min |\lambda(C)|}\big)^2 \Big] \EE_{m,0}\| \tilde{\theta}^{(m-1)} - \theta^\ast\|^2    + \alpha^2 \cdot5K_1 \\
&+ \alpha \cdot  \frac{ 2 \rho^2_{\max} \gamma^2}{\lambda_{\widehat{A}}}  \sum_{t=0}^{M-1}\EE_{m,0}\| z_{t}^{(m)}\|^2+ \alpha^2 M \cdot 5 \gamma^2\rho_{\max}^2 \EE_{m,0}\|\tilde{z}^{(m-1)}\|^2.
\end{align*}
\end{lemma}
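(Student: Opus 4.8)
The plan is to run a one-step contraction analysis on the $\theta$-update, sum it over the inner loop and telescope, controlling the variance-reduced pseudo-gradient by a refined SVRG-type second-moment bound. First I would change variables from $w$ to the tracking error $z = w + C^{-1}(b + A\theta)$, so that the update reads $\theta_{t+1}^{(m)} = \Pi_{R_\theta}\big[\theta_t^{(m)} + \alpha\, g_t^{(m)}\big]$ with $g_t^{(m)} = G_t^{(m)}(\theta_t^{(m)}, z_t^{(m)}) - G_t^{(m)}(\tilde\theta^{(m-1)}, \tilde z^{(m-1)}) + \widetilde G^{(m)}$ in the hatted representation $G_t^{(m)}(\theta, z) = \widehat A_t^{(m)}\theta + \hat b_t^{(m)} + B_t^{(m)} z$ introduced in the Additional Notations. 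Using $A\theta^\ast + b = 0$ one gets $\widehat A\theta^\ast + \hat b = 0$, so in expectation $\bar G(\theta, z) := \EE G_t^{(m)}(\theta, z) = \widehat A(\theta - \theta^\ast) + B z$. Since $R_\theta \geq \|\theta^\ast\|$, the projection is non-expansive toward $\theta^\ast$, giving
\[
\|\theta_{t+1}^{(m)} - \theta^\ast\|^2 \leq \|\theta_t^{(m)} - \theta^\ast\|^2 + 2\alpha\langle g_t^{(m)}, \theta_t^{(m)} - \theta^\ast\rangle + \alpha^2 \|g_t^{(m)}\|^2 .
\]

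Next I would take the conditional expectation over the fresh inner sample $x_t^{(m)}$, which in the i.i.d.\ setting is independent of $B_m$ and of the earlier inner samples, so that $\EE[g_t^{(m)}\,|\,\cdot] = \bar G(\theta_t^{(m)}, z_t^{(m)}) + \Delta^{(m)}$, where $\Delta^{(m)} := \widetilde G^{(m)} - \bar G(\tilde\theta^{(m-1)}, \tilde z^{(m-1)})$ is the batch-averaging error. The term $\langle \bar G(\theta_t^{(m)}, z_t^{(m)}), \theta_t^{(m)} - \theta^\ast\rangle$ splits into the negative drift $\langle \widehat A(\theta_t^{(m)} - \theta^\ast), \theta_t^{(m)} - \theta^\ast\rangle \leq -\tfrac{\lambda_{\widehat A}}{2}\|\theta_t^{(m)} - \theta^\ast\|^2$ (by the definition of $\lambda_{\widehat A}$) and the coupling $\langle B z_t^{(m)}, \theta_t^{(m)} - \theta^\ast\rangle$, which Young's inequality with $\|B\| \leq \gamma\rho_{\max}$ bounds so as to absorb half the drift while leaving the term $\alpha\cdot\tfrac{2\rho_{\max}^2\gamma^2}{\lambda_{\widehat A}}\|z_t^{(m)}\|^2$. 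For the residual cross term $\langle \Delta^{(m)}, \theta_t^{(m)} - \theta^\ast\rangle$ I would write $\theta_t^{(m)} - \theta^\ast = (\theta_t^{(m)} - \tilde\theta^{(m-1)}) + (\tilde\theta^{(m-1)} - \theta^\ast)$: since $\Delta^{(m)}$ is mean-zero given $F_{m,0}$, its pairing with the $F_{m,0}$-measurable anchor $\tilde\theta^{(m-1)} - \theta^\ast$ vanishes after $\EE_{m,0}$, and its pairing with the within-epoch drift $\theta_t^{(m)} - \tilde\theta^{(m-1)}$ is of higher order in $\alpha$.

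The crux is the second-moment term $\alpha^2\|g_t^{(m)}\|^2$, which must be controlled by a refined rather than a crude bound. Exploiting that $G_t^{(m)}$ is affine, I would decompose $g_t^{(m)}$ into $\widehat A_t^{(m)}(\theta_t^{(m)} - \tilde\theta^{(m-1)}) + B_t^{(m)}(z_t^{(m)} - \tilde z^{(m-1)})$, plus $\bar G(\tilde\theta^{(m-1)}, \tilde z^{(m-1)}) = \widehat A(\tilde\theta^{(m-1)} - \theta^\ast) + B\tilde z^{(m-1)}$, plus the batch error $\Delta^{(m)}$; bounding the operator norms through Assumption~\ref{ass: boundedness} (so that $\|\widehat A_t^{(m)}\| \leq (1+\gamma)\rho_{\max}(1 + \tfrac{\gamma\rho_{\max}}{\min|\lambda(C)|})$ and $\|B_t^{(m)}\| \leq \gamma\rho_{\max}$) gives a bound proportional to $\|\theta_t^{(m)} - \theta^\ast\|^2$, $\|\tilde\theta^{(m-1)} - \theta^\ast\|^2$, $\|z_t^{(m)}\|^2$, $\|\tilde z^{(m-1)}\|^2$, and $\|\Delta^{(m)}\|^2$, where the last second moment is of order $K_1/M$ because $\Delta^{(m)}$ is an average of $M$ i.i.d.\ terms.

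Finally I would sum over $t = 0,\dots,M-1$ and take $\EE_{m,0}$. Telescoping the left side leaves $\|\theta_0^{(m)} - \theta^\ast\|^2 = \|\tilde\theta^{(m-1)} - \theta^\ast\|^2$ with coefficient $1$; the $\alpha^2$-order contributions proportional to $\|\theta_t^{(m)} - \theta^\ast\|^2$ are absorbed into the drift under the assumed stepsize bound $\alpha \leq \tfrac{\lambda_{\widehat A}}{60}/[(1+\gamma)^2\rho_{\max}^2(1 + \tfrac{\gamma\rho_{\max}}{\min|\lambda(C)|})^2]$, so that $\tfrac12 - \calO(\alpha) \geq \tfrac16$ survives and yields the left-hand factor $\tfrac{\lambda_{\widehat A}}{6}\alpha$; the fixed anchor contributions accumulate a factor $M$, producing the $\alpha^2 M$ coefficients on $\|\tilde\theta^{(m-1)} - \theta^\ast\|^2$ and $\|\tilde z^{(m-1)}\|^2$; and the coupling yields $\alpha\cdot\tfrac{2\rho_{\max}^2\gamma^2}{\lambda_{\widehat A}}\sum_t\|z_t^{(m)}\|^2$. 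The main obstacle is the batch error: one must keep $\EE\|\Delta^{(m)}\|^2$ at the $1/M$ scale so that, summed over the $M$ inner steps, the noise floor stays at order $\alpha^2\cdot 5K_1$ (the $M$ cancels) rather than growing to $\alpha^2 M\cdot K_1$, and one must correctly handle the correlation between $\Delta^{(m)}$ and the iterates that share the common randomness of $B_m$; aligning the absorbed constants to give exactly $\tfrac{\lambda_{\widehat A}}{6}$ also relies on the precise stepsize conditions.
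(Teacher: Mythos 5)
Your proposal retraces the paper's proof almost exactly: non-expansiveness of $\Pi_{R_\theta}$ toward $\theta^\ast$, expansion of the squared norm, the drift bound $\langle \theta_t^{(m)}-\theta^\ast,\widehat{A}(\theta_t^{(m)}-\theta^\ast)\rangle\le-\tfrac{\lambda_{\widehat{A}}}{2}\|\theta_t^{(m)}-\theta^\ast\|^2$, Young's inequality with $\|B_t^{(m)}\|\le\gamma\rho_{\max}$ for the coupling, an SVRG-type second-moment bound with the $K_1/M$ floor (your affine decomposition is a cosmetic variant of \Cref{lemma: iid GVR}), and summation plus absorption under the stepsize condition. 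The one substantive departure is your treatment of the cross term generated by the batch error $\Delta^{(m)}:=\widetilde{G}^{(m)}-\bar{G}(\tilde{\theta}^{(m-1)},\tilde{z}^{(m-1)})$, and that is exactly where your argument has a genuine gap. The paper eliminates this term by asserting the exact conditional cancellation $\EE_{m,t-1}\big[-G_t^{(m)}(\tilde{\theta}^{(m-1)},\tilde{z}^{(m-1)})+G^{(m)}(\tilde{\theta}^{(m-1)},\tilde{z}^{(m-1)})\big]=0$, so that only the drift and the $B_t^{(m)}z_t^{(m)}$ coupling survive. You instead (correctly) observe that $\Delta^{(m)}$ is mean-zero only against $F_{m,0}$-measurable quantities, use this to kill its pairing with the anchor $\tilde{\theta}^{(m-1)}-\theta^\ast$, and then dismiss the residual $\EE_{m,0}\langle\Delta^{(m)},\theta_t^{(m)}-\tilde{\theta}^{(m-1)}\rangle$ as ``of higher order in $\alpha$.'' That dismissal is unproven, and the residual is genuinely nonzero: $\theta_t^{(m)}$ is correlated with $B_m$ through the $\widetilde{G}^{(m)}$ appearing in every inner update.

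The gap matters because no routine bound on this residual recovers the lemma's constants. Cauchy--Schwarz with the increment bound $\|\theta_t^{(m)}-\tilde{\theta}^{(m-1)}\|\le\alpha t G_{\text{VR}}$ and $\EE_{m,0}\|\Delta^{(m)}\|^2=\calO(1/M)$ yields, after multiplying by $2\alpha$ and summing over $t=0,\dots,M-1$, a contribution of order $\alpha^2M^{3/2}$ --- a factor $M^{3/2}$ larger than the lemma's additive error $\alpha^2\cdot 5K_1$ --- which, once divided by $\tfrac{\lambda_{\widehat{A}}}{6}\alpha M$ in the proof of \Cref{thm: iid}, leaves an $\calO(\alpha\sqrt{M})=\calO(\epsilon^{3/10})$ residual that destroys the claimed $\calO(\beta^4+\beta M^{-1})$ asymptotic error and the $\calO(\epsilon^{-3/5}\log\epsilon^{-1})$ complexity. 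A Young split $2\alpha\langle\Delta^{(m)},\theta_t^{(m)}-\tilde{\theta}^{(m-1)}\rangle\le\alpha\eta\|\theta_t^{(m)}-\tilde{\theta}^{(m-1)}\|^2+\tfrac{\alpha}{\eta}\|\Delta^{(m)}\|^2$ fares no better: absorbing the first piece into the drift forces $\eta=\calO(\lambda_{\widehat{A}})$, and then the second piece contributes an additive $\calO(\alpha K_1/\eta)$ per epoch, i.e.\ $\calO(1/M)=\calO(\epsilon^{3/5})$ after normalization, again exceeding the $\calO(\epsilon)$ target; shrinking this would require $\eta\gtrsim 1/\alpha$, incompatible with the drift absorption. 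To be fair, your decomposition makes visible a subtlety that the paper's own proof glosses over (under the paper's filtration, $\widetilde{G}^{(m)}$ is $F_{m,t-1}$-measurable for $t\ge 2$, so its conditional expectation is itself rather than $\bar{G}(\tilde{\theta}^{(m-1)},\tilde{z}^{(m-1)})$, and what remains of the claimed cancellation is precisely $\Delta^{(m)}$); but exposing the difficulty is not resolving it. To complete the lemma as stated you must either justify the cancellation step the paper uses, or control $\EE_{m,0}\langle\Delta^{(m)},\theta_t^{(m)}-\tilde{\theta}^{(m-1)}\rangle$ by unrolling the iterates and exploiting orthogonality of $\Delta^{(m)}$ to the fresh-sample noise, neither of which your proposal supplies.
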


\begin{proof}
	Based on the update rule of VRTDC for i.i.d.\ samples, we obtain that 
	$$\theta_{t+1}^{(m)} = \Pi_{R_\theta}\big[ \theta_{t}^{(m)} + \alpha[ G_{t}^{(m)}(\theta_{t}^{(m)}, z_{t}^{(m)})  - G_{t}^{(m)}(\tilde{\theta}^{(m-1)}, \tilde{z}^{(m-1)})  + {G}^{(m)}(\tilde{\theta}^{(m-1)}, \tilde{z}^{(m-1)}) ] \big].$$
	The above update rule further implies that
	\begin{align*}
		&\|\theta_{t+1}^{(m)} - \theta^\ast \|^2 \\ &\overset{(i)}{\leq} \| \theta_{t}^{(m)} -\theta^\ast + \alpha[ G_{t}^{(m)}(\theta_{t}^{(m)}, z_{t}^{(m)})  - G_{t}^{(m)}(\tilde{\theta}^{(m-1)}, \tilde{z}^{(m-1)})  + {G}^{(m)}(\tilde{\theta}^{(m-1)}, \tilde{z}^{(m-1)}) ]\|^2 \\
		&=\| \theta_{t}^{(m)} -\theta^\ast\|^2 + \alpha^2 \| G_{t}^{(m)}(\theta_{t}^{(m)}, z_{t}^{(m)})  - G_{t}^{(m)}(\tilde{\theta}^{(m-1)}, \tilde{z}^{(m-1)})  + {G}^{(m)}(\tilde{\theta}^{(m-1)}, \tilde{z}^{(m-1)}) \|^2 \\
		&\quad + 2\alpha \langle \theta_{t}^{(m)} -\theta^\ast, G_{t}^{(m)}(\theta_{t}^{(m)}, z_{t}^{(m)})  - G_{t}^{(m)}(\tilde{\theta}^{(m-1)}, \tilde{z}^{(m-1)})  + {G}^{(m)}(\tilde{\theta}^{(m-1)}, \tilde{z}^{(m-1)})\rangle, \numberthis \label{eq: 2}
	\end{align*}
	where (i) uses the assumption that $R_\theta \ge \|\theta^*\|$ (i.e., $\theta^*$ is in the ball with radius $R_\theta$) and the fact that $\Pi_{R_\theta}$ is 1-Lipschitz. 
	Then, we take the expectation $\EE_{m,0}$ on both sides. In particular, an upper bound for the second variance term is given in \Cref{lemma: iid GVR}. Next, we bound the last term. Note that $\theta_{t}^{(m)} \in \calF_{m, t-1}$ by the definition of the given filtration. Also, the i.i.d.\ sampling implies that $\EE_{m,t-1}  [  - G_{t}^{(m)}(\tilde{\theta}^{(m-1)}, \tilde{z}^{(m-1)})  + {G}^{(m)}(\tilde{\theta}^{(m-1)}, \tilde{z}^{(m-1)}) ] = 0.$ Therefore, for the last term of the above equation, we obtain that
	\begin{align*}
	 &\EE_{m,0} \langle \theta_{t}^{(m)} -\theta^\ast, G_{t}^{(m)}(\theta_{t}^{(m)}, z_{t}^{(m)})  - G_{t}^{(m)}(\tilde{\theta}^{(m-1)}, \tilde{z}^{(m-1)})  + {G}^{(m)}(\tilde{\theta}^{(m-1)}, \tilde{z}^{(m-1)})\rangle \\ 
	 = &\EE_{m,0} \langle \theta_{t}^{(m)} -\theta^\ast, \EE_{m,t-1}\big[G_{t}^{(m)}(\theta_{t}^{(m)}, z_{t}^{(m)})  - G_{t}^{(m)}(\tilde{\theta}^{(m-1)}, \tilde{z}^{(m-1)})  + {G}^{(m)}(\tilde{\theta}^{(m-1)}, \tilde{z}^{(m-1)})\big]\rangle \\  
	 = &\EE_{m,0} \langle \theta_{t}^{(m)} -\theta^\ast, \EE_{m,t-1} G_{t}^{(m)}(\theta_{t}^{(m)}, z_{t}^{(m)})  \rangle \nonumber\\
	 &\quad+ \EE_{m,0} \langle \theta_{t}^{(m)} -\theta^\ast, \EE_{m,t-1}  \big[  - G_{t}^{(m)}(\tilde{\theta}^{(m-1)}, \tilde{z}^{(m-1)})  + {G}^{(m)}(\tilde{\theta}^{(m-1)}, \tilde{z}^{(m-1)}) \big]\rangle\\ 
   = &\EE_{m,0} \langle \theta_{t}^{(m)} -\theta^\ast, \EE_{m,t-1} G_{t}^{(m)}(\theta_{t}^{(m)}, z_{t}^{(m)})  \rangle \\ 
	 = &\EE_{m,0} \langle \theta_{t}^{(m)} -\theta^\ast, \EE_{m,t-1} \big[ \widehat{A}_{t}^{(m)}\theta_{t}^{(m)} +  \widehat{b}_{t}^{(m)} + B_{t}^{(m)} z_{t}^{(m)}\big]  \rangle \\ 
	 = &\EE_{m,0} \langle \theta_{t}^{(m)} -\theta^\ast, \EE_{m,t-1} \big[ \widehat{A}_{t}^{(m)}\theta_{t}^{(m)} +  \widehat{b}_{t}^{(m)} \big]  \rangle + \EE_{m,0} \langle \theta_{t}^{(m)} -\theta^\ast,  B_{t}^{(m)} z_{t}^{(m)} \rangle. \numberthis \label{eq: 1}
	\end{align*}
	Note that $\EE_{m,t-1} \widehat{A}_t^{(m)} = \widehat{A}$ and $\widehat{A} \theta^\ast + \widehat{b} = 0$, the first term of \cref{eq: 1} above can be simplified as
	\begin{align*}
		\EE_{m,0} \langle \theta_{t}^{(m)} -\theta^\ast, \EE_{m,t-1} \big[ \widehat{A}_{t}^{(m)}\theta_{t}^{(m)} +  \widehat{b}_{t}^{(m)} \big]\rangle  &=  \EE_{m,0} \langle \theta_{t}^{(m)} -\theta^\ast, \widehat{A}\theta_{t}^{(m)} + \widehat{b} \rangle \\
		&= \EE_{m,0} \langle \theta_{t}^{(m)} -\theta^\ast, \widehat{A} (\theta_{t}^{(m)} - \theta^\ast  ) + \widehat{A} \theta^\ast + \widehat{b} \rangle \\
		&= \EE_{m,0} \langle \theta_{t}^{(m)} -\theta^\ast, \widehat{A} (\theta_{t}^{(m)} - \theta^\ast  ) \rangle \\
		&\le -\frac{\lambda_{\widehat{A}}}{2}\EE_{m,0}\| \theta_{t}^{(m)} -\theta^\ast\|^2,
	\end{align*}
	where the last inequality uses the property of negative definite matrix
	$(\theta- \theta^\ast )^T \widehat{A} (\theta- \theta^\ast)\le \lambda_{\max}(\widehat{A}) \| \theta - \theta^\ast \|^2$ and the definition that $\lambda_{\widehat{A}}:= - \lambda_{\max} (\widehat{A} + \widehat{A}^\top)$.
	The last term of \cref{eq: 1} can be bounded using the inequality $2 \langle u,v\rangle \le  \| u\|^2 + \|v \|^2$ as 
	\begin{align*}
		\EE_{m,0} \langle \theta_{t}^{(m)} -\theta^\ast,  {B}^{(m)} z_t^{(m)}\rangle &\leq \frac{1}{2}\cdot \frac{\lambda_{\widehat{A}}}{2}\EE_{m,0}\| \theta_{t}^{(m)} -\theta^\ast\|^2 + \frac{1}{2} \cdot \frac{2}{\lambda_{\widehat{A}}}\rho_{\max}^2 \gamma^2 \cdot \EE_{m,0}\| z_{t}^{(m)} \|^2,
	\end{align*}
	{where we have used the fact that $\|{B}^{(m)}_t\|\le \rho_{\max} \gamma$}. Substituting  these inequalities into \cref{eq: 1}, we obtain that
	\begin{align*}
		&\EE_{m,0} \langle \theta_{t}^{(m)} -\theta^\ast, G_{t}^{(m)}(\theta_{t}^{(m)}, z_{t}^{(m)})  - G_{t}^{(m)}(\tilde{\theta}^{(m-1)}, \tilde{z}^{(m-1)})  + {G}^{(m)}(\tilde{\theta}^{(m-1)}, \tilde{z}^{(m-1)})\rangle \\
		\leq & -\frac{\lambda_{\widehat{A}}}{4}\EE_{m,0}  \|\theta_{t}^{(m)} -\theta^\ast\|^2  +   \frac{\rho^2_{\max} \gamma^2}{\lambda_{\widehat{A}}}\EE_{m,0}\| z_{t}^{(m)} \|^2 .
	\end{align*}
	
	Substituting the above inequality into \cref{eq: 2} yields that 
	\begin{align*}
		&\EE_{m,0} \|\theta_{t+1}^{(m)} - \theta^\ast \|^2 \\
		\leq&  \EE_{m,0} \| \theta_{t}^{(m)} -\theta^\ast\|^2 +  \alpha\big[-\frac{\lambda_{\widehat{A}}}{4}\EE_{m,0}  \|\theta_{t}^{(m)} -\theta^\ast\|^2 +   \frac{\rho_{\max}^2 \gamma^2}{\lambda_{\widehat{A}}}\EE_{m,0}\| z_{t}^{(m)} \|^2 \big]\\
		& +\alpha^2\big[  5(1+\gamma)^2\rho^2_{\max}\big(1 + \frac{\gamma \rho_{\max}}{ \min |\lambda(C)|}\big)^2\big( \EE_{m,0} \| \theta_{t}^{(m)} - \theta^\ast\|^2  + \EE_{m,0}\| \tilde{\theta}^{(m-1)} - \theta^\ast\|^2 \big)\big] \\
		& + \alpha^2 \big[ 5 \gamma^2\rho_{\max}^2 \big(\EE_{m,0}\| z_{t}^{(m)}\|^2  + \EE_{m,0}\|\tilde{z}^{(m-1)}\|^2\big) + \frac{5K_1}{M}  \big] \\
		=&\EE_{m,0} \| \theta_{t}^{(m)} -\theta^\ast\|^2 - \big( \frac{\lambda_{\widehat{A}}}{4}\alpha - \alpha^2 \cdot 5(1+\gamma)^2\rho^2_{\max}\big(1 + \frac{\gamma \rho_{\max}}{ \min |\lambda(C)|}\big)^2  \big) \EE_{m,0} \| \theta_{t}^{(m)} - \theta^\ast\|^2 \\
		&+  \alpha^2 \cdot 5(1+\gamma)^2\rho^2_{\max}\big(1 + \frac{\gamma \rho_{\max}}{ \min |\lambda(C)|}\big)^2  \EE_{m,0}\| \tilde{\theta}^{(m-1)} - \theta^\ast\|^2  + \frac{\alpha^2 }{M} \cdot 5K_1 \\
		&+ \big(\alpha \cdot  \frac{\rho_{\max}^2 \gamma^2}{\lambda_{\widehat{A}}} + \alpha^2 \cdot 5 \gamma^2\rho_{\max}^2\big) \EE_{m,0}\| z_{t}^{(m)}\|^2+ \alpha^2 \cdot 5 \gamma^2\rho_{\max}^2 \EE_{m,0}\|\tilde{z}^{(m-1)}\|^2.
	\end{align*}
	Summing the above inequality over $t=0, \dots, M-1$ yields that
	\begin{align*}
		& \big( \frac{\lambda_{\widehat{A}}}{4}\alpha - \alpha^2 \cdot 5(1+\gamma)^2\rho^2_{\max}\big(1 + \frac{\gamma \rho_{\max}}{ \min |\lambda(C)|}\big)^2  \big)  \sum_{t=0}^{M-1}\EE_{m,0} \| \theta_{t}^{(m)} - \theta^\ast\|^2 \\
		\leq & \big[ 1 +  \alpha^2M \cdot 5(1+\gamma)^2\rho^2_{\max}\big(1 + \frac{\gamma \rho_{\max}}{ \min |\lambda(C)|}\big)^2 \big] \EE_{m,0}\| \tilde{\theta}^{(m-1)} - \theta^\ast\|^2    + \alpha^2 \cdot5K_1 \\
		&+ \big(\alpha \cdot  \frac{\rho^2_{\max} \gamma^2}{\lambda_{\widehat{A}}} + \alpha^2 \cdot 5 \gamma^2\rho_{\max}^2\big) \sum_{t=0}^{M-1}\EE_{m,0}\| z_{t}^{(m)}\|^2+ \alpha^2 M \cdot 5 \gamma^2\rho_{\max}^2 \EE_{m,0}\|\tilde{z}^{(m-1)}\|^2.
	\end{align*}
	To further simplify the above inequality, we choose a sufficiently small $\alpha$ such that $\alpha^2 \cdot 5 \gamma^2\rho_{\max}^2 \leq  \alpha \cdot  \frac{\rho^2_{\max} \gamma^2}{\lambda_{\widehat{A}}}$ and $\frac{\lambda_{\widehat{A}}}{4}\alpha - \alpha^2 \cdot 5(1+\gamma)^2\rho^2_{\max}\big(1 + \frac{\gamma \rho_{\max}}{ \min |\lambda(C)|}\big)^2 \geq \frac{\lambda_{\widehat{A}}}{6} \alpha$. Then, the above inequality can be rewritten as
	\begin{align*}
	&  \frac{\lambda_{\widehat{A}}}{6}\alpha   \sum_{t=0}^{M-1}\EE_{m,0} \| \theta_{t}^{(m)} - \theta^\ast\|^2 \\
	\leq & \big[ 1 +  \alpha^2M \cdot 5(1+\gamma)^2\rho^2_{\max}\big(1 + \frac{\gamma \rho_{\max}}{ \min |\lambda(C)|}\big)^2 \big] \EE_{m,0}\| \tilde{\theta}^{(m-1)} - \theta^\ast\|^2    + \alpha^2 \cdot5K_1 \\
	&+ \alpha \cdot  \frac{ 2 \rho^2_{\max} \gamma^2}{\lambda_{\widehat{A}}}  \sum_{t=0}^{M-1}\EE_{m,0}\| z_{t}^{(m)}\|^2+ \alpha^2 M \cdot 5 \gamma^2\rho_{\max}^2 \EE_{m,0}\|\tilde{z}^{(m-1)}\|^2.
	\end{align*}
\end{proof}

\begin{lemma}[Preliminary bound for $\EE\| \tilde{z}^{(m)} \|^2$]
	 \label{lemma: iid conv-z}
	 Under the same assumptions as those of \Cref{thm: iid}, choose the learning rate $\beta$ and the batch size $M$ such that
	 $\beta < 1$ and $M\beta > \frac{4}{\lambda_{C}}$. 
	 Then, the following preliminary bound holds.
	\begin{align*}
	\EE\| \tilde{z}^{(m)} \|^2 &\leq   \big(\frac{1}{M\beta} \cdot \frac{2}{\lambda_C }\big)^m \EE \|  \tilde{z}^{(0)}\|^2 \\
	&\quad  + 2 \cdot \big[ \beta \cdot \frac{24}{\lambda_C }   H_{\text{VR}}^2 + \frac{\alpha^2}{\beta^2} \cdot \big(1 + \frac{2 }{\lambda_C }  \big) \cdot \frac{2}{\lambda_C }\big(\rho_{\max} \frac{ 1+\gamma }{ \min |\lambda(C)|}\big)^2 G_{\text{VR}}^2 \big].
	\end{align*}
	{ where $H_{\text{VR}}, G_{\text{VR}}$ is defined in Lemma \ref{lemma: HVR-const} and Lemma \ref{lemma: GVR-const}.}
\end{lemma}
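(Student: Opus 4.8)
The plan is to reduce the epoch-level statement to a per-step contraction analysis of the reparametrized fast variable $z_t^{(m)} = w_t^{(m)} + C^{-1}(b + A\theta_t^{(m)})$, whose dynamics contract because $C$ is symmetric negative definite. First I would note that, since $\tilde\theta^{(m)}$ and $\tilde w^{(m)}$ are the inner-loop averages, linearity of $z$ in $(\theta,w)$ gives $\tilde z^{(m)} = \frac{1}{M}\sum_{t=0}^{M-1} z_t^{(m)}$, so Jensen's inequality yields $\EE\|\tilde z^{(m)}\|^2 \le \frac{1}{M}\sum_{t=0}^{M-1}\EE\|z_t^{(m)}\|^2$. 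It therefore suffices to bound $\sum_t \EE\|z_t^{(m)}\|^2$, and the warm start $\theta_0^{(m)}=\tilde\theta^{(m-1)}$, $w_0^{(m)}=\tilde w^{(m-1)}$ gives $z_0^{(m)}=\tilde z^{(m-1)}$, which is what closes the recursion in $m$.

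The core of the argument is a one-step bound for $\EE\|z_{t+1}^{(m)}\|^2$. The key structural fact I would exploit is that in $z$-coordinates the $w$-drift is free of $\theta$: since $\bar A = A - CC^{-1}A = 0$ and $\bar b = b - CC^{-1}b = 0$, one has $\EE[H_t^{(m)}(\theta,z)] = Cz$, so the slow variable couples into the $z$-recursion only through the moving target $w^\ast(\theta)=-C^{-1}(b+A\theta)$. Writing $z_{t+1}^{(m)} = [w_{t+1}^{(m)}-w^\ast(\theta_t^{(m)})] + [w^\ast(\theta_t^{(m)})-w^\ast(\theta_{t+1}^{(m)})]$, the second bracket equals $C^{-1}A(\theta_{t+1}^{(m)}-\theta_t^{(m)})$ and has norm at most $\alpha\|C^{-1}A\|G_{\text{VR}} \le \alpha\,\rho_{\max}\tfrac{1+\gamma}{\min|\lambda(C)|}G_{\text{VR}}$; the first bracket is handled by nonexpansiveness of $\Pi_{R_w}$ together with $\|w^\ast(\theta_t^{(m)})\|\le R_w$ (from the radius condition), giving $\|w_{t+1}^{(m)}-w^\ast(\theta_t^{(m)})\|\le\|z_t^{(m)}+\beta g_w\|$, where $g_w,g_\theta$ abbreviate the variance-reduced $w$- and $\theta$-pseudo-gradients. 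I would then split the two brackets by Young's inequality, expand $\EE\|z_t^{(m)}+\beta g_w\|^2$, use $2\langle z_t^{(m)},Cz_t^{(m)}\rangle\le-\lambda_C\|z_t^{(m)}\|^2$ for the contraction, the uniform bounds $\|g_w\|\le H_{\text{VR}}$ and $\|g_\theta\|\le G_{\text{VR}}$ for the quadratic terms, and control the variance-reduction bias $e^{(m)}:=\widetilde H^{(m)}-C\tilde z^{(m-1)}$, whose conditional second moment is the batch variance $\EE\|e^{(m)}\|^2\le H_{\text{VR}}^2/M$. Choosing the Young weights so the spurious multiples of $\|z_t^{(m)}\|^2$ do not overwhelm the contraction yields $\EE\|z_{t+1}^{(m)}\|^2 \le (1-\tfrac{\beta\lambda_C}{2})\EE\|z_t^{(m)}\|^2 + \calO(\beta^2)H_{\text{VR}}^2 + \calO(\tfrac{\alpha^2}{\beta})(\rho_{\max}\tfrac{1+\gamma}{\min|\lambda(C)|})^2 G_{\text{VR}}^2$, where the batch-variance term is turned into an $\calO(\beta^2)$ contribution using $1/M < \beta\lambda_C/4$.

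Summing this one-step inequality over $t=0,\dots,M-1$, discarding the nonnegative $\EE\|z_M^{(m)}\|^2$, and using $z_0^{(m)}=\tilde z^{(m-1)}$ gives $\tfrac{\beta\lambda_C}{2}\sum_t\EE\|z_t^{(m)}\|^2 \le \EE\|\tilde z^{(m-1)}\|^2 + M\cdot P$, where $P$ denotes the per-step additive term. Dividing by $\tfrac{M\beta\lambda_C}{2}$ and invoking the Jensen bound from the first paragraph produces the epoch recursion $\EE\|\tilde z^{(m)}\|^2 \le \tfrac{2}{M\beta\lambda_C}\EE\|\tilde z^{(m-1)}\|^2 + \tfrac{2}{\beta\lambda_C}P$. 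Under the hypothesis $M\beta>4/\lambda_C$ the contraction factor satisfies $\tfrac{2}{M\beta\lambda_C}<\tfrac12$, so iterating from $m$ down to $0$ produces the geometric term $\big(\tfrac{2}{M\beta\lambda_C}\big)^m\EE\|\tilde z^{(0)}\|^2$ plus a geometric series bounded by $2\cdot\tfrac{2}{\beta\lambda_C}P$; substituting the expression for $P$ and matching constants recovers exactly the stated $\beta\tfrac{24}{\lambda_C}H_{\text{VR}}^2$ and $\tfrac{\alpha^2}{\beta^2}(1+\tfrac{2}{\lambda_C})\tfrac{2}{\lambda_C}(\rho_{\max}\tfrac{1+\gamma}{\min|\lambda(C)|})^2 G_{\text{VR}}^2$ additive terms.

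The hard part will be the per-step recursion, specifically keeping the cross-timescale coupling from destroying the contraction. Because the target $w^\ast(\theta_t^{(m)})$ that $z$ tracks drifts with the slow iterate, the $\theta$-increment feeds a term of order $\alpha\|C^{-1}A\|G_{\text{VR}}$ into the $z$-update that must be absorbed against only an $\calO(\beta\lambda_C)$ contraction; splitting it by Young's inequality unavoidably introduces a $1/\beta$ factor, which is the origin of the $\alpha^2/\beta$ (hence $\alpha^2/\beta^2$ after the further $1/\beta$ from summing) dependence and explains the necessity of the regime $\beta=\calO(\alpha^{2/3})$. A secondary difficulty is handling the variance-reduction bias $e^{(m)}$ and the two projections at once: the projections are dispatched by nonexpansiveness once the radius conditions place $\theta^\ast$ and each $w^\ast(\theta)$ inside their respective balls, while the $1/M$-scale batch bias is absorbed into the $\calO(\beta^2)$ per-step error through the assumption $M\beta>4/\lambda_C$, keeping the final additive error at the claimed order.
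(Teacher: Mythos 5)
Your proposal is correct and follows essentially the same path as the paper's proof: the same reparametrization $z_t^{(m)} = w_t^{(m)} + C^{-1}(b + A\theta_t^{(m)})$, nonexpansiveness of $\Pi_{R_w}$ with the radius condition, isolation of the slow-variable drift $C^{-1}A(\theta_{t+1}^{(m)} - \theta_t^{(m)})$ via a Young split carrying the $1/\beta$ weight, the uniform bounds $G_{\text{VR}}, H_{\text{VR}}$, the contraction from $\langle z, Cz\rangle \le -\tfrac{\lambda_C}{2}\|z\|^2$ (enabled by $\bar A = 0$, $\bar b = 0$), and the same epoch-level summation, Jensen step, warm start $z_0^{(m)} = \tilde z^{(m-1)}$, and geometric unrolling under $M\beta > \tfrac{4}{\lambda_C}$. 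The only deviation is that you explicitly bound the batch bias $\widetilde H^{(m)} - C\tilde z^{(m-1)}$ through its $\calO(H_{\text{VR}}^2/M)$ second moment and absorb it using $M\beta > \tfrac{4}{\lambda_C}$, whereas the paper asserts the corresponding cross term has zero conditional mean under i.i.d.\ sampling; both treatments land at the same per-step inequality, yours being if anything the more careful one.
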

\begin{proof}{ First, based on the update rule of $w^{(m)}_t$, we have 
\begin{align*}
	w^{(m)}_{t+1} = \Pi_{R_w}\big[ w^{(m)}_{t} +  \ A_{t}^{(m)} \theta^{(m)}_{t} + b_{t}^{(m)}+ C_{t}^{(m)}  w^{(m)}_{t} \big],
\end{align*}	
which further implies the following one-step update rule of the tracking error $z_t^{(m)}$.
\begin{align*}
z^{(m)}_{t+1} = \Pi_{R_w}\big[ w^{(m)}_{t} +  \ A_{t}^{(m)} \theta^{(m)}_{t} + b_{t}^{(m)}+ C_{t}^{(m)}  w^{(m)}_{t} \big]  + C^{-1}(b + A(\tilde{\theta}^{(m)})).
\end{align*}
		Then, its square norm can be bounded as}
		\begin{align*}
		\| z_{t+1}^{(m)}\|^2 &\overset{(i)}{\leq} \|  z_t^{(m)} + \beta\big[   H_{t}^{(m)}(\theta_{t}^{(m)}, z_{t}^{(m)})  - H_{t}^{(m)}(\tilde{\theta}^{(m-1)}, \tilde{z}^{(m-1)})  +  {H}^{(m)}(\tilde{\theta}^{(m-1)}, \tilde{z}^{(m-1)}) \big] \\
		&\quad+ C^{-1}A(\theta_{t+1}^{(m)}  - \theta_{t}^{(m)}) \|^2 \\
		&= \|  z_t^{(m)}\|^2 + 2\beta^2 \|    H_{t}^{(m)}(\theta_{t}^{(m)}, z_{t}^{(m)})  - H_{t}^{(m)}(\tilde{\theta}^{(m-1)}, \tilde{z}^{(m-1)})  +  {H}^{(m)}(\tilde{\theta}^{(m-1)}, \tilde{z}^{(m-1)}) \|^2   \\
		&\quad + 2\| C^{-1}A(\theta_{t+1}^{(m)}  - \theta_{t}^{(m)})\|^2 \\
		&\quad+ 2\beta \langle z_t^{(m)}, H_{t}^{(m)}(\theta_{t}^{(m)}, z_{t}^{(m)})  - H_{t}^{(m)}(\tilde{\theta}^{(m-1)}, \tilde{z}^{(m-1)})  +  {H}^{(m)}(\tilde{\theta}^{(m-1)}, \tilde{z}^{(m-1)}) \rangle  \\
		&\quad + 2\langle z_t^{(m)}, C^{-1}A(\theta_{t+1}^{(m)}  - \theta_{t}^{(m)})  \rangle, \numberthis \label{eq: 3}
	\end{align*}
	where (i) uses the assumption that $R_w\ge 2\|C^{-1}\|\|A\|R_{\theta}$ (i.e., $C^{-1}(b + A \theta_t^{(m)})$ is in the ball with radius $R_w$) and the fact that $\Pi_{R_w}$ is 1-Lipschitz.
	For the last term of \cref{eq: 3}, it can be bounded as
	$$2 \langle z_t^{(m)}, C^{-1}A(\theta_{t+1}^{(m)}  - \theta_{t}^{(m)})  \rangle \leq \frac{\lambda_C}{2}\beta \| z_t^{(m)}\|^2 + \frac{2}{\lambda_C } \frac{1}{\beta} \| C^{-1}A (\theta_{t+1}^{(m)} - \theta_{t}^{(m)})\|^2. $$
	Substituting the above inequality, \Cref{lemma: GVR-const} and \Cref{lemma: HVR-const} into \cref{eq: 3}, we obtain that 
	\begin{align*}
		\| z_{t+1}^{(m)}\|^2 &\leq \|  z_t^{(m)}\|^2 + 2\beta^2 H_{\text{VR}}^2 + \big(\alpha^2 + \frac{2 \alpha^2}{\lambda_C } \frac{1}{\beta} \big)\cdot 2\big(\rho_{\max} \frac{ 1+\gamma }{ \min |\lambda(C)|}\big)^2 G_{\text{VR}}^2 + \frac{\lambda_C}{2}\beta \| z_t^{(m)}\|^2  \\
		&\quad +  2\beta \langle z_t^{(m)}, H_{t}^{(m)}(\theta_{t}^{(m)}, z_{t}^{(m)})  - H_{t}^{(m)}(\tilde{\theta}^{(m-1)}, \tilde{z}^{(m-1)})  +  {H}^{(m)}(\tilde{\theta}^{(m-1)}, \tilde{z}^{(m-1)}) \rangle. \numberthis \label{eq: 4}
	\end{align*}
	Next, we bound the inner product term in the above inequality. Notice that $z_t^{(m)} \in \calF_{m, t-1}$ and by i.i.d.\ sampling we have $\EE_{m,t-1} \bar{A}_t^{(m)} = \bar{A}$. Therefore,
	\begin{align*}
		& \EE_{m,0} \langle z_t^{(m)}, H_{t}^{(m)}(\theta_{t}^{(m)}, z_{t}^{(m)})  - H_{t}^{(m)}(\tilde{\theta}^{(m-1)}, \tilde{z}^{(m-1)})  +  {H}^{(m)}(\tilde{\theta}^{(m-1)}, \tilde{z}^{(m-1)}) \rangle \\
		= & \EE_{m,0} \langle z_t^{(m)}, H_t^{(m)}(\theta_{t}^{(m)}, z_{t}^{(m)})   \rangle \\
		= & \EE_{m,0} \langle z_t^{(m)}, \bar{A}^{(m)}_t\theta_{t}^{(m)} +  \bar{b}_t^{(m)} +  {C}^{(m)}_t z_{t}^{(m)} \rangle \\
		= & \EE_{m,0} \langle z_t^{(m)}, \EE_{m,t-1}(\bar{A}^{(m)}_t)\theta_{t}^{(m)} +  \EE_{m,t-1}(\bar{b}^{(m)})\rangle +  \EE_{m,0} \langle z_t^{(m)}, \EE_{m,t-1}({C}_t^{(m)} - C) z_{t}^{(m)} \rangle + \EE_{m,0} \langle z_t^{(m)}, C z_{t}^{(m)} \rangle\\
		= & \EE_{m,0} \langle z_t^{(m)}, C z_{t}^{(m)} \rangle \\
		\leq &  - \frac{\lambda_C}{2} \EE_{m,0}\|z_t^{(m)}\|^2
	\end{align*}
	where the last inequality utilizes the negative definiteness of $C$ (recall that $\lambda_{C}:=-\lambda_{\max}(C+C^\top)$). Substituting the above inequality into \cref{eq: 4} (after taking expectation) yields that
	\begin{align*}
	\EE_{m,0} \| z_{t+1}^{(m)}\|^2 &\leq \EE_{m,0} \|  z_t^{(m)}\|^2 + 2\beta^2 H_{\text{VR}}^2 + \big(\alpha^2 + \frac{2 \alpha^2}{\lambda_C } \frac{1}{\beta} \big)\cdot 2\big(\rho_{\max} \frac{ 1+\gamma }{ \min |\lambda(C)|}\big)^2 G_{\text{VR}}^2 \\
	&\quad   - \frac{\lambda_C}{2}\beta \EE_{m,0} \| z_t^{(m)}\|^2  .\\ 
	\end{align*}
	Summing the above inequality over one batch yields that
	\begin{align*}
	\EE_{m,0} \| z_{M}^{(m)}\|^2 &\leq \EE_{m,0} \|  z_0^{(m)}\|^2 + 2\beta^2 M H_{\text{VR}}^2 + \big(\alpha^2 + \frac{2 \alpha^2}{\lambda_C } \frac{1}{\beta} \big)M\cdot 2\big(\rho_{\max} \frac{ 1+\gamma }{ \min |\lambda(C)|}\big)^2 G_{\text{VR}}^2 \\
	&\quad   - \frac{\lambda_C}{2}\beta \sum_{t=0}^{M-1}\EE_{m,0} \| z_t^{(m)}\|^2.\\ 
	\end{align*}
	Re-arranging the above inequality and omitting $\EE_{m,0} \| z_{M}^{(m)}\|^2$ further yields that
	\begin{align*}
		\frac{\lambda_C}{2}\beta M \EE_{m,0} \| \tilde{z}^{(m)} \|^2 \leq  \|  \tilde{z}^{(m-1)}\|^2 + 2\beta^2 M H_{\text{VR}}^2 + \big(\alpha^2 + \frac{2 \alpha^2}{\lambda_C } \frac{1}{\beta} \big)M\cdot 2\big(\rho_{\max} \frac{ 1+\gamma }{ \min |\lambda(C)|}\big)^2 G_{\text{VR}}^2.
	\end{align*}
	Dividing $\frac{\lambda_C}{2}\beta M $ on both sides of the above inequality, we obtain the following one-batch bound.
	\begin{align*}
	\EE\| \tilde{z}^{(m)} \|^2 &\leq   \frac{1}{M\beta} \cdot \frac{2}{\lambda_C }\EE \|  \tilde{z}^{(m-1)}\|^2 +  \beta \cdot \frac{4}{\lambda_C }   H_{\text{VR}}^2 + \big(\frac{\alpha^2}{\beta} + \frac{2 }{\lambda_C } \frac{\alpha^2}{\beta^2} \big) \cdot \frac{4}{\lambda_C }\big(\rho_{\max} \frac{ 1+\gamma }{ \min |\lambda(C)|}\big)^2 G_{\text{VR}}^2 .
	\end{align*}
	Finally, we recursively unroll the above inequality and obtain
	\begin{align*}
	\EE\| \tilde{z}^{(m)} \|^2 &\leq   \big(\frac{1}{M\beta} \cdot \frac{2}{\lambda_C }\big)^m \EE \|  \tilde{z}^{(0)}\|^2 \\
	&\quad  + \frac{1}{1 - \frac{1}{M\beta} \cdot \frac{2}{\lambda_C } }\big[ \beta \cdot \frac{24}{\lambda_C }   H_{\text{VR}}^2 + \big(\frac{\alpha^2}{\beta} + \frac{2 }{\lambda_C } \frac{\alpha^2}{\beta^2} \big) \cdot \frac{2}{\lambda_C }\big(\rho_{\max} \frac{ 1+\gamma }{ \min |\lambda(C)|}\big)^2 G_{\text{VR}}^2 \big].
	\end{align*}
	To further simplify the above inequality, we assume $\beta < 1$ and $M\beta > \frac{4}{\lambda_{C}}$. Then, we have
	\begin{align*}
	\EE\| \tilde{z}^{(m)} \|^2 &\leq   \big(\frac{1}{M\beta} \cdot \frac{2}{\lambda_C }\big)^m \EE \|  \tilde{z}^{(0)}\|^2 \\
	&\quad  + 2 \cdot \big[ \beta \cdot \frac{24}{\lambda_C }   H_{\text{VR}}^2 + \frac{\alpha^2}{\beta^2} \cdot \big(1 + \frac{2 }{\lambda_C }  \big) \cdot \frac{2}{\lambda_C }\big(\rho_{\max} \frac{ 1+\gamma }{ \min |\lambda(C)|}\big)^2 G_{\text{VR}}^2 \big].
	\end{align*}
\end{proof}

\begin{lemma}[Preliminary Bound for $\sum_{t=0}^{M-1} \|z_{t}^{(m)}\|^2$]
	 \label{lemma: iid pre-bound z} 
	 Under the same assumptions as those of \Cref{thm: iid}, choose the learning rate $\beta$ and the batch size $M$ such that
	 $\beta < 1$ and  
	 \begin{align}
	  \frac{\lambda_{C}}{2}\beta - 10\beta^2 - 10\gamma^2\rho_{\max}^2 \big(\alpha^2 + \frac{2 \alpha^2}{\lambda_C } \frac{1}{\beta} \big)\cdot  \big(\rho_{\max} \frac{ 1+\gamma }{ \min |\lambda(C)|}\big)^2 \geq \frac{\lambda_{C}}{3} \beta.
	 \end{align} 
	 Then the following preliminary bound holds.
	 \begin{align*}
	 &  \frac{\lambda_{C}}{3}\beta  \sum_{t=0}^{M-1}\EE_{m,0}\|z_t^{(m)}\|^2 \\
	 \leq &\Big[1+ \Big[10\beta^2 + 10\gamma^2\rho_{\max}^2 \big(1 + \frac{2 }{\lambda_C } \big)\cdot  \big(\rho_{\max} \frac{ 1+\gamma }{ \min |\lambda(C)|}\big)^2 \frac{\alpha^2}{\beta} \Big]M \Big]\EE_{m,0}\|  \tilde{z}^{(m-1)}\|^2\\
	 &+ 10 (1+\gamma)^2 \rho^2 _{\max} \cdot \Big[\big(1 + \frac{\gamma \rho_{\max}}{ \min |\lambda(C)|}\big)^2 \big(1 + \frac{2 }{\lambda_C } \big)\cdot  \big(\rho_{\max} \frac{ 1+\gamma }{ \min |\lambda(C)|}\big)^2 \frac{\alpha^2}{\beta}  \\
	 &+  \big( 1 + \frac{1}{ \min |\lambda(C)|} \big)^2 \beta^2  \Big] \big( \sum_{t=0}^{M-1}\EE_{m,0} \| \theta_{t}^{(m)} - \theta^\ast\|^2  + M \EE_{m,0}\| \tilde{\theta}^{(m-1)} - \theta^\ast\|^2 \big) \\ 
	 &+    10K_2 \beta^2 +  10\big(\rho_{\max} \frac{ 1+\gamma }{ \min |\lambda(C)|}\big)^2  K_1 \big(1+ \frac{2  }{\lambda_C }  \big)  \frac{\alpha^2}{\beta},
	 \end{align*}
	  { where $K_1$ is specified in eq.(\ref{eq: iid def-K1}) in Lemma \ref{lemma: iid GVR}.}
\end{lemma}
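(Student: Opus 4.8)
The plan is to mirror the one-step tracking-error analysis from the proof of Lemma~\ref{lemma: iid conv-z}, but to keep the explicit dependence on $\theta_t^{(m)},\tilde\theta^{(m-1)},z_t^{(m)},\tilde z^{(m-1)}$ rather than collapsing the variance terms into the crude constants $G_{\text{VR}},H_{\text{VR}}$. Concretely, I would start from the one-step expansion of $\|z_{t+1}^{(m)}\|^2$ in \cref{eq: 3}, which splits $\|z_{t+1}^{(m)}\|^2$ into $\|z_t^{(m)}\|^2$, the $w$-variance term $2\beta^2\|H_t^{(m)}(\theta_t^{(m)},z_t^{(m)}) - H_t^{(m)}(\tilde\theta^{(m-1)},\tilde z^{(m-1)}) + \widetilde H^{(m)}\|^2$, the displacement term $2\|C^{-1}A(\theta_{t+1}^{(m)}-\theta_t^{(m)})\|^2$, and two inner-product terms.

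The departure from Lemma~\ref{lemma: iid conv-z} is in how the two squared-norm terms are controlled. For the $w$-variance term I would invoke the \emph{refined} variance bound for $H$ (Lemma~\ref{lemma: iid HVR}), which dominates it by $\|\theta_t^{(m)}-\theta^\ast\|^2 + \|\tilde\theta^{(m-1)}-\theta^\ast\|^2$, $\|z_t^{(m)}\|^2 + \|\tilde z^{(m-1)}\|^2$, and a noise floor of order $K_2/M$. For the displacement term, the $1$-Lipschitz property of $\Pi_{R_\theta}$ gives $\|\theta_{t+1}^{(m)}-\theta_t^{(m)}\| \le \alpha\|G_t^{(m)}(\theta_t^{(m)},z_t^{(m)}) - G_t^{(m)}(\tilde\theta^{(m-1)},\tilde z^{(m-1)}) + \widetilde G^{(m)}\|$, so that $\|C^{-1}A(\theta_{t+1}^{(m)}-\theta_t^{(m)})\|^2 \le \alpha^2\big(\rho_{\max}\tfrac{1+\gamma}{\min|\lambda(C)|}\big)^2$ times the $G$-variance, which I then bound by the refined estimate of Lemma~\ref{lemma: iid GVR}. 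This is exactly the step that injects the two time-scale ratio $\tfrac{\alpha^2}{\beta}$ into the final bound.

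The two inner-product terms are handled as in Lemma~\ref{lemma: iid conv-z}: the term $2\langle z_t^{(m)}, C^{-1}A(\theta_{t+1}^{(m)}-\theta_t^{(m)})\rangle$ is split by Young's inequality into $\tfrac{\lambda_C}{2}\beta\|z_t^{(m)}\|^2$ plus $\tfrac{2}{\lambda_C\beta}\|C^{-1}A(\theta_{t+1}^{(m)}-\theta_t^{(m)})\|^2$ (folded into the displacement bound above, which is why the displacement enters with the combined weight $\alpha^2 + \tfrac{2\alpha^2}{\lambda_C\beta}$), while the martingale term $2\beta\,\EE_{m,0}\langle z_t^{(m)}, H_t^{(m)}(\theta_t^{(m)},z_t^{(m)}) - H_t^{(m)}(\tilde\theta^{(m-1)},\tilde z^{(m-1)}) + \widetilde H^{(m)}\rangle$ collapses, after conditioning on $\calF_{m,t-1}$, using $\EE_{m,t-1}\bar A_t^{(m)} = \bar A$ and the unbiasedness of the variance-reduction correction, to $2\beta\langle z_t^{(m)}, C z_t^{(m)}\rangle \le -\lambda_C\beta\|z_t^{(m)}\|^2$ by negative definiteness of $C$. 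Adding the positive $\|z_t^{(m)}\|^2$ pieces coming out of the two refined variance bounds, the net per-step coefficient of $\|z_t^{(m)}\|^2$ is $-\big[\tfrac{\lambda_C}{2}\beta - 10\beta^2 - 10\gamma^2\rho_{\max}^2(\alpha^2 + \tfrac{2\alpha^2}{\lambda_C\beta})(\rho_{\max}\tfrac{1+\gamma}{\min|\lambda(C)|})^2\big]$, and the stated hypothesis is precisely what keeps this at most $-\tfrac{\lambda_C}{3}\beta$. I would then sum over $t=0,\dots,M-1$, telescope $\sum_t(\|z_{t+1}^{(m)}\|^2-\|z_t^{(m)}\|^2) = \|z_M^{(m)}\|^2 - \|\tilde z^{(m-1)}\|^2$ (dropping the nonnegative $\|z_M^{(m)}\|^2$ and using $z_0^{(m)}=\tilde z^{(m-1)}$), and collect terms, using $\beta<1$ to absorb each bare $\alpha^2$ into $\tfrac{\alpha^2}{\beta}$.

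The main obstacle I anticipate is not any single inequality but the bookkeeping of the two time-scale coupling. The displacement term forces the $\theta$- and $z$-dependent pieces of the $G$-variance to appear with the $\tfrac{\alpha^2}{\beta}$ weight, and these must be merged with the $\beta^2$-weighted pieces of the $H$-variance so that the coefficients reproduce the stated form, e.g.\ the $\|\tilde z^{(m-1)}\|^2$ multiplier $1 + \big[10\beta^2 + 10\gamma^2\rho_{\max}^2(1+\tfrac{2}{\lambda_C})(\rho_{\max}\tfrac{1+\gamma}{\min|\lambda(C)|})^2\tfrac{\alpha^2}{\beta}\big]M$, where the constant $1$ is the telescoped contribution of $z_0^{(m)}$ and the $O(M)$ part comes from the snapshot terms accumulated over the inner loop. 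Likewise, the $K_1,K_2$ noise floors enter as $O(1/M)$ per step and accumulate to $O(1)$ after summation, and must be paired with the correct $\beta^2$ and $\tfrac{\alpha^2}{\beta}$ prefactors; getting these pairings right is the most error-prone part.
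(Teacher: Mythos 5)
Your proposal matches the paper's own proof of Lemma~\ref{lemma: iid pre-bound z} essentially step for step: the same one-step expansion from \cref{eq: 3}, the same key move of substituting the refined variance bounds of Lemmas~\ref{lemma: iid GVR} and~\ref{lemma: iid HVR} in place of the crude constants $G_{\text{VR}}, H_{\text{VR}}$ used in Lemma~\ref{lemma: iid conv-z}, the same Young's-inequality/martingale treatment of the two inner products yielding the per-step coefficient $-\big[\tfrac{\lambda_C}{2}\beta - 10\beta^2 - 10\gamma^2\rho_{\max}^2(\alpha^2+\tfrac{2\alpha^2}{\lambda_C\beta})(\rho_{\max}\tfrac{1+\gamma}{\min|\lambda(C)|})^2\big]$, and the same telescoping with $z_0^{(m)}=\tilde z^{(m-1)}$ followed by the $\beta<1$ absorption of $\alpha^2$ into $\tfrac{\alpha^2}{\beta}$. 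The bookkeeping you flag as the error-prone part is exactly how the paper assembles the final coefficients, so there is no gap.
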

\begin{proof}
	Following the proof of \Cref{lemma: iid conv-z}, the one-step update of $z_{t}^{(m)}$ implies that
	\begin{align*}
	\| z_{t+1}^{(m)}\|^2  
	&\leq \|  z_t^{(m)}\|^2 + 2\beta^2 \|    H_{t}^{(m)}(\theta_{t}^{(m)}, z_{t}^{(m)})  - H_{t}^{(m)}(\tilde{\theta}^{(m-1)}, \tilde{z}^{(m-1)})  +  {H}^{(m)}(\tilde{\theta}^{(m-1)}, \tilde{z}^{(m-1)}) \|^2   \\
	&\quad + 2\| C^{-1}A(\theta_{t+1}^{(m)}  - \theta_{t}^{(m)})\|^2 +2\langle z_t^{(m)}, C^{-1}A(\theta_{t+1}^{(m)}  - \theta_{t}^{(m)})  \rangle \\
	&\quad +  2\beta \langle z_t^{(m)}, H_{t}^{(m)}(\theta_{t}^{(m)}, z_{t}^{(m)})  - H_{t}^{(m)}(\tilde{\theta}^{(m-1)}, \tilde{z}^{(m-1)})  +  {H}^{(m)}(\tilde{\theta}^{(m-1)}, \tilde{z}^{(m-1)}) \rangle \\
	&\leq  \|  z_t^{(m)}\|^2 + 2\beta^2 \|    H_{t}^{(m)}(\theta_{t}^{(m)}, z_{t}^{(m)})  - H_{t}^{(m)}(\tilde{\theta}^{(m-1)}, \tilde{z}^{(m-1)})  +  {H}^{(m)}(\tilde{\theta}^{(m-1)}, \tilde{z}^{(m-1)}) \|^2   \\
	&\quad + 2\| C^{-1}A(\theta_{t+1}^{(m)}  - \theta_{t}^{(m)})\|^2 +\frac{\lambda_C}{2}\beta \| z_t^{(m)}\|^2 + \frac{2}{\lambda_C } \frac{1}{\beta} \| C^{-1}A (\theta_{t+1} - \theta_{t})\|^2. \\
	&\quad +  2\beta \langle z_t^{(m)}, H_{t}^{(m)}(\theta_{t}^{(m)}, z_{t}^{(m)})  - H_{t}^{(m)}(\tilde{\theta}^{(m-1)}, \tilde{z}^{(m-1)})  +  {H}^{(m)}(\tilde{\theta}^{(m-1)}, \tilde{z}^{(m-1)}) \rangle \\
	&\leq \|  z_t^{(m)}\|^2 +\frac{\lambda_C}{2}\beta \| z_t^{(m)}\|^2 + 2\beta^2 \|    H_{t}^{(m)}(\theta_{t}^{(m)}, z_{t}^{(m)})  - H_{t}^{(m)}(\tilde{\theta}^{(m-1)}, \tilde{z}^{(m-1)})  +  {H}^{(m)}(\tilde{\theta}^{(m-1)}, \tilde{z}^{(m-1)}) \|^2   \\
	&\quad + \big(\alpha^2 + \frac{2 \alpha^2}{\lambda_C } \frac{1}{\beta} \big)\cdot 2\big(\rho_{\max} \frac{ 1+\gamma }{ \min |\lambda(C)|}\big)^2 \|    G_{t}^{(m)}(\theta_{t}^{(m)}, z_{t}^{(m)})  \\
	&\quad\qquad- G_{t}^{(m)}(\tilde{\theta}^{(m-1)}, \tilde{z}^{(m-1)})  +  {G}^{(m)}(\tilde{\theta}^{(m-1)}, \tilde{z}^{(m-1)}) \|^2 \\
	&\quad +  2\beta \langle z_t^{(m)}, H_{t}^{(m)}(\theta_{t}^{(m)}, z_{t}^{(m)})  - H_{t}^{(m)}(\tilde{\theta}^{(m-1)}, \tilde{z}^{(m-1)})  +  {H}^{(m)}(\tilde{\theta}^{(m-1)}, \tilde{z}^{(m-1)}) \rangle. \numberthis \label{eq: 5}
	\end{align*}
	For the last inner product term, we still have
	\begin{align*}
		\EE_{m,0} \langle z_t^{(m)}, H_{t}^{(m)}(\theta_{t}^{(m)}, z_{t}^{(m)})  - H_{t}^{(m)}(\tilde{\theta}^{(m-1)}, \tilde{z}^{(m-1)})  +  {H}^{(m)}(\tilde{\theta}^{(m-1)}, \tilde{z}^{(m-1)}) \rangle 
		\leq  -\frac{\lambda_C}{2}\EE_{m,0}\|z_t^{(m)}\|^2.
	\end{align*}
	 Instead of bounding the variance term in \cref{eq: 5} using Lemma \ref{lemma: GVR-const} and Lemma \ref{lemma: HVR-const}, we apply Lemma \ref{lemma: iid GVR} and Lemma \ref{lemma: iid HVR} to get a refined bound. Combining these together, we obtain from \cref{eq: 5} that
	\begin{align*}
		& \EE_{m,0}\| z_{t+1}^{(m)}\|^2   \\
		\leq & \EE_{m,0}  \|  z_t^{(m)}\|^2 -\frac{\lambda_C}{2}\beta \EE_{m,0}\|z_t^{(m)}\|^2 + 2\beta^2 \Big[5\big(   \EE_{m,0}\| z_{t}^{(m)}\|^2 + \EE_{m,0}\| \tilde{z}^{(m-1)}\|^2\big)  + \frac{5K_2}{M}\\
		&+    5 (1+\gamma)^2 \rho^2 _{\max}\big( 1 + \frac{1}{ \min |\lambda(C)|} \big)^2\big( \EE_{m,0}\| \theta_{t}^{(m)} - \theta^\ast\|^2 + \EE_{m,0}\| \tilde{\theta}^{(m-1)} - \theta^\ast\|^2  \big) \Big]\\
		&+ \big(\alpha^2 + \frac{2 \alpha^2}{\lambda_C } \frac{1}{\beta} \big)\cdot 2\big(\rho_{\max} \frac{ 1+\gamma }{ \min |\lambda(C)|}\big)^2 \Big[5 \gamma^2\rho_{\max}^2 \big(\EE_{m,0}\| z_{t}^{(m)}\|^2  + \EE_{m,0}\|\tilde{z}^{(m-1)}\|^2\big) + \frac{5K_1}{M} \\
		&+    5(1+\gamma)^2\rho^2_{\max}\big(1 + \frac{\gamma \rho_{\max}}{ \min |\lambda(C)|}\big)^2\big( \EE_{m,0} \| \theta_{t}^{(m)} - \theta^\ast\|^2  + \EE_{m,0}\| \tilde{\theta}^{(m-1)} - \theta^\ast\|^2 \big)  \Big].
	\end{align*}
	Re-arranging the above inequality yields that
	\begin{align*}
			& \EE_{m,0}\| z_{t+1}^{(m)}\|^2   \\
		\leq & \EE_{m,0}  \|  z_t^{(m)}\|^2 - \Big[ \frac{\lambda_{C}}{2}\beta - 10\beta^2 - 10\gamma^2\rho_{\max}^2 \big(\alpha^2 + \frac{2 \alpha^2}{\lambda_C } \frac{1}{\beta} \big)\cdot  \big(\rho_{\max} \frac{ 1+\gamma }{ \min |\lambda(C)|}\big)^2 \Big]\EE_{m,0}\|z_t^{(m)}\|^2  \\
		&+ \Big[10\beta^2 + 10\gamma^2\rho_{\max}^2 \big(\alpha^2 + \frac{2 \alpha^2}{\lambda_C } \frac{1}{\beta} \big)\cdot  \big(\rho_{\max} \frac{ 1+\gamma }{ \min |\lambda(C)|}\big)^2  \Big] \EE_{m,0}\| \tilde{z}^{(m-1)}\|^2 \\
		&+ 10 (1+\gamma)^2 \rho^2 _{\max} \cdot \Big[\big(1 + \frac{\gamma \rho_{\max}}{ \min |\lambda(C)|}\big)^2 \big(\alpha^2 + \frac{2 \alpha^2}{\lambda_C } \frac{1}{\beta} \big)\cdot  \big(\rho_{\max} \frac{ 1+\gamma }{ \min |\lambda(C)|}\big)^2   \\
		&+  \big( 1 + \frac{1}{ \min |\lambda(C)|} \big)^2 \beta^2  \Big] \big( \EE_{m,0} \| \theta_{t}^{(m)} - \theta^\ast\|^2  + \EE_{m,0}\| \tilde{\theta}^{(m-1)} - \theta^\ast\|^2 \big) \\ 
		&+ \frac{1}{M} \cdot \big[   10K_2 \beta^2 +  10\big(\rho_{\max} \frac{ 1+\gamma }{ \min |\lambda(C)|}\big)^2  K_1 \big(\alpha^2 + \frac{2 \alpha^2}{\lambda_C } \frac{1}{\beta} \big)\big].
	\end{align*}
	Telescoping the above inequality over one batch yields that
	\begin{align*}
		 & \Big[ \frac{\lambda_{C}}{2}\beta - 10\beta^2 - 10\gamma^2\rho_{\max}^2 \big(\alpha^2 + \frac{2 \alpha^2}{\lambda_C } \frac{1}{\beta} \big)\cdot  \big(\rho_{\max} \frac{ 1+\gamma }{ \min |\lambda(C)|}\big)^2 \Big]\sum_{t=0}^{M-1}\EE_{m,0}\|z_t^{(m)}\|^2 \\
		 \leq &\Big[1+ \Big[10\beta^2 + 10\gamma^2\rho_{\max}^2 \big(\alpha^2 + \frac{2 \alpha^2}{\lambda_C } \frac{1}{\beta} \big)\cdot  \big(\rho_{\max} \frac{ 1+\gamma }{ \min |\lambda(C)|}\big)^2  \Big]M \Big]\EE_{m,0}\|  \tilde{z}^{(m-1)}\|^2\\
		 &+ 10 (1+\gamma)^2 \rho^2 _{\max} \cdot \Big[\big(1 + \frac{\gamma \rho_{\max}}{ \min |\lambda(C)|}\big)^2 \big(\alpha^2 + \frac{2 \alpha^2}{\lambda_C } \frac{1}{\beta} \big)\cdot  \big(\rho_{\max} \frac{ 1+\gamma }{ \min |\lambda(C)|}\big)^2   \\
		 &+  \big( 1 + \frac{1}{ \min |\lambda(C)|} \big)^2 \beta^2  \Big] \big( \sum_{t=0}^{M-1}\EE_{m,0} \| \theta_{t}^{(m)} - \theta^\ast\|^2  + M \EE_{m,0}\| \tilde{\theta}^{(m-1)} - \theta^\ast\|^2 \big) \\ 
		 &+    10K_2 \beta^2 +  10\big(\rho_{\max} \frac{ 1+\gamma }{ \min |\lambda(C)|}\big)^2  K_1 \big(\alpha^2 + \frac{2 \alpha^2}{\lambda_C } \frac{1}{\beta} \big).
	\end{align*}
	To further simplify the above inequality, we let $\beta<1$ and 
	$$ \frac{\lambda_{C}}{2}\beta - 10\beta^2 - 10\gamma^2\rho_{\max}^2 \big(\alpha^2 + \frac{2 \alpha^2}{\lambda_C } \frac{1}{\beta} \big)\cdot  \big(\rho_{\max} \frac{ 1+\gamma }{ \min |\lambda(C)|}\big)^2 \geq \frac{\lambda_{C}}{3} \beta.$$
	Then, we finally obtain that
	\begin{align*}
	&  \frac{\lambda_{C}}{3}\beta  \sum_{t=0}^{M-1}\EE_{m,0}\|z_t^{(m)}\|^2 \\
	\leq &\Big[1+ \Big[10\beta^2 + 10\gamma^2\rho_{\max}^2 \big(1 + \frac{2 }{\lambda_C } \big)\cdot  \big(\rho_{\max} \frac{ 1+\gamma }{ \min |\lambda(C)|}\big)^2 \frac{\alpha^2}{\beta} \Big]M \Big]\EE_{m,0}\|  \tilde{z}^{(m-1)}\|^2\\
	&+ 10 (1+\gamma)^2 \rho^2 _{\max} \cdot \Big[\big(1 + \frac{\gamma \rho_{\max}}{ \min |\lambda(C)|}\big)^2 \big(1 + \frac{2 }{\lambda_C } \big)\cdot  \big(\rho_{\max} \frac{ 1+\gamma }{ \min |\lambda(C)|}\big)^2 \frac{\alpha^2}{\beta}  \\
	&+  \big( 1 + \frac{1}{ \min |\lambda(C)|} \big)^2 \beta^2  \Big] \big( \sum_{t=0}^{M-1}\EE_{m,0} \| \theta_{t}^{(m)} - \theta^\ast\|^2  + M \EE_{m,0}\| \tilde{\theta}^{(m-1)} - \theta^\ast\|^2 \big) \\ 
	&+    10K_2 \beta^2 +  10\big(\rho_{\max} \frac{ 1+\gamma }{ \min |\lambda(C)|}\big)^2  K_1 \big(1+ \frac{2  }{\lambda_C }  \big)  \frac{\alpha^2}{\beta}.
	\end{align*}
\end{proof}

\begin{lemma}[Preliminary bound for $\sum_{t=0}^{M-1}\EE_{m,0} \| \theta_{t}^{(m)} - \theta^\ast\|^2$]\label{lemma: iid theta}
Under the same assumptions as those of \Cref{thm: iid},  \Cref{lemma: iid pre-bound z}, \Cref{lemma: iid pre-bound theta}, and \Cref{lemma: iid conv-z}, choose the learning rates $\alpha, \beta$ and the batch size $M$ such that
\begin{align}\label{eq: iid def-D}
	D:= \frac{12}{\lambda_{\widehat{A}}}\Big\{  \frac{1}{\alpha M} +  \alpha  \cdot 5(1+\gamma)^2\rho^2_{\max}\big(1 + \frac{\gamma \rho_{\max}}{ \min |\lambda(C)|}\big)^2   +     \frac{\alpha^2}{\beta^2} \cdot C_1  + \beta \cdot C_2   \Big\} < 1,
\end{align}
 and 
 \begin{align}\label{eq: iid def-E}
 	E:= \frac{1}{M\beta} \cdot \frac{2}{\lambda_C } < 1,
 \end{align}
 where 
 \begin{align}\label{eq: iid def-C1}
 	C_1 = \frac{ 2 \rho^2_{\max} \gamma^2}{\lambda_{\widehat{A}}} \frac{3}{\lambda_{C}}  \cdot 10 (1+\gamma)^2 \rho^2 _{\max} \cdot \big(1 + \frac{\gamma \rho_{\max}}{ \min |\lambda(C)|}\big)^2 \big(1 + \frac{2 }{\lambda_C } \big)\cdot  \big(\rho_{\max} \frac{ 1+\gamma }{ \min |\lambda(C)|}\big)^2,
 \end{align}
 and 
 \begin{align}\label{eq: iid def-C2}
 C_2 = \frac{ 2 \rho^2_{\max} \gamma^2}{\lambda_{\widehat{A}}} \frac{3}{\lambda_{C}}  \cdot 10 (1+\gamma)^2 \rho^2 _{\max} \cdot \big( 1 + \frac{1}{ \min |\lambda(C)|} \big)^2.
 \end{align} 
Then, the following preliminary bound holds.
\begin{align*}
& \EE \| \tilde{ \theta}^{(m)} - \theta^\ast \|^2\\
\leq & D^m \cdot \EE\| \tilde{\theta}^{(0)} - \theta^\ast\|^2     \\
&+ \frac{12}{\lambda_{\widehat{A}}} \Big\{\frac{ 2 \rho^2_{\max} \gamma^2}{\lambda_{\widehat{A}}} \frac{3}{\lambda_{C}}  \Big[  \frac{1}{\beta M}  + \beta \cdot 10 + \frac{\alpha^2}{\beta^2} \cdot 10\gamma^2\rho_{\max}^2 \big(1 + \frac{2 }{\lambda_C } \big)\cdot  \big(\rho_{\max} \frac{ 1+\gamma }{ \min |\lambda(C)|}\big)^2   \Big] + \alpha  \cdot 5 \gamma^2\rho_{\max}^2   \Big\}   \\
&\times \Big\{ \frac{D^m - E^m}{D - E}\EE \|  \tilde{z}^{(0)}\|^2 + \frac{2}{1-D}\cdot \big[ \beta \cdot \frac{24}{\lambda_C }   H_{\text{VR}}^2 + \frac{\alpha^2}{\beta^2} \cdot \big(1 + \frac{2 }{\lambda_C }  \big) \cdot \frac{2}{\lambda_C }\big(\rho_{\max} \frac{ 1+\gamma }{ \min |\lambda(C)|}\big)^2 G_{\text{VR}}^2 \big]\Big\}\\  
&+  \frac{1}{1-D}\Big\{  \frac{\beta}{M}  \cdot  \frac{12}{\lambda_{\widehat{A}}} \frac{ 60 \rho^2_{\max} \gamma^2}{\lambda_{\widehat{A}}\lambda_{C} }   \cdot K_2   + \frac{\alpha^2}{\beta^2}\frac{1}{M} \cdot  \frac{12}{\lambda_{\widehat{A}}} \frac{ 60 \rho^2_{\max} \gamma^2}{\lambda_{\widehat{A}}\lambda_{C}}   \cdot \big(\rho_{\max} \frac{ 1+\gamma }{ \min |\lambda(C)|}\big)^2  K_1 \big(1+ \frac{2  }{\lambda_C }  \big)   + \frac{\alpha}{M} \cdot \frac{60 K_1}{\lambda_{\widehat{A}}}  \Big\}
\end{align*}
where $K_1$ is specified in eq.(\ref{eq: iid def-K1}) in Lemma \ref{lemma: iid GVR}, and $K_2$ is specified in eq.(\ref{eq: iid def-K5}) in Lemma \ref{lemma: iid HVR}.
\end{lemma}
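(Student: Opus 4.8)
The plan is to fuse the two preliminary one-batch estimates, \Cref{lemma: iid pre-bound theta} and \Cref{lemma: iid pre-bound z}, into a single scalar recursion for the outer-loop convergence error $\EE \| \tilde{\theta}^{(m)} - \theta^\ast\|^2$, then eliminate the tracking error using \Cref{lemma: iid conv-z}, and finally unroll the recursion over $m$. First I would start from the bound of \Cref{lemma: iid pre-bound theta}, whose right-hand side contains the term $\alpha \cdot \frac{2\rho_{\max}^2 \gamma^2}{\lambda_{\widehat{A}}} \sum_{t=0}^{M-1} \EE_{m,0}\| z_t^{(m)}\|^2$, and substitute into it the upper bound for $\sum_{t=0}^{M-1}\EE_{m,0}\| z_t^{(m)}\|^2$ supplied by \Cref{lemma: iid pre-bound z} (after dividing that lemma through by $\frac{\lambda_C}{3}\beta$). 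Because \Cref{lemma: iid pre-bound z} itself controls the summed tracking error partly in terms of $\sum_{t=0}^{M-1}\EE_{m,0}\| \theta_t^{(m)} - \theta^\ast\|^2$, this substitution reintroduces the summed $\theta$-error on the right-hand side.

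The crux of the argument, and the step I expect to be the main obstacle, is to \emph{decouple} the resulting self-referential inequality. After the substitution, the coefficient of $\sum_{t=0}^{M-1} \EE_{m,0}\|\theta_t^{(m)} - \theta^\ast\|^2$ that is generated on the right equals $\alpha \cdot \frac{2\rho_{\max}^2\gamma^2}{\lambda_{\widehat{A}}}\cdot \frac{3}{\lambda_C}\big(\tfrac{\alpha^2}{\beta^2}C_3 + \beta C_4\big)$, and I must ensure it is dominated by the coefficient $\frac{\lambda_{\widehat{A}}}{6}\alpha$ that \Cref{lemma: iid pre-bound theta} places on the left, so that the two can be merged with a strictly positive net coefficient. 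This is precisely what the step-size conditions \cref{eq: lr iid 2,eq: lr iid 4} enforce: $\beta$ and $\alpha/\beta$ are chosen small enough that, after transposing the offending term, the net coefficient of $\sum_t \EE_{m,0}\|\theta_t^{(m)} - \theta^\ast\|^2$ is at least $\frac{\lambda_{\widehat{A}}}{12}\alpha$. The nonnegativity requirement in \cref{eq: lr iid 4} is what permits dropping the residual $\|z_t^{(m)}\|^2$ contribution with the favorable sign during this rearrangement.

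Having isolated $\frac{\lambda_{\widehat{A}}}{12}\alpha \sum_t \EE_{m,0}\|\theta_t^{(m)} - \theta^\ast\|^2$ on the left, I would divide by $\frac{\lambda_{\widehat{A}}}{12}\alpha M$, invoke Jensen's inequality $\|\tilde{\theta}^{(m)} - \theta^\ast\|^2 = \big\| \frac1M \sum_t \theta_t^{(m)} - \theta^\ast\big\|^2 \le \frac1M \sum_t \|\theta_t^{(m)} - \theta^\ast\|^2$, and take total expectation. This produces a clean first-order recursion $\EE \|\tilde{\theta}^{(m)} - \theta^\ast\|^2 \le D\, \EE \|\tilde{\theta}^{(m-1)} - \theta^\ast\|^2 + P\, \EE \|\tilde{z}^{(m-1)}\|^2 + Q$, where $D$ is the contraction factor of \cref{eq: iid def-D}, $P$ is the bracketed coefficient multiplying the tracking error in the statement, and $Q$ collects the $\calO(\beta/M) + \calO(\alpha^2/(\beta^2 M)) + \calO(\alpha/M)$ constant remainders. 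Confirming that $D$ is exactly the quantity in \cref{eq: iid def-D} is a bookkeeping matter of grouping the $\frac{1}{\alpha M}$, $\alpha$, $\alpha^2/\beta^2$ and $\beta$ contributions and matching them against the definitions of $C_1$ and $C_2$.

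Finally I would eliminate the tracking error. By \Cref{lemma: iid conv-z}, $\EE \|\tilde{z}^{(m-1)}\|^2 \le E^{m-1}\EE \|\tilde{z}^{(0)}\|^2 + R$ with $R = 2\big[\beta \tfrac{24}{\lambda_C}H_{\text{VR}}^2 + \tfrac{\alpha^2}{\beta^2}\big(1+\tfrac{2}{\lambda_C}\big)\tfrac{2}{\lambda_C}\big(\rho_{\max}\tfrac{1+\gamma}{\min|\lambda(C)|}\big)^2 G_{\text{VR}}^2\big]$. Substituting and unrolling gives $\EE \|\tilde{\theta}^{(m)} - \theta^\ast\|^2 \le D^m \EE\|\tilde{\theta}^{(0)} - \theta^\ast\|^2 + P\,\EE\|\tilde{z}^{(0)}\|^2 \sum_{j=0}^{m-1} D^{m-1-j}E^j + (PR + Q)\sum_{k=0}^{m-1} D^k$. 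Using the geometric identity $\sum_{j=0}^{m-1} D^{m-1-j}E^j = \frac{D^m - E^m}{D - E}$ (legitimate since $D \neq E$ under $\beta = \calO(\alpha^{2/3})$, where in fact $D > E$ for small step sizes) together with $\sum_{k=0}^{m-1}D^k \le \frac{1}{1-D}$ yields precisely the three groups in the statement: the $D^m$ initialization term, the coefficient $P$ multiplying $\{\frac{D^m - E^m}{D-E}\EE\|\tilde{z}^{(0)}\|^2 + \frac{2}{1-D}[\cdots]\}$, and the final $\frac{1}{1-D}\{\cdots\}$ tail carrying the $K_1, K_2$ constants. The hypotheses $D < 1$ and $E < 1$ are exactly what guarantee these geometric sums converge.
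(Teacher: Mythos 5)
Your proposal is correct and follows essentially the same route as the paper's own proof: substitute the preliminary tracking-error bound (Lemma \ref{lemma: iid pre-bound z}) into the preliminary convergence bound (Lemma \ref{lemma: iid pre-bound theta}), absorb the reintroduced $\sum_t \EE_{m,0}\|\theta_t^{(m)}-\theta^\ast\|^2$ term into the left-hand side at the cost of halving the coefficient to $\frac{\lambda_{\widehat{A}}}{12}\alpha$, divide by $\frac{\lambda_{\widehat{A}}}{12}\alpha M$ and apply Jensen's inequality to obtain the recursion with contraction factor $D$, then control $\EE\|\tilde{z}^{(m-1)}\|^2$ via Lemma \ref{lemma: iid conv-z} and unroll using $\sum_{j=0}^{m-1}D^{m-1-j}E^j = \frac{D^m-E^m}{D-E}$ and $\sum_k D^k \le \frac{1}{1-D}$. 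The only (immaterial) difference is ordering—you form the abstract recursion before invoking Lemma \ref{lemma: iid conv-z}, whereas the paper substitutes that bound before dividing—and your identification of the decoupling coefficient $\alpha\cdot\frac{2\rho_{\max}^2\gamma^2}{\lambda_{\widehat{A}}}\cdot\frac{3}{\lambda_C}\big(\frac{\alpha^2}{\beta^2}C_3+\beta C_4\big) = \alpha\big(\frac{\alpha^2}{\beta^2}C_1+\beta C_2\big)$ matches the paper exactly.
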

\begin{proof}
    %
	First, recall that Lemma \ref{lemma: iid pre-bound theta} gives the following preliminary bound for $\sum_{t=0}^{M-1} \| \theta_{t}^{(m)} - \theta^\ast\|^2$.
	\begin{align*}
	&  \frac{\lambda_{\widehat{A}}}{6}\alpha   \sum_{t=0}^{M-1}\EE_{m,0} \| \theta_{t}^{(m)} - \theta^\ast\|^2 \\
	\leq & \Big[ 1 +  \alpha^2M \cdot 5(1+\gamma)^2\rho^2_{\max}\big(1 + \frac{\gamma \rho_{\max}}{ \min |\lambda(C)|}\big)^2 \Big] \EE_{m,0}\| \tilde{\theta}^{(m-1)} - \theta^\ast\|^2    + \alpha^2 \cdot5K_1 \\
	&+ \alpha \cdot  \frac{ 2 \rho^2_{\max} \gamma^2}{\lambda_{\widehat{A}}}  \sum_{t=0}^{M-1}\EE_{m,0}\| z_{t}^{(m)}\|^2+ \alpha^2 M \cdot 5 \gamma^2\rho_{\max}^2 \EE_{m,0}\|\tilde{z}^{(m-1)}\|^2.
	\end{align*}
	Then, we combine the above preliminary bound with Lemma \ref{lemma: iid pre-bound z} and obtain that
	\begin{align*}
	&  \frac{\lambda_{\widehat{A}}}{6}\alpha   \sum_{t=0}^{M-1}\EE_{m,0} \| \theta_{t}^{(m)} - \theta^\ast\|^2 \\
	\leq & \Big[ 1 +  \alpha^2M \cdot 5(1+\gamma)^2\rho^2_{\max}\big(1 + \frac{\gamma \rho_{\max}}{ \min |\lambda(C)|}\big)^2 \Big] \EE_{m,0}\| \tilde{\theta}^{(m-1)} - \theta^\ast\|^2    + \alpha^2 \cdot5K_1 \\
	&+ \frac{\alpha}{\beta} \cdot  \frac{ 2 \rho^2_{\max} \gamma^2}{\lambda_{\widehat{A}}} \frac{3}{\lambda_{C}} \Big\{\Big[1+ \Big[10\beta^2 + 10\gamma^2\rho_{\max}^2 \big(1 + \frac{2 }{\lambda_C } \big)\cdot  \big(\rho_{\max} \frac{ 1+\gamma }{ \min |\lambda(C)|}\big)^2 \frac{\alpha^2}{\beta} \Big]M \Big]\EE_{m,0}\|  \tilde{z}^{(m-1)}\|^2\\
	&+ 10 (1+\gamma)^2 \rho^2 _{\max} \cdot \Big[\big(1 + \frac{\gamma \rho_{\max}}{ \min |\lambda(C)|}\big)^2 \big(1 + \frac{2 }{\lambda_C } \big)\cdot  \big(\rho_{\max} \frac{ 1+\gamma }{ \min |\lambda(C)|}\big)^2 \frac{\alpha^2}{\beta}  \\
	&+  \big( 1 + \frac{1}{ \min |\lambda(C)|} \big)^2 \beta^2  \Big] \big( \sum_{t=0}^{M-1}\EE_{m,0} \| \theta_{t}^{(m)} - \theta^\ast\|^2  + M \EE_{m,0}\| \tilde{\theta}^{(m-1)} - \theta^\ast\|^2 \big) \\ 
	&+    10K_2 \beta^2 +  10\big(\rho_{\max} \frac{ 1+\gamma }{ \min |\lambda(C)|}\big)^2  K_1 \big(1+ \frac{2  }{\lambda_C }  \big)  \frac{\alpha^2}{\beta}\Big\} \\
	&+ \alpha^2 M \cdot 5 \gamma^2\rho_{\max}^2 \EE_{m,0}\|\tilde{z}^{(m-1)}\|^2\\
	= & \Big[ 1 +  \alpha^2M \cdot 5(1+\gamma)^2\rho^2_{\max}\big(1 + \frac{\gamma \rho_{\max}}{ \min |\lambda(C)|}\big)^2 \Big] \EE_{m,0}\| \tilde{\theta}^{(m-1)} - \theta^\ast\|^2    + \alpha^2 \cdot5K_1 \\
	&+ \frac{\alpha}{\beta} \cdot  \frac{ 2 \rho^2_{\max} \gamma^2}{\lambda_{\widehat{A}}} \frac{3}{\lambda_{C}}  \Big[1+ \Big[10\beta^2 + 10\gamma^2\rho_{\max}^2 \big(1 + \frac{2 }{\lambda_C } \big)\cdot  \big(\rho_{\max} \frac{ 1+\gamma }{ \min |\lambda(C)|}\big)^2 \frac{\alpha^2}{\beta} \Big]M \Big]\EE_{m,0}\|  \tilde{z}^{(m-1)}\|^2\\
	&+ \frac{\alpha}{\beta} \cdot  \frac{ 2 \rho^2_{\max} \gamma^2}{\lambda_{\widehat{A}}} \frac{3}{\lambda_{C}}  \cdot 10 (1+\gamma)^2 \rho^2 _{\max} \cdot \Big[\big(1 + \frac{\gamma \rho_{\max}}{ \min |\lambda(C)|}\big)^2 \big(1 + \frac{2 }{\lambda_C } \big)\cdot  \big(\rho_{\max} \frac{ 1+\gamma }{ \min |\lambda(C)|}\big)^2 \frac{\alpha^2}{\beta}  \\
	&+  \big( 1 + \frac{1}{ \min |\lambda(C)|} \big)^2 \beta^2  \Big] \big( \sum_{t=0}^{M-1}\EE_{m,0} \| \theta_{t}^{(m)} - \theta^\ast\|^2  + M \EE_{m,0}\| \tilde{\theta}^{(m-1)} - \theta^\ast\|^2 \big) \\ 
	&+   \frac{\alpha}{\beta} \cdot  \frac{ 2 \rho^2_{\max} \gamma^2}{\lambda_{\widehat{A}}} \frac{3}{\lambda_{C}}  \cdot 10K_2 \beta^2 + \frac{\alpha}{\beta} \cdot  \frac{ 2 \rho^2_{\max} \gamma^2}{\lambda_{\widehat{A}}} \frac{3}{\lambda_{C}}  \cdot 10\big(\rho_{\max} \frac{ 1+\gamma }{ \min |\lambda(C)|}\big)^2  K_1 \big(1+ \frac{2  }{\lambda_C }  \big)  \frac{\alpha^2}{\beta}  \\
	&+ \alpha^2 M \cdot 5 \gamma^2\rho_{\max}^2 \EE_{m,0}\|\tilde{z}^{(m-1)}\|^2\\
	= & \Big[ 1 +  \alpha^2M \cdot 5(1+\gamma)^2\rho^2_{\max}\big(1 + \frac{\gamma \rho_{\max}}{ \min |\lambda(C)|}\big)^2 \Big] \EE_{m,0}\| \tilde{\theta}^{(m-1)} - \theta^\ast\|^2     \\
	&+ \Big\{ \frac{\alpha}{\beta} \cdot  \frac{ 2 \rho^2_{\max} \gamma^2}{\lambda_{\widehat{A}}} \frac{3}{\lambda_{C}}  \Big[1+ \Big[10\beta^2 + 10\gamma^2\rho_{\max}^2 \big(1 + \frac{2 }{\lambda_C } \big)\cdot  \big(\rho_{\max} \frac{ 1+\gamma }{ \min |\lambda(C)|}\big)^2 \frac{\alpha^2}{\beta} \Big]M \Big] \\&+ \alpha^2 M \cdot 5 \gamma^2\rho_{\max}^2   \Big\}\EE_{m,0}\|  \tilde{z}^{(m-1)}\|^2\\
	&+ \frac{\alpha}{\beta} \cdot  \frac{ 2 \rho^2_{\max} \gamma^2}{\lambda_{\widehat{A}}} \frac{3}{\lambda_{C}}  \cdot 10 (1+\gamma)^2 \rho^2 _{\max} \cdot \Big[\big(1 + \frac{\gamma \rho_{\max}}{ \min |\lambda(C)|}\big)^2 \big(1 + \frac{2 }{\lambda_C } \big)\cdot  \big(\rho_{\max} \frac{ 1+\gamma }{ \min |\lambda(C)|}\big)^2 \frac{\alpha^2}{\beta}\\&  +  \big( 1 + \frac{1}{ \min |\lambda(C)|} \big)^2 \beta^2  \Big] \big( \sum_{t=0}^{M-1}\EE_{m,0} \| \theta_{t}^{(m)} - \theta^\ast\|^2  + M \EE_{m,0}\| \tilde{\theta}^{(m-1)} - \theta^\ast\|^2 \big) \\ 
	&+   \alpha\beta  \cdot  \frac{ 60 \rho^2_{\max} \gamma^2}{\lambda_{\widehat{A}}\lambda_{C} }   \cdot K_2   + \frac{\alpha^3}{\beta^2} \cdot  \frac{ 60 \rho^2_{\max} \gamma^2}{\lambda_{\widehat{A}}\lambda_{C}}   \cdot \big(\rho_{\max} \frac{ 1+\gamma }{ \min |\lambda(C)|}\big)^2  K_1 \big(1+ \frac{2  }{\lambda_C }  \big)   + \alpha^2 \cdot5K_1,
	\end{align*} 
	where in the first equality we expand the curly bracket and in the last equality we combine and re-arrange the terms. Then, we move the term $\sum_{t=0}^{M-1} \EE_{m,0} \| \theta_{t}^{(m)} - \theta^\ast \|^2$ in the last equality to the left-hand side and obtain that
	\begin{align*}
		&\Big\{ \frac{\lambda_{\widehat{A}}}{6}\alpha - \frac{\alpha}{\beta} \cdot  \frac{ 2 \rho^2_{\max} \gamma^2}{\lambda_{\widehat{A}}} \frac{3}{\lambda_{C}}  \cdot 10 (1+\gamma)^2 \rho^2 _{\max} \cdot \Big[\big(1 + \frac{\gamma \rho_{\max}}{ \min |\lambda(C)|}\big)^2 \big(1 + \frac{2 }{\lambda_C } \big)\cdot  \big(\rho_{\max} \frac{ 1+\gamma }{ \min |\lambda(C)|}\big)^2 \frac{\alpha^2}{\beta} \\& +  \big( 1 + \frac{1}{ \min |\lambda(C)|} \big)^2 \beta^2  \Big] \Big\} \sum_{t=0}^{M-1} \EE_{m,0} \| \theta_{t}^{(m)} - \theta^\ast \|^2\\
		\leq &  \Big\{\Big[ 1 +  \alpha^2M \cdot 5(1+\gamma)^2\rho^2_{\max}\big(1 + \frac{\gamma \rho_{\max}}{ \min |\lambda(C)|}\big)^2 \Big] \\& + \frac{\alpha}{\beta} \cdot  \frac{ 2 \rho^2_{\max} \gamma^2}{\lambda_{\widehat{A}}} \frac{3}{\lambda_{C}}  \cdot 10 (1+\gamma)^2 \rho^2 _{\max} \cdot \Big[\big(1 + \frac{\gamma \rho_{\max}}{ \min |\lambda(C)|}\big)^2 \big(1 + \frac{2 }{\lambda_C } \big)\cdot  \big(\rho_{\max} \frac{ 1+\gamma }{ \min |\lambda(C)|}\big)^2 \frac{\alpha^2}{\beta} \\& +  \big( 1 + \frac{1}{ \min |\lambda(C)|} \big)^2 \beta^2  \Big]  \cdot M \Big\} \EE_{m,0}\| \tilde{\theta}^{(m-1)} - \theta^\ast\|^2     \\
		&+ \Big\{ \frac{\alpha}{\beta} \cdot  \frac{ 2 \rho^2_{\max} \gamma^2}{\lambda_{\widehat{A}}} \frac{3}{\lambda_{C}}  \Big[1+ \Big[10\beta^2 + 10\gamma^2\rho_{\max}^2 \big(1 + \frac{2 }{\lambda_C } \big)\cdot  \big(\rho_{\max} \frac{ 1+\gamma }{ \min |\lambda(C)|}\big)^2 \frac{\alpha^2}{\beta} \Big]M \Big]\\& + \alpha^2 M \cdot 5 \gamma^2\rho_{\max}^2   \Big\}\EE_{m,0}\|  \tilde{z}^{(m-1)}\|^2\\  
		&+   \alpha\beta  \cdot  \frac{ 60 \rho^2_{\max} \gamma^2}{\lambda_{\widehat{A}}\lambda_{C} }   \cdot K_2   + \frac{\alpha^3}{\beta^2} \cdot  \frac{ 60 \rho^2_{\max} \gamma^2}{\lambda_{\widehat{A}}\lambda_{C}}   \cdot \big(\rho_{\max} \frac{ 1+\gamma }{ \min |\lambda(C)|}\big)^2  K_1 \big(1+ \frac{2  }{\lambda_C }  \big)   + \alpha^2 \cdot5K_1 \numberthis \label{eq: tmp1}
	\end{align*}
	Now we define the following constants to further simplify the result above.
	\begin{itemize}
		\item $C_1 = \frac{ 2 \rho^2_{\max} \gamma^2}{\lambda_{\widehat{A}}} \frac{3}{\lambda_{C}}  \cdot 10 (1+\gamma)^2 \rho^2 _{\max} \cdot \big(1 + \frac{\gamma \rho_{\max}}{ \min |\lambda(C)|}\big)^2 \big(1 + \frac{2 }{\lambda_C } \big)\cdot  \big(\rho_{\max} \frac{ 1+\gamma }{ \min |\lambda(C)|}\big)^2$,
		\item $C_2 = \frac{ 2 \rho^2_{\max} \gamma^2}{\lambda_{\widehat{A}}} \frac{3}{\lambda_{C}}  \cdot 10 (1+\gamma)^2 \rho^2 _{\max} \cdot \big( 1 + \frac{1}{ \min |\lambda(C)|} \big)^2$.
	\end{itemize}
	Then, eq.(\ref{eq: tmp1}) can be rewritten as
	\begin{align*}
	&\Big\{ \frac{\lambda_{\widehat{A}}}{6}\alpha -\alpha \cdot \big( \frac{\alpha^2}{\beta^2} \cdot C_1  + \beta \cdot C_2\big)  \Big\} \sum_{t=0}^{M-1} \EE_{m,0} \| \theta_{t}^{(m)} - \theta^\ast \|^2\\
	\leq &  \Big\{  1 +  \alpha^2M \cdot 5(1+\gamma)^2\rho^2_{\max}\big(1 + \frac{\gamma \rho_{\max}}{ \min |\lambda(C)|}\big)^2   +  \alpha M \cdot  \big(  \frac{\alpha^2}{\beta^2} \cdot C_1  + \beta \cdot C_2 \big)    \Big\} \EE_{m,0}\| \tilde{\theta}^{(m-1)} - \theta^\ast\|^2     \\
	&+ \Big\{ \frac{\alpha}{\beta} \cdot  \frac{ 2 \rho^2_{\max} \gamma^2}{\lambda_{\widehat{A}}} \frac{3}{\lambda_{C}}  \Big[1+ \Big[10\beta^2 + 10\gamma^2\rho_{\max}^2 \big(1 + \frac{2 }{\lambda_C } \big)\cdot  \big(\rho_{\max} \frac{ 1+\gamma }{ \min |\lambda(C)|}\big)^2 \frac{\alpha^2}{\beta} \Big]M \Big] + \alpha^2 M \cdot 5 \gamma^2\rho_{\max}^2   \Big\}\EE_{m,0}\|  \tilde{z}^{(m-1)}\|^2\\  
	&+   \alpha\beta  \cdot  \frac{ 60 \rho^2_{\max} \gamma^2}{\lambda_{\widehat{A}}\lambda_{C} }   \cdot K_2   + \frac{\alpha^3}{\beta^2} \cdot  \frac{ 60 \rho^2_{\max} \gamma^2}{\lambda_{\widehat{A}}\lambda_{C}}   \cdot \big(\rho_{\max} \frac{ 1+\gamma }{ \min |\lambda(C)|}\big)^2  K_1 \big(1+ \frac{2  }{\lambda_C }  \big)   + \alpha^2 \cdot5K_1.
	\end{align*}
	Apply Lemma \ref{lemma: iid conv-z} to the inequality above and taking total expectation on both sides, we obtain that
	\begin{align*}
	&\Big\{ \frac{\lambda_{\widehat{A}}}{6}\alpha -\alpha \cdot \big( \frac{\alpha^2}{\beta^2} \cdot C_1  + \beta \cdot C_2\big)  \Big\} \sum_{t=0}^{M-1} \EE \| \theta_{t}^{(m)} - \theta^\ast \|^2\\
	\leq &  \Big\{  1 +  \alpha^2M \cdot 5(1+\gamma)^2\rho^2_{\max}\big(1 + \frac{\gamma \rho_{\max}}{ \min |\lambda(C)|}\big)^2   +  \alpha M \cdot  \big(  \frac{\alpha^2}{\beta^2} \cdot C_1  + \beta \cdot C_2 \big)    \Big\} \EE\| \tilde{\theta}^{(m-1)} - \theta^\ast\|^2     \\
	&+ \Big\{ \frac{\alpha}{\beta} \cdot  \frac{ 2 \rho^2_{\max} \gamma^2}{\lambda_{\widehat{A}}} \frac{3}{\lambda_{C}}  \Big[1+ \Big[10\beta^2 + 10\gamma^2\rho_{\max}^2 \big(1 + \frac{2 }{\lambda_C } \big)\cdot  \big(\rho_{\max} \frac{ 1+\gamma }{ \min |\lambda(C)|}\big)^2 \frac{\alpha^2}{\beta} \Big]M \Big] + \alpha^2 M \cdot 5 \gamma^2\rho_{\max}^2   \Big\}   \\
	&\times \Big\{ \big(\frac{1}{M\beta} \cdot \frac{2}{\lambda_C }\big)^m \EE \|  \tilde{z}^{(0)}\|^2 + 2 \cdot \big[ \beta \cdot \frac{24}{\lambda_C }   H_{\text{VR}}^2 + \frac{\alpha^2}{\beta^2} \cdot \big(1 + \frac{2 }{\lambda_C }  \big) \cdot \frac{2}{\lambda_C }\big(\rho_{\max} \frac{ 1+\gamma }{ \min |\lambda(C)|}\big)^2 G_{\text{VR}}^2 \big]\Big\}\\  
	&+   \alpha\beta  \cdot  \frac{ 60 \rho^2_{\max} \gamma^2}{\lambda_{\widehat{A}}\lambda_{C} }   \cdot K_2   + \frac{\alpha^3}{\beta^2} \cdot  \frac{ 60 \rho^2_{\max} \gamma^2}{\lambda_{\widehat{A}}\lambda_{C}}   \cdot \big(\rho_{\max} \frac{ 1+\gamma }{ \min |\lambda(C)|}\big)^2  K_1 \big(1+ \frac{2  }{\lambda_C }  \big)   + \alpha^2 \cdot5K_1.
	\end{align*}
	Let $ \frac{\lambda_{\widehat{A}}}{6}\alpha -\alpha \cdot \big( \frac{\alpha^2}{\beta^2} \cdot C_1  + \beta \cdot C_2\big)  \geq \frac{\lambda_{\widehat{A}}}{12}\alpha$ and divide $\frac{\lambda_{\widehat{A}}}{12}\alpha M$ on both sides of the above inequality. Then, apply Jensen's inequality to the left-hand side of the inequality above, we obtain that
	\begin{align*}
	& \EE \| \tilde{ \theta}^{(m)} - \theta^\ast \|^2\\
	\leq & \frac{12}{\lambda_{\widehat{A}}}\Big\{  \frac{1}{\alpha M} +  \alpha  \cdot 5(1+\gamma)^2\rho^2_{\max}\big(1 + \frac{\gamma \rho_{\max}}{ \min |\lambda(C)|}\big)^2   +     \frac{\alpha^2}{\beta^2} \cdot C_1  + \beta \cdot C_2   \Big\} \EE\| \tilde{\theta}^{(m-1)} - \theta^\ast\|^2     \\
	&+ \frac{12}{\lambda_{\widehat{A}}} \Big\{ \frac{1}{\beta M} \cdot  \frac{ 2 \rho^2_{\max} \gamma^2}{\lambda_{\widehat{A}}} \frac{3}{\lambda_{C}}  \Big[1+ \Big[10\beta^2 + 10\gamma^2\rho_{\max}^2 \big(1 + \frac{2 }{\lambda_C } \big)\cdot  \big(\rho_{\max} \frac{ 1+\gamma }{ \min |\lambda(C)|}\big)^2 \frac{\alpha^2}{\beta} \Big]M \Big] + \alpha  \cdot 5 \gamma^2\rho_{\max}^2   \Big\}   \\
	&\times \Big\{ \big(\frac{1}{M\beta} \cdot \frac{2}{\lambda_C }\big)^{m-1} \EE \|  \tilde{z}^{(0)}\|^2 + 2 \cdot \big[ \beta \cdot \frac{24}{\lambda_C }   H_{\text{VR}}^2 + \frac{\alpha^2}{\beta^2} \cdot \big(1 + \frac{2 }{\lambda_C }  \big) \cdot \frac{2}{\lambda_C }\big(\rho_{\max} \frac{ 1+\gamma }{ \min |\lambda(C)|}\big)^2 G_{\text{VR}}^2 \big]\Big\}\\  
	&+   \frac{\beta}{M}  \cdot  \frac{12}{\lambda_{\widehat{A}}} \frac{ 60 \rho^2_{\max} \gamma^2}{\lambda_{\widehat{A}}\lambda_{C} }   \cdot K_2   + \frac{\alpha^2}{\beta^2}\frac{1}{M} \cdot  \frac{12}{\lambda_{\widehat{A}}} \frac{ 60 \rho^2_{\max} \gamma^2}{\lambda_{\widehat{A}}\lambda_{C}}   \cdot \big(\rho_{\max} \frac{ 1+\gamma }{ \min |\lambda(C)|}\big)^2  K_1 \big(1+ \frac{2  }{\lambda_C }  \big)   + \frac{\alpha}{M} \cdot \frac{60 K_1}{\lambda_{\widehat{A}}} .
	\end{align*}
	Next, we define $D:= \frac{12}{\lambda_{\widehat{A}}}\Big\{  \frac{1}{\alpha M} +  \alpha  \cdot 5(1+\gamma)^2\rho^2_{\max}\big(1 + \frac{\gamma \rho_{\max}}{ \min |\lambda(C)|}\big)^2   +     \frac{\alpha^2}{\beta^2} \cdot C_1  + \beta \cdot C_2   \Big\}$ and $E:= \frac{1}{M\beta} \cdot \frac{2}{\lambda_C }$. Telescoping the above inequality yields that
	\begin{align*}
	& \EE \| \tilde{ \theta}^{(m)} - \theta^\ast \|^2\\
	\leq & D^m \cdot \EE\| \tilde{\theta}^{(0)} - \theta^\ast\|^2     \\
	&+ \frac{12}{\lambda_{\widehat{A}}} \Big\{ \frac{1}{\beta M} \cdot  \frac{ 2 \rho^2_{\max} \gamma^2}{\lambda_{\widehat{A}}} \frac{3}{\lambda_{C}}  \Big[1+ \Big[10\beta^2 + 10\gamma^2\rho_{\max}^2 \big(1 + \frac{2 }{\lambda_C } \big)\cdot  \big(\rho_{\max} \frac{ 1+\gamma }{ \min |\lambda(C)|}\big)^2 \frac{\alpha^2}{\beta} \Big]M \Big] + \alpha  \cdot 5 \gamma^2\rho_{\max}^2   \Big\}   \\
	&\times \Big\{ \frac{D^m - E^m}{D - E}\EE \|  \tilde{z}^{(0)}\|^2 + \frac{2}{1-D}\cdot \big[ \beta \cdot \frac{24}{\lambda_C }   H_{\text{VR}}^2 + \frac{\alpha^2}{\beta^2} \cdot \big(1 + \frac{2 }{\lambda_C }  \big) \cdot \frac{2}{\lambda_C }\big(\rho_{\max} \frac{ 1+\gamma }{ \min |\lambda(C)|}\big)^2 G_{\text{VR}}^2 \big]\Big\}\\  
	&+  \frac{1}{1-D}\Big\{  \frac{\beta}{M}  \cdot  \frac{12}{\lambda_{\widehat{A}}} \frac{ 60 \rho^2_{\max} \gamma^2}{\lambda_{\widehat{A}}\lambda_{C} }   \cdot K_2   + \frac{\alpha^2}{\beta^2}\frac{1}{M} \cdot  \frac{12}{\lambda_{\widehat{A}}} \frac{ 60 \rho^2_{\max} \gamma^2}{\lambda_{\widehat{A}}\lambda_{C}}   \cdot \big(\rho_{\max} \frac{ 1+\gamma }{ \min |\lambda(C)|}\big)^2  K_1 \big(1+ \frac{2  }{\lambda_C }  \big)   + \frac{\alpha}{M} \cdot \frac{60 K_1}{\lambda_{\widehat{A}}}  \Big\}\\
	= & D^m \cdot \EE\| \tilde{\theta}^{(0)} - \theta^\ast\|^2     \\
	&+ \frac{12}{\lambda_{\widehat{A}}} \Big\{\frac{ 2 \rho^2_{\max} \gamma^2}{\lambda_{\widehat{A}}} \frac{3}{\lambda_{C}}  \Big[  \frac{1}{\beta M}  + \beta \cdot 10 + \frac{\alpha^2}{\beta^2} \cdot 10\gamma^2\rho_{\max}^2 \big(1 + \frac{2 }{\lambda_C } \big)\cdot  \big(\rho_{\max} \frac{ 1+\gamma }{ \min |\lambda(C)|}\big)^2   \Big] + \alpha  \cdot 5 \gamma^2\rho_{\max}^2   \Big\}   \\
	&\times \Big\{ \frac{D^m - E^m}{D - E}\EE \|  \tilde{z}^{(0)}\|^2 + \frac{2}{1-D}\cdot \big[ \beta \cdot \frac{24}{\lambda_C }   H_{\text{VR}}^2 + \frac{\alpha^2}{\beta^2} \cdot \big(1 + \frac{2 }{\lambda_C }  \big) \cdot \frac{2}{\lambda_C }\big(\rho_{\max} \frac{ 1+\gamma }{ \min |\lambda(C)|}\big)^2 G_{\text{VR}}^2 \big]\Big\}\\  
	&+  \frac{1}{1-D}\Big\{  \frac{\beta}{M}  \cdot  \frac{12}{\lambda_{\widehat{A}}} \frac{ 60 \rho^2_{\max} \gamma^2}{\lambda_{\widehat{A}}\lambda_{C} }   \cdot K_2   + \frac{\alpha^2}{\beta^2}\frac{1}{M} \cdot  \frac{12}{\lambda_{\widehat{A}}} \frac{ 60 \rho^2_{\max} \gamma^2}{\lambda_{\widehat{A}}\lambda_{C}}   \cdot \big(\rho_{\max} \frac{ 1+\gamma }{ \min |\lambda(C)|}\big)^2  K_1 \big(1+ \frac{2  }{\lambda_C }  \big)   + \frac{\alpha}{M} \cdot \frac{60 K_1}{\lambda_{\widehat{A}}}  \Big\},
	\end{align*}
	where in the last equality we re-arrange and simplify the upper bound to get the desired bound.
	
\end{proof}

\begin{lemma}[Refined Bound for $\EE \|\tilde{z}^{(m)}\|^2$]\label{lemma: iid refined z}
	Under the same assumptions as those of \Cref{thm: iid}, \Cref{lemma: iid pre-bound z}, \Cref{lemma: iid pre-bound theta}, \Cref{lemma: iid theta} and \Cref{lemma: iid conv-z}, choose the learning rates $\alpha, \beta$ and the batch size $M$ such that
	\begin{align}\label{eq: iid def-F}
	F:&= \frac{4}{\lambda_{C}}\Big[\frac{1}{\beta M}+  \beta \cdot 10 + \frac{\alpha^2}{\beta^2} \cdot 10\gamma^2\rho_{\max}^2 \big(1 + \frac{2 }{\lambda_C } \big)\cdot  \big(\rho_{\max} \frac{ 1+\gamma }{ \min |\lambda(C)|}\big)^2    \nonumber\\
	&\quad\qquad+ 
	\big(  \frac{\alpha^3}{\beta^2}\cdot C_3 + \alpha \beta \cdot C_4 \big) \frac{30}{\lambda_{\widehat{A}}} \gamma^2\rho_{\max}^2  
	\Big] < 1 
	\end{align}
	and
	\begin{itemize}
		\item $  \frac{1}{\alpha M}   +  \alpha  \cdot 5(1+\gamma)^2\rho^2_{\max}\big(1 + \frac{\gamma \rho_{\max}}{ \min |\lambda(C)|}\big)^2    \leq  \frac{\lambda_{\widehat{A}}}{6}$,
		\item $    \frac{\alpha^2}{\beta^2}\cdot C_3 + \beta  \cdot C_4     \leq 
	 \frac{1-D}{144}\frac{\lambda_{\widehat{A}}^2 \lambda_{C}}{   \rho^2_{\max} \gamma^2}$,
		\item  $    \frac{\alpha^2}{\beta^2}\cdot C_3 + \beta  \cdot C_4      \leq5(1-D) $,
	\end{itemize}
	where 
	\begin{align}\label{eq: iid def-C3}
	C_3 :=  10 (1+\gamma)^2 \rho^2 _{\max} \cdot  \big(1 + \frac{\gamma \rho_{\max}}{ \min |\lambda(C)|}\big)^2 \big(1 + \frac{2 }{\lambda_C } \big)\cdot  \big(\rho_{\max} \frac{ 1+\gamma }{ \min |\lambda(C)|}\big)^2
	\end{align}
	and 
	\begin{align}\label{eq: iid def-C4}
	C_4 :=  10 (1+\gamma)^2 \rho^2 _{\max} \cdot\big( 1 + \frac{1}{ \min |\lambda(C)|} \big)^2
	\end{align} 
	and $D$ is specified in eq.(\ref{eq: iid def-D}) in \Cref{lemma: iid theta}.
	Then, the following refined bound holds. 
	\begin{align*}
	&  \EE \|\tilde{z}^{(m)}\|^2 \\
	\leq    & F^m \cdot \EE \|  \tilde{z}^{(0)}\|^2 +  \frac{D^{m} - F^m}{D - F} \cdot \EE\| \tilde{\theta}^{(0)} - \theta^\ast\|^2    + \frac{\frac{D^{m} - F^m}{D - F} - \frac{E^{m} - F^m}{E - F}}{D - E}\EE \|  \tilde{z}^{(0)}\|^2   \\
	&+   \frac{1}{1 - F}\frac{1}{1-D} \frac{1152}{\lambda_{\widehat{A}}^2}  \frac{   \rho^2_{\max} \gamma^2}{\lambda_C^2}  \big(  \frac{\alpha^2}{\beta^2}\cdot C_3 + \beta  \cdot C_4 \big)    \Big[  \frac{1}{\beta M}  + \beta \cdot 20 + \frac{\alpha^2}{\beta^2} \cdot 20\gamma^2\rho_{\max}^2 \big(1 + \frac{2 }{\lambda_C } \big)\cdot  \big(\rho_{\max} \frac{ 1+\gamma }{ \min |\lambda(C)|}\big)^2   \Big]   \\
	&\times  \big[ \beta \cdot \frac{24}{\lambda_C }   H_{\text{VR}}^2 + \frac{\alpha^2}{\beta^2} \cdot \big(1 + \frac{2 }{\lambda_C }  \big) \cdot \frac{2}{\lambda_C }\big(\rho_{\max} \frac{ 1+\gamma }{ \min |\lambda(C)|}\big)^2 G_{\text{VR}}^2 \big] \\  
	&+   \frac{1}{1 - F} \frac{1}{1-D}\Big[ \frac{\beta}{M} \cdot\big(  \frac{80}{\lambda_{C}} K_2 + C_4  \frac{600}{\lambda_{C}} \frac{K_1}{\lambda_{\widehat{A}}}  \big)  +  \frac{\alpha^2}{\beta^2}\frac{1}{M} \cdot \big(\frac{80}{\lambda_{C}}\big(\rho_{\max} \frac{ 1+\gamma }{ \min |\lambda(C)|}\big)^2  K_1 \big(1+ \frac{2  }{\lambda_C }  \big) + C_3  \frac{600}{\lambda_{C}} \frac{K_1}{\lambda_{\widehat{A}}}\big)      \Big].
	\end{align*}
	where $K_1$ is specified in eq.(\ref{eq: iid def-K1}) in Lemma \ref{lemma: iid GVR}, $K_2$ is specified in eq.(\ref{eq: iid def-K5}) in Lemma \ref{lemma: iid HVR}, $D$ and $E$ are specified in eq.(\ref{eq: iid def-D}) and eq.(\ref{eq: iid def-E}) in \Cref{lemma: iid theta}.
\end{lemma}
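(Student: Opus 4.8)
The plan is to turn the two preliminary per-batch bounds into a single self-contained recurrence for $\EE\|\tilde z^{(m)}\|^2$ that contracts at the rate $F$ defined in \eqref{eq: iid def-F}, and then telescope this recurrence against the preliminary convergence-error bound of \Cref{lemma: iid theta}. The point of the refinement is that the preliminary tracking-error bound \Cref{lemma: iid conv-z} discards information by replacing the inner-loop variance terms with the crude constants $G_{\text{VR}}$ and $H_{\text{VR}}$, whereas here I keep their genuine dependence on $\|\theta_t^{(m)}-\theta^\ast\|^2$. That finer dependence is already isolated in \Cref{lemma: iid pre-bound z}, so I would start from its conclusion, which bounds $\frac{\lambda_C}{3}\beta\sum_{t=0}^{M-1}\EE_{m,0}\|z_t^{(m)}\|^2$ by a $\|\tilde z^{(m-1)}\|^2$ term, a cross term of the form $\big(\frac{\alpha^2}{\beta}C_3+\beta^2C_4\big)\big(\sum_t\|\theta_t^{(m)}-\theta^\ast\|^2+M\|\tilde\theta^{(m-1)}-\theta^\ast\|^2\big)$, and the $1/M$ variance floors carrying $K_1,K_2$; here $C_3,C_4$ are exactly the constants factored out in \eqref{eq: iid def-C3} and \eqref{eq: iid def-C4}.

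First I would eliminate $\sum_t\|\theta_t^{(m)}-\theta^\ast\|^2$ from this estimate by inserting \Cref{lemma: iid pre-bound theta}. This reintroduces $\sum_t\|z_t^{(m)}\|^2$, but only with the lower-order coefficient $\big(\frac{\alpha^2}{\beta}C_3+\beta^2C_4\big)\frac{12\rho_{\max}^2\gamma^2}{\lambda_{\widehat{A}}^2}$, so I would move it back to the left-hand side; the hypothesis $\frac{\alpha^2}{\beta^2}C_3+\beta C_4\le\frac{1-D}{144}\frac{\lambda_{\widehat{A}}^2\lambda_C}{\rho_{\max}^2\gamma^2}$ is precisely what keeps the surviving coefficient of $\sum_t\|z_t^{(m)}\|^2$ a positive multiple of $\beta$. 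Dividing through by that coefficient times $M$ and applying $\|\tilde z^{(m)}\|^2\le\frac1M\sum_t\|z_t^{(m)}\|^2$ (Jensen) then produces a one-batch recurrence $\EE\|\tilde z^{(m)}\|^2\le F\,\EE\|\tilde z^{(m-1)}\|^2+c\,\EE\|\tilde\theta^{(m-1)}-\theta^\ast\|^2+(\text{floor})$, in which the self rate is exactly $F$ of \eqref{eq: iid def-F} (the summand $\big(\frac{\alpha^3}{\beta^2}C_3+\alpha\beta C_4\big)\frac{30}{\lambda_{\widehat{A}}}\gamma^2\rho_{\max}^2$ inside $F$ being the trace of the $\|\tilde z^{(m-1)}\|^2$ contribution that came back through \Cref{lemma: iid pre-bound theta}), and the floor collects the $\frac{\beta}{M}$- and $\frac{\alpha^2}{\beta^2M}$-order terms in $K_1,K_2$.

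Next I would substitute the fully telescoped preliminary convergence-error bound of \Cref{lemma: iid theta} for $\EE\|\tilde\theta^{(m-1)}-\theta^\ast\|^2$; that bound decays at the rate $D$ and is itself driven, through the rate-$E$ preliminary tracking-error bound \Cref{lemma: iid conv-z}, by the $G_{\text{VR}},H_{\text{VR}}$ constants. Feeding it into the $F$-recurrence and unrolling is the final, bookkeeping-heavy step: the self rate $F$ gives the $F^m\|\tilde z^{(0)}\|^2$ term; the geometric convolution $\sum_{k}F^{m-1-k}D^{k}$ of the $F$-rate with the $D$-rate decay of the $\theta$-error gives the $\frac{D^m-F^m}{D-F}\|\tilde\theta^{(0)}-\theta^\ast\|^2$ term; and the nested convolution of $F$, $D$ and $E$ arising because the $\theta$-error is itself driven by the rate-$E$ preliminary tracking error gives the double-difference factor $\frac{1}{D-E}\big(\frac{D^m-F^m}{D-F}-\frac{E^m-F^m}{E-F}\big)$ on $\|\tilde z^{(0)}\|^2$, while the two families of inhomogeneous floor terms sum geometrically to the $\frac{1}{(1-F)(1-D)}$-weighted blocks. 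Since $\beta=\calO(\alpha^{2/3})$ forces $D>E$ and $D>F$ for small $\alpha$, all these geometric sums are well defined and nonnegative.

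The main obstacle is constant management rather than any new idea. Two things must be checked simultaneously: that after the elimination step the leftover coefficient of $\sum_t\|z_t^{(m)}\|^2$ stays bounded below (handled by the first hypothesis above), and that every coupling coefficient multiplying a $\frac{D^m-F^m}{D-F}$-type sum is at most one, so that the clean stated bound holds as an inequality rather than with stray prefactors; this is exactly the purpose of the remaining hypotheses $\frac{\alpha^2}{\beta^2}C_3+\beta C_4\le\min\{5(1-D),\,C_4\}$ and $\frac1{\alpha M}+\alpha\cdot5(1+\gamma)^2\rho_{\max}^2\big(1+\frac{\gamma\rho_{\max}}{\min|\lambda(C)|}\big)^2\le\frac{\lambda_{\widehat{A}}}{6}$, together with $F<1$. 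The delicate part is tracking how each of $C_3$, $C_4$, $K_1$, $K_2$, $G_{\text{VR}}$ and $H_{\text{VR}}$ threads through the two substitutions so that the floor terms land in exactly the grouped form stated in the lemma.
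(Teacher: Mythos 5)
Your proposal follows the paper's own proof essentially step for step: start from Lemma \ref{lemma: iid pre-bound z}, eliminate $\sum_t\|\theta_t^{(m)}-\theta^\ast\|^2$ via Lemma \ref{lemma: iid pre-bound theta}, absorb the reintroduced $\sum_t\|z_t^{(m)}\|^2$ term (with exactly the coefficient $(\tfrac{\alpha^2}{\beta}C_3+\beta^2C_4)\tfrac{12\rho_{\max}^2\gamma^2}{\lambda_{\widehat{A}}^2}$ you identify) into the left-hand side, divide and apply Jensen to get the rate-$F$ recurrence, substitute the telescoped bound of Lemma \ref{lemma: iid theta} for $\EE\|\tilde\theta^{(m-1)}-\theta^\ast\|^2$, and unroll the resulting $F$/$D$/$E$ geometric convolutions under the stated smallness hypotheses. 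This matches the paper's argument, including the origin of the $(\tfrac{\alpha^3}{\beta^2}C_3+\alpha\beta C_4)\tfrac{30}{\lambda_{\widehat{A}}}\gamma^2\rho_{\max}^2$ summand in $F$ and the role of each hypothesis in keeping the coupling coefficients at most one.
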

\begin{proof} 
	From Lemma \ref{lemma: iid pre-bound z}, we have the following inequality
	\begin{align*}
	&  \frac{\lambda_{C}}{3}\beta  \sum_{t=0}^{M-1}\EE_{m,0}\|z_t^{(m)}\|^2 \\
	\leq &\Big[1+ \Big[10\beta^2 + 10\gamma^2\rho_{\max}^2 \big(1 + \frac{2 }{\lambda_C } \big)\cdot  \big(\rho_{\max} \frac{ 1+\gamma }{ \min |\lambda(C)|}\big)^2 \frac{\alpha^2}{\beta} \Big]M \Big]\EE_{m,0}\|  \tilde{z}^{(m-1)}\|^2\\
	&+ 10 (1+\gamma)^2 \rho^2 _{\max} \cdot \Big[\big(1 + \frac{\gamma \rho_{\max}}{ \min |\lambda(C)|}\big)^2 \big(1 + \frac{2 }{\lambda_C } \big)\cdot  \big(\rho_{\max} \frac{ 1+\gamma }{ \min |\lambda(C)|}\big)^2 \frac{\alpha^2}{\beta}  \\
	&+  \big( 1 + \frac{1}{ \min |\lambda(C)|} \big)^2 \beta^2  \Big] \big( \sum_{t=0}^{M-1}\EE_{m,0} \| \theta_{t}^{(m)} - \theta^\ast\|^2  + M \EE_{m,0}\| \tilde{\theta}^{(m-1)} - \theta^\ast\|^2 \big) \\ 
	&+    10K_2 \beta^2 +  10\big(\rho_{\max} \frac{ 1+\gamma }{ \min |\lambda(C)|}\big)^2  K_1 \big(1+ \frac{2  }{\lambda_C }  \big)  \frac{\alpha^2}{\beta}.
	\end{align*}
	Note that we have already bounded $\sum_{t=0}^{M-1}\EE_{m,0} \| \theta_{t}^{(m)} - \theta^\ast\|^2$ in Lemma \ref{lemma: iid pre-bound theta}. Then, we plug the result of Lemma \ref{lemma: iid pre-bound theta} into the above inequality and obtain that 
	\begin{align*}
	&  \frac{\lambda_{C}}{3}\beta  \sum_{t=0}^{M-1}\EE_{m,0}\|z_t^{(m)}\|^2 \\
	\leq &\Big[1+ \Big[10\beta^2 + 10\gamma^2\rho_{\max}^2 \big(1 + \frac{2 }{\lambda_C } \big)\cdot  \big(\rho_{\max} \frac{ 1+\gamma }{ \min |\lambda(C)|}\big)^2 \frac{\alpha^2}{\beta} \Big]M \Big]\EE_{m,0}\|  \tilde{z}^{(m-1)}\|^2\\
	&+ 10 (1+\gamma)^2 \rho^2 _{\max} \cdot \Big[\big(1 + \frac{\gamma \rho_{\max}}{ \min |\lambda(C)|}\big)^2 \big(1 + \frac{2 }{\lambda_C } \big)\cdot  \big(\rho_{\max} \frac{ 1+\gamma }{ \min |\lambda(C)|}\big)^2 \frac{\alpha^2}{\beta}  \\
	&+  \big( 1 + \frac{1}{ \min |\lambda(C)|} \big)^2 \beta^2  \Big] \big(  \frac{6}{\lambda_{\widehat{A}}} \frac{1}{\alpha}\Big\{ \Big[ 1 +  \alpha^2M \cdot 5(1+\gamma)^2\rho^2_{\max}\big(1 + \frac{\gamma \rho_{\max}}{ \min |\lambda(C)|}\big)^2 \Big] \EE_{m,0}\| \tilde{\theta}^{(m-1)} - \theta^\ast\|^2    + \alpha^2 \cdot5K_1 \\
	&+ \alpha \cdot  \frac{ 2 \rho^2_{\max} \gamma^2}{\lambda_{\widehat{A}}}  \sum_{t=0}^{M-1}\EE_{m,0}\| z_{t}^{(m)}\|^2+ \alpha^2 M \cdot 5 \gamma^2\rho_{\max}^2 \EE_{m,0}\|\tilde{z}^{(m-1)}\|^2 \Big\} + M \EE_{m,0}\| \tilde{\theta}^{(m-1)} - \theta^\ast\|^2 \big) \\ 
	&+    10K_2 \beta^2 +  10\big(\rho_{\max} \frac{ 1+\gamma }{ \min |\lambda(C)|}\big)^2  K_1 \big(1+ \frac{2  }{\lambda_C }  \big)  \frac{\alpha^2}{\beta}.
	\end{align*} 
	Define
	$C_3 =  10 (1+\gamma)^2 \rho^2 _{\max} \cdot  \big(1 + \frac{\gamma \rho_{\max}}{ \min |\lambda(C)|}\big)^2 \big(1 + \frac{2 }{\lambda_C } \big)\cdot  \big(\rho_{\max} \frac{ 1+\gamma }{ \min |\lambda(C)|}\big)^2 $
	and 
	$C_4 =  10 (1+\gamma)^2 \rho^2 _{\max} \cdot\big( 1 + \frac{1}{ \min |\lambda(C)|} \big)^2$, then the above inequality can be re-written as
	\begin{align*}
	&  \frac{\lambda_{C}}{3}\beta  \sum_{t=0}^{M-1}\EE_{m,0}\|z_t^{(m)}\|^2 \\
	\leq &\Big[1+ \Big[10\beta^2 + 10\gamma^2\rho_{\max}^2 \big(1 + \frac{2 }{\lambda_C } \big)\cdot  \big(\rho_{\max} \frac{ 1+\gamma }{ \min |\lambda(C)|}\big)^2 \frac{\alpha^2}{\beta} \Big]M \Big]\EE_{m,0}\|  \tilde{z}^{(m-1)}\|^2\\
	&+  \big(  \frac{\alpha^2}{\beta}\cdot C_3 + \beta^2 \cdot C_4 \big)\big(  \frac{6}{\lambda_{\widehat{A}}} \frac{1}{\alpha}\Big\{ \Big[ 1 +  \alpha^2M \cdot 5(1+\gamma)^2\rho^2_{\max}\big(1 + \frac{\gamma \rho_{\max}}{ \min |\lambda(C)|}\big)^2 \Big] \EE_{m,0}\| \tilde{\theta}^{(m-1)} - \theta^\ast\|^2    + \alpha^2 \cdot5K_1 \\
	&+ \alpha \cdot  \frac{ 2 \rho^2_{\max} \gamma^2}{\lambda_{\widehat{A}}}  \sum_{t=0}^{M-1}\EE_{m,0}\| z_{t}^{(m)}\|^2+ \alpha^2 M \cdot 5 \gamma^2\rho_{\max}^2 \EE_{m,0}\|\tilde{z}^{(m-1)}\|^2 \Big\} + M \EE_{m,0}\| \tilde{\theta}^{(m-1)} - \theta^\ast\|^2 \big) \\ 
	&+    10K_2 \beta^2 +  10\big(\rho_{\max} \frac{ 1+\gamma }{ \min |\lambda(C)|}\big)^2  K_1 \big(1+ \frac{2  }{\lambda_C }  \big)  \frac{\alpha^2}{\beta}.
	\end{align*} 
	Simplifying the above inequality yields that
	\begin{align*}
	& \Big[ \frac{\lambda_{C}}{3}\beta - \big(  \frac{\alpha^2}{\beta}\cdot C_3 + \beta^2 \cdot C_4 \big) \frac{ 12 \rho^2_{\max} \gamma^2}{\lambda^2_{\widehat{A}}} \Big]\sum_{t=0}^{M-1}\EE_{m,0}\|z_t^{(m)}\|^2 \\
	\leq &\Big[1+ \Big[10\beta^2 + 10\gamma^2\rho_{\max}^2 \big(1 + \frac{2 }{\lambda_C } \big)\cdot  \big(\rho_{\max} \frac{ 1+\gamma }{ \min |\lambda(C)|}\big)^2 \frac{\alpha^2}{\beta} \Big]M + 
	\big(  \frac{\alpha^2}{\beta}\cdot C_3 + \beta^2 \cdot C_4 \big) \frac{30}{\lambda_{\widehat{A}}} \gamma^2\rho_{\max}^2 \alpha M
	\Big]\EE_{m,0}\|  \tilde{z}^{(m-1)}\|^2\\
	&+  \big(  \frac{\alpha^2}{\beta}\cdot C_3 + \beta^2 \cdot C_4 \big)\big(  \frac{6}{\lambda_{\widehat{A}}} \frac{1}{\alpha}  \Big[ 1 +  \alpha^2M \cdot 5(1+\gamma)^2\rho^2_{\max}\big(1 + \frac{\gamma \rho_{\max}}{ \min |\lambda(C)|}\big)^2 \Big]   + M  \big) \EE_{m,0}\| \tilde{\theta}^{(m-1)} - \theta^\ast\|^2 \\ 
	&+    10K_2 \beta^2 +  10\big(\rho_{\max} \frac{ 1+\gamma }{ \min |\lambda(C)|}\big)^2  K_1 \big(1+ \frac{2  }{\lambda_C }  \big)  \frac{\alpha^2}{\beta}    + \big(  \frac{\alpha^2}{\beta}\cdot C_3 + \beta^2 \cdot C_4 \big)\alpha \cdot \frac{30}{\lambda_{\widehat{A}}}K_1.
	\end{align*} 
	Let $ \frac{\lambda_{C}}{3}\beta - \big(  \frac{\alpha^2}{\beta}\cdot C_3 + \beta^2 \cdot C_4 \big) \frac{ 12 \rho^2_{\max} \gamma^2}{\lambda^2_{\widehat{A}}} \geq \frac{\lambda_{C}}{4}\beta$. Dividing $\frac{\lambda_{C}}{4}\beta M$ and taking total expectation on both sides, and applying Jensen's inequality to the left-hand side, we obtain that
	\begin{align*}
	&  \EE \|\tilde{z}^{(m)}\|^2 \\
	\leq & \frac{4}{\lambda_{C}}\Big[\frac{1}{\beta M}+  \beta \cdot 10 + \frac{\alpha^2}{\beta^2} \cdot 10\gamma^2\rho_{\max}^2 \big(1 + \frac{2 }{\lambda_C } \big)\cdot  \big(\rho_{\max} \frac{ 1+\gamma }{ \min |\lambda(C)|}\big)^2    + 
	\big(  \frac{\alpha^3}{\beta^2}\cdot C_3 + \alpha \beta \cdot C_4 \big) \frac{30}{\lambda_{\widehat{A}}} \gamma^2\rho_{\max}^2  
	\Big]\EE \|  \tilde{z}^{(m-1)}\|^2\\
	&+  \frac{4}{\lambda_{C}} \big(  \frac{\alpha^2}{\beta^2}\cdot C_3 + \beta  \cdot C_4 \big)\big(  \frac{6}{\lambda_{\widehat{A}}} \frac{1}{\alpha M}  \Big[ 1 +  \alpha^2M \cdot 5(1+\gamma)^2\rho^2_{\max}\big(1 + \frac{\gamma \rho_{\max}}{ \min |\lambda(C)|}\big)^2 \Big]   + 1  \big) \EE \| \tilde{\theta}^{(m-1)} - \theta^\ast\|^2 \\ 
	&+    \frac{40}{\lambda_{C}} K_2 \frac{\beta}{M}  +  \frac{40}{\lambda_{C}}\big(\rho_{\max} \frac{ 1+\gamma }{ \min |\lambda(C)|}\big)^2  K_1 \big(1+ \frac{2  }{\lambda_C }  \big)  \frac{\alpha^2}{\beta^2}\frac{1}{M}    + \frac{\alpha}{M } \big(  \frac{\alpha^2}{\beta^2} \cdot C_3 + \beta  \cdot C_4 \big) \cdot \frac{4}{\lambda_{C}} \frac{30}{\lambda_{\widehat{A}}}K_1.
	\end{align*} 
	Define $F:= \frac{4}{\lambda_{C}}\Big[\frac{1}{\beta M}+  \beta \cdot 10 + \frac{\alpha^2}{\beta^2} \cdot 10\gamma^2\rho_{\max}^2 \big(1 + \frac{2 }{\lambda_C } \big)\cdot  \big(\rho_{\max} \frac{ 1+\gamma }{ \min |\lambda(C)|}\big)^2    + 
	\big(  \frac{\alpha^3}{\beta^2}\cdot C_3 + \alpha \beta \cdot C_4 \big) \frac{30}{\lambda_{\widehat{A}}} \gamma^2\rho_{\max}^2  
	\Big]$. The above inequality can be simplified as
	\begin{align*}
	&  \EE \|\tilde{z}^{(m)}\|^2 \\
	\leq & F \cdot \EE \|  \tilde{z}^{(m-1)}\|^2\\
	&+  \frac{4}{\lambda_{C}} \big(  \frac{\alpha^2}{\beta^2}\cdot C_3 + \beta  \cdot C_4 \big)\big(  \frac{6}{\lambda_{\widehat{A}}} \frac{1}{\alpha M}  \Big[ 1 +  \alpha^2M \cdot 5(1+\gamma)^2\rho^2_{\max}\big(1 + \frac{\gamma \rho_{\max}}{ \min |\lambda(C)|}\big)^2 \Big]   + 1  \big) \EE \| \tilde{\theta}^{(m-1)} - \theta^\ast\|^2 \\ 
	&+    \frac{40}{\lambda_{C}} K_2 \frac{\beta}{M}  +  \frac{40}{\lambda_{C}}\big(\rho_{\max} \frac{ 1+\gamma }{ \min |\lambda(C)|}\big)^2  K_1 \big(1+ \frac{2  }{\lambda_C }  \big)  \frac{\alpha^2}{\beta^2}\frac{1}{M}    + \frac{\alpha}{M } \big(  \frac{\alpha^2}{\beta^2} \cdot C_3 + \beta  \cdot C_4 \big) \cdot \frac{4}{\lambda_{C}} \frac{30}{\lambda_{\widehat{A}}}K_1.
	\end{align*} 
	Lastly, recall that we already have the preliminary convergence bound of $\EE \| \tilde{\theta}^{(m-1)} - \theta^\ast\|^2$ in Lemma \ref{lemma: iid theta}. Apply this result to the above inequality yields that
	\begin{align*}
	&  \EE \|\tilde{z}^{(m)}\|^2 \\
	\leq & F \cdot \EE \|  \tilde{z}^{(m-1)}\|^2\\
	&+  \frac{4}{\lambda_{C}} \big(  \frac{\alpha^2}{\beta^2}\cdot C_3 + \beta  \cdot C_4 \big)\big(  \frac{6}{\lambda_{\widehat{A}}} \frac{1}{\alpha M}  \Big[ 1 +  \alpha^2M \cdot 5(1+\gamma)^2\rho^2_{\max}\big(1 + \frac{\gamma \rho_{\max}}{ \min |\lambda(C)|}\big)^2 \Big]   + 1  \big) \Big[ D^{m-1} \cdot \EE\| \tilde{\theta}^{(0)} - \theta^\ast\|^2     \\
	&+ \frac{12}{\lambda_{\widehat{A}}} \Big\{\frac{ 2 \rho^2_{\max} \gamma^2}{\lambda_{\widehat{A}}} \frac{3}{\lambda_{C}}  \Big[  \frac{1}{\beta M}  + \beta \cdot 10 + \frac{\alpha^2}{\beta^2} \cdot 10\gamma^2\rho_{\max}^2 \big(1 + \frac{2 }{\lambda_C } \big)\cdot  \big(\rho_{\max} \frac{ 1+\gamma }{ \min |\lambda(C)|}\big)^2   \Big] + \alpha  \cdot 5 \gamma^2\rho_{\max}^2   \Big\}   \\
	&\times \Big\{ \frac{D^{m-1} - E^{m-1}}{D - E}\EE \|  \tilde{z}^{(0)}\|^2 + \frac{2}{1-D}\cdot \big[ \beta \cdot \frac{24}{\lambda_C }   H_{\text{VR}}^2 + \frac{\alpha^2}{\beta^2} \cdot \big(1 + \frac{2 }{\lambda_C }  \big) \cdot \frac{2}{\lambda_C }\big(\rho_{\max} \frac{ 1+\gamma }{ \min |\lambda(C)|}\big)^2 G_{\text{VR}}^2 \big]\Big\}\\  
	&+  \frac{1}{1-D}\Big\{  \frac{\beta}{M}  \cdot  \frac{12}{\lambda_{\widehat{A}}} \frac{ 60 \rho^2_{\max} \gamma^2}{\lambda_{\widehat{A}}\lambda_{C} }   \cdot K_2   + \frac{\alpha^2}{\beta^2}\frac{1}{M} \cdot  \frac{12}{\lambda_{\widehat{A}}} \frac{ 60 \rho^2_{\max} \gamma^2}{\lambda_{\widehat{A}}\lambda_{C}}   \cdot \big(\rho_{\max} \frac{ 1+\gamma }{ \min |\lambda(C)|}\big)^2  K_1 \big(1+ \frac{2  }{\lambda_C }  \big)   + \frac{\alpha}{M} \cdot \frac{60 K_1}{\lambda_{\widehat{A}}}  \Big\} \Big] \\ 
	&+    \frac{40}{\lambda_{C}} K_2 \frac{\beta}{M}  +  \frac{40}{\lambda_{C}}\big(\rho_{\max} \frac{ 1+\gamma }{ \min |\lambda(C)|}\big)^2  K_1 \big(1+ \frac{2  }{\lambda_C }  \big)  \frac{\alpha^2}{\beta^2}\frac{1}{M}    + \frac{\alpha}{M } \big(  \frac{\alpha^2}{\beta^2} \cdot C_3 + \beta  \cdot C_4 \big) \cdot \frac{4}{\lambda_{C}} \frac{30}{\lambda_{\widehat{A}}}K_1.
	\end{align*} 
	Telescoping the above inequality, we obtain the final non-asymptotic bound of $\EE \|\tilde{z}^{(m)}\|^2$ as 
	\begin{align*}
	&  \EE \|\tilde{z}^{(m)}\|^2 \\
	\leq & F^m \cdot \EE \|  \tilde{z}^{(0)}\|^2\\
	&+  \frac{4}{\lambda_{C}} \big(  \frac{\alpha^2}{\beta^2}\cdot C_3 + \beta  \cdot C_4 \big)\big(  \frac{6}{\lambda_{\widehat{A}}} \frac{1}{\alpha M}  \Big[ 1 +  \alpha^2M \cdot 5(1+\gamma)^2\rho^2_{\max}\big(1 + \frac{\gamma \rho_{\max}}{ \min |\lambda(C)|}\big)^2 \Big]   + 1  \big) \Big[ \frac{D^{m} - F^m}{D - F} \cdot \EE\| \tilde{\theta}^{(0)} - \theta^\ast\|^2     \\
	&+ \frac{12}{\lambda_{\widehat{A}}} \Big\{\frac{ 2 \rho^2_{\max} \gamma^2}{\lambda_{\widehat{A}}} \frac{3}{\lambda_{C}}  \Big[  \frac{1}{\beta M}  + \beta \cdot 10 + \frac{\alpha^2}{\beta^2} \cdot 10\gamma^2\rho_{\max}^2 \big(1 + \frac{2 }{\lambda_C } \big)\cdot  \big(\rho_{\max} \frac{ 1+\gamma }{ \min |\lambda(C)|}\big)^2   \Big] + \alpha  \cdot 5 \gamma^2\rho_{\max}^2   \Big\}   \\
	&\times \Big\{ \frac{\frac{D^{m} - F^m}{D - F} - \frac{E^{m} - F^m}{E - F}}{D - E}\EE \|  \tilde{z}^{(0)}\|^2 + \frac{1}{1 - F}\frac{2}{1-D}\cdot \big[ \beta \cdot \frac{24}{\lambda_C }   H_{\text{VR}}^2 + \frac{\alpha^2}{\beta^2} \cdot \big(1 + \frac{2 }{\lambda_C }  \big) \cdot \frac{2}{\lambda_C }\big(\rho_{\max} \frac{ 1+\gamma }{ \min |\lambda(C)|}\big)^2 G_{\text{VR}}^2 \big]\Big\}\\  
	&+  \frac{1}{1- F} \frac{1}{1-D}\Big\{  \frac{\beta}{M}  \cdot  \frac{12}{\lambda_{\widehat{A}}} \frac{ 60 \rho^2_{\max} \gamma^2}{\lambda_{\widehat{A}}\lambda_{C} }   \cdot K_2   + \frac{\alpha^2}{\beta^2}\frac{1}{M} \cdot  \frac{12}{\lambda_{\widehat{A}}} \frac{ 60 \rho^2_{\max} \gamma^2}{\lambda_{\widehat{A}}\lambda_{C}}   \cdot \big(\rho_{\max} \frac{ 1+\gamma }{ \min |\lambda(C)|}\big)^2  K_1 \big(1+ \frac{2  }{\lambda_C }  \big)   + \frac{\alpha}{M} \cdot \frac{60 K_1}{\lambda_{\widehat{A}}}  \Big\} \Big] \\ 
	&+  \frac{1}{1 - F} \Big[ \frac{\beta}{M} \cdot \frac{40}{\lambda_{C}} K_2   +  \frac{\alpha^2}{\beta^2}\frac{1}{M} \cdot \frac{40}{\lambda_{C}}\big(\rho_{\max} \frac{ 1+\gamma }{ \min |\lambda(C)|}\big)^2  K_1 \big(1+ \frac{2  }{\lambda_C }  \big)     + \frac{\alpha}{M } \big(  \frac{\alpha^2}{\beta^2} \cdot C_3 + \beta  \cdot C_4 \big) \cdot \frac{4}{\lambda_{C}} \frac{30}{\lambda_{\widehat{A}}}K_1\Big].
	\end{align*} 
	Lastly, we make the following assumption to further simplify the inequality above. We note that these requirements for the learning rate $\alpha,\beta$ and the batch size $M$ are not necessary; they are only used for simplification. 
	\begin{itemize}
		\item $  \frac{6}{\lambda_{\widehat{A}}} \frac{1}{\alpha M}  \Big[ 1 +  \alpha^2M \cdot 5(1+\gamma)^2\rho^2_{\max}\big(1 + \frac{\gamma \rho_{\max}}{ \min |\lambda(C)|}\big)^2 \Big]   + 1   \leq 2$,
		\item $ \frac{8}{\lambda_{C}} \big(  \frac{\alpha^2}{\beta^2}\cdot C_3 + \beta  \cdot C_4 \big) \cdot  \frac{1}{1- F} \frac{1}{1-D}  \frac{\beta}{M}  \cdot  \frac{12}{\lambda_{\widehat{A}}} \frac{ 60 \rho^2_{\max} \gamma^2}{\lambda_{\widehat{A}}\lambda_{C} }   \cdot K_2 \leq \frac{1}{1 - F} \frac{\beta}{M} \cdot \frac{40}{\lambda_{C}} K_2$,
		\item  $ \frac{8}{\lambda_{C}} \big(  \frac{\alpha^2}{\beta^2}\cdot C_3 + \beta  \cdot C_4 \big) \cdot \frac{1}{1- F} \frac{1}{1-D} \frac{\alpha^2}{\beta^2}\frac{1}{M} \cdot  \frac{12}{\lambda_{\widehat{A}}} \frac{ 60 \rho^2_{\max} \gamma^2}{\lambda_{\widehat{A}}\lambda_{C}}   \cdot \big(\rho_{\max} \frac{ 1+\gamma }{ \min |\lambda(C)|}\big)^2  K_1 \big(1+ \frac{2  }{\lambda_C }  \big)  \leq \frac{1}{1 - F}   \frac{\alpha^2}{\beta^2}\frac{1}{M} \cdot \frac{40}{\lambda_{C}}\big(\rho_{\max} \frac{ 1+\gamma }{ \min |\lambda(C)|}\big)^2  K_1 \big(1+ \frac{2  }{\lambda_C }  \big)$.
	\end{itemize}
	Apply these conditions to the above inequality yields that
	\begin{align*}
	&  \EE \|\tilde{z}^{(m)}\|^2 \\
	\leq & F^m \cdot \EE \|  \tilde{z}^{(0)}\|^2 +  \frac{8}{\lambda_{C}} \big(  \frac{\alpha^2}{\beta^2}\cdot C_3 + \beta  \cdot C_4 \big)  \Big[ \frac{D^{m} - F^m}{D - F} \cdot \EE\| \tilde{\theta}^{(0)} - \theta^\ast\|^2     \Big] \\
	&+ \frac{8}{\lambda_{C}} \big(  \frac{\alpha^2}{\beta^2}\cdot C_3 + \beta  \cdot C_4 \big)   \frac{12}{\lambda_{\widehat{A}}} \Big\{\frac{ 2 \rho^2_{\max} \gamma^2}{\lambda_{\widehat{A}}} \frac{3}{\lambda_{C}}  \Big[  \frac{1}{\beta M}  + \beta \cdot 10 + \frac{\alpha^2}{\beta^2} \cdot 10\gamma^2\rho_{\max}^2 \big(1 + \frac{2 }{\lambda_C } \big)\cdot  \big(\rho_{\max} \frac{ 1+\gamma }{ \min |\lambda(C)|}\big)^2   \Big] \\
	&\quad+ \alpha  \cdot 5 \gamma^2\rho_{\max}^2   \Big\}   \\
	&\times  \Big\{   \frac{\frac{D^{m} - F^m}{D - F} - \frac{E^{m} - F^m}{E - F}}{D - E}\EE \|  \tilde{z}^{(0)}\|^2 + \frac{1}{1 - F}\frac{2}{1-D}\cdot \big[ \beta \cdot \frac{24}{\lambda_C }   H_{\text{VR}}^2 + \frac{\alpha^2}{\beta^2} \cdot \big(1 + \frac{2 }{\lambda_C }  \big) \cdot \frac{2}{\lambda_C }\big(\rho_{\max} \frac{ 1+\gamma }{ \min |\lambda(C)|}\big)^2 G_{\text{VR}}^2 \big]\Big\}\\  
	&+  \frac{\alpha}{M} \big(  \frac{\alpha^2}{\beta^2}\cdot C_3 + \beta  \cdot C_4 \big)  \cdot  \frac{1}{1- F} \big(  \frac{1}{1-D}  \frac{8}{\lambda_{C}} \frac{60 K_1}{\lambda_{\widehat{A}}}  + \frac{4}{\lambda_{C}} \frac{30}{\lambda_{\widehat{A}}}K_1\big) \\
	&\quad+  \frac{1}{1 - F} \Big[ \frac{\beta}{M} \cdot \frac{80}{\lambda_{C}} K_2   +  \frac{\alpha^2}{\beta^2}\frac{1}{M} \cdot \frac{80}{\lambda_{C}}\big(\rho_{\max} \frac{ 1+\gamma }{ \min |\lambda(C)|}\big)^2  K_1 \big(1+ \frac{2  }{\lambda_C }  \big)    \Big].
	\end{align*} 
	Further note that $1 < \frac{1}{1-D}$ and $\alpha \leq 1$, the above inequality implies that
	\begin{align*}
	&  \EE \|\tilde{z}^{(m)}\|^2 \\
	\leq & F^m \cdot \EE \|  \tilde{z}^{(0)}\|^2 +  \frac{8}{\lambda_{C}} \big(  \frac{\alpha^2}{\beta^2}\cdot C_3 + \beta  \cdot C_4 \big)  \Big[ \frac{D^{m} - F^m}{D - F} \cdot \EE\| \tilde{\theta}^{(0)} - \theta^\ast\|^2     \Big] \\
	&+ \frac{8}{\lambda_{C}} \big(  \frac{\alpha^2}{\beta^2}\cdot C_3 + \beta  \cdot C_4 \big)   \frac{12}{\lambda_{\widehat{A}}} \Big\{\frac{ 2 \rho^2_{\max} \gamma^2}{\lambda_{\widehat{A}}} \frac{3}{\lambda_{C}}  \Big[  \frac{1}{\beta M}  + \beta \cdot 10 + \frac{\alpha^2}{\beta^2} \cdot 10\gamma^2\rho_{\max}^2 \big(1 + \frac{2 }{\lambda_C } \big)\cdot  \big(\rho_{\max} \frac{ 1+\gamma }{ \min |\lambda(C)|}\big)^2   \Big] \\
	&\quad+ \alpha  \cdot 5 \gamma^2\rho_{\max}^2   \Big\}   \\
	&\times  \Big\{   \frac{\frac{D^{m} - F^m}{D - F} - \frac{E^{m} - F^m}{E - F}}{D - E}\EE \|  \tilde{z}^{(0)}\|^2 + \frac{1}{1 - F}\frac{2}{1-D}\cdot \big[ \beta \cdot \frac{24}{\lambda_C }   H_{\text{VR}}^2 + \frac{\alpha^2}{\beta^2} \cdot \big(1 + \frac{2 }{\lambda_C }  \big) \cdot \frac{2}{\lambda_C }\big(\rho_{\max} \frac{ 1+\gamma }{ \min |\lambda(C)|}\big)^2 G_{\text{VR}}^2 \big]\Big\}\\  
	&+   \frac{1}{1 - F} \frac{1}{1-D}\Big[ \frac{\beta}{M} \cdot\big(  \frac{80}{\lambda_{C}} K_2 + C_4  \frac{600}{\lambda_{C}} \frac{K_1}{\lambda_{\widehat{A}}}  \big)  +  \frac{\alpha^2}{\beta^2}\frac{1}{M} \cdot \big(\frac{80}{\lambda_{C}}\big(\rho_{\max} \frac{ 1+\gamma }{ \min |\lambda(C)|}\big)^2  K_1 \big(1+ \frac{2  }{\lambda_C }  \big) + C_3  \frac{600}{\lambda_{C}} \frac{K_1}{\lambda_{\widehat{A}}}\big)      \Big].
	\end{align*} 
	Assume that $\alpha  \cdot 5 \gamma^2\rho_{\max}^2 \leq  \frac{ 2 \rho^2_{\max} \gamma^2}{\lambda_{\widehat{A}}} \frac{3}{\lambda_{C}} \Big[   \beta \cdot 10 + \frac{\alpha^2}{\beta^2} \cdot 10\gamma^2\rho_{\max}^2 \big(1 + \frac{2 }{\lambda_C } \big)\cdot  \big(\rho_{\max} \frac{ 1+\gamma }{ \min |\lambda(C)|}\big)^2  \Big]$. Then, we further obtain from the above inequality that 
	\begin{align*}
	&  \EE \|\tilde{z}^{(m)}\|^2 \\
	\leq & F^m \cdot \EE \|  \tilde{z}^{(0)}\|^2 +  \frac{8}{\lambda_{C}} \big(  \frac{\alpha^2}{\beta^2}\cdot C_3 + \beta  \cdot C_4 \big)  \Big[ \frac{D^{m} - F^m}{D - F} \cdot \EE\| \tilde{\theta}^{(0)} - \theta^\ast\|^2     \Big] \\
	&+ \frac{8}{\lambda_{C}} \big(  \frac{\alpha^2}{\beta^2}\cdot C_3 + \beta  \cdot C_4 \big)   \frac{12}{\lambda_{\widehat{A}}}  \frac{ 2 \rho^2_{\max} \gamma^2}{\lambda_{\widehat{A}}} \frac{3}{\lambda_{C}}  \Big[  \frac{1}{\beta M}  + \beta \cdot 20 + \frac{\alpha^2}{\beta^2} \cdot 20\gamma^2\rho_{\max}^2 \big(1 + \frac{2 }{\lambda_C } \big)\cdot  \big(\rho_{\max} \frac{ 1+\gamma }{ \min |\lambda(C)|}\big)^2   \Big]   \\
	&\times  \Big\{   \frac{\frac{D^{m} - F^m}{D - F} - \frac{E^{m} - F^m}{E - F}}{D - E}\EE \|  \tilde{z}^{(0)}\|^2 + \frac{1}{1 - F}\frac{2}{1-D}\cdot \big[ \beta \cdot \frac{24}{\lambda_C }   H_{\text{VR}}^2 + \frac{\alpha^2}{\beta^2} \cdot \big(1 + \frac{2 }{\lambda_C }  \big) \cdot \frac{2}{\lambda_C }\big(\rho_{\max} \frac{ 1+\gamma }{ \min |\lambda(C)|}\big)^2 G_{\text{VR}}^2 \big]\Big\}\\  
	&+   \frac{1}{1 - F} \frac{1}{1-D}\Big[ \frac{\beta}{M} \cdot\big(  \frac{80}{\lambda_{C}} K_2 + C_4  \frac{600}{\lambda_{C}} \frac{K_1}{\lambda_{\widehat{A}}}  \big)  +  \frac{\alpha^2}{\beta^2}\frac{1}{M} \cdot \big(\frac{80}{\lambda_{C}}\big(\rho_{\max} \frac{ 1+\gamma }{ \min |\lambda(C)|}\big)^2  K_1 \big(1+ \frac{2  }{\lambda_C }  \big) + C_3  \frac{600}{\lambda_{C}} \frac{K_1}{\lambda_{\widehat{A}}}\big)      \Big]\\
	=   & F^m \cdot \EE \|  \tilde{z}^{(0)}\|^2 +  \frac{8}{\lambda_{C}} \big(  \frac{\alpha^2}{\beta^2}\cdot C_3 + \beta  \cdot C_4 \big)  \Big[ \frac{D^{m} - F^m}{D - F} \cdot \EE\| \tilde{\theta}^{(0)} - \theta^\ast\|^2     \Big] \\
	&+    \frac{576}{\lambda_{\widehat{A}}^2}  \frac{   \rho^2_{\max} \gamma^2}{\lambda_C^2}  \big(  \frac{\alpha^2}{\beta^2}\cdot C_3 + \beta  \cdot C_4 \big)    \Big[  \frac{1}{\beta M}  + \beta \cdot 20 + \frac{\alpha^2}{\beta^2} \cdot 20\gamma^2\rho_{\max}^2 \big(1 + \frac{2 }{\lambda_C } \big)\cdot  \big(\rho_{\max} \frac{ 1+\gamma }{ \min |\lambda(C)|}\big)^2   \Big]   \\
	&\times  \Big\{   \frac{\frac{D^{m} - F^m}{D - F} - \frac{E^{m} - F^m}{E - F}}{D - E}\EE \|  \tilde{z}^{(0)}\|^2 + \frac{1}{1 - F}\frac{2}{1-D}\cdot \big[ \beta \cdot \frac{24}{\lambda_C }   H_{\text{VR}}^2 + \frac{\alpha^2}{\beta^2} \cdot \big(1 + \frac{2 }{\lambda_C }  \big) \cdot \frac{2}{\lambda_C }\big(\rho_{\max} \frac{ 1+\gamma }{ \min |\lambda(C)|}\big)^2 G_{\text{VR}}^2 \big]\Big\}\\  
	&+   \frac{1}{1 - F} \frac{1}{1-D}\Big[ \frac{\beta}{M} \cdot\big(  \frac{80}{\lambda_{C}} K_2 + C_4  \frac{600}{\lambda_{C}} \frac{K_1}{\lambda_{\widehat{A}}}  \big)  +  \frac{\alpha^2}{\beta^2}\frac{1}{M} \cdot \big(\frac{80}{\lambda_{C}}\big(\rho_{\max} \frac{ 1+\gamma }{ \min |\lambda(C)|}\big)^2  K_1 \big(1+ \frac{2  }{\lambda_C }  \big) + C_3  \frac{600}{\lambda_{C}} \frac{K_1}{\lambda_{\widehat{A}}}\big)      \Big].
	\end{align*} 
	Lastly, assume that $ \frac{576}{\lambda_{\widehat{A}}^2}  \frac{   \rho^2_{\max} \gamma^2}{\lambda_C^2}  \big(  \frac{\alpha^2}{\beta^2}\cdot C_3 + \beta  \cdot C_4 \big)    \Big[  \frac{1}{\beta M}  + \beta \cdot 20 + \frac{\alpha^2}{\beta^2} \cdot 20\gamma^2\rho_{\max}^2 \big(1 + \frac{2 }{\lambda_C } \big)\cdot  \big(\rho_{\max} \frac{ 1+\gamma }{ \min |\lambda(C)|}\big)^2   \Big] < 1$ and $\frac{8}{\lambda_{C}} \big(  \frac{\alpha^2}{\beta^2}\cdot C_3 + \beta  \cdot C_4 \big)  < 1$, the above inequality further implies that
	
	\begin{align*}
	&  \EE \|\tilde{z}^{(m)}\|^2 \\
	\leq    & F^m \cdot \EE \|  \tilde{z}^{(0)}\|^2 +  \frac{D^{m} - F^m}{D - F} \cdot \EE\| \tilde{\theta}^{(0)} - \theta^\ast\|^2    + \frac{\frac{D^{m} - F^m}{D - F} - \frac{E^{m} - F^m}{E - F}}{D - E}\EE \|  \tilde{z}^{(0)}\|^2   \\
	&+   \frac{1}{1 - F}\frac{1}{1-D} \frac{1152}{\lambda_{\widehat{A}}^2}  \frac{   \rho^2_{\max} \gamma^2}{\lambda_C^2}  \big(  \frac{\alpha^2}{\beta^2}\cdot C_3 + \beta  \cdot C_4 \big)    \Big[  \frac{1}{\beta M}  + \beta \cdot 20 + \frac{\alpha^2}{\beta^2} \cdot 20\gamma^2\rho_{\max}^2 \big(1 + \frac{2 }{\lambda_C } \big)\cdot  \big(\rho_{\max} \frac{ 1+\gamma }{ \min |\lambda(C)|}\big)^2   \Big]   \\
	&\times  \big[ \beta \cdot \frac{24}{\lambda_C }   H_{\text{VR}}^2 + \frac{\alpha^2}{\beta^2} \cdot \big(1 + \frac{2 }{\lambda_C }  \big) \cdot \frac{2}{\lambda_C }\big(\rho_{\max} \frac{ 1+\gamma }{ \min |\lambda(C)|}\big)^2 G_{\text{VR}}^2 \big] \\  
	&+   \frac{1}{1 - F} \frac{1}{1-D}\Big[ \frac{\beta}{M} \cdot\big(  \frac{80}{\lambda_{C}} K_2 + C_4  \frac{600}{\lambda_{C}} \frac{K_1}{\lambda_{\widehat{A}}}  \big)  +  \frac{\alpha^2}{\beta^2}\frac{1}{M} \cdot \big(\frac{80}{\lambda_{C}}\big(\rho_{\max} \frac{ 1+\gamma }{ \min |\lambda(C)|}\big)^2  K_1 \big(1+ \frac{2  }{\lambda_C }  \big) + C_3  \frac{600}{\lambda_{C}} \frac{K_1}{\lambda_{\widehat{A}}}\big)      \Big].
	\end{align*}  
\end{proof}

\section{Other Supporting Lemmas for Proving \Cref{thm: iid}}
\begin{lemma}[One-Step Update of $\theta_{t}^{(m)}$] \label{lemma: iid GVR}
	Under the same assumptions as those of Theorem \ref{thm: iid}, the square norm of one-step update of $\theta_{t}^{(m)}$ in Algorithm \ref{alg: iid} can be bounded as 
	\begin{align*}
	&  	\EE_{m,0}\| G_{t}^{(m)}(\theta_{t}^{(m)}, z_{t}^{(m)})  - G_{t}^{(m)}(\tilde{\theta}^{(m-1)}, \tilde{z}^{(m-1)})  +  {G}^{(m)}(\tilde{\theta}^{(m-1)}, \tilde{z}^{(m-1)}) \|^2 \\
	\leq & 5(1+\gamma)^2\rho^2_{\max}\big(1 + \frac{\gamma \rho_{\max}}{ \min |\lambda(C)|}\big)^2\big( \EE_{m,0} \| \theta_{t}^{(m)} - \theta^\ast\|^2  + \EE_{m,0}\| \tilde{\theta}^{(m-1)} - \theta^\ast\|^2 \big) \\
	& + 5 \gamma^2\rho_{\max}^2 \big(\EE_{m,0}\| z_{t}^{(m)}\|^2  + \EE_{m,0}\|\tilde{z}^{(m-1)}\|^2\big)   +    \frac{1}{M} \cdot   5 K_1,
	\end{align*}
	where
	\begin{align}\label{eq: iid def-K1}
		K_1 :=\big[ (1+\gamma)R_\theta + r_{\max} \big]^2\rho^2_{\max}\big(1 + \frac{\gamma \rho_{\max}}{ \min |\lambda(C)|}\big)^2 .
	\end{align}
\end{lemma}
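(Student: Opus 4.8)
The plan is to exploit the affine structure of the pseudo-gradient map together with the optimality identity $\widehat{A}\theta^\ast + \hat{b} = 0$ (established in the proof of \Cref{lemma: iid pre-bound theta}), and to isolate a single mean-zero batch-average term whose conditional second moment carries the $M^{-1}$ factor. First I would simplify the estimator. Since $G_{t}^{(m)}(\theta,z) = \widehat{A}_{t}^{(m)}\theta + \hat{b}_{t}^{(m)} + B_{t}^{(m)} z$ is affine in both arguments, the per-sample difference telescopes, $G_{t}^{(m)}(\theta_{t}^{(m)}, z_{t}^{(m)}) - G_{t}^{(m)}(\tilde{\theta}^{(m-1)}, \tilde{z}^{(m-1)}) = \widehat{A}_{t}^{(m)}(\theta_{t}^{(m)} - \tilde{\theta}^{(m-1)}) + B_{t}^{(m)}(z_{t}^{(m)} - \tilde{z}^{(m-1)})$, with the constant $\hat{b}_{t}^{(m)}$ cancelling. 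I would then center at $\theta^\ast$ and expand the batch pseudo-gradient as $\widetilde{G}^{(m)} = \widehat{A}^{(m)}(\tilde{\theta}^{(m-1)} - \theta^\ast) + B^{(m)}\tilde{z}^{(m-1)} + V^{(m)}$, where $\widehat{A}^{(m)}, B^{(m)}$ are batch averages and $V^{(m)} := \frac{1}{M}\sum_{x\in B_m}(\widehat{A}_x\theta^\ast + \hat{b}_x)$. The crucial point is that $\EE[\widehat{A}_x\theta^\ast + \hat{b}_x] = \widehat{A}\theta^\ast + \hat{b} = 0$, so $V^{(m)}$ is a mean-zero average of $M$ i.i.d.\ terms. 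These substitutions express the whole estimator as a sum of five vectors: $\widehat{A}_{t}^{(m)}(\theta_{t}^{(m)} - \theta^\ast)$, $B_{t}^{(m)} z_{t}^{(m)}$, a snapshot $\theta$-term, a snapshot $z$-term, and $V^{(m)}$.

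Next I would apply $\|\sum_{i=1}^5 v_i\|^2 \le 5\sum_{i=1}^5\|v_i\|^2$ and bound each piece. The first four are controlled by the uniform operator-norm bounds $\|\widehat{A}_{t}^{(m)}\| \le (1+\gamma)\rho_{\max}(1 + \frac{\gamma\rho_{\max}}{\min |\lambda(C)|})$ and $\|B_{t}^{(m)}\| \le \gamma\rho_{\max}$, both consequences of \Cref{ass: boundedness} (via $\|\phi\|\le 1$, $\rho\le\rho_{\max}$, $r\le r_{\max}$, $\|C^{-1}\| = 1/\min|\lambda(C)|$, and $\widehat{A}_t = A_t - B_t C^{-1}A$), yielding the quadratic terms in $\|\theta_{t}^{(m)} - \theta^\ast\|^2$, $\|\tilde{\theta}^{(m-1)} - \theta^\ast\|^2$, $\|z_{t}^{(m)}\|^2$, $\|\tilde{z}^{(m-1)}\|^2$. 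For the last piece I would use that the $M$ summands of $V^{(m)}$ are i.i.d.\ and mean-zero under $\EE_{m,0}$, so $\EE_{m,0}\|V^{(m)}\|^2 = \frac{1}{M}\Var(\widehat{A}_x\theta^\ast + \hat{b}_x) \le \frac{1}{M}\EE\|\widehat{A}_x\theta^\ast + \hat{b}_x\|^2 \le \frac{K_1}{M}$, the per-sample second-moment bound $K_1$ again following from \Cref{ass: boundedness} together with $\|\theta^\ast\| \le R_\theta$. Throughout I would use that, in the i.i.d.\ setting, the inner sample $x_t^{(m)}$ and the batch $B_m$ are mutually independent and independent of the $F_{m,0}$-measurable snapshot $(\tilde{\theta}^{(m-1)}, \tilde{z}^{(m-1)})$.

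The main obstacle is the bookkeeping required to obtain the stated constant on the snapshot terms. The snapshot $\theta$- and $z$-contributions arise as differences between an inner-sample matrix and its batch counterpart, for instance $(\widehat{A}^{(m)} - \widehat{A}_{t}^{(m)})(\tilde{\theta}^{(m-1)} - \theta^\ast)$, and a naive triangle inequality would double the operator norm and inflate the constant. To keep it tight one must pass to the conditional variance: since the snapshot is $F_{m,0}$-measurable and independent of both sampling sources, the second moment of such a difference splits into two variances, each bounded by the corresponding second moment $\|\widehat{A}_{t}^{(m)}\|^2\|\tilde{\theta}^{(m-1)} - \theta^\ast\|^2$ because the variance never exceeds the second moment, while the batch variance additionally carries an $M^{-1}$ that is dominated and absorbed. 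Verifying that this centering is applied consistently across all snapshot terms, and that the residual $M^{-1}$ contributions are lower-order, is the delicate step; the operator-norm and variance estimates themselves are routine given \Cref{ass: boundedness}.
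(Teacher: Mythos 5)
Your proposal matches the paper's proof essentially step for step: the same five-term decomposition centered at $\theta^\ast$ (your $V^{(m)}$ is exactly the paper's term $\widehat{A}^{(m)}\theta^\ast + \widehat{b}^{(m)}$), the same Jensen inequality with factor $5$, the same operator-norm bounds from Assumption \ref{ass: boundedness}, and the same i.i.d.\ mean-zero cross-term cancellation that produces the $K_1/M$ piece. If anything, your handling of the snapshot terms is more careful than the paper's, which bounds $\EE\,\|(\widehat{A}_t^{(m)} - \widehat{A}^{(m)})(\tilde{\theta}^{(m-1)} - \theta^\ast)\|^2$ via a variance identity as though the batch average $\widehat{A}^{(m)}$ were the conditional mean $\widehat{A}$, silently discarding the nonnegative $\calO(M^{-1})$ batch-fluctuation contribution that you explicitly split off and absorb.
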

\begin{proof} 
	Substituting the definitions of $G_{t}^{(m)}(\cdot)$ and ${G}^{(m)}(\cdot)$ into the update of $\theta_{t}^{(m)}$ yields that
 \begin{align*}
 	 & \| G_{t}^{(m)}(\theta_{t}^{(m)}, z_{t}^{(m)})  - G_{t}^{(m)}(\tilde{\theta}^{(m-1)}, \tilde{z}^{(m-1)})  +  {G}^{(m)}(\tilde{\theta}^{(m-1)}, \tilde{z}^{(m-1)}) \|^2 \\
 	= & \| \widehat{A}_{t}^{(m)}\theta_{t}^{(m)} + \widehat{b}_{t}^{(m)}+ B_{t}^{(m)} z_{t}^{(m)} - \widehat{A}_{t}^{(m)}\tilde{\theta}^{(m-1)} - \widehat{b}_{t}^{(m)}- B_{t}^{(m)} \tilde{z}^{(m-1)}  + \widehat{A}^{(m)}\tilde{\theta}^{(m-1)} + \widehat{b}^{(m)}+ B^{(m)} \tilde{z}^{(m-1)} \|^2\\
 	= & \| \big(\widehat{A}_{t}^{(m)}\theta_{t}^{(m)} - \widehat{A}_{t}^{(m)} \theta^\ast\big) + \big(\widehat{A}_{t}^{(m)} \theta^\ast - \widehat{A}_{t}^{(m)}\tilde{\theta}^{(m-1)}+ \widehat{A}^{(m)}\tilde{\theta}^{(m-1)} -  \widehat{A}^{(m)}\theta^\ast\big) +  \big(\widehat{A}^{(m)}\theta^\ast  + \widehat{b}^{(m)}\big) \\
 	\quad &  + B_{t}^{(m)} z_{t}^{(m)}  - B_{t}^{(m)} \tilde{z}^{(m-1)}+ B^{(m)} \tilde{z}^{(m-1)} \|^2\\
 	\leq &  5 \|\widehat{A}_{t}^{(m)} \|^2 \| \theta_{t}^{(m)} - \theta^\ast\|^2 + 5\| \big( \widehat{A}_{t}^{(m)}\tilde{\theta}^{(m-1)} - \widehat{A}_{t}^{(m)} \theta^\ast \big)- \big(\widehat{A}^{(m)}\tilde{\theta}^{(m-1)} -  \widehat{A}^{(m)}\theta^\ast\big) \|^2 + 5 \|\widehat{A}^{(m)}\theta^\ast  + \widehat{b}^{(m)} \|^2 \\
 	\quad & + 5\|B_{t}^{(m)}\|^2 \| z_{t}^{(m)}\|^2 + 5 \| B_{t}^{(m)} \tilde{z}^{(m-1)} -  B^{(m)} \tilde{z}^{(m-1)}\|^2,  \numberthis\label{eq: iid theta-one-step-update}
 \end{align*}
 where the last inequality uses Jensen's inequality $\| \sum_{i=1}^n a_i \|^2 = \|  \frac{1}{n}\sum_{i=1}^n  (n a_i) \|^2   \leq n \sum_{i=1}^n \| a_i \|^2$. Next, we bound the third term of the right hand side of the above inequality as follows:
 \begin{align*}
 &\|\widehat{A}^{(m)}\theta^\ast  + \widehat{b}^{(m)} \|^2 \\
 &\leq  \|\sum_{t=0}^{M-1}\widehat{A}^{(m)}_t\theta^\ast  + \sum_{t=0}^{M-1}\widehat{b}_{t}^{(m)} \|^2 \\
 &= \frac{1}{M^2} \big[  \sum_{i=j} \| \widehat{A}_i^{(m)}\theta^\ast  + \widehat{b}_i^{(m)}\|^2  + \sum_{i\neq j} \langle \widehat{A}_i^{(m)}\theta^\ast  + \widehat{b}_i^{(m)}, \widehat{A}_j^{(m)}\theta^\ast  + \widehat{b}_j^{(m)}\rangle \big] \\
 &\leq \frac{1}{M} \cdot \big[ (1+\gamma)R_\theta + r_{\max} \big]^2\rho^2_{\max}\big(1 + \frac{\gamma \rho_{\max}}{ \min |\lambda(C)|}\big)^2 + \frac{1}{M^2 }\sum_{i\neq j} \langle \widehat{A}_i^{(m)}\theta^\ast  + \widehat{b}_i^{(m)}, \widehat{A}_j^{(m)}\theta^\ast  + \widehat{b}_j^{(m)}\rangle, \numberthis\label{eq: iid theta-one-step-update 0}
 \end{align*}
 where in the last inequality we use Lemma \ref{lemma: const-2} to bound $\| \widehat{A}_i^{(m)}\theta^\ast  + \widehat{b}_i^{(m)}\|$. Next, consider the conditional expectation of the second term of the above inequality, and, without loss of generality, assume $i<j$, we obtain that  
 \begin{align*}
 &\EE_{m,0} \langle \widehat{A}_i^{(m)}\theta^\ast  + \widehat{b}_i^{(m)}, \widehat{A}_j^{(m)}\theta^\ast  + \widehat{b}_j^{(m)}\rangle \\
 =&\EE_{m,0}\langle \widehat{A}_i^{(m)}\theta^\ast  + \widehat{b}_i^{(m)}, \EE_{m,i}\big[\widehat{A}_j^{(m)}\theta^\ast  + \widehat{b}_j^{(m)}\big]\rangle \\
 =&\EE_{m,0}\langle \widehat{A}_i^{(m)}\theta^\ast  + \widehat{b}_i^{(m)},   \widehat{A} \theta^\ast  + \widehat{b} \rangle \\
 =& 0,
 \end{align*}
 which follows from the i.i.d.\ sampling scheme. Substituting the above result into (\ref{eq: iid theta-one-step-update 0}), we obtain that
 \begin{align}
 \EE_{m,0}	\|\widehat{A}^{(m)}\theta^\ast  + \widehat{b}^{(m)} \|^2 &\leq \frac{1}{M} \cdot \big[ (1+\gamma)R_\theta + r_{\max} \big]^2\rho^2_{\max}\big(1 + \frac{\gamma \rho_{\max}}{ \min |\lambda(C)|}\big)^2. \numberthis\label{eq: iid theta-one-step-update 1}
 \end{align}
  On the other hand, note that $\Var X  \leq \EE X^2$, we have 
 \begin{align*}
 	&\EE_{m,t-1} \| \big( \widehat{A}_{t}^{(m)}\tilde{\theta}^{(m-1)} - \widehat{A}_{t}^{(m)} \theta^\ast \big)- \big(\widehat{A}^{(m)}\tilde{\theta}^{(m-1)} -  \widehat{A}^{(m)}\theta^\ast\big) \|^2 \\
 	=& \Var_{m, t-1}\big( \widehat{A}_{t}^{(m)}\tilde{\theta}^{(m-1)} - \widehat{A}_{t}^{(m)} \theta^\ast \big) \\
 	\leq& \EE_{m,t-1} \big( \widehat{A}_{t}^{(m)}\tilde{\theta}^{(m-1)} - \widehat{A}_{t}^{(m)} \theta^\ast \big)^2 \\
 	\leq& \EE_{m,t-1} \| \widehat{A}_{t}^{(m)}\|^2 \| \tilde{\theta}^{(m-1)} - \theta^\ast\|^2 \numberthis\label{eq: iid theta-one-step-update 2}
  \end{align*} 
 and similarly,
 \begin{align} \label{eq: iid theta-one-step-update 3}
 	\EE_{m,t-1} \| B_{t}^{(m)} \tilde{z}^{(m-1)} -  B^{(m)} \tilde{z}^{(m-1)}\|^2  \leq  \EE_{m,t-1} \| B_{t}^{(m)} \|^2 \|\tilde{z}^{(m-1)}\|^2.
 \end{align}
 Substituting eqs. (\ref{eq: iid theta-one-step-update 1}), (\ref{eq: iid theta-one-step-update 2}), (\ref{eq: iid theta-one-step-update 3}) into (\ref{eq: iid theta-one-step-update}) yields that
 \begin{align*}
  &  	\EE_{m,0}\| G_{t}^{(m)}(\theta_{t}^{(m)}, z_{t}^{(m)})  - G_{t}^{(m)}(\tilde{\theta}^{(m-1)}, \tilde{z}^{(m-1)})  +  {G}^{(m)}(\tilde{\theta}^{(m-1)}, \tilde{z}^{(m-1)}) \|^2 \\
\leq & 5(1+\gamma)^2\rho^2_{\max}\big(1 + \frac{\gamma \rho_{\max}}{ \min |\lambda(C)|}\big)^2\big( \EE_{m,0} \| \theta_{t}^{(m)} - \theta^\ast\|^2  + \EE_{m,0}\| \tilde{\theta}^{(m-1)} - \theta^\ast\|^2 \big) \\
& + 5 \gamma^2\rho_{\max}^2 \big(\EE_{m,0}\| z_{t}^{(m)}\|^2  + \EE_{m,0}\|\tilde{z}^{(m-1)}\|^2\big)   +    \frac{1}{M} \cdot   5 \big[ (1+\gamma)R_\theta + r_{\max} \big]^2\rho^2_{\max}\big(1 + \frac{\gamma \rho_{\max}}{ \min |\lambda(C)|}\big)^2. 
 \end{align*}
\end{proof}

\begin{lemma}[One-Step Update of $z_{t}^{(m)}$] \label{lemma: iid HVR}
	Under the same assumptions as those of Theorem \ref{thm: iid}, the square norm of one-step update of $z_{t}^{(m)}$ in Algorithm \ref{alg: iid} is bounded as 
\begin{align*}
& \EE_{m,0}\| H_{t}^{(m)}(\theta_{t}^{(m)}, z_{t}^{(m)})  - H_{t}^{(m)}(\tilde{\theta}^{(m-1)}, \tilde{z}^{(m-1)})  +  {H}^{(m)}(\tilde{\theta}^{(m-1)}, \tilde{z}^{(m-1)}) \|^2 \\
\leq &  5 (1+\gamma)^2 \rho^2 _{\max}\big( 1 + \frac{1}{ \min |\lambda(C)|} \big)^2\big( \EE_{m,0}\| \theta_{t}^{(m)} - \theta^\ast\|^2 + \EE_{m,0}\| \tilde{\theta}^{(m-1)} - \theta^\ast\|^2  \big) \\
& + 5\big(   \EE_{m,0}\| z_{t}^{(m)}\|^2 + \EE_{m,0}\| \tilde{z}^{(m-1)}\|^2\big)  + \frac{5K_2}{M},
\end{align*}
where 
\begin{align}\label{eq: iid def-K5}
	K_2 := \big[  \big(  1 + \gamma\big)R_\theta  + r_{\max}\big]^2\big(  1 + \frac{1}{\min|\lambda_{C}|} \big)^2.
\end{align}
\end{lemma}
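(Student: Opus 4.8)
The plan is to follow verbatim the structure of the proof of \Cref{lemma: iid GVR}, since the $z$-update pseudo-gradient $H_t^{(m)}(\theta,z)=\bar A_t^{(m)}\theta+\bar b_t^{(m)}+C_t^{(m)}z$ has exactly the same algebraic form as $G_t^{(m)}$, with the pair $(\widehat A_t^{(m)},B_t^{(m)})$ replaced by $(\bar A_t^{(m)},C_t^{(m)})$. First I would substitute the definitions of $H_t^{(m)}$ and $H^{(m)}$ into the variance-reduced increment and insert $\pm\theta^\ast$ into the $\bar A$-terms, so that the increment splits into five groups: $\bar A_t^{(m)}(\theta_t^{(m)}-\theta^\ast)$; the centered term $(\bar A_t^{(m)}-\bar A^{(m)})(\tilde\theta^{(m-1)}-\theta^\ast)$; the batch-bias term $\bar A^{(m)}\theta^\ast+\bar b^{(m)}$; the term $C_t^{(m)}z_t^{(m)}$; and the centered term $(C_t^{(m)}-C^{(m)})\tilde z^{(m-1)}$. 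Applying the elementary inequality $\|\sum_{i=1}^5 a_i\|^2\le 5\sum_{i=1}^5\|a_i\|^2$ reduces the task to bounding each of the five squared norms separately.

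The deterministic ingredients are the operator-norm bounds. Using \Cref{ass: boundedness} ($\|\phi\|\le1$, $\rho\le\rho_{\max}$) together with the definition $\bar A_t^{(m)}=A_t^{(m)}-C_t^{(m)}C^{-1}A$, I would show $\|A_t^{(m)}\|\le(1+\gamma)\rho_{\max}$, $\|C_t^{(m)}\|\le1$, $\|C^{-1}\|=1/\min|\lambda(C)|$ and $\|A\|\le(1+\gamma)\rho_{\max}$, whence $\|\bar A_t^{(m)}\|\le(1+\gamma)\rho_{\max}\big(1+\tfrac{1}{\min|\lambda(C)|}\big)$. Squaring this bound and using it on the first and second groups produces the factor $5(1+\gamma)^2\rho_{\max}^2\big(1+\tfrac1{\min|\lambda(C)|}\big)^2$ in front of $\EE_{m,0}\|\theta_t^{(m)}-\theta^\ast\|^2$ and $\EE_{m,0}\|\tilde\theta^{(m-1)}-\theta^\ast\|^2$; likewise $\|C_t^{(m)}\|\le1$ gives the coefficient $5$ in front of $\EE_{m,0}\|z_t^{(m)}\|^2$ and $\EE_{m,0}\|\tilde z^{(m-1)}\|^2$.

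For the two centered groups I would take $\EE_{m,t-1}$ and invoke $\Var X\le\EE\|X\|^2$, exactly as in \Cref{lemma: iid GVR}, reducing each to $\EE_{m,t-1}\|\bar A_t^{(m)}\|^2\|\tilde\theta^{(m-1)}-\theta^\ast\|^2$ and $\EE_{m,t-1}\|C_t^{(m)}\|^2\|\tilde z^{(m-1)}\|^2$ respectively, which the norm bounds above already control. The one genuinely different estimate is the batch-bias group: writing $\bar A^{(m)}\theta^\ast+\bar b^{(m)}=\tfrac1M\sum_{s\in B_m}(\bar A_s^{(m)}\theta^\ast+\bar b_s^{(m)})$ and expanding the squared norm, the key observation is that $\bar A=A-CC^{-1}A=0$ and $\bar b=b-CC^{-1}b=0$, so each summand is mean-zero; by the i.i.d.\ sampling the off-diagonal inner products vanish in expectation, leaving only the $M$ diagonal terms and hence the $\tfrac1M$ factor. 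Bounding the per-sample term $\|\bar A_s^{(m)}\theta^\ast+\bar b_s^{(m)}\|$ by the constant whose square is $K_2$ (a boundedness estimate of the type recorded in the supporting constant lemmas) yields the final $\tfrac{5K_2}{M}$, and summing the five contributions gives the claim.

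The main obstacle is not conceptual but bookkeeping: I must verify that the variance-reduction gain, namely the $\tfrac1M$ on the batch-bias term, really appears, which hinges on the mean-zero property $\bar A\theta^\ast+\bar b=0$ and on the $B_m$ samples being independent, and I must ensure the per-sample bound matches exactly the constant $K_2$ defined in \eqref{eq: iid def-K5}. A minor subtlety to keep straight is that the control variate is centered at the batch average $(\bar A^{(m)},C^{(m)})$ rather than the population mean $(\bar A,C)$, so the variance-domination step must be applied to the appropriate conditional quantity; this is handled the same way as in the $\theta$-update lemma.
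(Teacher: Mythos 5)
Your proposal is correct and follows essentially the same route as the paper's proof: the same five-term decomposition via Jensen's inequality, the same operator-norm bounds $\|\bar A_t^{(m)}\|\le(1+\gamma)\rho_{\max}\big(1+\tfrac{1}{\min|\lambda(C)|}\big)$ and $\|C_t^{(m)}\|\le 1$, the same variance-domination step for the centered terms, and the same treatment of the batch-bias term where i.i.d.\ sampling kills the off-diagonal inner products (the paper invokes $\bar A\theta^\ast+\bar b=0$, which holds for precisely the reason you identify, $\bar A=\bar b=0$) and Lemma \ref{lemma: const-3} supplies the per-sample constant whose square is $K_2$.
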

\begin{proof}
	Follow a similar proof logic as that of Lemma \ref{lemma: iid GVR}, we obtain the following bound for the  square norm of the one-step update of $z_{t}^{(m)}$,
	\begin{align*}
		& \| H_{t}^{(m)}(\theta_{t}^{(m)}, z_{t}^{(m)})  - H_{t}^{(m)}(\tilde{\theta}^{(m-1)}, \tilde{z}^{(m-1)})  +  {H}^{(m)}(\tilde{\theta}^{(m-1)}, \tilde{z}^{(m-1)}) \|^2 \\
		\leq &  5 \|\bar{A}_{t}^{(m)} \|^2 \| \theta_{t}^{(m)} - \theta^\ast\|^2 + 5\| \big( \bar{A}_{t}^{(m)}\tilde{\theta}^{(m-1)} - \bar{A}_{t}^{(m)} \theta^\ast \big)- \big(\bar{A}^{(m)}\tilde{\theta}^{(m-1)} -  \bar{A}^{(m)}\theta^\ast\big) \|^2 + 5 \|\bar{A}^{(m)}\theta^\ast  + \bar{b}^{(m)} \|^2 \\
		\quad & + 5\|C_{t}^{(m)}\|^2 \| z_{t}^{(m)}\|^2 + 5 \| C_{t}^{(m)} \tilde{z}^{(m-1)} -  C^{(m)} \tilde{z}^{(m-1)}\|^2 . \numberthis \label{eq: iid z-one-step-update}
	\end{align*}
	Then, we take $\EE_{m,0}$ on both sides, follow the same steps in the proof of Lemma \ref{lemma: iid GVR} and notice that $\EE_{m,0} \|\bar{A}^{(m)}\theta^\ast  + \bar{b}^{(m)} \|^2$ in \cref{eq: iid z-one-step-update} is bounded by
	\begin{align*}
	\EE_{m,0} \|\bar{A}^{(m)}\theta^\ast  + \bar{b}^{(m)} \|^2 &\leq \frac{1}{M^2} \sum_{i=j} \|\bar{A}_i^{(m)}\theta^\ast  + \bar{b}_i^{(m)} \|^2\\
	&\leq \frac{1}{M} \cdot \big[  \big(  1 + \gamma\big)R_\theta  + r_{\max}\big]^2\big(  1 + \frac{1}{\min|\lambda_{C}|} \big)^2
	\end{align*}
	using Lemma \ref{lemma: const-3}. Finally, we obtain that
	\begin{align*}
	& \EE_{m,0}\| H_{t}^{(m)}(\theta_{t}^{(m)}, z_{t}^{(m)})  - H_{t}^{(m)}(\tilde{\theta}^{(m-1)}, \tilde{z}^{(m-1)})  +  {H}^{(m)}(\tilde{\theta}^{(m-1)}, \tilde{z}^{(m-1)}) \|^2 \\
	\leq &  5 (1+\gamma)^2 \rho^2 _{\max}\big( 1 + \frac{1}{ \min |\lambda(C)|} \big)^2\big( \EE_{m,0}\| \theta_{t}^{(m)} - \theta^\ast\|^2 + \EE_{m,0}\| \tilde{\theta}^{(m-1)} - \theta^\ast\|^2  \big) \\
	& + 5\big(   \EE_{m,0}\| z_{t}^{(m)}\|^2 + \EE_{m,0}\| \tilde{z}^{(m-1)}\|^2\big)  + \frac{5}{M}\cdot \big[  \big(  1 + \gamma\big)R_\theta  + r_{\max}\big]^2\big(  1 + \frac{1}{\min|\lambda_{C}|} \big)^2.
	\end{align*} 
\end{proof}

\section{Proof of Theorem \ref{thm: markov}} \label{appendix: markov proof}
We assume the learning rates $\alpha,\beta$ and the batch size $M$ satisfy the following conditions.
\begin{align}
	&	\alpha \leq \min \Big\{  \frac{\lambda_{\widehat{A}}}{30} / \Big[ (1+\gamma)^2\rho^2_{\max}\Big(1 + \frac{\gamma \rho_{\max}}{ \min |\lambda(C)|}\Big)^2 \Big], \frac{3}{5}\frac{1}{\lambda_{\widehat{A}}  } \Big\},\\
	&\beta \leq 1,  \label{eq: lr markov 1}\\
	&M\beta > \frac{12}{\lambda_{C}},  \label{eq: lr markov 2}\\
	&\frac{\lambda_{C}}{48}\beta - 10\beta^2 - 10\gamma^2\rho_{\max}^2   \Big(\rho_{\max} \frac{ 1+\gamma }{ \min |\lambda(C)|}\Big)^2  \Big(\alpha^2 + \frac{2 \alpha^2}{\lambda_C } \frac{1}{\beta} \Big) \geq 0, \label{eq: lr markov 3}\\
	& \frac{16}{\lambda_{\widehat{A}}} \Big\{    \frac{96}{\lambda_{\widehat{A}} \lambda_{C}}\gamma^2 \rho_{\max}^2   \Big[  \frac{1}{\beta M}+ 10\beta + 10\gamma^2\rho_{\max}^2 \Big(1+ \frac{2  }{\lambda_C }  \Big) \Big(\rho_{\max} \frac{ 1+\gamma }{ \min |\lambda(C)|}\Big)^2 \frac{\alpha^2}{\beta^2}    \Big] +  5 \gamma^2\rho_{\max}^2   \alpha \Big\} \leq 1, \label{eq: lr markov 4}\\
	&\Big(1 + \frac{\gamma \rho_{\max}}{ \min |\lambda(C)|}\Big)^2 \Big(1 + \frac{2 1}{\lambda_C }  \Big)  \Big(\rho_{\max} \frac{ 1+\gamma }{ \min |\lambda(C)|}\Big)^2 \frac{\alpha^2}{\beta^2}  +  \Big( 1 + \frac{1}{ \min |\lambda(C)|} \Big)^2 \beta^2  \nonumber \\
	&\quad \leq \min\Big\{ \frac{\lambda_{\widehat{A}}}{48}/\big[\frac{96}{\lambda_{\widehat{A}} \lambda_{C}}\gamma^2 \rho_{\max}^2  \cdot 10 (1+\gamma)^2 \rho^2 _{\max}\big],  \frac{\lambda_{C}}{48}/\big[120 (1+\gamma)^2 \rho^2 _{\max} \frac{1}{\lambda_{\widehat{A}}}\big] \Big\}, \label{eq: lr markov 5}\\
	&\max\{D, E, F\} <   1  ,  \label{eq: lr markov 6}
\end{align} 
where $D,E,F$ are specified in eq.(\ref{eq: def-D}), eq.(\ref{eq: def-E}), and eq.(\ref{eq: def-F}), respectively. We note that under the above conditions, all the supporting lemmas for proving the theorem are satisfied. We also note that for a sufficiently small $\epsilon$, our choices of learning rates and batch size $\alpha = \calO({\epsilon^{\frac{3}{4}}}),\beta = \calO({\epsilon^{\frac{1}{2}}})$, $M=\calO({\epsilon}^{-1})$ that are stated in the theorem satisfy \cref{eq: lr markov 1,eq: lr markov 2,eq: lr markov 3,eq: lr markov 4,eq: lr markov 5,eq: lr markov 6}.

\paragraph{Proof Sketch} The proof consists of the following key steps. 
\begin{enumerate}
	\item Develop \textit{preliminary bound for  $\sum_{t=0}^{M-1} \| \theta_{t}^{(m)} - \theta^\ast \|^2$}. (Lemma \ref{lemma: pre-bound theta})
	
	We first bound $\sum_{t=0}^{M-1}\| \theta_{t}^{(m)}-\theta^\ast\|^2$ in terms of $\sum_{t=0}^{M-1} \|z_{t}^{(m)}\|^2$, $\|\tilde{z}^{(m-1)}\|^2$, and $\|\tilde{\theta}^{(m-1)}-\theta^\ast\|^2$. 
	
	\item Develop \textit{preliminary bound for  $\sum_{t=0}^{M-1} \| z_{t}^{(m)}\|^2$}.  (Lemma \ref{lemma: pre-bound z})
	
	Then we bound $\sum_{t=0}^{M-1} \| z_{t}^{(m)}\|^2$ in terms of $\sum_{t=0}^{M-1} \|\theta_{t}^{(m)}-\theta^\ast\|^2$, $\|\tilde{z}^{(m-1)}\|^2$, and $\|\tilde{\theta}^{(m-1)}-\theta^\ast\|^2$, and plug it into the preliminary bound of  $\sum_{t=0}^{M-1} \| \theta_{t}^{(m)}-\theta^\ast\|^2$. Then, we obtain an upper bound of $\sum_{t=0}^{M-1} \| \theta_{t}^{(m)}-\theta^\ast\|^2$ in terms of $\|\tilde{z}^{(m-1)}\|^2$, and $\|\tilde{\theta}^{(m-1)}-\theta^\ast\|^2$.  
	
	\item Develop \textit{non-asymptotic bound for $\|\tilde{z}^{m}\|^2$}. (Lemma \ref{lemma: conv-z})
	
	Lastly, we develop a non-asymptotic bound for $\|\tilde{z}^{m}\|^2$ and plug it into the previous upper bounds. Then, we obtain a relation between $\EE \| \tilde{\theta}^{(m )}-\theta^\ast\|$ and $\EE \| \tilde{\theta}^{(m-1 )}-\theta^\ast\|$. Recursively telescoping this inequality leads to our final result.
\end{enumerate}
 
	By Lemma \ref{lemma: pre-bound theta}, we have the following result:
	\begin{align*}
	&  \frac{\lambda_{\widehat{A}}}{12}\alpha   \sum_{t=0}^{M-1}\EE_{m,0} \| \theta_{t}^{(m)} - \theta^\ast\|^2 \\
	\leq & \Big[ 1 +  \alpha^2M \cdot 5(1+\gamma)^2\rho^2_{\max}\Big(1 + \frac{\gamma \rho_{\max}}{ \min |\lambda(C)|}\Big)^2 \Big] \EE_{m,0}\| \tilde{\theta}^{(m-1)} - \theta^\ast\|^2  + \alpha \cdot 2K_2 + \alpha^2 \cdot5K_1 \\
	&+  \alpha \cdot  \frac{6}{\lambda_{\widehat{A}}}\gamma^2 \rho_{\max}^2   \sum_{t=0}^{M-1}\EE_{m,0}\| z_{t}^{(m)}\|^2+ \alpha^2 M \cdot 5 \gamma^2\rho_{\max}^2 \EE_{m,0}\|\tilde{z}^{(m-1)}\|^2, \numberthis \label{eq: mc-tmpp1}
	\end{align*}
	where $K_1$ is specified in \cref{eq: def-K1-mc} of Lemma \ref{lemma: K1}, and $K_2$ is specified in   \cref{eq: def-K2-mc} of Lemma \ref{lemma: K2}. By Lemma \ref{lemma: pre-bound z}, we have that
	\begin{align*}
	&   \frac{\lambda_{C}}{16}\beta  \sum_{t=0}^{M-1}\EE_{m,0}\|z_t^{(m)}\|^2 \\
	\leq & \Big[  1+\Big[10\beta^2 + 10\gamma^2\rho_{\max}^2 \Big(\alpha^2 + \frac{2 \alpha^2}{\lambda_C } \frac{1}{\beta} \Big)\cdot  \Big(\rho_{\max} \frac{ 1+\gamma }{ \min |\lambda(C)|}\Big)^2  \Big]M  \Big]\EE_{m,0}\|  \tilde{z}^{(m-1)}\|^2\\
	&+ 10 (1+\gamma)^2 \rho^2 _{\max} \cdot \Big[\Big(1 + \frac{\gamma \rho_{\max}}{ \min |\lambda(C)|}\Big)^2 \Big(\alpha^2 + \frac{2 \alpha^2}{\lambda_C } \frac{1}{\beta} \Big)\cdot  \Big(\rho_{\max} \frac{ 1+\gamma }{ \min |\lambda(C)|}\Big)^2   \\
	&+  \Big( 1 + \frac{1}{ \min |\lambda(C)|} \Big)^2 \beta^2  \Big] \Big( \sum_{t=0}^{M-1}\EE_{m,0} \| \theta_{t}^{(m)} - \theta^\ast\|^2  + M \EE_{m,0}\| \tilde{\theta}^{(m-1)} - \theta^\ast\|^2 \Big) \\ 
	&+    (2K_3+2K_4)\beta + 10K_5 \beta^2 +  10\Big(\rho_{\max} \frac{ 1+\gamma }{ \min |\lambda(C)|}\Big)^2  K_1 \Big(\alpha^2 + \frac{2 \alpha^2}{\lambda_C } \frac{1}{\beta} \Big). \numberthis \label{eq: mc-tmpp2}
	\end{align*}
	Combining eq.(\ref{eq: mc-tmpp1}) and eq.(\ref{eq: mc-tmpp2}), we obtain the following upper bound of $\sum_{t=0}^{M-1} \| \theta_{t}^{(m)}-\theta^\ast\|^2$ in terms of $\|\tilde{z}^{(m-1)}\|^2$ and $\|\tilde{\theta}^{(m-1)}-\theta^\ast\|^2$. 
	\begin{align*}
	&  \frac{\lambda_{\widehat{A}}}{12}\alpha   \sum_{t=0}^{M-1}\EE_{m,0} \| \theta_{t}^{(m)} - \theta^\ast\|^2 \\
	\leq & \Big[ 1 +  \alpha^2M \cdot 5(1+\gamma)^2\rho^2_{\max}\Big(1 + \frac{\gamma \rho_{\max}}{ \min |\lambda(C)|}\Big)^2 \Big] \EE_{m,0}\| \tilde{\theta}^{(m-1)} - \theta^\ast\|^2  + \alpha \cdot 2K_2 + \alpha^2 \cdot5K_1 \\
	&+  \frac{\alpha}{\beta} \cdot  \frac{96}{\lambda_{\widehat{A}} \lambda_{C}}\gamma^2 \rho_{\max}^2   \Big\{ \Big[  1+\Big[10\beta^2 + 10\gamma^2\rho_{\max}^2 \Big(\alpha^2 + \frac{2 \alpha^2}{\lambda_C } \frac{1}{\beta} \Big)\cdot  \Big(\rho_{\max} \frac{ 1+\gamma }{ \min |\lambda(C)|}\Big)^2  \Big]M  \Big]\EE_{m,0}\|  \tilde{z}^{(m-1)}\|^2\\
	&+ 10 (1+\gamma)^2 \rho^2 _{\max} \cdot \Big[\Big(1 + \frac{\gamma \rho_{\max}}{ \min |\lambda(C)|}\Big)^2 \Big(\alpha^2 + \frac{2 \alpha^2}{\lambda_C } \frac{1}{\beta} \Big)\cdot  \Big(\rho_{\max} \frac{ 1+\gamma }{ \min |\lambda(C)|}\Big)^2   \\
	&+  \Big( 1 + \frac{1}{ \min |\lambda(C)|} \Big)^2 \beta^2  \Big] \Big( \sum_{t=0}^{M-1}\EE_{m,0} \| \theta_{t}^{(m)} - \theta^\ast\|^2  + M \EE_{m,0}\| \tilde{\theta}^{(m-1)} - \theta^\ast\|^2 \Big) \\ 
	&+    (2K_3+2K_4)\beta + 10K_5 \beta^2 +  10\Big(\rho_{\max} \frac{ 1+\gamma }{ \min |\lambda(C)|}\Big)^2  K_1 \Big(\alpha^2 + \frac{2 \alpha^2}{\lambda_C } \frac{1}{\beta} \Big) \Big\}   + \alpha^2 M \cdot 5 \gamma^2\rho_{\max}^2 \EE_{m,0}\|\tilde{z}^{(m-1)}\|^2.
	\end{align*}
	Re-arranging the above inequality yields that 
	\begin{align*}
	& \Big\{  \frac{\lambda_{\widehat{A}}}{12}\alpha -  \frac{\alpha}{\beta} \cdot  \frac{96}{\lambda_{\widehat{A}} \lambda_{C}}\gamma^2 \rho_{\max}^2  \cdot 10 (1+\gamma)^2 \rho^2 _{\max} \cdot \Big[\Big(1 + \frac{\gamma \rho_{\max}}{ \min |\lambda(C)|}\Big)^2 \Big(\alpha^2 + \frac{2 \alpha^2}{\lambda_C } \frac{1}{\beta} \Big)\cdot  \Big(\rho_{\max} \frac{ 1+\gamma }{ \min |\lambda(C)|}\Big)^2   \\
	&+  \Big( 1 + \frac{1}{ \min |\lambda(C)|} \Big)^2 \beta^2  \Big]  \Big\}
	 \sum_{t=0}^{M-1}\EE_{m,0} \| \theta_{t}^{(m)} - \theta^\ast\|^2 \\
	\leq & \Big[ 1 +  \alpha^2M \cdot 5(1+\gamma)^2\rho^2_{\max}\Big(1 + \frac{\gamma \rho_{\max}}{ \min |\lambda(C)|}\Big)^2\\
	& + \frac{\alpha M}{\beta} \cdot  \frac{96}{\lambda_{\widehat{A}} \lambda_{C}}\gamma^2 \rho_{\max}^2 \cdot 10 (1+\gamma)^2 \rho^2 _{\max} \cdot \big[\Big(1 + \frac{\gamma \rho_{\max}}{ \min |\lambda(C)|}\Big)^2 \Big(\alpha^2 + \frac{2 \alpha^2}{\lambda_C } \frac{1}{\beta} \Big)\cdot  \Big(\rho_{\max} \frac{ 1+\gamma }{ \min |\lambda(C)|}\Big)^2   \\
	&+  \Big( 1 + \frac{1}{ \min |\lambda(C)|} \Big)^2 \beta^2  \big] \Big] \EE_{m,0}\| \tilde{\theta}^{(m-1)} - \theta^\ast\|^2  + \alpha \cdot 2K_2 + \alpha^2 \cdot5K_1 \\
	&+  \Big\{ \frac{\alpha}{\beta} \cdot  \frac{96}{\lambda_{\widehat{A}} \lambda_{C}}\gamma^2 \rho_{\max}^2   \Big[  1+\Big[10\beta^2 + 10\gamma^2\rho_{\max}^2 \Big(\alpha^2 + \frac{2 \alpha^2}{\lambda_C } \frac{1}{\beta} \Big)\cdot  \Big(\rho_{\max} \frac{ 1+\gamma }{ \min |\lambda(C)|}\Big)^2  \Big]M  \Big]\\& + \alpha^2 M \cdot 5 \gamma^2\rho_{\max}^2  \Big\} \EE_{m,0}\|  \tilde{z}^{(m-1)}\|^2\\
	&+   \alpha \cdot  \frac{96}{\lambda_{\widehat{A}} \lambda_{C}}\gamma^2 \rho_{\max}^2   (2K_3+2K_4)  +  \alpha \beta \cdot  \frac{96}{\lambda_{\widehat{A}} \lambda_{C}}\gamma^2 \rho_{\max}^2   10K_5  \\&+   \frac{\alpha}{\beta} \big(\alpha^2 + \frac{2 \alpha^2}{\lambda_C } \frac{1}{\beta} \big)  \cdot  \frac{96}{\lambda_{\widehat{A}} \lambda_{C}}\gamma^2 \rho_{\max}^2   10\Big(\rho_{\max} \frac{ 1+\gamma }{ \min |\lambda(C)|}\Big)^2  K_1   . \numberthis \label{eq: mc-tmpp3}
	\end{align*}
	To simplify the above inequality, note that we assume that $  \frac{\lambda_{\widehat{A}}}{12}\alpha -  \frac{\alpha}{\beta} \cdot  \frac{96}{\lambda_{\widehat{A}} \lambda_{C}}\gamma^2 \rho_{\max}^2  \cdot 10 (1+\gamma)^2 \rho^2 _{\max} \cdot \Big[\Big(1 + \frac{\gamma \rho_{\max}}{ \min |\lambda(C)|}\Big)^2 \Big(\alpha^2 + \frac{2 \alpha^2}{\lambda_C } \frac{1}{\beta} \Big)\cdot  \Big(\rho_{\max} \frac{ 1+\gamma }{ \min |\lambda(C)|}\Big)^2   +  \Big( 1 + \frac{1}{ \min |\lambda(C)|} \Big)^2 \beta^2  \Big]  \geq \frac{\lambda_{\widehat{A}}}{16}\alpha$ and $\beta\leq 1$. Applying Jensen's inequality to the left-hand side of the above inequality, we obtain the following simplified inequality.
	\begin{align*}
	&  \frac{\lambda_{\widehat{A}}}{16}\alpha M \EE_{m,0} \| \tilde{ \theta}^{(m)} - \theta^\ast\|^2 \\
	\leq & \Big[ 1 +  \alpha^2M \cdot 5(1+\gamma)^2\rho^2_{\max}\Big(1 + \frac{\gamma \rho_{\max}}{ \min |\lambda(C)|}\Big)^2\\
	& + \alpha M \big[\Big(1 + \frac{\gamma \rho_{\max}}{ \min |\lambda(C)|}\Big)^2 \Big(1 + \frac{2 }{\lambda_C }  \Big)\cdot  \Big(\rho_{\max} \frac{ 1+\gamma }{ \min |\lambda(C)|}\Big)^2  \frac{\alpha^2}{\beta^2}  \\
	& +  \Big( 1 + \frac{1}{ \min |\lambda(C)|} \Big)^2 \beta  \big] \cdot  \frac{96}{\lambda_{\widehat{A}} \lambda_{C}}\gamma^2 \rho_{\max}^2 \cdot 10 (1+\gamma)^2 \rho^2 _{\max}  \Big] \EE_{m,0}\| \tilde{\theta}^{(m-1)} - \theta^\ast\|^2  \\
	&+  \Big\{ \frac{\alpha}{\beta} \cdot  \frac{96}{\lambda_{\widehat{A}} \lambda_{C}}\gamma^2 \rho_{\max}^2   \Big[  1+\Big[10\beta^2 + 10\gamma^2\rho_{\max}^2 \Big(1+ \frac{2  }{\lambda_C }  \Big)\cdot  \Big(\rho_{\max} \frac{ 1+\gamma }{ \min |\lambda(C)|}\Big)^2 \frac{\alpha^2}{\beta} \Big]M  \Big] \\&+ \alpha^2 M \cdot 5 \gamma^2\rho_{\max}^2  \Big\} \EE_{m,0}\|  \tilde{z}^{(m-1)}\|^2\\
	&+   \alpha \cdot  \Big[ \frac{96}{\lambda_{\widehat{A}} \lambda_{C}}\gamma^2 \rho_{\max}^2   (2K_3+2K_4) + 2K_2\Big]  +  \alpha \beta \cdot  \frac{96}{\lambda_{\widehat{A}} \lambda_{C}}\gamma^2 \rho_{\max}^2   10K_5  \\
	&+   \frac{\alpha^3}{\beta} \big(1 + \frac{2  }{\lambda_C }   \big)  \cdot  \frac{96}{\lambda_{\widehat{A}} \lambda_{C}}\gamma^2 \rho_{\max}^2   10\Big(\rho_{\max} \frac{ 1+\gamma }{ \min |\lambda(C)|}\Big)^2  K_1 + \alpha^2 \cdot5K_1 . 
	\end{align*}
	Define $C_1 = \Big(1 + \frac{\gamma \rho_{\max}}{ \min |\lambda(C)|}\Big)^2 \Big(1 + \frac{2 }{\lambda_C }  \Big)\cdot  \Big(\rho_{\max} \frac{ 1+\gamma }{ \min |\lambda(C)|}\Big)^2 \cdot  \frac{96}{\lambda_{\widehat{A}} \lambda_{C}}\gamma^2 \rho_{\max}^2 \cdot 10 (1+\gamma)^2 \rho^2 _{\max} $ and $C_2 = \Big( 1 + \frac{1}{ \min |\lambda(C)|} \Big)^2 \cdot  \frac{96}{\lambda_{\widehat{A}} \lambda_{C}}\gamma^2 \rho_{\max}^2 \cdot 10 (1+\gamma)^2 \rho^2 _{\max} $, and assume that
	\begin{align}\label{eq: def-D}
	D:= &\frac{16}{\lambda_{\widehat{A}}}\Big[\frac{ 1}{\alpha M} +  \alpha  \cdot 5(1+\gamma)^2\rho^2_{\max}\Big(1 + \frac{\gamma \rho_{\max}}{ \min |\lambda(C)|}\Big)^2 + \frac{\alpha^2}{\beta^2}  \cdot C_1   +   \beta  \cdot C_2   \Big]  < 1. 
	\end{align}
	Taking total expectation and dividing $ \frac{\lambda_{\widehat{A}}}{16}\alpha M $ on both sides of the previous simplified inequality,  we obtain that
	\begin{align*}
	&   \EE  \| \tilde{ \theta}^{(m)} - \theta^\ast\|^2 \\
	\leq & D \cdot \EE_{m,0}\| \tilde{\theta}^{(m-1)} - \theta^\ast\|^2  \\
	&+  \frac{16}{\lambda_{\widehat{A}}} \Big\{    \frac{96}{\lambda_{\widehat{A}} \lambda_{C}}\gamma^2 \rho_{\max}^2   \Big[  \frac{1}{\beta M}+ 10\beta + 10\gamma^2\rho_{\max}^2 \Big(1+ \frac{2  }{\lambda_C }  \Big)\cdot  \Big(\rho_{\max} \frac{ 1+\gamma }{ \min |\lambda(C)|}\Big)^2 \frac{\alpha^2}{\beta^2}    \Big] + \alpha  \cdot 5 \gamma^2\rho_{\max}^2  \Big\} \EE \|  \tilde{z}^{(m-1)}\|^2\\
	&+   \frac{1}{M} \cdot  \frac{16}{\lambda_{\widehat{A}}}  \Big[ \frac{96}{\lambda_{\widehat{A}} \lambda_{C}}\gamma^2 \rho_{\max}^2   (2K_3+2K_4) + 2K_2\Big]  +    \frac{\beta}{M} \cdot   \frac{16}{\lambda_{\widehat{A}}}  \frac{96}{\lambda_{\widehat{A}} \lambda_{C}}\gamma^2 \rho_{\max}^2   10K_5  \\
	&+   \frac{\alpha^2}{\beta M} \cdot \frac{16}{\lambda_{\widehat{A}}}  \big(1 + \frac{2  }{\lambda_C }   \big)  \cdot  \frac{96}{\lambda_{\widehat{A}} \lambda_{C}}\gamma^2 \rho_{\max}^2   10\Big(\rho_{\max} \frac{ 1+\gamma }{ \min |\lambda(C)|}\Big)^2  K_1 + \frac{\alpha}{M}  \cdot \frac{80 K_1}{\lambda_{\widehat{A}}}  . \numberthis \label{eq: mc-tmpp4}
	\end{align*}
	By Lemma \ref{lemma: conv-z}, we have that
	\begin{align*}
	\EE\| \tilde{z}^{(m)} \|^2 &\leq   \Big(\frac{1}{M\beta} \cdot \frac{12}{\lambda_C }\Big)^m \EE \|  \tilde{z}^{(0)}\|^2 \\
	&\quad  + 2 \cdot \Big[ \beta \cdot \frac{24}{\lambda_C }   H_{\text{VR}}^2 + \frac{\alpha^2}{\beta^2} \cdot \Big(1  + \frac{2 }{\lambda_C }  \Big) \cdot \frac{24}{\lambda_C }\Big(\rho_{\max} \frac{ 1+\gamma }{ \min |\lambda(C)|}\Big)^2 G_{\text{VR}}^2 + \frac{1}{  M} \frac{24}{\lambda_C } (K_3 + K_4)\Big],
	\end{align*} 
	where $K_3$ is specified in \cref{eq: def-K3-mc} of Lemma \ref{lemma: K3}, and $K_4$ is specified in   \cref{eq: def-K4-mc} of Lemma \ref{lemma: K4}. Here we assume that
	\begin{align}\label{eq: def-E}
		E:= \frac{1}{M\beta} \cdot \frac{12}{\lambda_C } < 1.
	\end{align}
	Substituting the above result of Lemma \ref{lemma: conv-z} into eq.(\ref{eq: mc-tmpp4}), we obtain that
	\begin{align*}
	&   \EE  \| \tilde{ \theta}^{(m)} - \theta^\ast\|^2 \\
	\leq & D \cdot \EE \| \tilde{\theta}^{(m-1)} - \theta^\ast\|^2  \\
	&+  \frac{16}{\lambda_{\widehat{A}}} \Big\{    \frac{96}{\lambda_{\widehat{A}} \lambda_{C}}\gamma^2 \rho_{\max}^2   \Big[  \frac{1}{\beta M}+ 10\beta + 10\gamma^2\rho_{\max}^2 \Big(1+ \frac{2  }{\lambda_C }  \Big)\cdot  \Big(\rho_{\max} \frac{ 1+\gamma }{ \min |\lambda(C)|}\Big)^2 \frac{\alpha^2}{\beta^2}    \Big] + \alpha  \cdot 5 \gamma^2\rho_{\max}^2  \Big\} \Big\{ E^{m-1} \EE \|  \tilde{z}^{(0)}\|^2 \\
	&\quad  + 2 \cdot \Big[ \beta \cdot \frac{24}{\lambda_C }   H_{\text{VR}}^2 + \frac{\alpha^2}{\beta^2} \cdot \Big(1  + \frac{2 }{\lambda_C }  \Big) \cdot \frac{24}{\lambda_C }\Big(\rho_{\max} \frac{ 1+\gamma }{ \min |\lambda(C)|}\Big)^2 G_{\text{VR}}^2 + \frac{1}{  M} \frac{24}{\lambda_C } (K_3 + K_4)\Big] \Big\}\\
	&+   \frac{1}{M} \cdot  \frac{16}{\lambda_{\widehat{A}}}  \Big[ \frac{96}{\lambda_{\widehat{A}} \lambda_{C}}\gamma^2 \rho_{\max}^2   (2K_3+2K_4) + 2K_2\Big]  +    \frac{\beta}{M} \cdot   \frac{16}{\lambda_{\widehat{A}}}  \frac{96}{\lambda_{\widehat{A}} \lambda_{C}}\gamma^2 \rho_{\max}^2   10K_5  \\
	&+   \frac{\alpha^2}{\beta M} \cdot \frac{16}{\lambda_{\widehat{A}}}  \big(1 + \frac{2  }{\lambda_C }   \big)  \cdot  \frac{96}{\lambda_{\widehat{A}} \lambda_{C}}\gamma^2 \rho_{\max}^2   10\Big(\rho_{\max} \frac{ 1+\gamma }{ \min |\lambda(C)|}\Big)^2  K_1 + \frac{\alpha}{M}  \cdot \frac{80 K_1}{\lambda_{\widehat{A}}}.  
	\end{align*}
	Furthermore, we assume that $ \frac{16}{\lambda_{\widehat{A}}} \Big\{    \frac{96}{\lambda_{\widehat{A}} \lambda_{C}}\gamma^2 \rho_{\max}^2   \Big[  \frac{1}{\beta M}+ 10\beta + 10\gamma^2\rho_{\max}^2 \Big(1+ \frac{2  }{\lambda_C }  \Big)\cdot  \Big(\rho_{\max} \frac{ 1+\gamma }{ \min |\lambda(C)|}\Big)^2 \frac{\alpha^2}{\beta^2}    \Big] + \alpha  \cdot 5 \gamma^2\rho_{\max}^2  \Big\} \leq 1$. Then, telescope the above inequality yields that
	\begin{align*}
	&   \EE  \| \tilde{ \theta}^{(m)} - \theta^\ast\|^2 \\
	\leq & D^m \cdot \EE \| \tilde{\theta}^{(0)} - \theta^\ast\|^2  + \frac{D^m - E^m}{D - E} \EE \|  \tilde{z}^{(0)}\|^2\\
	&+  \frac{32}{\lambda_{\widehat{A}}} \Big\{    \frac{96}{\lambda_{\widehat{A}} \lambda_{C}}\gamma^2 \rho_{\max}^2   \Big[  \frac{1}{\beta M}+ 10\beta + 10\gamma^2\rho_{\max}^2 \Big(1+ \frac{2  }{\lambda_C }  \Big)\cdot  \Big(\rho_{\max} \frac{ 1+\gamma }{ \min |\lambda(C)|}\Big)^2 \frac{\alpha^2}{\beta^2}    \Big] \\&+ \alpha  \cdot 5 \gamma^2\rho_{\max}^2  \Big\} \Big\{    \beta \cdot \frac{24}{\lambda_C }   H_{\text{VR}}^2 + \frac{\alpha^2}{\beta^2} \cdot \Big(1  + \frac{2 }{\lambda_C }  \Big) \cdot \frac{24}{\lambda_C }\Big(\rho_{\max} \frac{ 1+\gamma }{ \min |\lambda(C)|}\Big)^2 G_{\text{VR}}^2 + \frac{1}{  M} \frac{24}{\lambda_C } (K_3 + K_4) \Big\}\\
	&+   \frac{1}{M} \cdot  \frac{16}{\lambda_{\widehat{A}}}  \Big[ \frac{96}{\lambda_{\widehat{A}} \lambda_{C}}\gamma^2 \rho_{\max}^2   (2K_3+2K_4) + 2K_2\Big]  +    \frac{\beta}{M} \cdot   \frac{16}{\lambda_{\widehat{A}}}  \frac{96}{\lambda_{\widehat{A}} \lambda_{C}}\gamma^2 \rho_{\max}^2   10K_5  \\
	&+   \frac{\alpha^2}{\beta M} \cdot \frac{16}{\lambda_{\widehat{A}}}  \big(1 + \frac{2  }{\lambda_C }   \big)  \cdot  \frac{96}{\lambda_{\widehat{A}} \lambda_{C}}\gamma^2 \rho_{\max}^2   10\Big(\rho_{\max} \frac{ 1+\gamma }{ \min |\lambda(C)|}\Big)^2  K_1 + \frac{\alpha}{M}  \cdot \frac{80 K_1}{\lambda_{\widehat{A}}}.  
	\end{align*}
To further simplify, note that the first two terms are in the order of $D^m$. Also, the third term is a product of two  curly brackets, and it is easy to check that this term is dominated by $\calO(\beta^2)$ under the relation $\beta = \calO(\alpha^{2/3})$. { To further elaborate this, note that the first bracket of this product is in the order of $\calO(\frac{1}{\beta M})+\calO(\beta)$ and the second term of this product is in the order of $\calO(\beta)$ (by $E<1$, we have $\frac{1}{M} \leq \calO(\beta)$). Therefore, asymptotically the product is in the order of $\big(\calO(\frac{1}{\beta M})+\calO(\beta)\big) \times  \calO(\beta)  = \calO(\frac{1}{M}) + \calO(\beta^2)$.} Moreover, under the setting $\beta = \calO(\alpha^{2/3})$, $D > E$ for sufficiently small $\alpha$ and $\beta$. Therefore, we have the following asymptotic bound
	\begin{align*}
		 \EE  \| \tilde{ \theta}^{(m)} - \theta^\ast\|^2 \ = \calO(D^m) + \calO(\beta^2) + \calO(\frac{1}{M}).
	\end{align*}
	 
	Now we compute its complexity. For sufficiently small $\beta$ and sufficiently large $M$, there always exists constant $I_1, I_2, I_3$ such that 
	\[ \EE  \| \tilde{\theta}^{(m)} - \theta^\ast\|^2 \leq D^m I_1 +  \beta^2 I_2 +  \frac{1}{M} I_3.\]
	Now we require
	\begin{enumerate}[label=(\roman*)]
		\item $\beta^2 I_2 \leq \epsilon/3 \Rightarrow \beta \leq I_2^{1/2}\epsilon^{1/2} = \calO(\epsilon^{1/2})$.
		\item $\frac{1}{M} I_3 \leq \epsilon/3 \Rightarrow M\geq \calO({\epsilon}^{-1}).$
		\item $D^m I_1 \leq \epsilon/3 \Rightarrow m \log D \leq \log \frac{\epsilon}{3 I_1}\Rightarrow m \ge \calO(\log\epsilon^{-1})$.
	\end{enumerate}
	Note that in (iii), we have used the condition that $D<1$ and hence $\log D$ is a negative constant.  
	Since $\alpha = \calO(\beta^{2/3})$, the condition that $D \leq 1$ requires that $M \geq \calO(\beta^{-3/2})$, which combines with (ii) requires that
	\[M \geq \max \{ \calO({\epsilon}^{-1}), \calO(\beta^{-3/2}) \}.\]
	Taking into account the constraint on $\beta$ in (i), we just require that $M\geq \calO(\epsilon^{-1})$,
	which leads to the overall complexity result
	\[mM \geq \calO(\epsilon^{-1}\log {\epsilon}^{-1}).\]  
   
\section{Proof of \Cref{cor: markov}}

\begin{lemma}[Refined Bounds of $\EE \| \tilde{z} \|^2$]\label{lemma: refined z}
Under the same assumptions as those of \Cref{thm: markov}, choose the learning rate $\alpha,\beta$ and the batch size $M$  such that all requirements of \Cref{thm: markov} are satisfied.
	Then, the following refined bound holds. 
	\begin{align*}
	&    \EE \| \tilde{z}^{(m)}\|^2 \\ 
	\leq &  F^m \cdot \EE \|  \tilde{z}^{(0)}\|^2 + \frac{D^{m-1} - F^{m-1}}{D - F} \cdot \EE \| \tilde{\theta}^{(0)} - \theta^\ast\|^2  + \frac{\frac{D^{m-1} - F^{m-1}}{D - F} - \frac{E^{m-1} - F^{m-1}}{E - F}}{D - E} \EE \|  \tilde{z}^{(0)}\|^2\\
	& + \frac{1}{1- F}\frac{24}{\lambda_{C}}\Big\{   10 (1+\gamma)^2 \rho^2 _{\max} \cdot \Big[\Big(1 + \frac{\gamma \rho_{\max}}{ \min |\lambda(C)|}\Big)^2 \Big( 1 + \frac{2  }{\lambda_C } \Big)\cdot  \Big(\rho_{\max} \frac{ 1+\gamma }{ \min |\lambda(C)|}\Big)^2  \frac{\alpha^2}{\beta^2}   \\
	&+  \Big( 1 + \frac{1}{ \min |\lambda(C)|} \Big)^2 \beta  \Big]   +  120 (1+\gamma)^2 \rho^2 _{\max} \frac{1}{\lambda_{\widehat{A}}} \cdot \Big[\Big(1 + \frac{\gamma \rho_{\max}}{ \min |\lambda(C)|}\Big)^2 \Big(1 + \frac{2  }{\lambda_C }   \Big)\cdot  \Big(\rho_{\max} \frac{ 1+\gamma }{ \min |\lambda(C)|}\Big)^2  \frac{\alpha^2}{\beta^2}   \\
	&+  \Big( 1 + \frac{1}{ \min |\lambda(C)|} \Big)^2 \beta   \Big]  \cdot \Big[ \frac{1}{\alpha M}  +  \alpha  \cdot 5(1+\gamma)^2\rho^2_{\max}\Big(1 + \frac{\gamma \rho_{\max}}{ \min |\lambda(C)|}\Big)^2 \Big]  \Big\}\\
	&\times \Big\{  \frac{32}{\lambda_{\widehat{A}}} \Big\{    \frac{96}{\lambda_{\widehat{A}} \lambda_{C}}\gamma^2 \rho_{\max}^2   \Big[  \frac{1}{\beta M}+ 10\beta + 10\gamma^2\rho_{\max}^2 \Big(1+ \frac{2  }{\lambda_C }  \Big)\cdot  \Big(\rho_{\max} \frac{ 1+\gamma }{ \min |\lambda(C)|}\Big)^2 \frac{\alpha^2}{\beta^2}    \Big] + \alpha  \cdot 5 \gamma^2\rho_{\max}^2  \Big\} \\
	&\times \Big\{    \beta \cdot \frac{24}{\lambda_C }   H_{\text{VR}}^2 + \frac{\alpha^2}{\beta^2} \cdot \Big(1  + \frac{2 }{\lambda_C }  \Big) \cdot \frac{24}{\lambda_C }\Big(\rho_{\max} \frac{ 1+\gamma }{ \min |\lambda(C)|}\Big)^2 G_{\text{VR}}^2 + \frac{1}{  M} \frac{24}{\lambda_C } (K_3 + K_4) \Big\}\\
	&+   \frac{1}{M} \cdot  \frac{16}{\lambda_{\widehat{A}}}  \Big[ \frac{96}{\lambda_{\widehat{A}} \lambda_{C}}\gamma^2 \rho_{\max}^2   (2K_3+2K_4) + 2K_2\Big]  +    \frac{\beta}{M} \cdot   \frac{16}{\lambda_{\widehat{A}}}  \frac{96}{\lambda_{\widehat{A}} \lambda_{C}}\gamma^2 \rho_{\max}^2   10K_5  \\
	&+   \frac{\alpha^2}{\beta M} \cdot \frac{16}{\lambda_{\widehat{A}}}  \big(1 + \frac{2  }{\lambda_C }   \big)  \cdot  \frac{96}{\lambda_{\widehat{A}} \lambda_{C}}\gamma^2 \rho_{\max}^2   10\Big(\rho_{\max} \frac{ 1+\gamma }{ \min |\lambda(C)|}\Big)^2  K_1 + \frac{\alpha}{M}  \cdot \frac{80 K_1}{\lambda_{\widehat{A}}} \Big\}\\ 
	&+  \frac{1}{1-F} \Big\{ \frac{1}{M}   \frac{48K_3+48K_4}{\lambda_{C}}  + \frac{\beta}{M} \frac{240K_5}{\lambda_{C}}   +  \frac{\alpha^2}{\beta^2} \frac{1}{M}\frac{240}{\lambda_{C}}\Big(\rho_{\max} \frac{ 1+\gamma }{ \min |\lambda(C)|}\Big)^2  K_1 \Big(1 + \frac{2  }{\lambda_C }  \Big) \\
	&+   \frac{1}{M}    (1+\gamma)^2 \rho^2 _{\max} \frac{2880}{\lambda_{\widehat{A}} \lambda_{C}} \cdot \Big[\Big(1 + \frac{\gamma \rho_{\max}}{ \min |\lambda(C)|}\Big)^2 \Big(1 + \frac{2  }{\lambda_C }\Big)\cdot  \Big(\rho_{\max} \frac{ 1+\gamma }{ \min |\lambda(C)|}\Big)^2   \frac{\alpha^2}{\beta^2} \\&+  \Big( 1 + \frac{1}{ \min |\lambda(C)|} \Big)^2 \beta   \Big]  \cdot \Big\{    2K_2 + \alpha  \cdot 5K_1   \Big\}  \Big\}.
	\end{align*}
\end{lemma}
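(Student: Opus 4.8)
The plan is to reproduce the recursive-refinement argument of Lemma \ref{lemma: iid refined z} from the i.i.d.\ analysis, replacing each supporting inequality with its Markovian counterpart. The starting point is the preliminary tracking-error bound of Lemma \ref{lemma: pre-bound z}, which controls $\frac{\lambda_C}{16}\beta\sum_{t=0}^{M-1}\EE_{m,0}\|z_t^{(m)}\|^2$ in terms of the previous tracking error $\EE_{m,0}\|\tilde z^{(m-1)}\|^2$, the inner-loop convergence sum $\sum_{t=0}^{M-1}\EE_{m,0}\|\theta_t^{(m)}-\theta^\ast\|^2$, the previous convergence error $\EE_{m,0}\|\tilde\theta^{(m-1)}-\theta^\ast\|^2$, and additive forcing terms involving the Markovian constants $K_1,\dots,K_5$ (which already absorb the geometric-ergodicity mixing factors $\kappa\rho/(1-\rho)$).

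First I would eliminate $\sum_{t=0}^{M-1}\EE_{m,0}\|\theta_t^{(m)}-\theta^\ast\|^2$ by inserting the preliminary convergence bound of Lemma \ref{lemma: pre-bound theta}. Because that bound itself contains $\sum_{t=0}^{M-1}\EE_{m,0}\|z_t^{(m)}\|^2$ on its right-hand side, the substitution reintroduces the tracking-error sum; I would therefore gather all copies of $\sum_{t=0}^{M-1}\EE_{m,0}\|z_t^{(m)}\|^2$ on the left and invoke the learning-rate conditions (the Markovian analogues of \cref{eq: lr markov 3,eq: lr markov 5}) to keep the net coefficient $\frac{\lambda_C}{16}\beta-(\text{correction})$ bounded below by a positive multiple of $\beta$. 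Dividing through and applying Jensen's inequality to pass from $\frac{1}{M}\sum_t\|z_t^{(m)}\|^2$ to $\|\tilde z^{(m)}\|^2$ then yields a one-batch recursion
\begin{align*}
\EE\|\tilde z^{(m)}\|^2 \leq F\cdot\EE\|\tilde z^{(m-1)}\|^2 + c_\theta\cdot\EE\|\tilde\theta^{(m-1)}-\theta^\ast\|^2 + r_M,
\end{align*}
with $F$ the contraction factor of \cref{eq: def-F}, a coefficient $c_\theta=\calO(\alpha^2/\beta^2+\beta)$, and $r_M$ collecting the $\calO(1/M)$, $\calO(\beta/M)$ and $\calO(\alpha^2/(\beta^2 M))$ forcing terms.

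Next I would substitute the preliminary convergence bound for $\EE\|\tilde\theta^{(m-1)}-\theta^\ast\|^2$ already derived in the proof of Theorem \ref{thm: markov} (its telescoped form governed by the rates $D$ and $E$), which turns the recursion into a self-contained inequality whose forcing terms decay like $D^{m-1}$ and $E^{m-1}$. Telescoping over $m$ under the conditions $\max\{D,E,F\}<1$ of \cref{eq: lr markov 6} then produces exactly the three geometric-sum structures in the statement: the leading $F^m$ term, the single sum $\frac{D^{m-1}-F^{m-1}}{D-F}$ arising from the $c_\theta\cdot D^{m-1}$ forcing, and the nested double sum $\frac{\frac{D^{m-1}-F^{m-1}}{D-F}-\frac{E^{m-1}-F^{m-1}}{E-F}}{D-E}$ arising from propagating the $E^{m-1}$ contribution through the $D,E$ series, while the constant forcing accumulates into the $\frac{1}{1-F}$ blocks via the geometric series.

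The main obstacle I expect is keeping the self-referential coupling between $\sum_t\|z_t^{(m)}\|^2$ and $\sum_t\|\theta_t^{(m)}-\theta^\ast\|^2$ under control so that the coefficient multiplying the tracking-error sum stays strictly positive after substitution. In the Markovian setting the one-step variance estimates (the analogues of Lemmas \ref{lemma: iid GVR} and \ref{lemma: iid HVR}) carry additional mixing constants, so the correction subtracted from $\frac{\lambda_C}{16}\beta$ is larger than in the i.i.d.\ case and the smallness conditions on $\alpha,\beta$ must be checked with care; verifying that the choices $\alpha=\calO(\epsilon^{3/4})$, $\beta=\calO(\epsilon^{1/2})$, $M=\calO(\epsilon^{-1})$ indeed satisfy them is essential. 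A secondary, more clerical difficulty is propagating the explicit constants intact through the telescoping so that the final $\frac{1}{1-F}$ and $\frac{1}{1-D}$ normalizations and the $K_i$-dependent error blocks match the stated bound.
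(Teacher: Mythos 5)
Your proposal matches the paper's own proof essentially step for step: it starts from the preliminary tracking-error bound (Lemma \ref{lemma: pre-bound z}), substitutes the preliminary convergence bound (Lemma \ref{lemma: pre-bound theta}), collects the reintroduced $\sum_{t}\EE_{m,0}\|z_t^{(m)}\|^2$ terms on the left under the stepsize conditions so the coefficient stays at least $\tfrac{\lambda_C}{24}\beta$, divides and applies Jensen to obtain the one-batch recursion with contraction factor $F$, then inserts the telescoped $D,E$-bound from Theorem \ref{thm: markov} and telescopes under $\max\{D,E,F\}<1$ to produce the $F^m$, $\tfrac{D^{m-1}-F^{m-1}}{D-F}$, nested double-sum, and $\tfrac{1}{1-F}$ blocks. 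The approach and the anticipated difficulties (positivity of the net tracking-error coefficient, constant bookkeeping) are exactly those handled in the paper, so the proposal is correct and not a distinct route.
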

\begin{proof}
	Based on the preliminary bound of $ \sum_{t=0}^{M-1}\EE_{m,0}\|z_t^{(m)}\|^2$ (Lemma \ref{lemma: pre-bound z}), we have 
	\begin{align*}
&   \frac{\lambda_{C}}{16}\beta  \sum_{t=0}^{M-1}\EE_{m,0}\|z_t^{(m)}\|^2 \\
\leq & \Big[  1+\Big[10\beta^2 + 10\gamma^2\rho_{\max}^2 \Big(\alpha^2 + \frac{2 \alpha^2}{\lambda_C } \frac{1}{\beta} \Big)\cdot  \Big(\rho_{\max} \frac{ 1+\gamma }{ \min |\lambda(C)|}\Big)^2  \Big]M  \Big]\EE_{m,0}\|  \tilde{z}^{(m-1)}\|^2\\
&+ 10 (1+\gamma)^2 \rho^2 _{\max} \cdot \Big[\Big(1 + \frac{\gamma \rho_{\max}}{ \min |\lambda(C)|}\Big)^2 \Big(\alpha^2 + \frac{2 \alpha^2}{\lambda_C } \frac{1}{\beta} \Big)\cdot  \Big(\rho_{\max} \frac{ 1+\gamma }{ \min |\lambda(C)|}\Big)^2   \\
&+  \Big( 1 + \frac{1}{ \min |\lambda(C)|} \Big)^2 \beta^2  \Big] \Big( \sum_{t=0}^{M-1}\EE_{m,0} \| \theta_{t}^{(m)} - \theta^\ast\|^2  + M \EE_{m,0}\| \tilde{\theta}^{(m-1)} - \theta^\ast\|^2 \Big) \\ 
&+    (2K_3+2K_4)\beta + 10K_5 \beta^2 +  10\Big(\rho_{\max} \frac{ 1+\gamma }{ \min |\lambda(C)|}\Big)^2  K_1 \Big(\alpha^2 + \frac{2 \alpha^2}{\lambda_C } \frac{1}{\beta} \Big), \numberthis \label{eq: mc-tmppp-00}
\end{align*}
where $K_1$ is specified in \cref{eq: def-K1-mc} of Lemma \ref{lemma: K1}, $K_3$ is specified in  \cref{eq: def-K3-mc} of Lemma \ref{lemma: K3}, $K_4$ is specified in  \cref{eq: def-K4-mc} of Lemma \ref{lemma: K4}, and $K_5$ is specified in  \cref{eq: def-K5-mc} of Lemma \ref{lemma: K5}. Note that by Lemma \ref{lemma: pre-bound theta}, we have the preliminary bound of $ \sum_{t=0}^{M-1}\EE_{m,0} \| \theta_{t}^{(m)} - \theta^\ast\|^2$:
\begin{align*}
&  \frac{\lambda_{\widehat{A}}}{12}\alpha   \sum_{t=0}^{M-1}\EE_{m,0} \| \theta_{t}^{(m)} - \theta^\ast\|^2 \\
\leq & \Big[ 1 +  \alpha^2M \cdot 5(1+\gamma)^2\rho^2_{\max}\Big(1 + \frac{\gamma \rho_{\max}}{ \min |\lambda(C)|}\Big)^2 \Big] \EE_{m,0}\| \tilde{\theta}^{(m-1)} - \theta^\ast\|^2  + \alpha \cdot 2K_2 + \alpha^2 \cdot5K_1 \\
&+  \alpha \cdot  \frac{6}{\lambda_{\widehat{A}}}\gamma^2 \rho_{\max}^2   \sum_{t=0}^{M-1}\EE_{m,0}\| z_{t}^{(m)}\|^2+ \alpha^2 M \cdot 5 \gamma^2\rho_{\max}^2 \EE_{m,0}\|\tilde{z}^{(m-1)}\|^2, \numberthis \label{eq: mc-tmppp-01}
\end{align*}
where $K_1$ is specified in \cref{eq: def-K1-mc} of Lemma \ref{lemma: K1}, and $K_2$ is specified in  \cref{eq: def-K2-mc} of Lemma \ref{lemma: K2}. Now we combine eq.(\ref{eq: mc-tmppp-00}) and eq.(\ref{eq: mc-tmppp-01}) to obtain that
\begin{align*}
&   \frac{\lambda_{C}}{16}\beta  \sum_{t=0}^{M-1}\EE_{m,0}\|z_t^{(m)}\|^2 \\
\leq & \Big[  1+\Big[10\beta^2 + 10\gamma^2\rho_{\max}^2 \Big(\alpha^2 + \frac{2 \alpha^2}{\lambda_C } \frac{1}{\beta} \Big)\cdot  \Big(\rho_{\max} \frac{ 1+\gamma }{ \min |\lambda(C)|}\Big)^2  \Big]M  \Big]\EE_{m,0}\|  \tilde{z}^{(m-1)}\|^2\\
&+ \frac{1}{\alpha} \cdot 120 (1+\gamma)^2 \rho^2 _{\max} \frac{1}{\lambda_{\widehat{A}}} \cdot \Big[\Big(1 + \frac{\gamma \rho_{\max}}{ \min |\lambda(C)|}\Big)^2 \Big(\alpha^2 + \frac{2 \alpha^2}{\lambda_C } \frac{1}{\beta} \Big)\cdot  \Big(\rho_{\max} \frac{ 1+\gamma }{ \min |\lambda(C)|}\Big)^2   \\
&+  \Big( 1 + \frac{1}{ \min |\lambda(C)|} \Big)^2 \beta^2  \Big]  \cdot \Big\{ \Big[ 1 +  \alpha^2M \cdot 5(1+\gamma)^2\rho^2_{\max}\Big(1 + \frac{\gamma \rho_{\max}}{ \min |\lambda(C)|}\Big)^2 \Big] \EE_{m,0}\| \tilde{\theta}^{(m-1)} - \theta^\ast\|^2 \\& + \alpha \cdot 2K_2 + \alpha^2 \cdot5K_1  +  \alpha \cdot  \frac{6}{\lambda_{\widehat{A}}}\gamma^2 \rho_{\max}^2   \sum_{t=0}^{M-1}\EE_{m,0}\| z_{t}^{(m)}\|^2+ \alpha^2 M \cdot 5 \gamma^2\rho_{\max}^2 \EE_{m,0}\|\tilde{z}^{(m-1)}\|^2 \Big\}   \\
& +   10 (1+\gamma)^2 \rho^2 _{\max} \cdot \Big[\Big(1 + \frac{\gamma \rho_{\max}}{ \min |\lambda(C)|}\Big)^2 \Big(\alpha^2 + \frac{2 \alpha^2}{\lambda_C } \frac{1}{\beta} \Big)\cdot  \Big(\rho_{\max} \frac{ 1+\gamma }{ \min |\lambda(C)|}\Big)^2   \\
&+  \Big( 1 + \frac{1}{ \min |\lambda(C)|} \Big)^2 \beta^2  \Big] \cdot M \EE_{m,0}\| \tilde{\theta}^{(m-1)} - \theta^\ast\|^2   \\ 
&+    (2K_3+2K_4)\beta + 10K_5 \beta^2 +  10\Big(\rho_{\max} \frac{ 1+\gamma }{ \min |\lambda(C)|}\Big)^2  K_1 \Big(\alpha^2 + \frac{2 \alpha^2}{\lambda_C } \frac{1}{\beta} \Big).
\end{align*}
Next, we simplify the above inequality. We first expand the curly brackets and simplify, we obtain that
\begin{align*}
&  \Big\{ \frac{\lambda_{C}}{16}\beta  -  120 (1+\gamma)^2 \rho^2 _{\max} \frac{1}{\lambda_{\widehat{A}}} \cdot \Big[\Big(1 + \frac{\gamma \rho_{\max}}{ \min |\lambda(C)|}\Big)^2 \Big(\alpha^2 + \frac{2 \alpha^2}{\lambda_C } \frac{1}{\beta} \Big)\cdot  \Big(\rho_{\max} \frac{ 1+\gamma }{ \min |\lambda(C)|}\Big)^2   \\
&+  \Big( 1 + \frac{1}{ \min |\lambda(C)|} \Big)^2 \beta^2  \Big]  \cdot  \frac{6}{\lambda_{\widehat{A}}}\gamma^2 \rho_{\max}^2  \Big\}  \sum_{t=0}^{M-1}\EE_{m,0}\|z_t^{(m)}\|^2 \\
\leq & \Big[  1+\Big[10\beta^2 + 10\gamma^2\rho_{\max}^2 \Big(\alpha^2 + \frac{2 \alpha^2}{\lambda_C } \frac{1}{\beta} \Big)\cdot  \Big(\rho_{\max} \frac{ 1+\gamma }{ \min |\lambda(C)|}\Big)^2  \Big]M  \\&+ 
   120 (1+\gamma)^2 \rho^2 _{\max} \frac{1}{\lambda_{\widehat{A}}} \cdot \Big[\Big(1 + \frac{\gamma \rho_{\max}}{ \min |\lambda(C)|}\Big)^2 \Big(\alpha^2 + \frac{2 \alpha^2}{\lambda_C } \frac{1}{\beta} \Big)\cdot  \Big(\rho_{\max} \frac{ 1+\gamma }{ \min |\lambda(C)|}\Big)^2   \\
&+  \Big( 1 + \frac{1}{ \min |\lambda(C)|} \Big)^2 \beta^2  \Big]  \cdot \alpha  M \cdot 5 \gamma^2\rho_{\max}^2 
 \Big]\EE_{m,0}\|  \tilde{z}^{(m-1)}\|^2\\
&+   120 (1+\gamma)^2 \rho^2 _{\max} \frac{1}{\lambda_{\widehat{A}}} \cdot \Big[\Big(1 + \frac{\gamma \rho_{\max}}{ \min |\lambda(C)|}\Big)^2 \Big(\alpha^2 + \frac{2 \alpha^2}{\lambda_C } \frac{1}{\beta} \Big)\cdot  \Big(\rho_{\max} \frac{ 1+\gamma }{ \min |\lambda(C)|}\Big)^2  +  \Big( 1 + \frac{1}{ \min |\lambda(C)|} \Big)^2 \beta^2  \Big]  \\&\times \Big[    2K_2 + \alpha  \cdot5K_1   \Big]   \\
& + \Big\{   10 (1+\gamma)^2 \rho^2 _{\max} \cdot \Big[\Big(1 + \frac{\gamma \rho_{\max}}{ \min |\lambda(C)|}\Big)^2 \Big(\alpha^2 + \frac{2 \alpha^2}{\lambda_C } \frac{1}{\beta} \Big)\cdot  \Big(\rho_{\max} \frac{ 1+\gamma }{ \min |\lambda(C)|}\Big)^2   \\
&+  \Big( 1 + \frac{1}{ \min |\lambda(C)|} \Big)^2 \beta^2  \Big] \cdot M  + \frac{1}{\alpha} \cdot 120 (1+\gamma)^2 \rho^2 _{\max} \frac{1}{\lambda_{\widehat{A}}} \cdot \Big[\Big(1 + \frac{\gamma \rho_{\max}}{ \min |\lambda(C)|}\Big)^2 \Big(\alpha^2 + \frac{2 \alpha^2}{\lambda_C } \frac{1}{\beta} \Big)\cdot  \Big(\rho_{\max} \frac{ 1+\gamma }{ \min |\lambda(C)|}\Big)^2   \\
&+  \Big( 1 + \frac{1}{ \min |\lambda(C)|} \Big)^2 \beta^2  \Big]  \cdot \Big[ 1 +  \alpha^2M \cdot 5(1+\gamma)^2\rho^2_{\max}\Big(1 + \frac{\gamma \rho_{\max}}{ \min |\lambda(C)|}\Big)^2 \Big]  \Big\}\EE_{m,0}\| \tilde{\theta}^{(m-1)} - \theta^\ast\|^2   \\ 
&+    (2K_3+2K_4)\beta + 10K_5 \beta^2 +  10\Big(\rho_{\max} \frac{ 1+\gamma }{ \min |\lambda(C)|}\Big)^2  K_1 \Big(\alpha^2 + \frac{2 \alpha^2}{\lambda_C } \frac{1}{\beta} \Big) 
\end{align*}
Assume $ \frac{\lambda_{C}}{16}\beta  -  120 (1+\gamma)^2 \rho^2 _{\max} \frac{1}{\lambda_{\widehat{A}}} \cdot \Big[\Big(1 + \frac{\gamma \rho_{\max}}{ \min |\lambda(C)|}\Big)^2 \Big(\alpha^2 + \frac{2 \alpha^2}{\lambda_C } \frac{1}{\beta} \Big)\cdot  \Big(\rho_{\max} \frac{ 1+\gamma }{ \min |\lambda(C)|}\Big)^2   +  \Big( 1 + \frac{1}{ \min |\lambda(C)|} \Big)^2 \beta^2  \Big]  \cdot  \frac{6}{\lambda_{\widehat{A}}}\gamma^2 \rho_{\max}^2 \geq  \frac{\lambda_{C}}{24}\beta$ and define 
\begin{align}\label{eq: def-F}
    F:= &\frac{24}{\lambda_{C}} \cdot  \Big[  \frac{1}{\beta M}+ 10\beta  + 10\gamma^2\rho_{\max}^2 \big(1 + \frac{1}{\lambda_C }  \big)\cdot  \Big(\rho_{\max} \frac{ 1+\gamma }{ \min |\lambda(C)|}\Big)^2    \frac{\alpha^2}{\beta^2}    \nonumber \\&+  
\alpha \cdot 120 (1+\gamma)^2 \rho^2 _{\max} \frac{1}{\lambda_{\widehat{A}}} \cdot \Big[\Big(1 + \frac{\gamma \rho_{\max}}{ \min |\lambda(C)|}\Big)^2 \Big(1+ \frac{2  }{\lambda_C }   \Big)\cdot  \Big(\rho_{\max} \frac{ 1+\gamma }{ \min |\lambda(C)|}\Big)^2   \frac{\alpha^2}{\beta^2}     \nonumber \\&+
+  \Big( 1 + \frac{1}{ \min |\lambda(C)|} \Big)^2 \beta   \Big]  \cdot  5 \gamma^2\rho_{\max}^2 
\Big].
\end{align}
Also, assume that $F<1$. Applying Jensen's inequality on the left-hand side of the above inequality and dividing $\frac{\lambda_{C}}{24}\beta M$ on both sides, we obtain that
\begin{align*}
&    \EE_{m,0}\| \tilde{z}^{(m)}\|^2 \\
\leq &  F \cdot \EE_{m,0}\|  \tilde{z}^{(m-1)}\|^2\\
&+     (1+\gamma)^2 \rho^2 _{\max} \frac{2880}{\lambda_{\widehat{A}} \lambda_{C}} \cdot \Big[\Big(1 + \frac{\gamma \rho_{\max}}{ \min |\lambda(C)|}\Big)^2 \Big(1 + \frac{2  }{\lambda_C }\Big)\cdot  \Big(\rho_{\max} \frac{ 1+\gamma }{ \min |\lambda(C)|}\Big)^2   \frac{\alpha^2}{\beta^2} +  \Big( 1 + \frac{1}{ \min |\lambda(C)|} \Big)^2 \beta   \Big] \\& \times \Big[    2K_2 + \alpha  \cdot 5K_1   \Big] \frac{1}{M}   \\
& + \frac{24}{\lambda_{C}}\Big\{   10 (1+\gamma)^2 \rho^2 _{\max} \cdot \Big[\Big(1 + \frac{\gamma \rho_{\max}}{ \min |\lambda(C)|}\Big)^2 \Big( 1 + \frac{2  }{\lambda_C } \Big)\cdot  \Big(\rho_{\max} \frac{ 1+\gamma }{ \min |\lambda(C)|}\Big)^2  \frac{\alpha^2}{\beta^2}   \\
&+  \Big( 1 + \frac{1}{ \min |\lambda(C)|} \Big)^2 \beta  \Big]   +  120 (1+\gamma)^2 \rho^2 _{\max} \frac{1}{\lambda_{\widehat{A}}} \cdot \Big[\Big(1 + \frac{\gamma \rho_{\max}}{ \min |\lambda(C)|}\Big)^2 \Big(1 + \frac{2  }{\lambda_C }   \Big)\cdot  \Big(\rho_{\max} \frac{ 1+\gamma }{ \min |\lambda(C)|}\Big)^2  \frac{\alpha^2}{\beta^2}   \\
&+  \Big( 1 + \frac{1}{ \min |\lambda(C)|} \Big)^2 \beta   \Big]  \cdot \Big[ \frac{1}{\alpha M}  +  \alpha  \cdot 5(1+\gamma)^2\rho^2_{\max}\Big(1 + \frac{\gamma \rho_{\max}}{ \min |\lambda(C)|}\Big)^2 \Big]  \Big\}\EE_{m,0}\| \tilde{\theta}^{(m-1)} - \theta^\ast\|^2   \\ 
&+  \frac{1}{M}   \frac{48K_3+48K_4}{\lambda_{C}}  + \frac{\beta}{M} \frac{240K_5}{\lambda_{C}}   +  \frac{\alpha^2}{\beta^2} \frac{1}{M}\frac{240}{\lambda_{C}}\Big(\rho_{\max} \frac{ 1+\gamma }{ \min |\lambda(C)|}\Big)^2  K_1 \Big(1 + \frac{2  }{\lambda_C }  \Big),   \numberthis \label{eq: mc-tmppp-02}
\end{align*}
where we also use the fact that $\beta \leq 1$. Recall that we have obtain the final bound of $\EE \| \tilde{\theta}^{(m-1)} - \theta^\ast \|^2$ in Theorem \ref{thm: markov} as follows:
\begin{align*}
&   \EE  \| \tilde{ \theta}^{(m)} - \theta^\ast\|^2 \\
\leq & D^m \cdot \EE \| \tilde{\theta}^{(0)} - \theta^\ast\|^2  + \frac{D^m - E^m}{D - E} \EE \|  \tilde{z}^{(0)}\|^2\\
&+  \frac{32}{\lambda_{\widehat{A}}} \Big\{    \frac{96}{\lambda_{\widehat{A}} \lambda_{C}}\gamma^2 \rho_{\max}^2   \Big[  \frac{1}{\beta M}+ 10\beta + 10\gamma^2\rho_{\max}^2 \Big(1+ \frac{2  }{\lambda_C }  \Big)\cdot  \Big(\rho_{\max} \frac{ 1+\gamma }{ \min |\lambda(C)|}\Big)^2 \frac{\alpha^2}{\beta^2}    \Big] + \alpha  \cdot 5 \gamma^2\rho_{\max}^2  \Big\}\\&\times \Big\{    \beta \cdot \frac{24}{\lambda_C }   H_{\text{VR}}^2 + \frac{\alpha^2}{\beta^2} \cdot \Big(1  + \frac{2 }{\lambda_C }  \Big) \cdot \frac{24}{\lambda_C }\Big(\rho_{\max} \frac{ 1+\gamma }{ \min |\lambda(C)|}\Big)^2 G_{\text{VR}}^2 + \frac{1}{  M} \frac{24}{\lambda_C } (K_3 + K_4) \Big\}\\
&+   \frac{1}{M} \cdot  \frac{16}{\lambda_{\widehat{A}}}  \Big[ \frac{96}{\lambda_{\widehat{A}} \lambda_{C}}\gamma^2 \rho_{\max}^2   (2K_3+2K_4) + 2K_2\Big]  +    \frac{\beta}{M} \cdot   \frac{16}{\lambda_{\widehat{A}}}  \frac{96}{\lambda_{\widehat{A}} \lambda_{C}}\gamma^2 \rho_{\max}^2   10K_5  \\
&+   \frac{\alpha^2}{\beta M} \cdot \frac{16}{\lambda_{\widehat{A}}}  \big(1 + \frac{2  }{\lambda_C }   \big)  \cdot  \frac{96}{\lambda_{\widehat{A}} \lambda_{C}}\gamma^2 \rho_{\max}^2   10\Big(\rho_{\max} \frac{ 1+\gamma }{ \min |\lambda(C)|}\Big)^2  K_1 + \frac{\alpha}{M}  \cdot \frac{80 K_1}{\lambda_{\widehat{A}}}.   \numberthis \label{eq: mc-tmppp-03}
\end{align*}
Taking total expectation on both sides of eq.(\ref{eq: mc-tmppp-02}) and applying eq.(\ref{eq: mc-tmppp-03}), we obtain that
\begin{align*}
&    \EE \| \tilde{z}^{(m)}\|^2 \\
\leq &  F \cdot \EE \|  \tilde{z}^{(m-1)}\|^2\\
&+     (1+\gamma)^2 \rho^2 _{\max} \frac{2880}{\lambda_{\widehat{A}} \lambda_{C}} \cdot \Big[\Big(1 + \frac{\gamma \rho_{\max}}{ \min |\lambda(C)|}\Big)^2 \Big(1 + \frac{2  }{\lambda_C }\Big)\cdot  \Big(\rho_{\max} \frac{ 1+\gamma }{ \min |\lambda(C)|}\Big)^2   \frac{\alpha^2}{\beta^2} +  \Big( 1 + \frac{1}{ \min |\lambda(C)|} \Big)^2 \beta   \Big]  \\&\times \Big\{    2K_2 + \alpha  \cdot 5K_1   \Big\} \frac{1}{M}   \\
& + \frac{24}{\lambda_{C}}\Big\{   10 (1+\gamma)^2 \rho^2 _{\max} \cdot \Big[\Big(1 + \frac{\gamma \rho_{\max}}{ \min |\lambda(C)|}\Big)^2 \Big( 1 + \frac{2  }{\lambda_C } \Big)\cdot  \Big(\rho_{\max} \frac{ 1+\gamma }{ \min |\lambda(C)|}\Big)^2  \frac{\alpha^2}{\beta^2}   \\
&+  \Big( 1 + \frac{1}{ \min |\lambda(C)|} \Big)^2 \beta  \Big]   +  120 (1+\gamma)^2 \rho^2 _{\max} \frac{1}{\lambda_{\widehat{A}}} \cdot \Big[\Big(1 + \frac{\gamma \rho_{\max}}{ \min |\lambda(C)|}\Big)^2 \Big(1 + \frac{2  }{\lambda_C }   \Big)\cdot  \Big(\rho_{\max} \frac{ 1+\gamma }{ \min |\lambda(C)|}\Big)^2  \frac{\alpha^2}{\beta^2}   \\
&+  \Big( 1 + \frac{1}{ \min |\lambda(C)|} \Big)^2 \beta   \Big]  \cdot \Big[ \frac{1}{\alpha M}  +  \alpha  \cdot 5(1+\gamma)^2\rho^2_{\max}\Big(1 + \frac{\gamma \rho_{\max}}{ \min |\lambda(C)|}\Big)^2 \Big]  \Big\}\\ &\times \Big\{ D^{m-1} \cdot \EE \| \tilde{\theta}^{(0)} - \theta^\ast\|^2  + \frac{D^{m-1} - E^{m-1}}{D - E} \EE \|  \tilde{z}^{(0)}\|^2\\
&+  \frac{32}{\lambda_{\widehat{A}}} \Big\{    \frac{96}{\lambda_{\widehat{A}} \lambda_{C}}\gamma^2 \rho_{\max}^2   \Big[  \frac{1}{\beta M}+ 10\beta + 10\gamma^2\rho_{\max}^2 \Big(1+ \frac{2  }{\lambda_C }  \Big)\cdot  \Big(\rho_{\max} \frac{ 1+\gamma }{ \min |\lambda(C)|}\Big)^2 \frac{\alpha^2}{\beta^2}    \Big] + \alpha  \cdot 5 \gamma^2\rho_{\max}^2  \Big\} \\ &\times \Big\{    \beta \cdot \frac{24}{\lambda_C }   H_{\text{VR}}^2 + \frac{\alpha^2}{\beta^2} \cdot \Big(1  + \frac{2 }{\lambda_C }  \Big) \cdot \frac{24}{\lambda_C }\Big(\rho_{\max} \frac{ 1+\gamma }{ \min |\lambda(C)|}\Big)^2 G_{\text{VR}}^2 + \frac{1}{  M} \frac{24}{\lambda_C } (K_3 + K_4) \Big\}\\
&+   \frac{1}{M} \cdot  \frac{16}{\lambda_{\widehat{A}}}  \Big[ \frac{96}{\lambda_{\widehat{A}} \lambda_{C}}\gamma^2 \rho_{\max}^2   (2K_3+2K_4) + 2K_2\Big]  +    \frac{\beta}{M} \cdot   \frac{16}{\lambda_{\widehat{A}}}  \frac{96}{\lambda_{\widehat{A}} \lambda_{C}}\gamma^2 \rho_{\max}^2   10K_5  \\
&+   \frac{\alpha^2}{\beta M} \cdot \frac{16}{\lambda_{\widehat{A}}}  \big(1 + \frac{2  }{\lambda_C }   \big)  \cdot  \frac{96}{\lambda_{\widehat{A}} \lambda_{C}}\gamma^2 \rho_{\max}^2   10\Big(\rho_{\max} \frac{ 1+\gamma }{ \min |\lambda(C)|}\Big)^2  K_1 + \frac{\alpha}{M}  \cdot \frac{80 K_1}{\lambda_{\widehat{A}}} \Big\}\\ 
&+  \frac{1}{M}   \frac{48K_3+48K_4}{\lambda_{C}}  + \frac{\beta}{M} \frac{240K_5}{\lambda_{C}}   +  \frac{\alpha^2}{\beta^2} \frac{1}{M}\frac{240}{\lambda_{C}}\Big(\rho_{\max} \frac{ 1+\gamma }{ \min |\lambda(C)|}\Big)^2  K_1 \Big(1 + \frac{2  }{\lambda_C }  \Big) \\
\leq &  F \cdot \EE \|  \tilde{z}^{(m-1)}\|^2 + D^{m-1} \cdot \EE \| \tilde{\theta}^{(0)} - \theta^\ast\|^2  + \frac{D^{m-1} - E^{m-1}}{D - E} \EE \|  \tilde{z}^{(0)}\|^2\\
&+     (1+\gamma)^2 \rho^2 _{\max} \frac{2880}{\lambda_{\widehat{A}} \lambda_{C}} \cdot \Big[\Big(1 + \frac{\gamma \rho_{\max}}{ \min |\lambda(C)|}\Big)^2 \Big(1 + \frac{2  }{\lambda_C }\Big)\cdot  \Big(\rho_{\max} \frac{ 1+\gamma }{ \min |\lambda(C)|}\Big)^2   \frac{\alpha^2}{\beta^2} +  \Big( 1 + \frac{1}{ \min |\lambda(C)|} \Big)^2 \beta   \Big] \\ &\times \Big\{    2K_2 + \alpha  \cdot 5K_1   \Big\} \frac{1}{M}   \\
& + \frac{24}{\lambda_{C}}\Big\{   10 (1+\gamma)^2 \rho^2 _{\max} \cdot \Big[\Big(1 + \frac{\gamma \rho_{\max}}{ \min |\lambda(C)|}\Big)^2 \Big( 1 + \frac{2  }{\lambda_C } \Big)\cdot  \Big(\rho_{\max} \frac{ 1+\gamma }{ \min |\lambda(C)|}\Big)^2  \frac{\alpha^2}{\beta^2}   \\
&+  \Big( 1 + \frac{1}{ \min |\lambda(C)|} \Big)^2 \beta  \Big]   +  120 (1+\gamma)^2 \rho^2 _{\max} \frac{1}{\lambda_{\widehat{A}}} \cdot \Big[\Big(1 + \frac{\gamma \rho_{\max}}{ \min |\lambda(C)|}\Big)^2 \Big(1 + \frac{2  }{\lambda_C }   \Big)\cdot  \Big(\rho_{\max} \frac{ 1+\gamma }{ \min |\lambda(C)|}\Big)^2  \frac{\alpha^2}{\beta^2}   \\
&+  \Big( 1 + \frac{1}{ \min |\lambda(C)|} \Big)^2 \beta   \Big]  \cdot \Big[ \frac{1}{\alpha M}  +  \alpha  \cdot 5(1+\gamma)^2\rho^2_{\max}\Big(1 + \frac{\gamma \rho_{\max}}{ \min |\lambda(C)|}\Big)^2 \Big]  \Big\}\\
&\times \Big\{  \frac{32}{\lambda_{\widehat{A}}} \Big\{    \frac{96}{\lambda_{\widehat{A}} \lambda_{C}}\gamma^2 \rho_{\max}^2   \Big[  \frac{1}{\beta M}+ 10\beta + 10\gamma^2\rho_{\max}^2 \Big(1+ \frac{2  }{\lambda_C }  \Big)\cdot  \Big(\rho_{\max} \frac{ 1+\gamma }{ \min |\lambda(C)|}\Big)^2 \frac{\alpha^2}{\beta^2}    \Big] + \alpha  \cdot 5 \gamma^2\rho_{\max}^2  \Big\}  \\ &\times \Big\{    \beta \cdot \frac{24}{\lambda_C }   H_{\text{VR}}^2 + \frac{\alpha^2}{\beta^2} \cdot \Big(1  + \frac{2 }{\lambda_C }  \Big) \cdot \frac{24}{\lambda_C }\Big(\rho_{\max} \frac{ 1+\gamma }{ \min |\lambda(C)|}\Big)^2 G_{\text{VR}}^2 + \frac{1}{  M} \frac{24}{\lambda_C } (K_3 + K_4) \Big\}\\
&+   \frac{1}{M} \cdot  \frac{16}{\lambda_{\widehat{A}}}  \Big[ \frac{96}{\lambda_{\widehat{A}} \lambda_{C}}\gamma^2 \rho_{\max}^2   (2K_3+2K_4) + 2K_2\Big]  +    \frac{\beta}{M} \cdot   \frac{16}{\lambda_{\widehat{A}}}  \frac{96}{\lambda_{\widehat{A}} \lambda_{C}}\gamma^2 \rho_{\max}^2   10K_5  \\
&+   \frac{\alpha^2}{\beta M} \cdot \frac{16}{\lambda_{\widehat{A}}}  \big(1 + \frac{2  }{\lambda_C }   \big)  \cdot  \frac{96}{\lambda_{\widehat{A}} \lambda_{C}}\gamma^2 \rho_{\max}^2   10\Big(\rho_{\max} \frac{ 1+\gamma }{ \min |\lambda(C)|}\Big)^2  K_1 + \frac{\alpha}{M}  \cdot \frac{80 K_1}{\lambda_{\widehat{A}}} \Big\}\\ 
&+  \frac{1}{M}   \frac{48K_3+48K_4}{\lambda_{C}}  + \frac{\beta}{M} \frac{240K_5}{\lambda_{C}}   +  \frac{\alpha^2}{\beta^2} \frac{1}{M}\frac{240}{\lambda_{C}}\Big(\rho_{\max} \frac{ 1+\gamma }{ \min |\lambda(C)|}\Big)^2  K_1 \Big(1 + \frac{2  }{\lambda_C }  \Big) 
\end{align*}
where in the second step we assume $\frac{24}{\lambda_{C}}\Big\{   10 (1+\gamma)^2 \rho^2 _{\max} \cdot \Big[\Big(1 + \frac{\gamma \rho_{\max}}{ \min |\lambda(C)|}\Big)^2 \Big( 1 + \frac{2  }{\lambda_C } \Big)\cdot  \Big(\rho_{\max} \frac{ 1+\gamma }{ \min |\lambda(C)|}\Big)^2  \frac{\alpha^2}{\beta^2}   
+  \Big( 1 + \frac{1}{ \min |\lambda(C)|} \Big)^2 \beta  \Big]   +  120 (1+\gamma)^2 \rho^2 _{\max} \frac{1}{\lambda_{\widehat{A}}} \cdot \Big[\Big(1 + \frac{\gamma \rho_{\max}}{ \min |\lambda(C)|}\Big)^2 \Big(1 + \frac{2  }{\lambda_C }   \Big)\cdot  \Big(\rho_{\max} \frac{ 1+\gamma }{ \min |\lambda(C)|}\Big)^2  \frac{\alpha^2}{\beta^2}   +  \Big( 1 + \frac{1}{ \min |\lambda(C)|} \Big)^2 \beta   \Big]  \cdot \Big[ \frac{1}{\alpha M}  +  \alpha  \cdot 5(1+\gamma)^2\rho^2_{\max}\Big(1 + \frac{\gamma \rho_{\max}}{ \min |\lambda(C)|}\Big)^2 \Big]  \Big\} \leq 1$. Lastly, we telescope the above inequality and obtain that
\begin{align*}
&    \EE \| \tilde{z}^{(m)}\|^2 \\ 
\leq &  F^m \cdot \EE \|  \tilde{z}^{(0)}\|^2 + \frac{D^{m-1} - F^{m-1}}{D - F} \cdot \EE \| \tilde{\theta}^{(0)} - \theta^\ast\|^2  + \frac{\frac{D^{m-1} - F^{m-1}}{D - F} - \frac{E^{m-1} - F^{m-1}}{E - F}}{D - E} \EE \|  \tilde{z}^{(0)}\|^2\\
& + \frac{1}{1- F}\frac{24}{\lambda_{C}}\Big\{   10 (1+\gamma)^2 \rho^2 _{\max} \cdot \Big[\Big(1 + \frac{\gamma \rho_{\max}}{ \min |\lambda(C)|}\Big)^2 \Big( 1 + \frac{2  }{\lambda_C } \Big)\cdot  \Big(\rho_{\max} \frac{ 1+\gamma }{ \min |\lambda(C)|}\Big)^2  \frac{\alpha^2}{\beta^2}   \\
&+  \Big( 1 + \frac{1}{ \min |\lambda(C)|} \Big)^2 \beta  \Big]   +  120 (1+\gamma)^2 \rho^2 _{\max} \frac{1}{\lambda_{\widehat{A}}} \cdot \Big[\Big(1 + \frac{\gamma \rho_{\max}}{ \min |\lambda(C)|}\Big)^2 \Big(1 + \frac{2  }{\lambda_C }   \Big)\cdot  \Big(\rho_{\max} \frac{ 1+\gamma }{ \min |\lambda(C)|}\Big)^2  \frac{\alpha^2}{\beta^2}   \\
&+  \Big( 1 + \frac{1}{ \min |\lambda(C)|} \Big)^2 \beta   \Big]  \cdot \Big[ \frac{1}{\alpha M}  +  \alpha  \cdot 5(1+\gamma)^2\rho^2_{\max}\Big(1 + \frac{\gamma \rho_{\max}}{ \min |\lambda(C)|}\Big)^2 \Big]  \Big\}\\
&\times \Big\{  \frac{32}{\lambda_{\widehat{A}}} \Big\{    \frac{96}{\lambda_{\widehat{A}} \lambda_{C}}\gamma^2 \rho_{\max}^2   \Big[  \frac{1}{\beta M}+ 10\beta + 10\gamma^2\rho_{\max}^2 \Big(1+ \frac{2  }{\lambda_C }  \Big)\cdot  \Big(\rho_{\max} \frac{ 1+\gamma }{ \min |\lambda(C)|}\Big)^2 \frac{\alpha^2}{\beta^2}    \Big] + \alpha  \cdot 5 \gamma^2\rho_{\max}^2  \Big\}\\ &\times \Big\{    \beta \cdot \frac{24}{\lambda_C }   H_{\text{VR}}^2 + \frac{\alpha^2}{\beta^2} \cdot \Big(1  + \frac{2 }{\lambda_C }  \Big) \cdot \frac{24}{\lambda_C }\Big(\rho_{\max} \frac{ 1+\gamma }{ \min |\lambda(C)|}\Big)^2 G_{\text{VR}}^2 + \frac{1}{  M} \frac{24}{\lambda_C } (K_3 + K_4) \Big\}\\
&+   \frac{1}{M} \cdot  \frac{16}{\lambda_{\widehat{A}}}  \Big[ \frac{96}{\lambda_{\widehat{A}} \lambda_{C}}\gamma^2 \rho_{\max}^2   (2K_3+2K_4) + 2K_2\Big]  +    \frac{\beta}{M} \cdot   \frac{16}{\lambda_{\widehat{A}}}  \frac{96}{\lambda_{\widehat{A}} \lambda_{C}}\gamma^2 \rho_{\max}^2   10K_5  \\
&+   \frac{\alpha^2}{\beta M} \cdot \frac{16}{\lambda_{\widehat{A}}}  \big(1 + \frac{2  }{\lambda_C }   \big)  \cdot  \frac{96}{\lambda_{\widehat{A}} \lambda_{C}}\gamma^2 \rho_{\max}^2   10\Big(\rho_{\max} \frac{ 1+\gamma }{ \min |\lambda(C)|}\Big)^2  K_1 + \frac{\alpha}{M}  \cdot \frac{80 K_1}{\lambda_{\widehat{A}}} \Big\}\\ 
&+  \frac{1}{M}   \frac{48K_3+48K_4}{\lambda_{C}}  + \frac{\beta}{M} \frac{240K_5}{\lambda_{C}}   +  \frac{\alpha^2}{\beta^2} \frac{1}{M}\frac{240}{\lambda_{C}}\Big(\rho_{\max} \frac{ 1+\gamma }{ \min |\lambda(C)|}\Big)^2  K_1 \Big(1 + \frac{2  }{\lambda_C }  \Big) \\
&+  \frac{1}{1-F} \Big\{  \frac{1}{M}    (1+\gamma)^2 \rho^2 _{\max} \frac{2880}{\lambda_{\widehat{A}} \lambda_{C}} \cdot \Big[\Big(1 + \frac{\gamma \rho_{\max}}{ \min |\lambda(C)|}\Big)^2 \Big(1 + \frac{2  }{\lambda_C }\Big)\cdot  \Big(\rho_{\max} \frac{ 1+\gamma }{ \min |\lambda(C)|}\Big)^2   \frac{\alpha^2}{\beta^2} +  \Big( 1 + \frac{1}{ \min |\lambda(C)|} \Big)^2 \beta   \Big]  \cdot \Big\{    2K_2 + \alpha  \cdot 5K_1   \Big\}  \Big\}.
\end{align*}
To further simplify, note that the first three terms in the right hand side of the above inequality are in the order of $D^m$ ($D>E, D>F$). For the fourth term, it is easy to check that under the relation $\beta = \calO(\alpha^{2/3})$, it is in the order of  $\calO(\beta^3) + \calO(\frac{\beta}{M})$. The other terms are dominated by $\frac{1}{M}$. Therefore, the asymptotic error is in the order of $\calO(\beta^3) + \calO(\frac{1}{M})$. { To further elaborate this, note that the fourth term is a product of three curly brackets. The first bracket is in the order of $\calO(\beta) + \calO(\beta) \times \big(\calO(\frac{1}{\alpha M}) + \calO(\alpha)\big) = \calO(\beta) + \calO(\frac{\beta}{\alpha}\frac{1}{M})$, the second one is in the order of  $\calO(\frac{1}{\beta M}) + \calO(\beta)$ and the last one is in the order of $\calO(\beta)$. Hence their product is in the order of $\big( \calO(\beta) + \calO(\frac{\beta}{\alpha}\frac{1}{M}) \big)\times \big( \calO(\frac{1}{\beta M}) + \calO(\beta) \big) \times \calO(\beta)=\calO(\frac{\beta}{M}) + \calO(\frac{\beta}{\alpha}\frac{1}{M^2}) + \calO(\beta^3) + \calO(\frac{\beta^3}{\alpha} \frac{1}{M}) = \calO(\frac{\beta}{M}) + \calO(\frac{\beta}{\alpha}\frac{1}{M^2}) + \calO(\beta^3)$.} In summary, we have the following asymptotic result:
\begin{align*}
	 \EE \| \tilde{z}^{(m)}\|^2  \leq \calO(D^m) + \calO(\beta^3) + \calO(\frac{1}{M}).
\end{align*}
By following the same proof logic of Theorem \ref{thm: iid} and Theorem \ref{thm: markov}, the sample complexity under the optimal setting is  $\calO(\epsilon^{-1} \log{\epsilon}^{-1})$.  

\end{proof}

\section{Key Lemmas for Proving \Cref{thm: markov}}

\begin{lemma}[Preliminary Bound for $\sum_{t=0}^{M-1}\| \theta_{t}^{(m)} - \theta^\ast\|^2$] \label{lemma: pre-bound theta}Under the same assumptions as those of \Cref{thm: iid}, choose the learning rate $\alpha$ such that
	\begin{align}
	\alpha \leq \min \Big\{  \frac{\lambda_{\widehat{A}}}{30} / \Big[ (1+\gamma)^2\rho^2_{\max}\Big(1 + \frac{\gamma \rho_{\max}}{ \min |\lambda(C)|}\Big)^2 \Big], \frac{3}{5}\frac{1}{\lambda_{\widehat{A}}  } \Big\}.
	\end{align}
	Then, the following preliminary bound holds, 
\begin{align*}
&  \frac{\lambda_{\widehat{A}}}{12}\alpha   \sum_{t=0}^{M-1}\EE_{m,0} \| \theta_{t}^{(m)} - \theta^\ast\|^2 \\
\leq & \Big[ 1 +  \alpha^2M \cdot 5(1+\gamma)^2\rho^2_{\max}\Big(1 + \frac{\gamma \rho_{\max}}{ \min |\lambda(C)|}\Big)^2 \Big] \EE_{m,0}\| \tilde{\theta}^{(m-1)} - \theta^\ast\|^2  + \alpha \cdot 2K_2 + \alpha^2 \cdot5K_1 \\
&+  \alpha \cdot  \frac{6}{\lambda_{\widehat{A}}}\gamma^2 \rho_{\max}^2   \sum_{t=0}^{M-1}\EE_{m,0}\| z_{t}^{(m)}\|^2+ \alpha^2 M \cdot 5 \gamma^2\rho_{\max}^2 \EE_{m,0}\|\tilde{z}^{(m-1)}\|^2,
\end{align*}
where $K_1$ is specified in \cref{eq: def-K1-mc} of Lemma \ref{lemma: K1}, and $K_2$ is specified in   \cref{eq: def-K2-mc} of Lemma \ref{lemma: K2} .
\end{lemma}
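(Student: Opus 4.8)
The plan is to follow the template of the i.i.d. counterpart \Cref{lemma: iid pre-bound theta} almost verbatim and to isolate the single place where Markovian sampling forces a change. First I would start from the projected update $\theta_{t+1}^{(m)} = \Pi_{R_\theta}\big[\theta_t^{(m)} + \alpha\big(G_t^{(m)}(\theta_t^{(m)}, z_t^{(m)}) - G_t^{(m)}(\tilde{\theta}^{(m-1)}, \tilde{z}^{(m-1)}) + \widetilde{G}^{(m)}\big)\big]$, invoke that $\Pi_{R_\theta}$ is $1$-Lipschitz together with $\|\theta^\ast\| \le R_\theta$, and expand $\|\theta_{t+1}^{(m)} - \theta^\ast\|^2$ as the sum of the diagonal term $\|\theta_t^{(m)} - \theta^\ast\|^2$, the $\alpha^2$ squared-norm term, and the $2\alpha$ inner-product term, exactly as in the i.i.d. expansion.

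Next I would take $\EE_{m,0}$ and dispose of the $\alpha^2$ term with the Markovian one-step bound \Cref{lemma: K1} (the analog of \Cref{lemma: iid GVR}), which supplies the $5(1+\gamma)^2\rho_{\max}^2(\cdots)^2$, the $5\gamma^2\rho_{\max}^2$, and the $5K_1/M$ contributions. The crux is the inner product $\EE_{m,0}\langle \theta_t^{(m)} - \theta^\ast,\, \EE_{m,t-1}[\widehat{A}_t^{(m)}\theta_t^{(m)} + \widehat{b}_t^{(m)} + B_t^{(m)} z_t^{(m)}]\rangle$. Since each inner-loop sample is drawn uniformly from $B_m$ and $\widetilde{G}^{(m)}$ is precisely the batch average, the two variance-reduction corrections still cancel in conditional expectation, so only the current-iterate gradient survives. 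The difference from the i.i.d. case is that $\EE_{m,t-1}[\widehat{A}_t^{(m)}]$ is now a batch (empirical) average over the correlated samples of $B_m$ rather than the stationary matrix $\widehat{A}$; I would therefore write $\EE_{m,t-1}[\widehat{A}_t^{(m)}\theta_t^{(m)} + \widehat{b}_t^{(m)}] = (\widehat{A}\theta_t^{(m)} + \widehat{b}) + \delta_t$, keep the principal part via $\widehat{A}\theta^\ast + \widehat{b} = 0$ and the negative definiteness $(\theta - \theta^\ast)^\top \widehat{A}(\theta-\theta^\ast) \le -\tfrac{\lambda_{\widehat{A}}}{2}\|\theta-\theta^\ast\|^2$, and absorb the $B_t^{(m)} z_t^{(m)}$ cross term through Young's inequality using $\|B_t^{(m)}\| \le \gamma\rho_{\max}$.

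The hard part will be controlling the Markovian bias $\delta_t$, which carries the correlation between the iterate $\theta_t^{(m)}$ and the sampling noise of the same trajectory. This is where \Cref{ass: ergodicity} enters: after a further Young split consuming an extra slice of the $\lambda_{\widehat{A}}\|\theta_t^{(m)} - \theta^\ast\|^2$ budget, the residual is a $\tfrac{1}{\lambda_{\widehat{A}}}$-weighted second moment of the centered gradient at $\theta^\ast$, whose expectation over the geometrically mixing chain produces the factor $\big(1 + \kappa\tfrac{\rho}{1-\rho}\big)$ and hence the constant $K_2$ via \Cref{lemma: K2}. This is exactly the additive term absent in the i.i.d. bound, and it also explains why the drift coefficient weakens from $\tfrac{\lambda_{\widehat{A}}}{6}$ to $\tfrac{\lambda_{\widehat{A}}}{12}$ and the $z$-coefficient grows to $\tfrac{6}{\lambda_{\widehat{A}}}\gamma^2\rho_{\max}^2$: two Young splits (instead of one) are spent on the $\|\theta_t^{(m)} - \theta^\ast\|^2$ budget.

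Finally I would substitute the three bounds into the one-step recursion to obtain a contraction-type inequality for $\EE_{m,0}\|\theta_{t+1}^{(m)} - \theta^\ast\|^2$, sum it over $t = 0, \dots, M-1$, and impose the stated smallness condition on $\alpha$ so that $\alpha^2 \cdot 5\gamma^2\rho_{\max}^2 \le \alpha \cdot \tfrac{6}{\lambda_{\widehat{A}}}\gamma^2\rho_{\max}^2$ and the net negative $\theta$-drift is at least $\tfrac{\lambda_{\widehat{A}}}{12}\alpha$. Rearranging and collecting the batch-summed noise floors (which turn the per-step $\tfrac{1}{M}$ factors in $K_1$ and $K_2$ into the $\alpha^2 \cdot 5K_1$ and $\alpha \cdot 2K_2$ terms) yields the claimed inequality; apart from the ergodicity estimate for $\delta_t$, every step reuses the i.i.d. computation.
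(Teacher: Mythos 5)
Your proposal is correct and follows essentially the same route as the paper's proof: the same projected-update expansion, the same observation that uniform sampling from $B_m$ makes the two variance-reduction corrections cancel in conditional expectation so that the batch-average gradient $G^{(m)}(\theta_t^{(m)},z_t^{(m)})$ survives, the same bias-plus-principal-plus-cross decomposition with the Markovian bias $\delta_t=(\widehat{A}^{(m)}-\widehat{A})\theta_t^{(m)}+(\widehat{b}^{(m)}-\widehat{b})$ controlled via \Cref{lemma: K2} (whose mixing argument yields the $1+\kappa\frac{\rho}{1-\rho}$ factor), and the same two Young splits that explain the weakened drift $\frac{\lambda_{\widehat{A}}}{12}\alpha$ and the $\frac{6}{\lambda_{\widehat{A}}}\gamma^2\rho_{\max}^2$ coefficient after summing over the batch. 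The only slip is cosmetic: the $\alpha^2$ variance term is bounded by \Cref{lemma: GVR} (which internally invokes \Cref{lemma: K1}), not by \Cref{lemma: K1} directly, and the surviving cross term is $\widehat{B}^{(m)}z_t^{(m)}$ rather than $B_t^{(m)}z_t^{(m)}$, though both satisfy the same norm bound $\gamma\rho_{\max}$.
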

\begin{proof}
	Based on the update rule of VRTDC for Markovian samples, we have that
	\[\theta_{t+1}^{(m)} = \Pi_{R_\theta}\Big[ \theta_{t}^{(m)} + \alpha[ G_{t}^{(m)}(\theta_{t}^{(m)}, z_{t}^{(m)})  - G_{t}^{(m)}(\tilde{\theta}^{(m-1)}, \tilde{z}^{(m-1)})  + {G}^{(m)}(\tilde{\theta}^{(m-1)}, \tilde{z}^{(m-1)}) ] \Big].\]
	The above update rule further implies that
	\begin{align*}
		\|\theta_{t+1}^{(m)} - \theta^\ast \|^2 &\overset{(i)}{\leq} \| \theta_{t}^{(m)} -\theta^\ast + \alpha[ G_{t}^{(m)}(\theta_{t}^{(m)}, z_{t}^{(m)})  - G_{t}^{(m)}(\tilde{\theta}^{(m-1)}, \tilde{z}^{(m-1)})  + {G}^{(m)}(\tilde{\theta}^{(m-1)}, \tilde{z}^{(m-1)}) ]\|^2 \\
		&=\| \theta_{t}^{(m)} -\theta^\ast\|^2 + \alpha^2 \| G_{t}^{(m)}(\theta_{t}^{(m)}, z_{t}^{(m)})  - G_{t}^{(m)}(\tilde{\theta}^{(m-1)}, \tilde{z}^{(m-1)})  + {G}^{(m)}(\tilde{\theta}^{(m-1)}, \tilde{z}^{(m-1)}) \|^2 \\
		&\quad + 2\alpha \langle \theta_{t}^{(m)} -\theta^\ast, G_{t}^{(m)}(\theta_{t}^{(m)}, z_{t}^{(m)})  - G_{t}^{(m)}(\tilde{\theta}^{(m-1)}, \tilde{z}^{(m-1)})  + {G}^{(m)}(\tilde{\theta}^{(m-1)}, \tilde{z}^{(m-1)})\rangle, \numberthis \label{eq: mc-tmp-00}
	\end{align*}
	where (i) uses the assumption that $R_\theta \ge \|\theta^*\|$ (i.e., $\theta^*$ is in the ball with radius $R_\theta$) and the fact that $\Pi_{R_\theta}$ is 1-Lipschitz. Then we take $\EE_{m,0}$ on both sides. An upper bound for the second term of \cref{eq: mc-tmp-00} is given in Lemma \ref{lemma: GVR}. Next, we consider the third term of \cref{eq: mc-tmp-00} and obtain that
	\begin{align*}
	 &\EE_{m,0} \langle \theta_{t}^{(m)} -\theta^\ast, G_{t}^{(m)}(\theta_{t}^{(m)}, z_{t}^{(m)})  - G_{t}^{(m)}(\tilde{\theta}^{(m-1)}, \tilde{z}^{(m-1)})  + {G}^{(m)}(\tilde{\theta}^{(m-1)}, \tilde{z}^{(m-1)})\rangle \\
	 =& \EE_{m,0} \langle \theta_{t}^{(m)} -\theta^\ast, G^{(m)}(\theta_{t}^{(m)}, z_{t}^{(m)})  \rangle \\
	 =& \EE_{m,0} \langle \theta_{t}^{(m)} -\theta^\ast, \widehat{A}^{(m)} \theta_{t}^{(m)} + \widehat{b}^{(m)} + \widehat{B}^{(m)} z_t^{(m)}\rangle \\
	 =& \EE_{m,0} \langle \theta_{t}^{(m)} -\theta^\ast, \widehat{A}^{(m)} \theta_{t}^{(m)} - \widehat{A}\theta_{t}^{(m)} + \widehat{A}\theta_{t}^{(m)} + \widehat{b}^{(m)} - \widehat{b} + \widehat{b} \rangle  +  \EE_{m,0} \langle \theta_{t}^{(m)} -\theta^\ast, \widehat{B}^{(m)} z_t^{(m)}\rangle \\
	 =& \EE_{m,0} \langle \theta_{t}^{(m)} -\theta^\ast, (\widehat{A}^{(m)}   - \widehat{A})\theta_{t}^{(m)} +( \widehat{b}^{(m)} - \widehat{b} ) \rangle + \EE_{m,0} \langle \theta_{t}^{(m)} -\theta^\ast,    \widehat{A}\theta_{t}^{(m)} - \widehat{A}\theta^\ast + \widehat{A}\theta^\ast+  \widehat{b}  \rangle \\
	 &\quad+  \EE_{m,0} \langle \theta_{t}^{(m)} -\theta^\ast, \widehat{B}^{(m)} z_t^{(m)}\rangle \\
	 =& \EE_{m,0} \langle \theta_{t}^{(m)} -\theta^\ast, (\widehat{A}^{(m)}   - \widehat{A})\theta_{t}^{(m)} +( \widehat{b}^{(m)} - \widehat{b} ) \rangle +\EE_{m,0} \langle \theta_{t}^{(m)} -\theta^\ast,    \widehat{A}(\theta_{t}^{(m)} -  \theta^\ast  ) \rangle \\
	 &\quad+  \EE_{m,0} \langle \theta_{t}^{(m)} -\theta^\ast, \widehat{B}^{(m)} z_t^{(m)}\rangle  \numberthis \label{eq: mc-tmp-01}
	\end{align*}
	The first term of \cref{eq: mc-tmp-01} is bounded by Lemma \ref{lemma: K2}. The second term of \cref{eq: mc-tmp-01} can be bounded by using the property of negative definite matrix: $\lambda_{\max}(\widehat{A}) \| \theta - \theta^\ast \|^2 \geq (\theta- \theta^\ast )^T \widehat{A} (\theta- \theta^\ast)   \geq \lambda_{\min}(\widehat{A}) \| \theta - \theta^\ast \|^2$.
	Recall that $-\lambda_{\widehat{A}}:=\lambda_{\max}(\widehat{A} + \widehat{A}^T)$, and we obtain that 
	$$\EE_{m,0} \langle \theta_{t}^{(m)} -\theta^\ast,    \widehat{A}(\theta_{t}^{(m)} -  \theta^\ast  ) \rangle \leq -\frac{\lambda_{\widehat{A}}}{2}\EE_{m,0}\| \theta_{t}^{(m)} -\theta^\ast\|^2.$$
 	The third term  of \cref{eq: mc-tmp-01} is bounded using the polarization identity,
	\begin{align*}
		\EE_{m,0} \langle \theta_{t}^{(m)} -\theta^\ast, \widehat{B}^{(m)} z_t^{(m)}\rangle &\leq \frac{1}{2}\cdot \frac{\lambda_{\widehat{A}}}{3}\EE_{m,0}\| \theta_{t}^{(m)} -\theta^\ast\|^2 + \frac{1}{2} \cdot \frac{3}{\lambda_{\widehat{A}}}\gamma^2 \rho_{\max}^2\EE_{m,0}\| z_{t}^{(m)} \|^2 . 
	\end{align*}
	Substituting the above bounds into the third term of \cref{eq: mc-tmp-00}, we obtain that
	\begin{align*}
		&\EE_{m,0} \langle \theta_{t}^{(m)} -\theta^\ast, G_{t}^{(m)}(\theta_{t}^{(m)}, z_{t}^{(m)})  - G_{t}^{(m)}(\tilde{\theta}^{(m-1)}, \tilde{z}^{(m-1)})  + {G}^{(m)}(\tilde{\theta}^{(m-1)}, \tilde{z}^{(m-1)})\rangle \\
		\leq & -\frac{\lambda_{\widehat{A}}}{12}\EE_{m,0}  \|\theta_{t}^{(m)} -\theta^\ast\|^2 +  \frac{K_2}{M}+ \frac{1}{2} \cdot \frac{3}{\lambda_{\widehat{A}}}\gamma^2 \rho_{\max}^2\EE_{m,0}\| z_{t}^{(m)} \|^2 . \numberthis \label{eq: mc-tmp-02}
	\end{align*}
	Then, substituting \cref{eq: mc-tmp-02} into \cref{eq: mc-tmp-00} and re-arranging, we obtain that
	\begin{align*}
		&\EE_{m,0} \|\theta_{t+1}^{(m)} - \theta^\ast \|^2 \\
		\leq&  \EE_{m,0} \| \theta_{t}^{(m)} -\theta^\ast\|^2 +  \alpha\Big[-\frac{\lambda_{\widehat{A}}}{6}\EE_{m,0}  \|\theta_{t}^{(m)} -\theta^\ast\|^2 +  \frac{2 K_2}{M}+   \frac{3}{\lambda_{\widehat{A}}}\gamma^2 \rho_{\max}^2\EE_{m,0}\| z_{t}^{(m)} \|^2 \Big]\\
		& +\alpha^2\Big[  5(1+\gamma)^2\rho^2_{\max}\Big(1 + \frac{\gamma \rho_{\max}}{ \min |\lambda(C)|}\Big)^2\Big( \EE_{m,0} \| \theta_{t}^{(m)} - \theta^\ast\|^2  + \EE_{m,0}\| \tilde{\theta}^{(m-1)} - \theta^\ast\|^2 \Big)\Big] \\
		& + \alpha^2 \Big[ 5 \gamma^2\rho_{\max}^2 \Big(\EE_{m,0}\| z_{t}^{(m)}\|^2  + \EE_{m,0}\|\tilde{z}^{(m-1)}\|^2\Big) + \frac{5K_1}{M}  \Big] \\
		=&\EE_{m,0} \| \theta_{t}^{(m)} -\theta^\ast\|^2 - \Big( \frac{\lambda_{\widehat{A}}}{6}\alpha - \alpha^2 \cdot 5(1+\gamma)^2\rho^2_{\max}\Big(1 + \frac{\gamma \rho_{\max}}{ \min |\lambda(C)|}\Big)^2  \Big) \EE_{m,0} \| \theta_{t}^{(m)} - \theta^\ast\|^2 \\
		&+  \alpha^2 \cdot 5(1+\gamma)^2\rho^2_{\max}\Big(1 + \frac{\gamma \rho_{\max}}{ \min |\lambda(C)|}\Big)^2  \EE_{m,0}\| \tilde{\theta}^{(m-1)} - \theta^\ast\|^2 + \frac{\alpha}{M} \cdot 2K_2 + \frac{\alpha^2 }{M} \cdot 5K_1 \\
		&+ \Big(\alpha \cdot  \frac{3}{\lambda_{\widehat{A}}} \gamma^2 \rho_{\max}^2 + \alpha^2 \cdot 5 \gamma^2\rho_{\max}^2\Big) \EE_{m,0}\| z_{t}^{(m)}\|^2+ \alpha^2 \cdot 5 \gamma^2\rho_{\max}^2 \EE_{m,0}\|\tilde{z}^{(m-1)}\|^2.
	\end{align*}
	Summing the above inequality over $t=0, \dots, M-1$, we obtain the following desired bound 
	\begin{align*}
		& \Big( \frac{\lambda_{\widehat{A}}}{6}\alpha - \alpha^2 \cdot 5(1+\gamma)^2\rho^2_{\max}\Big(1 + \frac{\gamma \rho_{\max}}{ \min |\lambda(C)|}\Big)^2  \Big)  \sum_{t=0}^{M-1}\EE_{m,0} \| \theta_{t}^{(m)} - \theta^\ast\|^2 \\
		\leq & \Big[ 1 +  \alpha^2M \cdot 5(1+\gamma)^2\rho^2_{\max}\Big(1 + \frac{\gamma \rho_{\max}}{ \min |\lambda(C)|}\Big)^2 \Big] \EE_{m,0}\| \tilde{\theta}^{(m-1)} - \theta^\ast\|^2  + \alpha \cdot 2K_2 + \alpha^2 \cdot5K_1 \\
		&+ \Big(\alpha \cdot  \frac{3}{\lambda_{\widehat{A}}}\gamma^2 \rho_{\max}^2 + \alpha^2 \cdot 5 \gamma^2\rho_{\max}^2\Big) \sum_{t=0}^{M-1}\EE_{m,0}\| z_{t}^{(m)}\|^2+ \alpha^2 M \cdot 5 \gamma^2\rho_{\max}^2 \EE_{m,0}\|\tilde{z}^{(m-1)}\|^2. \numberthis \label{eq: mc-tmp-03}
	\end{align*}
	Lastly, we simplify the above bound by choosing sufficiently small $\alpha$ such that
		  $ \frac{\lambda_{\widehat{A}}}{6}\alpha - \alpha^2 \cdot 5(1+\gamma)^2\rho^2_{\max}\Big(1 + \frac{\gamma \rho_{\max}}{ \min |\lambda(C)|}\Big)^2   \geq  \frac{\lambda_{\widehat{A}}}{12}\alpha$, and
		  $ \alpha^2 \cdot 5 \gamma^2\rho_{\max}^2 \leq \alpha \cdot  \frac{3}{\lambda_{\widehat{A}}}\gamma^2 \rho_{\max}^2$.  	Note that these requirements can be implied by 
	\begin{align*}
		\alpha \leq \min \Big\{  \frac{\lambda_{\widehat{A}}}{30} / \Big[ (1+\gamma)^2\rho^2_{\max}\Big(1 + \frac{\gamma \rho_{\max}}{ \min |\lambda(C)|}\Big)^2 \Big], \frac{3}{5}\frac{1}{\lambda_{\widehat{A}}  } \Big\}.
	\end{align*}
	Applying these simplifications, \cref{eq: mc-tmp-03} becomes
	\begin{align*}
	&  \frac{\lambda_{\widehat{A}}}{12}\alpha   \sum_{t=0}^{M-1}\EE_{m,0} \| \theta_{t}^{(m)} - \theta^\ast\|^2 \\
	\leq & \Big[ 1 +  \alpha^2M \cdot 5(1+\gamma)^2\rho^2_{\max}\Big(1 + \frac{\gamma \rho_{\max}}{ \min |\lambda(C)|}\Big)^2 \Big] \EE_{m,0}\| \tilde{\theta}^{(m-1)} - \theta^\ast\|^2  + \alpha \cdot 2K_2 + \alpha^2 \cdot5K_1 \\
	&+  \alpha \cdot  \frac{6}{\lambda_{\widehat{A}}}\gamma^2 \rho_{\max}^2   \sum_{t=0}^{M-1}\EE_{m,0}\| z_{t}^{(m)}\|^2+ \alpha^2 M \cdot 5 \gamma^2\rho_{\max}^2 \EE_{m,0}\|\tilde{z}^{(m-1)}\|^2. 
	\end{align*}
\end{proof}

\begin{lemma}[Convergence of $\EE\| \tilde{z}^{(m)} \|^2$]
	 \label{lemma: conv-z}
	 Under the same assumptions as those of \Cref{thm: markov}, choose the learning rate $\beta$ and the batch size $M$ such that
	 $\beta < 1$ and $M\beta > \frac{12}{\lambda_{C}}$. 
	 Then, the following preliminary bound holds.
	\begin{align*}
\EE\| \tilde{z}^{(m)} \|^2 &\leq   \Big(\frac{1}{M\beta} \cdot \frac{12}{\lambda_C }\Big)^m \EE \|  \tilde{z}^{(0)}\|^2 \\
&\quad  + 2 \cdot \Big[ \beta \cdot \frac{24}{\lambda_C }   H_{\text{VR}}^2 + \frac{\alpha^2}{\beta^2} \cdot \Big(1  + \frac{2 }{\lambda_C }  \Big) \cdot \frac{24}{\lambda_C }\Big(\rho_{\max} \frac{ 1+\gamma }{ \min |\lambda(C)|}\Big)^2 G_{\text{VR}}^2 + \frac{1}{  M} \frac{24}{\lambda_C } (K_3 + K_4)\Big],
\end{align*} 
where $K_3$ is specified in \cref{eq: def-K3-mc} of Lemma \ref{lemma: K3}, and $K_4$ is specified in   \cref{eq: def-K4-mc} of Lemma \ref{lemma: K4}.
\end{lemma}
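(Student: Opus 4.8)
The plan is to mirror the structure of the i.i.d.\ argument in \Cref{lemma: iid conv-z}, exploiting the fact that Algorithm~\ref{alg: markov} draws its inner-loop samples $x_t^{(m)}$ uniformly from the \emph{fixed} batch $B_m$: conditioned on $B_m$ and the history $\calF_{m,0}$, the inner samples are therefore i.i.d.\ draws from the batch's empirical distribution, so the within-batch martingale structure stays clean. First I would write the one-step recursion for the tracking error $z_t^{(m)} = w_t^{(m)} + C^{-1}(b + A\theta_t^{(m)})$. Using the update of $w_t^{(m)}$, the $1$-Lipschitz property of $\Pi_{R_w}$ (valid since $R_w \ge 2\|C^{-1}\|\|A\|R_\theta$ places $-C^{-1}(b+A\theta_t^{(m)})$ inside the ball), and the identity $z_{t+1}^{(m)} = \Pi_{R_w}[\cdots] + C^{-1}(b+A\theta_{t+1}^{(m)})$, I would expand $\|z_{t+1}^{(m)}\|^2$ into $\|z_t^{(m)}\|^2$, a squared pseudo-gradient term, a quadratic term $\|C^{-1}A(\theta_{t+1}^{(m)}-\theta_t^{(m)})\|^2$, a cross term $2\langle z_t^{(m)}, C^{-1}A(\theta_{t+1}^{(m)}-\theta_t^{(m)})\rangle$, and the drift inner product $2\beta\langle z_t^{(m)}, H_t^{(m)}(\theta_t^{(m)}, z_t^{(m)}) - \cdots\rangle$.

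Second, I would dispose of the auxiliary terms exactly as in the i.i.d.\ case: bound the squared variance-reduced pseudo-gradient by the constant $H_{\text{VR}}^2$, bound the increment via $\|\theta_{t+1}^{(m)}-\theta_t^{(m)}\| \le \alpha\,\rho_{\max}\frac{1+\gamma}{\min|\lambda(C)|}G_{\text{VR}}$, and split the cross term by Young's inequality $2\langle u,v\rangle \le \frac{\lambda_{C}}{c}\beta\|u\|^2 + \frac{c}{\lambda_{C}}\frac{1}{\beta}\|v\|^2$, charging part of the available negative drift.

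The decisive difference, and the main obstacle, is the drift inner product. In the i.i.d.\ case $\EE_{m,t-1}[\bar A_t^{(m)}]=\bar A$ and $\EE_{m,t-1}[C_t^{(m)}]=C$ exactly, so the inner product collapses to $\langle z_t^{(m)}, C z_t^{(m)}\rangle \le -\frac{\lambda_{C}}{2}\|z_t^{(m)}\|^2$. Under Markovian sampling this cancellation fails: the conditional expectation returns the batch empirical means $\widehat{\bar A}^{(m)}=\frac{1}{M}\sum_{x\in B_m}\bar A_x$ and $\widehat C^{(m)}=\frac{1}{M}\sum_{x\in B_m} C_x$, which differ from $\bar A, C$. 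I would therefore decompose the inner product into the ideal drift $\langle z_t^{(m)}, C z_t^{(m)}\rangle$ plus bias terms $\langle z_t^{(m)}, (\widehat C^{(m)}-C) z_t^{(m)}\rangle$ and $\langle z_t^{(m)}, (\widehat{\bar A}^{(m)}-\bar A)\theta_t^{(m)} + (\widehat{\bar b}^{(m)}-\bar b)\rangle$, and control the biases through \Cref{ass: ergodicity} via the supporting estimates of Lemma~\ref{lemma: K3} and Lemma~\ref{lemma: K4}. Since the projected iterates keep $\|z_t^{(m)}\|$ and $\|\theta_t^{(m)}\|$ uniformly bounded, these biases contribute only terms of order $\frac{1}{M}(K_3+K_4)$, where the $\kappa\frac{\rho}{1-\rho}$ factors hidden inside $K_3,K_4$ encode the mixing. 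The price is that the effective drift coefficient degrades from $-\frac{\lambda_{C}}{2}$ down to roughly $-\frac{\lambda_{C}}{12}$, which is exactly why the factor $\frac{12}{\lambda_{C}}$ (rather than $\frac{2}{\lambda_{C}}$) surfaces in the final bound.

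Finally, I would take $\EE_{m,0}$, sum the one-step inequality over $t=0,\dots,M-1$, drop the nonnegative endpoint $\|z_M^{(m)}\|^2$, and rearrange so that the surviving drift reads $\frac{\lambda_{C}}{12}\beta M\,\EE\|\tilde z^{(m)}\|^2$. Dividing through yields the one-epoch contraction $\EE\|\tilde z^{(m)}\|^2 \le \frac{1}{M\beta}\frac{12}{\lambda_{C}}\EE\|\tilde z^{(m-1)}\|^2 + (\text{error terms})$, where $M\beta > \frac{12}{\lambda_{C}}$ forces the contraction factor below one. Recursively unrolling over $m$ produces the geometric leading term $\big(\frac{1}{M\beta}\frac{12}{\lambda_{C}}\big)^m\EE\|\tilde z^{(0)}\|^2$, and summing the geometric series of error terms (using $M\beta > \frac{12}{\lambda_{C}}$ together with $\beta<1$ to bound $\frac{1}{1-\text{factor}}$ by the constant $2$) gives the stated asymptotic error of order $\calO(\beta) + \calO(\frac{\alpha^2}{\beta^2}) + \calO(\frac{1}{M})$ with the displayed constants.
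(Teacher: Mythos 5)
Your proposal is correct and follows essentially the same route as the paper's proof: the same one-step expansion with Young's inequality on the cross term, the same constant bounds $G_{\text{VR}}$, $H_{\text{VR}}$ for the auxiliary terms, the same decomposition of the drift inner product into the ideal term $\langle z_t^{(m)}, C z_t^{(m)}\rangle$ plus batch-bias terms controlled by \Cref{lemma: K3} and \Cref{lemma: K4} (your $(\widehat{\bar A}^{(m)}-\bar A)\theta_t^{(m)}+(\widehat{\bar b}^{(m)}-\bar b)$ is exactly the paper's $\bar A^{(m)}\theta_t^{(m)}+\bar b^{(m)}$, since $\bar A=0$ and $\bar b=0$), yielding the same effective drift $-\frac{\lambda_C}{12}\beta$, followed by the same sum--Jensen--divide--unroll argument. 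The one blemish---bounding the geometric-series prefactor by $2$ actually needs $M\beta \ge \frac{24}{\lambda_C}$ rather than the stated $M\beta > \frac{12}{\lambda_C}$---is inherited from the paper itself, whose proof also quietly assumes $M\beta > \frac{24}{\lambda_{C}}$.
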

\begin{proof}
	Similar to Lemma \ref{lemma: iid conv-z}, we can obtain the one-step update rule of $z_{t}^{(m)}$ based on the one-step update rule of $w_{t}^{(m)}$. Combining this update rule with the assumption that $R_w\ge 2\|C^{-1}\|\|A\|R_{\theta}$, we obtain that
	\begin{align*}
		\| z_{t+1}^{(m)}\|^2 &\leq \|  z_t^{(m)} + \beta\Big[   H_{t}^{(m)}(\theta_{t}^{(m)}, z_{t}^{(m)})  - H_{t}^{(m)}(\tilde{\theta}^{(m-1)}, \tilde{z}^{(m-1)})  +  {H}^{(m)}(\tilde{\theta}^{(m-1)}, \tilde{z}^{(m-1)}) \Big] \\
		&\quad+ C^{-1}A(\theta_{t+1}^{(m)}  - \theta_{t}^{(m)}) \|^2 \\
		&= \|  z_t^{(m)}\|^2 + 2\beta^2 \|    H_{t}^{(m)}(\theta_{t}^{(m)}, z_{t}^{(m)})  - H_{t}^{(m)}(\tilde{\theta}^{(m-1)}, \tilde{z}^{(m-1)})  +  {H}^{(m)}(\tilde{\theta}^{(m-1)}, \tilde{z}^{(m-1)}) \|^2   \\
		&\quad + 2\| C^{-1}A(\theta_{t+1}^{(m)}  - \theta_{t}^{(m)})\|^2 \\
		&\quad+ 2\beta \langle z_t^{(m)}, H_{t}^{(m)}(\theta_{t}^{(m)}, z_{t}^{(m)})  - H_{t}^{(m)}(\tilde{\theta}^{(m-1)}, \tilde{z}^{(m-1)})  +  {H}^{(m)}(\tilde{\theta}^{(m-1)}, \tilde{z}^{(m-1)}) \rangle  \\
		&\quad + 2\langle z_t^{(m)}, C^{-1}A(\theta_{t+1}^{(m)}  - \theta_{t}^{(m)}).  \rangle  \numberthis \label{eq: mc-tmp-10}
	\end{align*}
	For the last term of \cref{eq: mc-tmp-10}, we bound it as
	\begin{align}
		2 \langle z_t^{(m)}, C^{-1}A(\theta_{t+1}^{(m)}  - \theta_{t}^{(m)})  \rangle \leq \frac{\lambda_C}{2}\beta \| z_t^{(m)}\|^2 + \frac{2}{\lambda_C } \frac{1}{\beta} \| C^{-1}A (\theta_{t+1} - \theta_{t})\|^2.  \label{eq: mc-tmp-11}
	\end{align}
	Then, we apply Lemma \ref{lemma: GVR-const} to bound the last term of \cref{eq: mc-tmp-11}. Also, we apply Lemma \ref{lemma: HVR-const} to bound the second term of \cref{eq: mc-tmp-10}. Then, we obtain that  
	\begin{align*}
		\| z_{t+1}^{(m)}\|^2 &\leq \|  z_t^{(m)}\|^2 + 2\beta^2 H_{\text{VR}}^2 + \Big(\alpha^2 + \frac{2 \alpha^2}{\lambda_C } \frac{1}{\beta} \Big)\cdot 2\Big(\rho_{\max} \frac{ 1+\gamma }{ \min |\lambda(C)|}\Big)^2 G_{\text{VR}}^2 + \frac{\lambda_C}{2}\beta \| z_t^{(m)}\|^2  \\
		&\quad +  2\beta \langle z_t^{(m)}, H_{t}^{(m)}(\theta_{t}^{(m)}, z_{t}^{(m)})  - H_{t}^{(m)}(\tilde{\theta}^{(m-1)}, \tilde{z}^{(m-1)})  +  {H}^{(m)}(\tilde{\theta}^{(m-1)}, \tilde{z}^{(m-1)}) \rangle \numberthis \label{eq: mc-tmp-12}
	\end{align*}
	Next, we further bound the last term of \cref{eq: mc-tmp-12}.
	\begin{align*}
		& \EE_{m,0} \langle z_t^{(m)}, H_{t}^{(m)}(\theta_{t}^{(m)}, z_{t}^{(m)})  - H_{t}^{(m)}(\tilde{\theta}^{(m-1)}, \tilde{z}^{(m-1)})  +  {H}^{(m)}(\tilde{\theta}^{(m-1)}, \tilde{z}^{(m-1)}) \rangle \\
		= & \EE_{m,0} \langle z_t^{(m)}, H^{(m)}(\theta_{t}^{(m)}, z_{t}^{(m)})   \rangle \\
		= & \EE_{m,0} \langle z_t^{(m)}, \bar{A}^{(m)}\theta_{t}^{(m)} +  \bar{b}^{(m)} +  {C}^{(m)} z_{t}^{(m)} \rangle \\
		= & \EE_{m,0} \langle z_t^{(m)}, \bar{A}^{(m)}\theta_{t}^{(m)} +  \bar{b}^{(m)}\rangle +  \EE_{m,0} \langle z_t^{(m)}, ({C}^{(m)} - C) z_{t}^{(m)} \rangle + \EE_{m,0} \langle z_t^{(m)}, C z_{t}^{(m)} \rangle.   \numberthis \label{eq: mc-tmp-13}
	\end{align*}
	Then, we apply Lemma \ref{lemma: K3} to bound the first term of \cref{eq: mc-tmp-13}, apply Lemma \ref{lemma: K4} to bound the second term of \cref{eq: mc-tmp-13} and apply the negative definiteness of $C$ to bound the last term of \cref{eq: mc-tmp-13}. We obtain that
	\begin{align*}
		& \EE_{m,0} \langle z_t^{(m)}, H_{t}^{(m)}(\theta_{t}^{(m)}, z_{t}^{(m)})  - H_{t}^{(m)}(\tilde{\theta}^{(m-1)}, \tilde{z}^{(m-1)})  +  {H}^{(m)}(\tilde{\theta}^{(m-1)}, \tilde{z}^{(m-1)}) \rangle \\
		\leq &  \Big(\frac{\lambda_C}{8}\EE_{m,0}\|z_t^{(m)}\|^2 + \frac{K_3}{M}\Big)  +  \Big(\frac{\lambda_C}{12}\EE_{m,0}\|z_t^{(m)}\|^2 + \frac{K_4}{M}\Big)   - \frac{\lambda_C}{2} \EE_{m,0}\|z_t^{(m)}\|^2 \\
		=  &  -\frac{7\lambda_C}{24}\EE_{m,0}\|z_t^{(m)}\|^2 + \frac{K_3 + K_4}{M}. \numberthis \label{eq: 100}
	\end{align*} 
	Substituting the above inequality into \cref{eq: mc-tmp-12} yields that
	\begin{align*}
	\EE_{m,0} \| z_{t+1}^{(m)}\|^2 &\leq \EE_{m,0} \|  z_t^{(m)}\|^2 + 2\beta^2 H_{\text{VR}}^2 + \Big(\alpha^2 + \frac{2 \alpha^2}{\lambda_C } \frac{1}{\beta} \Big)\cdot 2\Big(\rho_{\max} \frac{ 1+\gamma }{ \min |\lambda(C)|}\Big)^2 G_{\text{VR}}^2 \\
	&\quad   - \frac{\lambda_C}{12}\beta \EE_{m,0} \| z_t^{(m)}\|^2 + \frac{2K_3 + 2K_4}{M} \beta.
	\end{align*}
	Telescoping the above inequality over one batch, we further obtain that
	\begin{align*}
	\EE_{m,0} \| z_{M}^{(m)}\|^2 &\leq \EE_{m,0} \|  z_0^{(m)}\|^2 + 2\beta^2 M H_{\text{VR}}^2 + \Big(\alpha^2 + \frac{2 \alpha^2}{\lambda_C } \frac{1}{\beta} \Big)M\cdot 2\Big(\rho_{\max} \frac{ 1+\gamma }{ \min |\lambda(C)|}\Big)^2 G_{\text{VR}}^2 \\
	&\quad   - \frac{\lambda_C}{12}\beta \sum_{t=0}^{M-1}\EE_{m,0} \| z_t^{(m)}\|^2 +  (2K_3 + 2K_4)\beta.
	\end{align*}
	Next, we move the term $\sum_{t=0}^{M-1}\EE_{m,0} \| z_t^{(m)}\|^2$ in the above inequality to the left-hand side and apply Jensen's inequality, we obtain that
	\begin{align*}
		\frac{\lambda_C}{12}\beta M \EE_{m,0} \| \tilde{z}^{(m)} \|^2 &\leq  \|  \tilde{z}^{(m-1)}\|^2 + 2\beta^2 M H_{\text{VR}}^2 \\
		&\quad+ \Big(\alpha^2 + \frac{2 \alpha^2}{\lambda_C } \frac{1}{\beta} \Big)M\cdot 2\Big(\rho_{\max} \frac{ 1+\gamma }{ \min |\lambda(C)|}\Big)^2 G_{\text{VR}}^2 +  (2K_3 + 2K_4)\beta.
	\end{align*}
	Lastly, we divide $\frac{\lambda_C}{12}\beta M $ on both sides of the above inequality and obtain that
	\begin{align*}
	\EE\| \tilde{z}^{(m)} \|^2 &\leq   \frac{1}{M\beta} \cdot \frac{12}{\lambda_C }\EE \|  \tilde{z}^{(m-1)}\|^2 +  \beta \cdot \frac{24}{\lambda_C }   H_{\text{VR}}^2 + \Big(\frac{\alpha^2}{\beta} + \frac{2 }{\lambda_C } \frac{\alpha^2}{\beta^2} \Big) \cdot \frac{24}{\lambda_C }\Big(\rho_{\max} \frac{ 1+\gamma }{ \min |\lambda(C)|}\Big)^2 G_{\text{VR}}^2 \\
	&\quad + \frac{1}{  M} \frac{24}{\lambda_C } (K_3 + K_4),
	\end{align*}
	which, after telescoping, leads to 
	\begin{align*}
	\EE\| \tilde{z}^{(m)} \|^2 &\leq   \Big(\frac{1}{M\beta} \cdot \frac{12}{\lambda_C }\Big)^m \EE \|  \tilde{z}^{(0)}\|^2 \\
	&\quad  + \frac{1}{1 - \frac{1}{M\beta} \cdot \frac{12}{\lambda_C } }\Big[ \beta \cdot \frac{24}{\lambda_C }   H_{\text{VR}}^2 + \Big(\frac{\alpha^2}{\beta} + \frac{2 }{\lambda_C } \frac{\alpha^2}{\beta^2} \Big) \cdot \frac{24}{\lambda_C }\Big(\rho_{\max} \frac{ 1+\gamma }{ \min |\lambda(C)|}\Big)^2 G_{\text{VR}}^2 + \frac{1}{  M} \frac{24}{\lambda_C } (K_3 + K_4)\Big]
	\end{align*}
	To further simplify the above inequality, we assume $\beta < 1$ and $M\beta > \frac{24}{\lambda_{C}}$. Then, we have
	\begin{align*}
	\EE\| \tilde{z}^{(m)} \|^2 &\leq   \Big(\frac{1}{M\beta} \cdot \frac{12}{\lambda_C }\Big)^m \EE \|  \tilde{z}^{(0)}\|^2 \\
	&\quad  + 2 \cdot \Big[ \beta \cdot \frac{24}{\lambda_C }   H_{\text{VR}}^2 + \frac{\alpha^2}{\beta^2} \cdot \Big(1  + \frac{2 }{\lambda_C }  \Big) \cdot \frac{24}{\lambda_C }\Big(\rho_{\max} \frac{ 1+\gamma }{ \min |\lambda(C)|}\Big)^2 G_{\text{VR}}^2 + \frac{1}{  M} \frac{24}{\lambda_C } (K_3 + K_4)\Big].
	\end{align*}

\end{proof}

\begin{lemma}[Preliminary Bound for $\sum_{t=0}^{M-1} \|z_{t}^{(m)}\|^2$]
	\label{lemma: pre-bound z}
	Under the same assumptions as those of \Cref{thm: markov}, choose the learning rate $\alpha$ and $\beta$  such that 
	\begin{align*}
		\frac{\lambda_{C}}{12}\beta - 10\beta^2 - 10\gamma^2\rho_{\max}^2 \Big(\alpha^2 + \frac{2 \alpha^2}{\lambda_C } \frac{1}{\beta} \Big)\cdot  \Big(\rho_{\max} \frac{ 1+\gamma }{ \min |\lambda(C)|}\Big)^2 \geq \frac{\lambda_{C}}{16}\beta.
	\end{align*}
	Then, the following preliminary bound holds. 
	\begin{align*}
	&   \frac{\lambda_{C}}{16}\beta  \sum_{t=0}^{M-1}\EE_{m,0}\|z_t^{(m)}\|^2 \\
	\leq & \Big[  1+\Big[10\beta^2 + 10\gamma^2\rho_{\max}^2 \Big(\alpha^2 + \frac{2 \alpha^2}{\lambda_C } \frac{1}{\beta} \Big)\cdot  \Big(\rho_{\max} \frac{ 1+\gamma }{ \min |\lambda(C)|}\Big)^2  \Big]M  \Big]\EE_{m,0}\|  \tilde{z}^{(m-1)}\|^2\\
	&+ 10 (1+\gamma)^2 \rho^2 _{\max} \cdot \Big[\Big(1 + \frac{\gamma \rho_{\max}}{ \min |\lambda(C)|}\Big)^2 \Big(\alpha^2 + \frac{2 \alpha^2}{\lambda_C } \frac{1}{\beta} \Big)\cdot  \Big(\rho_{\max} \frac{ 1+\gamma }{ \min |\lambda(C)|}\Big)^2   \\
	&+  \Big( 1 + \frac{1}{ \min |\lambda(C)|} \Big)^2 \beta^2  \Big] \Big( \sum_{t=0}^{M-1}\EE_{m,0} \| \theta_{t}^{(m)} - \theta^\ast\|^2  + M \EE_{m,0}\| \tilde{\theta}^{(m-1)} - \theta^\ast\|^2 \Big) \\ 
	&+    (2K_3+2K_4)\beta + 10K_5 \beta^2 +  10\Big(\rho_{\max} \frac{ 1+\gamma }{ \min |\lambda(C)|}\Big)^2  K_1 \Big(\alpha^2 + \frac{2 \alpha^2}{\lambda_C } \frac{1}{\beta} \Big) .
	\end{align*}
\end{lemma}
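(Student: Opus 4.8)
The plan is to mirror the argument of \Cref{lemma: conv-z}, but to retain the \emph{refined} (rather than constant) variance bounds so that the right-hand side is expressed in terms of $\|\theta_t^{(m)}-\theta^\ast\|^2$, $\|z_t^{(m)}\|^2$, $\|\tilde{\theta}^{(m-1)}-\theta^\ast\|^2$ and $\|\tilde{z}^{(m-1)}\|^2$. First I would start from the one-step update of $z_t^{(m)}$ derived from the update of $w_t^{(m)}$ together with $R_w \ge 2\|C^{-1}\|\|A\|R_\theta$, and expand $\|z_{t+1}^{(m)}\|^2$ exactly as in \cref{eq: mc-tmp-10}. The cross term $2\langle z_t^{(m)}, C^{-1}A(\theta_{t+1}^{(m)}-\theta_t^{(m)})\rangle$ I bound by Young's inequality as in \cref{eq: mc-tmp-11}, producing $\frac{\lambda_C}{2}\beta\|z_t^{(m)}\|^2$ plus a term $\frac{2}{\lambda_C}\frac{1}{\beta}\|C^{-1}A(\theta_{t+1}^{(m)}-\theta_t^{(m)})\|^2$ which, using $\|C^{-1}A\|\le \rho_{\max}\frac{1+\gamma}{\min|\lambda(C)|}$, is controlled by the squared $G$-pseudogradient.

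Next, instead of invoking the uniform bounds of \Cref{lemma: GVR-const,lemma: HVR-const}, I would apply the Markovian refined variance bounds---the analogues of \Cref{lemma: iid GVR,lemma: iid HVR}, of which \Cref{lemma: GVR} is the one used for the $G$-term---to the two squared-pseudogradient terms. These replace each variance term by $5(1+\gamma)^2\rho_{\max}^2(\cdots)^2\big(\|\theta_t^{(m)}-\theta^\ast\|^2+\|\tilde{\theta}^{(m-1)}-\theta^\ast\|^2\big)$, a $z$-dependent piece ($5\gamma^2\rho_{\max}^2(\|z_t^{(m)}\|^2+\|\tilde{z}^{(m-1)}\|^2)$ for $G$ and $5(\|z_t^{(m)}\|^2+\|\tilde{z}^{(m-1)}\|^2)$ for $H$), and the residuals $\frac{5K_1}{M}$ and $\frac{5K_5}{M}$. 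The delicate step is the inner product $2\beta\langle z_t^{(m)}, H_t^{(m)}(\theta_t^{(m)},z_t^{(m)})-H_t^{(m)}(\tilde{\theta}^{(m-1)},\tilde{z}^{(m-1)})+H^{(m)}(\tilde{\theta}^{(m-1)},\tilde{z}^{(m-1)})\rangle$: after taking $\EE_{m,0}$ it reduces to $\EE_{m,0}\langle z_t^{(m)}, H^{(m)}(\theta_t^{(m)},z_t^{(m)})\rangle$, which I decompose as in \cref{eq: mc-tmp-13} into a bias term, a term $\langle z_t^{(m)},(C^{(m)}-C)z_t^{(m)}\rangle$, and the negative-definite term $\langle z_t^{(m)},Cz_t^{(m)}\rangle$. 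Unlike the i.i.d.\ case these first two are not mean-zero under Markovian sampling, so I invoke \Cref{lemma: K3} and \Cref{lemma: K4}, yielding the combined bound $-\frac{7\lambda_C}{24}\EE_{m,0}\|z_t^{(m)}\|^2+\frac{K_3+K_4}{M}$ of \cref{eq: 100}.

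Collecting terms, the Young contribution $+\frac{\lambda_C}{2}\beta$ and the inner-product contribution $-\frac{7\lambda_C}{12}\beta$ combine to $-\frac{\lambda_C}{12}\beta\|z_t^{(m)}\|^2$, while the two refined variance bounds add back $10\beta^2\|z_t^{(m)}\|^2$ and $10\gamma^2\rho_{\max}^2(\alpha^2+\frac{2\alpha^2}{\lambda_C}\frac{1}{\beta})(\rho_{\max}\frac{1+\gamma}{\min|\lambda(C)|})^2\|z_t^{(m)}\|^2$. Moving these to the left produces precisely the coefficient appearing in the hypothesis, which is $\ge\frac{\lambda_C}{16}\beta$ by assumption. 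Telescoping the resulting one-step inequality over $t=0,\dots,M-1$ within the batch, using $z_0^{(m)}=\tilde{z}^{(m-1)}$ and discarding $\EE_{m,0}\|z_M^{(m)}\|^2\ge0$, then gives the claimed bound, with the residuals assembling into $(2K_3+2K_4)\beta$, $10K_5\beta^2$, and $10(\rho_{\max}\frac{1+\gamma}{\min|\lambda(C)|})^2K_1(\alpha^2+\frac{2\alpha^2}{\lambda_C}\frac{1}{\beta})$. The main obstacle is the bias control in the inner-product step: because the samples within a batch come from a Markov chain rather than being i.i.d., the terms $\langle z_t^{(m)},\bar{A}^{(m)}\theta_t^{(m)}+\bar{b}^{(m)}\rangle$ and $\langle z_t^{(m)},(C^{(m)}-C)z_t^{(m)}\rangle$ no longer vanish in conditional expectation, and quantifying their size through the mixing constant $\kappa\frac{\rho}{1-\rho}$ is exactly what \Cref{lemma: K3,lemma: K4} provide; once those are available the rest is bookkeeping.
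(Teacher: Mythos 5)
Your proposal is correct and follows essentially the same route as the paper's proof: the same one-step expansion of $\|z_{t+1}^{(m)}\|^2$ with Young's inequality on the $\langle z_t^{(m)}, C^{-1}A(\theta_{t+1}^{(m)}-\theta_t^{(m)})\rangle$ cross term, the same substitution of the refined Markovian variance bounds (\Cref{lemma: GVR,lemma: HVR}) in place of the constant-level ones, the same treatment of the inner product via \cref{eq: 100} (i.e.\ \Cref{lemma: K3,lemma: K4} plus negative definiteness of $C$), and the same coefficient arithmetic, telescoping over the batch, and use of the stepsize condition. Your bookkeeping of the constants ($-\frac{\lambda_C}{12}\beta$ net coefficient, the $10\beta^2$ and $10\gamma^2\rho_{\max}^2(\cdots)$ add-backs, and the residuals $(2K_3+2K_4)\beta$, $10K_5\beta^2$, $10(\cdots)^2K_1(\alpha^2+\frac{2\alpha^2}{\lambda_C\beta})$) matches the paper exactly.
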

\begin{proof}
	Similar to Lemma \ref{lemma: conv-z}, we firstly consider the one-step update of $z_{t}^{(m)}$: 
	\begin{align*}
	\| z_{t+1}^{(m)}\|^2  
	&\leq \|  z_t^{(m)}\|^2 + 2\beta^2 \|    H_{t}^{(m)}(\theta_{t}^{(m)}, z_{t}^{(m)})  - H_{t}^{(m)}(\tilde{\theta}^{(m-1)}, \tilde{z}^{(m-1)})  +  {H}^{(m)}(\tilde{\theta}^{(m-1)}, \tilde{z}^{(m-1)}) \|^2   \\
	&\quad + 2\| C^{-1}A(\theta_{t+1}^{(m)}  - \theta_{t}^{(m)})\|^2 +2\langle z_t^{(m)}, C^{-1}A(\theta_{t+1}^{(m)}  - \theta_{t}^{(m)})  \rangle \\
	&\quad +  2\beta \langle z_t^{(m)}, H_{t}^{(m)}(\theta_{t}^{(m)}, z_{t}^{(m)})  - H_{t}^{(m)}(\tilde{\theta}^{(m-1)}, \tilde{z}^{(m-1)})  +  {H}^{(m)}(\tilde{\theta}^{(m-1)}, \tilde{z}^{(m-1)}) \rangle \\
	&\leq  \|  z_t^{(m)}\|^2 + 2\beta^2 \|    H_{t}^{(m)}(\theta_{t}^{(m)}, z_{t}^{(m)})  - H_{t}^{(m)}(\tilde{\theta}^{(m-1)}, \tilde{z}^{(m-1)})  +  {H}^{(m)}(\tilde{\theta}^{(m-1)}, \tilde{z}^{(m-1)}) \|^2   \\
	&\quad + 2\| C^{-1}A(\theta_{t+1}^{(m)}  - \theta_{t}^{(m)})\|^2 +\frac{\lambda_C}{2}\beta \| z_t^{(m)}\|^2 + \frac{2}{\lambda_C } \frac{1}{\beta} \| C^{-1}A (\theta_{t+1} - \theta_{t})\|^2. \\
	&\quad +  2\beta \langle z_t^{(m)}, H_{t}^{(m)}(\theta_{t}^{(m)}, z_{t}^{(m)})  - H_{t}^{(m)}(\tilde{\theta}^{(m-1)}, \tilde{z}^{(m-1)})  +  {H}^{(m)}(\tilde{\theta}^{(m-1)}, \tilde{z}^{(m-1)}) \rangle \\
	&\leq \|  z_t^{(m)}\|^2 +\frac{\lambda_C}{2}\beta \| z_t^{(m)}\|^2 + 2\beta^2 \|    H_{t}^{(m)}(\theta_{t}^{(m)}, z_{t}^{(m)})  - H_{t}^{(m)}(\tilde{\theta}^{(m-1)}, \tilde{z}^{(m-1)})  +  {H}^{(m)}(\tilde{\theta}^{(m-1)}, \tilde{z}^{(m-1)}) \|^2   \\
	&\quad + \Big(\alpha^2 + \frac{2 \alpha^2}{\lambda_C } \frac{1}{\beta} \Big)\cdot 2\Big(\rho_{\max} \frac{ 1+\gamma }{ \min |\lambda(C)|}\Big)^2 \|    G_{t}^{(m)}(\theta_{t}^{(m)}, z_{t}^{(m)})  - G_{t}^{(m)}(\tilde{\theta}^{(m-1)}, \tilde{z}^{(m-1)})  +  {G}^{(m)}(\tilde{\theta}^{(m-1)}, \tilde{z}^{(m-1)}) \|^2 \\
	&\quad +  2\beta \langle z_t^{(m)}, H_{t}^{(m)}(\theta_{t}^{(m)}, z_{t}^{(m)})  - H_{t}^{(m)}(\tilde{\theta}^{(m-1)}, \tilde{z}^{(m-1)})  +  {H}^{(m)}(\tilde{\theta}^{(m-1)}, \tilde{z}^{(m-1)}) \rangle \numberthis \label{eq: mc-tmp20}
	\end{align*}
	where the second inequality applies the polarization identity to $\langle z_t^{(m)}, C^{-1}A(\theta_{t+1}^{(m)}  - \theta_{t}^{(m)})  \rangle $ .
	For the last term of \cref{eq: mc-tmp20}, it is bounded by \cref{eq: 100} as 
	\begin{align*}
		&\EE_{m,0} \langle z_t^{(m)}, H_{t}^{(m)}(\theta_{t}^{(m)}, z_{t}^{(m)})  - H_{t}^{(m)}(\tilde{\theta}^{(m-1)}, \tilde{z}^{(m-1)})  +  {H}^{(m)}(\tilde{\theta}^{(m-1)}, \tilde{z}^{(m-1)}) \rangle  \\
		\leq  &-\frac{7\lambda_C}{24}\EE_{m,0}\|z_t^{(m)}\|^2 + \frac{K_3 + K_4}{M}.
	\end{align*}
	To further bound \cref{eq: mc-tmp20}, we apply Lemma \ref{lemma: GVR} to bound its fourth term and apply Lemma \ref{lemma: HVR} to bound its third term. Then, we obtain that
	\begin{align*}
		& \EE_{m,0}\| z_{t+1}^{(m)}\|^2   \\
		\leq & \EE_{m,0}  \|  z_t^{(m)}\|^2 -\frac{\lambda_C}{12}\beta \EE_{m,0}\|z_t^{(m)}\|^2   + \frac{2K_3 + 2K_4}{M}\beta\\
		&+ 2\beta^2 \Big[5\Big(   \EE_{m,0}\| z_{t}^{(m)}\|^2 + \EE_{m,0}\| \tilde{z}^{(m-1)}\|^2\Big)  + \frac{5K_5}{M}\\
		&+    5 (1+\gamma)^2 \rho^2 _{\max}\Big( 1 + \frac{1}{ \min |\lambda(C)|} \Big)^2\Big( \EE_{m,0}\| \theta_{t}^{(m)} - \theta^\ast\|^2 + \EE_{m,0}\| \tilde{\theta}^{(m-1)} - \theta^\ast\|^2  \Big) \Big]\\
		&+ \Big(\alpha^2 + \frac{2 \alpha^2}{\lambda_C } \frac{1}{\beta} \Big)\cdot 2\Big(\rho_{\max} \frac{ 1+\gamma }{ \min |\lambda(C)|}\Big)^2 \Big[5 \gamma^2\rho_{\max}^2 \Big(\EE_{m,0}\| z_{t}^{(m)}\|^2  + \EE_{m,0}\|\tilde{z}^{(m-1)}\|^2\Big) + \frac{5K_1}{M} \\
		&+    5(1+\gamma)^2\rho^2_{\max}\Big(1 + \frac{\gamma \rho_{\max}}{ \min |\lambda(C)|}\Big)^2\Big( \EE_{m,0} \| \theta_{t}^{(m)} - \theta^\ast\|^2  + \EE_{m,0}\| \tilde{\theta}^{(m-1)} - \theta^\ast\|^2 \Big)  \Big].
	\end{align*}
	Next, we re-arrange the above inequality and obtain that
	\begin{align*}
			& \EE_{m,0}\| z_{t+1}^{(m)}\|^2   \\
		\leq & \EE_{m,0}  \|  z_t^{(m)}\|^2 - \Big[ \frac{\lambda_{C}}{12}\beta - 10\beta^2 - 10\gamma^2\rho_{\max}^2 \Big(\alpha^2 + \frac{2 \alpha^2}{\lambda_C } \frac{1}{\beta} \Big)\cdot  \Big(\rho_{\max} \frac{ 1+\gamma }{ \min |\lambda(C)|}\Big)^2 \Big]\EE_{m,0}\|z_t^{(m)}\|^2  \\
		&+ \Big[10\beta^2 + 10\gamma^2\rho_{\max}^2 \Big(\alpha^2 + \frac{2 \alpha^2}{\lambda_C } \frac{1}{\beta} \Big)\cdot  \Big(\rho_{\max} \frac{ 1+\gamma }{ \min |\lambda(C)|}\Big)^2  \Big] \EE_{m,0}\| \tilde{z}^{(m-1)}\|^2 \\
		&+ 10 (1+\gamma)^2 \rho^2 _{\max} \cdot \Big[\Big(1 + \frac{\gamma \rho_{\max}}{ \min |\lambda(C)|}\Big)^2 \Big(\alpha^2 + \frac{2 \alpha^2}{\lambda_C } \frac{1}{\beta} \Big)\cdot  \Big(\rho_{\max} \frac{ 1+\gamma }{ \min |\lambda(C)|}\Big)^2   \\
		&+  \Big( 1 + \frac{1}{ \min |\lambda(C)|} \Big)^2 \beta^2  \Big] \Big( \EE_{m,0} \| \theta_{t}^{(m)} - \theta^\ast\|^2  + \EE_{m,0}\| \tilde{\theta}^{(m-1)} - \theta^\ast\|^2 \Big) \\ 
		&+ \frac{1}{M} \cdot \Big[ (2K_3+2K_4)\beta + 10K_5 \beta^2 +  10\Big(\rho_{\max} \frac{ 1+\gamma }{ \min |\lambda(C)|}\Big)^2  K_1 \Big(\alpha^2 + \frac{2 \alpha^2}{\lambda_C } \frac{1}{\beta} \Big)\Big].
	\end{align*}
	Telescoping the above inequality over one batch, we obtain that
	\begin{align*}
		 & \Big[ \frac{\lambda_{C}}{12}\beta - 10\beta^2 - 10\gamma^2\rho_{\max}^2 \Big(\alpha^2 + \frac{2 \alpha^2}{\lambda_C } \frac{1}{\beta} \Big)\cdot  \Big(\rho_{\max} \frac{ 1+\gamma }{ \min |\lambda(C)|}\Big)^2 \Big]\sum_{t=0}^{M-1}\EE_{m,0}\|z_t^{(m)}\|^2 \\
		 \leq & \Big[  1+\Big[10\beta^2 + 10\gamma^2\rho_{\max}^2 \Big(\alpha^2 + \frac{2 \alpha^2}{\lambda_C } \frac{1}{\beta} \Big)\cdot  \Big(\rho_{\max} \frac{ 1+\gamma }{ \min |\lambda(C)|}\Big)^2  \Big]M  \Big]\EE_{m,0}\|  \tilde{z}^{(m-1)}\|^2\\
		 &+ 10 (1+\gamma)^2 \rho^2 _{\max} \cdot \Big[\Big(1 + \frac{\gamma \rho_{\max}}{ \min |\lambda(C)|}\Big)^2 \Big(\alpha^2 + \frac{2 \alpha^2}{\lambda_C } \frac{1}{\beta} \Big)\cdot  \Big(\rho_{\max} \frac{ 1+\gamma }{ \min |\lambda(C)|}\Big)^2   \\
		 &+  \Big( 1 + \frac{1}{ \min |\lambda(C)|} \Big)^2 \beta^2  \Big] \Big( \sum_{t=0}^{M-1}\EE_{m,0} \| \theta_{t}^{(m)} - \theta^\ast\|^2  + M \EE_{m,0}\| \tilde{\theta}^{(m-1)} - \theta^\ast\|^2 \Big) \\ 
		 &+    (2K_3+2K_4)\beta + 10K_5 \beta^2 +  10\Big(\rho_{\max} \frac{ 1+\gamma }{ \min |\lambda(C)|}\Big)^2  K_1 \Big(\alpha^2 + \frac{2 \alpha^2}{\lambda_C } \frac{1}{\beta} \Big) 
	\end{align*}
	To simplify the above inequality, we assume $ \frac{\lambda_{C}}{12}\beta - 10\beta^2 - 10\gamma^2\rho_{\max}^2 (\alpha^2 + \frac{2 \alpha^2}{\lambda_C } \frac{1}{\beta} ) (\rho_{\max} \frac{ 1+\gamma }{ \min |\lambda(C)|})^2 \geq \frac{\lambda_{C}}{16}\beta$ and obtain the desired result. 
\end{proof}

\section{Other Supporting Lemmas for Proving \Cref{thm: markov}}
\begin{lemma}\label{lemma: K1}Under the same assumption as those of Theorem \ref{thm: markov}, the following inequality holds.
	\begin{align*}
	\EE_{m,0}  \|\widehat{A}^{(m)}\theta^\ast  + \widehat{b}^{(m)} \|^2 \leq \frac{K_1}{M},
	\end{align*}
	where $K_1$ is defined as 
	\begin{align}\label{eq: def-K1-mc}
		K_1 := \Big[ (1+\gamma)R_\theta + r_{\max} \Big]^2\rho^2_{\max}\Big(1 + \frac{\gamma \rho_{\max}}{ \min |\lambda(C)|}\Big)^2  \cdot \Big( 1+\kappa \frac{2 \rho}{1-\rho} \Big).
	\end{align} 
\end{lemma}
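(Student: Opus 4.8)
The plan is to write the batch quantity as an empirical average of per-sample terms with vanishing stationary mean, and then control its second moment through geometric ergodicity. Setting $g_t := \widehat{A}_t^{(m)}\theta^\ast + \widehat{b}_t^{(m)}$, the definitions $\widehat{A}^{(m)} = \frac{1}{M}\sum_{t=0}^{M-1}\widehat{A}_t^{(m)}$ and $\widehat{b}^{(m)} = \frac{1}{M}\sum_{t=0}^{M-1}\widehat{b}_t^{(m)}$ give $\widehat{A}^{(m)}\theta^\ast + \widehat{b}^{(m)} = \frac{1}{M}\sum_{t=0}^{M-1} g_t$. Two facts drive the argument. First, since $\widehat{A}_t^{(m)}\theta^\ast + \widehat{b}_t^{(m)} = (A_t^{(m)}\theta^\ast + b_t^{(m)}) - B_t^{(m)}C^{-1}(A\theta^\ast + b)$ and $\theta^\ast = -A^{-1}b$ forces $A\theta^\ast + b = 0$, the expectation under $\mu_{\pi_b}$ satisfies $\EE_{\mu_{\pi_b}}[g_t] = \widehat{A}\theta^\ast + \widehat{b} = (I - BC^{-1})(A\theta^\ast + b) = 0$, so $g_t$ is a stationary-mean-zero function of the transition $x_t$. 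Second, by the boundedness Assumption~\ref{ass: boundedness} (cf.\ Lemma~\ref{lemma: const-2}) each term admits the deterministic bound $\|g_t\| \le G_0 := [(1+\gamma)R_\theta + r_{\max}]\rho_{\max}\big(1 + \frac{\gamma \rho_{\max}}{\min|\lambda(C)|}\big)$, so that $K_1 = G_0^2\big(1 + \kappa\frac{2\rho}{1-\rho}\big)$.

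Next I would expand the square of the average,
\begin{align*}
\EE_{m,0}\Big\| \frac{1}{M}\sum_{t=0}^{M-1} g_t \Big\|^2 = \frac{1}{M^2}\sum_{t=0}^{M-1}\EE_{m,0}\|g_t\|^2 + \frac{1}{M^2}\sum_{i\ne j}\EE_{m,0}\langle g_i, g_j\rangle,
\end{align*}
and treat the two sums separately. The diagonal sum is immediate from $\|g_t\|\le G_0$: $\sum_t \EE_{m,0}\|g_t\|^2 \le M G_0^2$, which already produces the leading $\frac{1}{M}G_0^2$ and recovers the i.i.d.\ bound of Lemma~\ref{lemma: iid GVR}. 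The entire difficulty is the off-diagonal sum, where, unlike the i.i.d.\ case, the cross terms do not vanish because the samples in $B_m$ lie on a single Markovian trajectory.

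To control a cross term with $i<j$, I would reduce to the underlying state chain by defining $\bar g(s) := \EE[g_t \mid s_t = s]$ (expectation over the action and next state), which still satisfies $\|\bar g(s)\| \le G_0$ and $\EE_{\mu_{\pi_b}}[\bar g] = 0$. Conditioning on the filtration up to step $i$ and using the Markov property gives $\EE_{m,0}\langle g_i, g_j\rangle = \EE_{m,0}\langle g_i,\, \EE[\bar g(s_j)\mid s_{i+1}]\rangle$; since $\bar g$ is bounded with zero stationary mean, centering $\bar g$ and invoking the geometric ergodicity of Assumption~\ref{ass: ergodicity} converts the total-variation decay into $\|\EE[\bar g(s_j)\mid s_{i+1}]\| \le G_0\,\kappa\rho^{j-i}$, whence $\EE_{m,0}\langle g_i, g_j\rangle \le G_0^2\,\kappa\rho^{j-i}$ using $\|g_i\|\le G_0$. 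Summing the geometric series then yields $\sum_{i\ne j}\EE_{m,0}\langle g_i, g_j\rangle \le 2G_0^2\kappa\sum_{i<j}\rho^{j-i} \le 2G_0^2\kappa\, M\,\frac{\rho}{1-\rho}$, so that
\begin{align*}
\EE_{m,0}\Big\| \frac{1}{M}\sum_{t=0}^{M-1} g_t \Big\|^2 \le \frac{1}{M^2}\Big( M G_0^2 + 2G_0^2\kappa\, M\,\frac{\rho}{1-\rho}\Big) = \frac{G_0^2}{M}\Big(1 + \kappa\frac{2\rho}{1-\rho}\Big) = \frac{K_1}{M},
\end{align*}
which is the claim.

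The main obstacle I anticipate is the step translating the mixing bound into the conditional-expectation estimate. Because $g_t$ depends on the full transition tuple $x_t$ rather than on a single state, one must first pass to $\bar g$ on the state chain and then argue carefully that, since $\bar g$ is bounded and stationary-mean-zero, the total-variation bound $\kappa\rho^{j-i}$ yields the inner-product decay $G_0^2\kappa\rho^{j-i}$ with precisely the constant needed to produce the factor $1 + \kappa\frac{2\rho}{1-\rho}$ rather than a looser one. The remaining pieces—the deterministic per-sample bound and the geometric summation—are routine.
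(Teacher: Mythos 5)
Your proposal is correct and follows essentially the same route as the paper's proof: the same diagonal/off-diagonal expansion of the squared batch average, the same deterministic per-sample bound $G_0$ (the paper's Lemma~\ref{lemma: const-2}), the same key cancellation $\widehat{A}\theta^\ast + \widehat{b} = 0$, and the same use of Assumption~\ref{ass: ergodicity} to get the $G_0^2\kappa\rho^{j-i}$ decay of cross terms followed by geometric summation. The only cosmetic difference is that you bound the conditional expectation of the combined term $g_j$ at once via the state function $\bar g$, whereas the paper splits it into the $\widehat{A}_j^{(m)}-\widehat{A}$ and $\widehat{b}_j^{(m)}-\widehat{b}$ pieces before applying the mixing bound; both yield the identical constant $K_1$.
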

\begin{proof}
	Recall that $\widehat{A}^{(m)} = \sum_{t=0}^{M-1}\widehat{A}^{(m)}_t$ and  $\widehat{b}^{(m)} = \sum_{t=0}^{M-1}\widehat{b}_{t}^{(m)}$. We expand the square as follows:
	\begin{align*}
		\|\widehat{A}^{(m)}\theta^\ast  + \widehat{b}^{(m)} \|^2 &\leq  \|\sum_{t=0}^{M-1}\widehat{A}^{(m)}_t\theta^\ast  + \sum_{t=0}^{M-1}\widehat{b}_{t}^{(m)} \|^2 \\
		&= \frac{1}{M^2} \Big[  \sum_{i=j} \| \widehat{A}_i^{(m)}\theta^\ast  + \widehat{b}_i^{(m)}\|^2  + \sum_{i\neq j} \langle \widehat{A}_i^{(m)}\theta^\ast  + \widehat{b}_i^{(m)}, \widehat{A}_j^{(m)}\theta^\ast  + \widehat{b}_j^{(m)}\rangle \Big] \\
		&\leq \frac{1}{M} \cdot \Big[ (1+\gamma)R_\theta + r_{\max} \Big]^2\rho^2_{\max}\Big(1 + \frac{\gamma \rho_{\max}}{ \min |\lambda(C)|}\Big)^2 + \frac{1}{M^2 }\sum_{i\neq j} \langle \widehat{A}_i^{(m)}\theta^\ast  + \widehat{b}_i^{(m)}, \widehat{A}_j^{(m)}\theta^\ast  + \widehat{b}_j^{(m)}\rangle,
	\end{align*}
	where in the last step we apply Lemma \ref{lemma: const-2} to bound the first term. Now we consider the conditional expectation of the second inner product term. Without loss of generality, we assume $i<j$, then 
	\begin{align*}
		&\EE_{m,0} \langle \widehat{A}_i^{(m)}\theta^\ast  + \widehat{b}_i^{(m)}, \widehat{A}_j^{(m)}\theta^\ast  + \widehat{b}_j^{(m)}\rangle \\
		=&\EE_{m,0}\langle \widehat{A}_i^{(m)}\theta^\ast  + \widehat{b}_i^{(m)}, \EE_{m,i}\Big[\widehat{A}_j^{(m)}\theta^\ast  + \widehat{b}_j^{(m)}\Big]\rangle \\
		\leq& \Big[ (1+\gamma)R_\theta + r_{\max} \Big]\rho_{\max}\Big(1 + \frac{\gamma \rho_{\max}}{ \min |\lambda(C)|}\Big) \cdot \EE_{m,0} \|  \EE_{m,i}\Big[\Big(\widehat{A}_j^{(m)}\theta^\ast  - \widehat{A}\theta^\ast \Big)+ \widehat{A}\theta^\ast + \Big(\widehat{b}_j^{(m)} - \widehat{b}\Big) + \widehat{b}\Big]   \| \\
		\leq&  \Big[ (1+\gamma)R_\theta + r_{\max} \Big]\rho_{\max}\Big(1 + \frac{\gamma \rho_{\max}}{ \min |\lambda(C)|}\Big) \cdot\Big[    \EE_{m,0} \Big( \| \EE_{m,i} \widehat{A}_j^{(m)} - \widehat{A} \|  R_\theta \Big) + \EE_{m,0} \Big(\| \EE_{m,i} \widehat{b}_j^{(m)} - \widehat{b} \|\Big)\Big]\numberthis \label{eq: mc-tmp3}
	\end{align*}
	where in the last inequality we apply the equation $\widehat{A}\theta^\ast = (A - BC^{-1}A)(-A^{-1}b) = - b + BC^{-1} b = - \widehat{b}$. Moreover, by Assumption \ref{ass: ergodicity}, we have
	\begin{align*}
		\| \EE_{m,i} \widehat{A}_j^{(m)} - \widehat{A} \| &= \| \int \widehat{A}(s) \ \dd \PP(\cdot | s_{j-i}^{(m)}) - \int \widehat{A}(s)\ \dd \mu_{\pi_b}\|  \\
		&\leq  \| \widehat{A}(s) \| \cdot \text{dist}( \PP(\cdot | s_{j-i}^{(m)}),  \mu_{\pi_b}) \\
		&\leq  (1+\gamma)\rho_{\max}\Big(1 + \frac{\gamma \rho_{\max}}{ \min |\lambda(C)|}\Big) \kappa\rho^{j-i}. \numberthis\label{eq: mc-tmp1}
	\end{align*}  
	Similarly,
	\begin{align}\label{eq: mc-tmp2}
		\| \EE_{m,i} \widehat{b}_j^{(m)} - \widehat{b} \| &\leq \rho_{\max} r_{\max}\Big(1 + \frac{\gamma \rho_{\max}}{ \min |\lambda(C)|}\Big) \kappa\rho^{j-i}.
	\end{align}
	Substituting \cref{eq: mc-tmp1} and  \cref{eq: mc-tmp2} into  \cref{eq: mc-tmp3}  yields that
	\begin{align*}
			&\EE_{m,0} \sum_{i\neq j} \langle \widehat{A}_i^{(m)}\theta^\ast  + \widehat{b}_i^{(m)}, \widehat{A}_j^{(m)}\theta^\ast  + \widehat{b}_j^{(m)}\rangle \\
			\leq& \Big[ (1+\gamma)R_\theta + r_{\max} \Big]^2\rho^2_{\max}\Big(1 + \frac{\gamma \rho_{\max}}{ \min |\lambda(C)|}\Big)^2 \cdot \kappa \frac{2M\rho}{1-\rho},
	\end{align*}
	which leads to the desired bound
	\begin{align*}
		\EE_{m,0}  \|\widehat{A}^{(m)}\theta^\ast  + \widehat{b}^{(m)} \|^2 \leq \frac{1}{M} \cdot   \Big[ (1+\gamma)R_\theta + r_{\max} \Big]^2\rho^2_{\max}\Big(1 + \frac{\gamma \rho_{\max}}{ \min |\lambda(C)|}\Big)^2  \cdot \Big( 1+\kappa \frac{2 \rho}{1-\rho} \Big).
	\end{align*}
\end{proof}

\begin{lemma}\label{lemma: K2}Under the same assumptions as those of Theorem \ref{thm: markov}, the following inequality holds:
	\begin{align*}
		\EE_{m,0} \langle \theta_{t}^{(m)} -\theta^\ast, (\widehat{A}^{(m)}   - \widehat{A})\theta_{t}^{(m)} +( \widehat{b}^{(m)} - \widehat{b} ) \rangle \leq \frac{\lambda_{\widehat{A}}}{4}\EE_{m,0}  \|\theta_{t}^{(m)} -\theta^\ast\|^2 +  \frac{K_2}{M},
	\end{align*}
	where $K_2$ is defined as 
	\begin{align}
	\label{eq: def-K2-mc}
	K_2 :=  \frac{2}{\lambda_{\widehat{A}}}\Big[R_\theta^2 (1+\gamma)^2 + r^2_{\max}\Big] \cdot 4 \rho_{\max}^2  \Big(1 + \frac{\gamma \rho_{\max}}{ \min |\lambda(C)|}\Big)^2 \Big[1 + \kappa \frac{\rho}{1-\rho} \Big].
	\end{align}
\end{lemma}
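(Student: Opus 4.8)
The plan is to reduce the claim to a quadratic piece that is absorbed into the $\tfrac{\lambda_{\widehat{A}}}{4}\EE_{m,0}\|\theta_t^{(m)}-\theta^\ast\|^2$ term on the right-hand side, plus a pure bias/variance piece of order $1/M$. First I would apply Cauchy--Schwarz to the inner product and then Young's inequality $ab\le \tfrac{\lambda_{\widehat{A}}}{4}a^2+\tfrac{1}{\lambda_{\widehat{A}}}b^2$ (i.e.\ the polarization identity with parameter $\lambda_{\widehat{A}}/2$), which gives
\[
\EE_{m,0}\langle \theta_t^{(m)}-\theta^\ast,\ (\widehat{A}^{(m)}-\widehat{A})\theta_t^{(m)}+(\widehat{b}^{(m)}-\widehat{b})\rangle \le \frac{\lambda_{\widehat{A}}}{4}\EE_{m,0}\|\theta_t^{(m)}-\theta^\ast\|^2 + \frac{1}{\lambda_{\widehat{A}}}\EE_{m,0}\big\|(\widehat{A}^{(m)}-\widehat{A})\theta_t^{(m)}+(\widehat{b}^{(m)}-\widehat{b})\big\|^2.
\]
It then suffices to show that the last squared-norm term is at most $8\rho_{\max}^2\big(1+\tfrac{\gamma\rho_{\max}}{\min|\lambda(C)|}\big)^2\big[R_\theta^2(1+\gamma)^2+r_{\max}^2\big]\big[1+\kappa\tfrac{\rho}{1-\rho}\big]/M$, since multiplying by $1/\lambda_{\widehat{A}}$ reproduces exactly $K_2/M$.

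To bound this squared norm I would expand it as in the proof of Lemma~\ref{lemma: K1}: writing $h_i:=(\widehat{A}^{(m)}_i-\widehat{A})\theta_t^{(m)}+(\widehat{b}^{(m)}_i-\widehat{b})$, the term equals $\tfrac{1}{M^2}\big[\sum_i\|h_i\|^2+\sum_{i\ne j}\langle h_i,h_j\rangle\big]$. For the diagonal, the projection step guarantees $\|\theta_t^{(m)}\|\le R_\theta$ deterministically, while the per-sample estimates underlying Lemma~\ref{lemma: const-2} give $\|\widehat{A}^{(m)}_i-\widehat{A}\|\le 2(1+\gamma)\rho_{\max}\big(1+\tfrac{\gamma\rho_{\max}}{\min|\lambda(C)|}\big)$ and $\|\widehat{b}^{(m)}_i-\widehat{b}\|\le 2r_{\max}\rho_{\max}\big(1+\tfrac{\gamma\rho_{\max}}{\min|\lambda(C)|}\big)$, whence $\|h_i\|^2\le 8\rho_{\max}^2\big(1+\tfrac{\gamma\rho_{\max}}{\min|\lambda(C)|}\big)^2\big[R_\theta^2(1+\gamma)^2+r_{\max}^2\big]$; this yields the leading $1/M$ contribution. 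For the cross terms with $i<j$ I would condition on $F_{m,i}$ and invoke geometric ergodicity (Assumption~\ref{ass: ergodicity}): the conditional means $\EE_{m,i}\widehat{A}^{(m)}_j$ and $\EE_{m,i}\widehat{b}^{(m)}_j$ differ from $\widehat{A},\widehat{b}$ by at most $\kappa\rho^{j-i}$ times the same per-sample magnitude, exactly as in Lemma~\ref{lemma: K1}. Summing the geometric series $\sum_{j>i}\rho^{j-i}\le \tfrac{\rho}{1-\rho}$ produces the factor $\big[1+\kappa\tfrac{\rho}{1-\rho}\big]$, and the overall $\tfrac{1}{M^2}$ prefactor collapses to the desired $1/M$ scaling.

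The main obstacle is the correlation between the inner iterate $\theta_t^{(m)}$ and the batch $B_m$: since the inner-loop samples $x_t^{(m)}$ are themselves drawn from $B_m$, the vector $\theta_t^{(m)}$ is \emph{not} independent of the $h_i$, so one cannot naively pull $\theta_t^{(m)}$ out of the conditional expectation when cancelling the cross terms. I would handle this by decoupling the \emph{magnitude} of $\theta_t^{(m)}$ from the mixing argument at the outset, bounding $\|(\widehat{A}^{(m)}_i-\widehat{A})\theta_t^{(m)}\|\le \|\widehat{A}^{(m)}_i-\widehat{A}\|\,R_\theta$ via the deterministic projection bound, so that the geometric-ergodicity cancellation is applied only to the single-sample quantities $\widehat{A}^{(m)}_j$ and $\widehat{b}^{(m)}_j$, whose conditional means relax to $\widehat{A},\widehat{b}$ irrespective of $\theta_t^{(m)}$. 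A secondary point worth checking is that each per-sample matrix is a sum of outer products of features with $\|\phi\|\le 1$, so its spectral and Frobenius norms agree and the resulting bound carries no dimension factor, consistent with the stated form of $K_2$.
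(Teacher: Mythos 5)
Your final argument is correct and is essentially the paper's own proof: Young's inequality with parameter $\lambda_{\widehat{A}}$, the deterministic projection bound $\|\theta_t^{(m)}\|\le R_\theta$ to decouple the iterate from the batch deviation, and then a diagonal-plus-cross-term expansion of $\EE_{m,0}\|\widehat{A}^{(m)}-\widehat{A}\|_F^2$ and $\EE_{m,0}\|\widehat{b}^{(m)}-\widehat{b}\|^2$ in which geometric ergodicity kills the cross terms exactly as in Lemma \ref{lemma: K1}; your constants reproduce $K_2$ exactly. One caution: the decoupling must be applied to the full batch average, $\|(\widehat{A}^{(m)}-\widehat{A})\theta_t^{(m)}\|\le\|\widehat{A}^{(m)}-\widehat{A}\|\,R_\theta$ \emph{before} expanding into per-sample terms (as the paper does), since cross terms of the form $\langle h_i,h_j\rangle$ with $\theta_t^{(m)}$ kept inside admit no conditional-mean cancellation --- the very measurability obstacle you identified; your phrase ``at the outset'' resolves this correctly, but a per-sample application of that bound inside the $h_i$-expansion would not.
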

\begin{proof}
	By the polarization identity and the Jensen's inequality, we obtain that
	\begin{align*}
		&\EE_{m,0} \langle \theta_{t}^{(m)} -\theta^\ast, (\widehat{A}^{(m)}   - \widehat{A})\theta_{t}^{(m)} +( \widehat{b}^{(m)} - \widehat{b} ) \rangle \\
		\leq & \frac{1}{2}\cdot \frac{\lambda_{\widehat{A}}}{2}\EE_{m,0}  \|\theta_{t}^{(m)} -\theta^\ast\|^2 + \frac{1}{2}\cdot \frac{2}{\lambda_{\widehat{A}}}\EE_{m,0} \| (\widehat{A}^{(m)}   - \widehat{A})\theta_{t}^{(m)} +( \widehat{b}^{(m)} - \widehat{b} )\|^2 \\
		\leq & \frac{\lambda_{\widehat{A}}}{4}\EE_{m,0}  \|\theta_{t}^{(m)} -\theta^\ast\|^2 +  \frac{2}{\lambda_{\widehat{A}}} R_\theta^2 \EE_{m,0} \| \widehat{A}^{(m)}   - \widehat{A}\|^2  + \frac{2}{\lambda_{\widehat{A}}}\EE_{m,0}  \| \widehat{b}^{(m)} - \widehat{b} \|^2 .
	\end{align*}
	Next, we further bound $\EE_{m,0}\| \widehat{A}^{(m)}   - \widehat{A}\|^2$ and $\EE_{m,0}  \| \widehat{b}^{(m)} - \widehat{b} \|^2 $, respectively. 
	\begin{align*}
		\EE_{m,0}\| \widehat{A}^{(m)}   - \widehat{A}\|^2 &\leq \EE_{m,0}\| \widehat{A}^{(m)}   - \widehat{A}\|_F^2 \\
		&\leq   \frac{1}{M^2} \EE_{m,0}\Big[  \sum_{i=j}\| \widehat{A}_i^{(m)}   - \widehat{A}\|_F^2   + \sum_{i\neq j} \langle \widehat{A}_i^{(m)}   - \widehat{A},  \widehat{A}_j^{(m)}   - \widehat{A}  \rangle\Big] \\
		&\leq \frac{1}{M^2} \EE_{m,0} \Big[ 4 (1+\gamma)^2\rho_{\max}^2\Big(1 + \frac{\gamma \rho_{\max}}{ \min |\lambda(C)|}\Big)^2 M + \sum_{i\neq j} \langle \widehat{A}_i^{(m)}   - \widehat{A},  \widehat{A}_j^{(m)}   - \widehat{A}  \rangle \Big].   
	\end{align*}
	For the last inner product term, without loss of generality, assume $i<j$ and we obtain that
	\begin{align*}
			&\EE_{m,0} \langle \widehat{A}_i^{(m)}   - \widehat{A},  \widehat{A}_j^{(m)}   - \widehat{A}  \rangle \\
			\leq & \EE_{m,0} \langle \widehat{A}_i^{(m)}   - \widehat{A},  \EE_{m,i} \widehat{A}_j^{(m)}   - \widehat{A}  \rangle \\
			\leq & 2 (1+\gamma)\rho_{\max}\Big(1 + \frac{\gamma \rho_{\max}}{ \min |\lambda(C)|}\Big) \EE_{m,0}\| \EE_{m,i} \widehat{A}_j^{(m)}   - \widehat{A} \|_F \\
			\leq & 2 (1+\gamma)^2\rho^2_{\max}\Big(1 + \frac{\gamma \rho_{\max}}{ \min |\lambda(C)|}\Big)^2 \kappa \rho^{j-i}. \numberthis \label{eq: mc-tmp4}
	\end{align*} 
	Summing the above inequality over all $i\neq j$ yields that
	\begin{align*}
		\EE_{m,0} \sum_{i\neq j} \langle \widehat{A}_i^{(m)}   - \widehat{A},  \widehat{A}_j^{(m)}   - \widehat{A}  \rangle \leq  4 (1+\gamma)^2\rho^2_{\max}\Big(1 + \frac{\gamma \rho_{\max}}{ \min |\lambda(C)|}\Big)^2 \kappa  \frac{ M\rho}{1-\rho},
	\end{align*}
	which implies that
	 \begin{align}
	 \EE_{m,0}\| \widehat{A}^{(m)}   - \widehat{A}\|^2 &\leq \frac{1}{M}\cdot  4 (1+\gamma)^2\rho_{\max}^2\Big(1 + \frac{\gamma \rho_{\max}}{ \min |\lambda(C)|}\Big)^2 \Big[1 + \kappa \frac{\rho}{1-\rho} \Big]. \label{eq: mc-tmp6}
 	 \end{align}
 	 Following the same approach, we obtain that
 	 \begin{align}
 	 	\EE_{m,0}\| \widehat{b}^{(m)}   - \widehat{b}\|^2 &\leq \frac{1}{M}\cdot 4 \rho_{\max}^2 r^2_{\max}\Big(1 + \frac{\gamma \rho_{\max}}{ \min |\lambda(C)|}\Big)^2 \Big[1 + \kappa \frac{\rho}{1-\rho} \Big]. \label{eq: mc-tmp5}
 	 \end{align}
 	 Combining  \cref{eq: mc-tmp4,eq: mc-tmp5,eq: mc-tmp6}, we obtain that
 	 \begin{align*}
 	 	&\EE_{m,0} \langle \theta_{t}^{(m)} -\theta^\ast, (\widehat{A}^{(m)}   - \widehat{A})\theta_{t}^{(m)} +( \widehat{b}^{(m)} - \widehat{b} ) \rangle \\
 	 	\leq & \frac{\lambda_{\widehat{A}}}{4}\EE_{m,0}  \|\theta_{t}^{(m)} -\theta^\ast\|^2 +  \frac{1}{M}\cdot \frac{2}{\lambda_{\widehat{A}}}\Big[R_\theta^2 (1+\gamma)^2 + r^2_{\max}\Big] \cdot 4 \rho_{\max}^2  \Big(1 + \frac{\gamma \rho_{\max}}{ \min |\lambda(C)|}\Big)^2 \Big[1 + \kappa \frac{\rho}{1-\rho} \Big].
 	 \end{align*}
	
\end{proof}

\begin{lemma}\label{lemma: K3}  
	Under the same assumption as those of Theorem \ref{thm: markov}, the following inequality holds:
	\begin{align*}
	\EE_{m,0} \langle z_t^{(m)}, \bar{A}^{(m)}\theta_{t}^{(m)} +  \bar{b}^{(m)}\rangle 
	\leq \frac{\lambda_C}{8}\EE_{m,0}\|z_t^{(m)}\|^2 + \frac{K_3}{M} 
	\end{align*}
	where $K_3$ is defined as 
	\begin{align}\label{eq: def-K3-mc}
		K_3 := \Big( \frac{32}{\lambda_C} \Big[R_\theta^2(1+\gamma)^2+   r^2_{\max}\Big]\cdot \rho_{\max}^2 +  \frac{16}{\lambda_C}  \frac{\rho_{\max}(1+\gamma) R_\theta + \rho_{\max} r_{\max}}{\min |\lambda(C)|}\Big)  \Big[ 1 + \kappa \frac{\rho}{1-\rho} \Big].
	\end{align}
\end{lemma}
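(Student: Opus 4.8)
The plan is to bound the inner product $\EE_{m,0} \langle z_t^{(m)}, \bar{A}^{(m)}\theta_{t}^{(m)} + \bar{b}^{(m)}\rangle$ by decomposing $\bar{A}^{(m)}\theta_t^{(m)} + \bar{b}^{(m)}$ around its stationary expectation. First I would recall that $\bar{A}\theta^\ast + \bar{b} = 0$, which follows from the definitions $\bar{A} = A - CC^{-1}A = 0$ and $\bar{b} = b - CC^{-1}b = 0$ (since $C_t^{(m)}$ averages to $C$); this is the analogue of the relation $\widehat{A}\theta^\ast + \widehat{b} = 0$ used in Lemma~\ref{lemma: K2}. Using this, I would write $\bar{A}^{(m)}\theta_t^{(m)} + \bar{b}^{(m)} = \bar{A}^{(m)}(\theta_t^{(m)} - \theta^\ast) + (\bar{A}^{(m)}\theta^\ast + \bar{b}^{(m)})$, and then apply the polarization inequality $2\langle u, v\rangle \le \frac{\lambda_C}{4}\|u\|^2 \cdot c + \frac{1}{c}\|v\|^2$ with an appropriate weight so that the $\|z_t^{(m)}\|^2$ coefficient lands at $\frac{\lambda_C}{8}$ as stated.

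The main work is to control the two resulting mean-square terms $\EE_{m,0}\|\bar{A}^{(m)}\|^2$ (together with $\|\theta_t^{(m)}-\theta^\ast\|$, bounded by $R_\theta$ via the projection) and $\EE_{m,0}\|\bar{A}^{(m)}\theta^\ast + \bar{b}^{(m)}\|^2$. For these I would mirror the Markovian-mixing argument already carried out in Lemmas~\ref{lemma: K1} and~\ref{lemma: K2}: expand the square of the batch average $\bar{A}^{(m)} = \frac{1}{M}\sum_t \bar{A}_t^{(m)}$ into diagonal and cross terms, bound each diagonal summand by a constant using the boundedness Assumption~\ref{ass: boundedness} (noting $\|\bar{A}_t^{(m)}\| = \|A_t^{(m)} - C_t^{(m)}C^{-1}A\|$ is controlled by $\rho_{\max}$, $\gamma$ and $\|C^{-1}A\|$), and bound the off-diagonal cross terms using the geometric ergodicity Assumption~\ref{ass: ergodicity}. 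The key estimate is that for $i < j$, conditioning on $F_{m,i}$ and invoking $d_{\text{TV}}(\PP(s_j \in \cdot \mid s_i), \mu_{\pi_b}) \le \kappa\rho^{j-i}$ gives $\|\EE_{m,i}\bar{A}_j^{(m)} - \bar{A}\| \le (\text{const})\,\kappa\rho^{j-i}$, and summing the geometric series $\sum_{i \ne j}\rho^{j-i}$ over a batch yields the factor $M \cdot \frac{\rho}{1-\rho}$. Dividing by $M^2$ produces the $\frac{1}{M}[1 + \kappa\frac{\rho}{1-\rho}]$ scaling that appears in $K_3$.

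I would then collect the constants: the $\frac{32}{\lambda_C}[R_\theta^2(1+\gamma)^2 + r_{\max}^2]\rho_{\max}^2$ piece arises from the $R_\theta^2\,\EE\|\bar{A}^{(m)}\|^2$ contribution and the reward-norm bound, while the $\frac{16}{\lambda_C}\frac{\rho_{\max}(1+\gamma)R_\theta + \rho_{\max}r_{\max}}{\min|\lambda(C)|}$ piece comes from the $C^{-1}A$ factor entering $\bar{A}_t^{(m)}$ and $\bar{b}_t^{(m)}$ through the term $C_t^{(m)}C^{-1}$. The main obstacle I anticipate is the bookkeeping of these operator-norm constants: unlike the $\widehat{A}_t^{(m)} = A_t^{(m)} - B_t^{(m)}C^{-1}A$ used earlier, here the correction uses $C_t^{(m)}$ rather than $B_t^{(m)}$, so the norm bounds involve $\|C_t^{(m)}\| \le 1$ (versus $\|B_t^{(m)}\| \le \gamma\rho_{\max}$) and a slightly different dependence on $\min|\lambda(C)|$; matching the exact coefficients $32$ and $16$ in~\eqref{eq: def-K3-mc} requires care with which factor of $\frac{1}{\min|\lambda(C)|}$ attaches to which term. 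The ergodicity and polarization steps are otherwise routine given the template established in Lemmas~\ref{lemma: K1}--\ref{lemma: K2}.
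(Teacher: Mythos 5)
Your proposal is correct in substance but follows a genuinely different decomposition from the paper's. The paper never centers at $\theta^\ast$: it writes
$\bar{A}^{(m)}\theta_t^{(m)} + \bar{b}^{(m)} = \big[(A^{(m)}-A)\theta_t^{(m)} + (b^{(m)}-b)\big] - \big[(C^{(m)}-C)C^{-1}A\,\theta_t^{(m)} + (C^{(m)}-C)C^{-1}b\big]$,
i.e.\ it splits the fluctuation according to which random matrix deviates from its stationary mean, then applies polarization with weight $\frac{\lambda_C}{16}$ on each of the two inner products and invokes the mixing bounds $\EE_{m,0}\|A^{(m)}-A\|_F^2$, $\EE_{m,0}\|b^{(m)}-b\|^2$, $\EE_{m,0}\|C^{(m)}-C\|_F^2 \le \calO(1/M)$ established in the style of Lemma~\ref{lemma: K2}. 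Your route instead uses $\bar{A}=0$, $\bar{b}=0$ to view $\bar{A}^{(m)}(\theta_t^{(m)}-\theta^\ast)$ and $\bar{A}^{(m)}\theta^\ast+\bar{b}^{(m)}$ as two mean-zero batch averages, bounding $\EE_{m,0}\|\bar{A}^{(m)}\|_F^2 \le \calO(1/M)$ by the same diagonal-plus-cross-term mixing argument and bounding the second piece by Lemma~\ref{lemma: K5} directly --- an economical reuse of an existing lemma that the paper's proof does not exploit. Both routes deliver $\frac{\lambda_C}{8}\EE_{m,0}\|z_t^{(m)}\|^2 + \calO(1/M)$, and both rest on the identical ergodicity machinery, so neither is more elementary. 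The trade-off you correctly anticipate is in the constant: because your decomposition keeps $C_t^{(m)}C^{-1}A$ inside $\bar{A}_t^{(m)}$, every resulting term inherits a factor $\big(1+\frac{1}{\min|\lambda(C)|}\big)^2$, so you obtain a valid bound but with a constant structurally different from (and generally larger than) the stated $K_3$, whose split form --- one piece free of $\frac{1}{\min|\lambda(C)|}$, one piece linear in it --- is an artifact of the paper's particular decomposition. Since $K_3$ only ever enters downstream (Lemmas~\ref{lemma: conv-z} and~\ref{lemma: pre-bound z}) as an unspecified $\calO(1/M)$ coefficient, this discrepancy is harmless in context, but strictly speaking your argument proves the lemma with a different $K_3$ than the one in \cref{eq: def-K3-mc}.
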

\begin{proof}
	Note that the following equations hold.
	\begin{align*}
		\EE_{m,0} \langle z_t^{(m)}, \bar{A}^{(m)}\theta_{t}^{(m)} +  \bar{b}^{(m)}\rangle &= \EE_{m,0} \langle z_t^{(m)},   ({A}^{(m)} - {C}^{(m)}C^{-1} A  ) \theta_{t}^{(m)} +  {b}^{(m)} - C^{(m)}C^{-1}b\rangle \\
		&= \EE_{m,0} \langle z_t^{(m)},   ({A}^{(m)} -  A  ) \theta_{t}^{(m)}+  {b}^{(m)} - b\rangle  \\
		&\quad - \EE_{m,0} \langle z_t^{(m)},   ({C}^{(m)} - {C})C^{-1} A   \theta_{t}^{(m)} + (C^{(m)}-C)C^{-1}b\rangle. \numberthis \label{eq: mc-tmp7} 
	\end{align*} 
	For the first term of \cref{eq: mc-tmp7}, we bound it using the polarization identity and Jensen's inequality as follows.
	\begin{align*}
		&\EE_{m,0} \langle z_t^{(m)},   ({A}^{(m)} -  A  ) \theta_{t}^{(m)}+  {b}^{(m)} - b\rangle \\
		\leq & \frac{1}{2}\cdot \frac{\lambda_C}{8}\EE_{m,0}\|z_t^{(m)}\|^2  + \frac{1}{2} \cdot \frac{8}{\lambda_C} \EE_{m,0}\| ({A}^{(m)} -  A  ) \theta_{t}^{(m)}+  {b}^{(m)} - b \|^2 \\
		\leq &   \frac{\lambda_C}{16}\EE_{m,0}\|z_t^{(m)}\|^2  +   \frac{8}{\lambda_C}\Big( R_\theta^2 \EE_{m,0}\| {A}^{(m)} -  A  \|^2_F  + \EE_{m,0}\| {b}^{(m)} - b \|^2\Big).
	\end{align*}
	Then, following a similar proof logic as that of Lemma \ref{lemma: K2}, we obtain that
	\begin{align*}
		\EE_{m,0}\| {A}^{(m)} -  A  \|^2_F \leq  \frac{1}{M}\cdot  4 (1+\gamma)^2\rho_{\max}^2  \Big[1 + \kappa \frac{\rho}{1-\rho} \Big],
	\end{align*}
	and
	\begin{align*}
	\EE_{m,0}\| {b}^{(m)} -  b  \|^2  \leq \frac{1}{M}\cdot 4 \rho_{\max}^2 r^2_{\max}  \Big[1 + \kappa \frac{\rho}{1-\rho} \Big].
	\end{align*}
	Combining the above bounds, we obtain the following bound for the first term of \cref{eq: mc-tmp7}
	\begin{align}
		&\EE_{m,0} \langle z_t^{(m)},   ({A}^{(m)} -  A  ) \theta_{t}^{(m)}+  {b}^{(m)} - b\rangle \nonumber\\
		&\leq \frac{\lambda_C}{16}\EE_{m,0}\|z_t^{(m)}\|^2  + \frac{1}{M}\cdot \frac{32}{\lambda_C} \Big[R_\theta^2(1+\gamma)^2+   r^2_{\max}\Big]\cdot \rho_{\max}^2\Big[1 + \kappa \frac{\rho}{1-\rho} \Big].\label{eq: mc-tmp9}
	\end{align}
	For the second term  of \cref{eq: mc-tmp7}, we have
	\begin{align*}
		 &- \EE_{m,0} \langle z_t^{(m)},   ({C}^{(m)} - {C})C^{-1} A   \theta_{t}^{(m)} + (C^{(m)}-C)C^{-1}b\rangle  \\
		 \leq & \frac{1}{2}\cdot \frac{\lambda_C}{8}\EE_{m,0}\|z_t^{(m)}\|^2  +  \frac{1}{2}\cdot \frac{8}{\lambda_C}\EE_{m,0}  \|C^{(m)} - C \|^2_F\cdot \frac{\rho_{\max}(1+\gamma) R_\theta + \rho_{\max} r_{\max}}{\min |\lambda(C)|}.
	\end{align*}
	Moreover, note that
	\begin{align*}
		\EE_{m,0}  \|C^{(m)} - C \|^2_F \leq \frac{1}{M} \cdot 4\Big[ 1 + \kappa \frac{\rho}{1-\rho} \Big],
	\end{align*}
	and therefore we have
	\begin{align}
	&- \EE_{m,0} \langle z_t^{(m)},   ({C}^{(m)} - {C})C^{-1} A   \theta_{t}^{(m)} + (C^{(m)}-C)C^{-1}b\rangle  \nonumber\\
	\leq& \frac{\lambda_C}{16}\EE_{m,0}\|z_t^{(m)}\|^2  +   \frac{1}{M} \cdot \frac{16}{\lambda_C}  \frac{\rho_{\max}(1+\gamma) R_\theta + \rho_{\max} r_{\max}}{\min |\lambda(C)|} \Big[ 1 + \kappa \frac{\rho}{1-\rho} \Big] \label{eq: mc-tmp8}
	\end{align} 
	Combining  \cref{eq: mc-tmp7,eq: mc-tmp8,eq: mc-tmp9}, we obtain that
	\begin{align*}
		&\EE_{m,0} \langle z_t^{(m)}, \bar{A}^{(m)}\theta_{t}^{(m)} +  \bar{b}^{(m)}\rangle \\
		\leq& \frac{\lambda_C}{8}\EE_{m,0}\|z_t^{(m)}\|^2 + \frac{1}{M} \cdot \Big( \frac{32}{\lambda_C} \Big[R_\theta^2(1+\gamma)^2+   r^2_{\max}\Big]\cdot \rho_{\max}^2 +  \frac{16}{\lambda_C}  \frac{\rho_{\max}(1+\gamma) R_\theta + \rho_{\max} r_{\max}}{\min |\lambda(C)|}\Big)  \Big[ 1 + \kappa \frac{\rho}{1-\rho} \Big].
	\end{align*} 
\end{proof}

\begin{lemma}\label{lemma: K4}
	Under the same assumptions as those of Theorem \ref{thm: markov}, the following inequality holds:
	$$ \EE_{m,0} \langle z_t^{(m)}, ({C}^{(m)} - C) z_{t}^{(m)} \rangle  \leq \frac{\lambda_C}{12}\EE_{m,0} \|  z_t^{(m)}\|^2  + \frac{K_4}{M},$$
	where $K_4$ is defined as 
	\begin{align}\label{eq: def-K4-mc}
		K_4 := \frac{12}{\lambda_C} R_w^2 \Big[ 1 + \kappa \frac{\rho}{1-\rho} \Big].
	\end{align} 
\end{lemma}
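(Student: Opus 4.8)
The plan is to split the bilinear form with Young's inequality so that one piece is exactly the desired $\tfrac{\lambda_C}{12}\|z_t^{(m)}\|^2$ term and the other is a pure variance term decaying like $1/M$. First I would apply the polarization bound $\langle u,v\rangle \le \tfrac{a}{2}\|u\|^2 + \tfrac{1}{2a}\|v\|^2$ with $u = z_t^{(m)}$, $v = (C^{(m)} - C)z_t^{(m)}$ and $a = \lambda_C/6$, which gives
\begin{align*}
\EE_{m,0}\langle z_t^{(m)}, (C^{(m)} - C)z_t^{(m)}\rangle \le \frac{\lambda_C}{12}\EE_{m,0}\|z_t^{(m)}\|^2 + \frac{3}{\lambda_C}\EE_{m,0}\|(C^{(m)} - C)z_t^{(m)}\|^2.
\end{align*}
The first summand is already in the required form, so all the remaining work is to show the second summand is at most $K_4/M$.

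Next I would apply Cauchy--Schwarz, $\|(C^{(m)} - C)z_t^{(m)}\| \le \|C^{(m)} - C\|\,\|z_t^{(m)}\|$, together with the uniform bound $\|z_t^{(m)}\| \le R_w$ that comes from the projection step (the $w$-iterate lies in the ball of radius $R_w$, and the radius condition $R_w \ge 2\|C^{-1}\|\|A\|R_\theta$ keeps the tracking shift $C^{-1}(b+A\theta_t^{(m)})$ bounded). This reduces the estimate to controlling the mean-square deviation of the batch matrix $C^{(m)} = \tfrac{1}{M}\sum_{i=0}^{M-1} C_i^{(m)}$ from its stationary mean $C$, namely $\EE_{m,0}\|C^{(m)} - C\|_F^2$.

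For that last quantity I would reuse the Markovian concentration argument already carried out for Lemma~\ref{lemma: K2} and Lemma~\ref{lemma: K3}: expand $\|C^{(m)} - C\|_F^2 = \tfrac{1}{M^2}\sum_{i,j}\langle C_i^{(m)} - C, C_j^{(m)} - C\rangle$, bound each diagonal term by $4$ using $\|C_i^{(m)}\|_F \le \|\phi(s_i)\|^2 \le 1$ and $\|C\|_F \le 1$ from Assumption~\ref{ass: boundedness}, and handle each off-diagonal term with $i<j$ by conditioning, $\EE_{m,0}\langle C_i^{(m)} - C, C_j^{(m)} - C\rangle = \EE_{m,0}\langle C_i^{(m)} - C, \EE_{m,i}C_j^{(m)} - C\rangle$, where geometric ergodicity (Assumption~\ref{ass: ergodicity}) yields $\|\EE_{m,i}C_j^{(m)} - C\| \le 2\kappa\rho^{\,j-i}$ via the total-variation mixing estimate. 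Summing the geometric series $\sum_{k\ge 1}\rho^k = \rho/(1-\rho)$ over the $O(M^2)$ pairs gives $\EE_{m,0}\|C^{(m)} - C\|_F^2 \le \tfrac{4}{M}\big[1 + \kappa\tfrac{\rho}{1-\rho}\big]$. Substituting this and $\|z_t^{(m)}\|^2 \le R_w^2$ into the second summand produces $\tfrac{3}{\lambda_C}R_w^2 \cdot \tfrac{4}{M}\big[1 + \kappa\tfrac{\rho}{1-\rho}\big] = K_4/M$, which closes the bound.

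The main obstacle is the off-diagonal Markovian estimate: unlike the i.i.d.\ setting where cross terms vanish in expectation, the consecutive samples within a batch are correlated, so the decay of $\|\EE_{m,i}C_j^{(m)} - C\|$ must be extracted from the $d_{\text{TV}}$ mixing bound and then summed carefully over the quadratically many index pairs to recover the $O(1/M)$ rate rather than $O(1)$. A secondary, more routine point is justifying the uniform tracking-error bound $\|z_t^{(m)}\| \le R_w$, which relies on the projection radii rather than on any martingale structure and is what makes the constant in $K_4$ clean.
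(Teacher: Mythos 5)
Your proposal is correct and follows essentially the same route as the paper's own (outlined) proof: the identical Young split with parameter $\lambda_C/6$ producing $\frac{\lambda_C}{12}\EE_{m,0}\|z_t^{(m)}\|^2 + \frac{3}{\lambda_C}\EE_{m,0}\|(C^{(m)}-C)z_t^{(m)}\|^2$, the bound $\|z_t^{(m)}\|\le R_w$ from the projection radii, and the Markovian concentration estimate $\EE_{m,0}\|C^{(m)}-C\|_F^2\le \frac{4}{M}\big[1+\kappa\frac{\rho}{1-\rho}\big]$ reused from the arguments of Lemmas \ref{lemma: K2} and \ref{lemma: K3}. The only cosmetic difference is your factor $2$ in $\|\EE_{m,i}C_j^{(m)}-C\|\le 2\kappa\rho^{j-i}$, whereas the paper's total-variation convention (as in \cref{eq: mc-tmp1}) gives $\kappa\rho^{j-i}$ and hence exactly the stated constant $K_4$.
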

\begin{proof}
	The proof is very similar to that of Lemma \ref{lemma: K2} and Lemma \ref{lemma: K3}, and we only outline the proof below. 
\begin{align*}
	\EE_{m,0} \langle z_t^{(m)}, ({C}^{(m)} - C) z_{t}^{(m)} \rangle &\leq \frac{1}{2} \frac{\lambda_C}{6}\EE_{m,0} \|  z_t^{(m)}\|^2 + \frac{1}{2}\frac{6}{\lambda_C} R_w^2\EE_{m,0} \|  {C}^{(m)} - C\|^2_F \\
	&\leq  \frac{\lambda_C}{12}\EE_{m,0} \|  z_t^{(m)}\|^2 +\frac{1}{M}\cdot \frac{12}{\lambda_C} R_w^2 \Big[ 1 + \kappa \frac{\rho}{1-\rho} \Big].
\end{align*}
\end{proof}

\begin{lemma}\label{lemma: K5}
	Under the same assumptions as those of Theorem \ref{thm: markov}, the following inequality holds:
	$$\EE_{m,0}\|\bar{A}^{(m)}\theta^\ast  + \bar{b}^{(m)} \|^2 \leq \frac{K_5}{M} $$
	where $K_5$ is defined as 
	\begin{align}\label{eq: def-K5-mc}
		K_5:=\Big[(1+\gamma)R_\theta + r_{\max}\Big]^2\rho_{\max}^2\Big( 1 + \frac{1}{ \min |\lambda(C)|} \Big)^2  \cdot \Big( 1   +    \kappa \frac{2\rho}{1-\rho}   \Big). 
	\end{align}
\end{lemma}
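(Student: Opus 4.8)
The plan is to mirror the proof of \Cref{lemma: K1} almost verbatim, replacing the hatted quantities $\widehat{A}^{(m)}_t,\widehat{b}^{(m)}_t$ by the barred quantities $\bar{A}^{(m)}_t,\bar{b}^{(m)}_t$; the only structural change is that the correction term now uses $C^{(m)}_t C^{-1}$ in place of $B^{(m)}_t C^{-1}$, which alters the pointwise constants but not the argument. First I would record the algebraic facts that $\bar{A}=A-CC^{-1}A=0$ and $\bar{b}=b-CC^{-1}b=0$, hence $\bar{A}\theta^\ast+\bar{b}=0$. This vanishing of the stationary mean of $\bar{A}^{(m)}_t\theta^\ast+\bar{b}^{(m)}_t$ is exactly what makes the geometric-ergodicity estimate effective, playing the same role as $\widehat{A}\theta^\ast+\widehat{b}=0$ did in \Cref{lemma: K1}.

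Next, writing $\bar{A}^{(m)}=\frac{1}{M}\sum_{t=0}^{M-1}\bar{A}^{(m)}_t$ and similarly for $\bar{b}^{(m)}$ (where $t$ indexes the consecutive Markovian samples of the batch $B_m$), I would expand the square into diagonal and off-diagonal parts:
\begin{align*}
\EE_{m,0}\|\bar{A}^{(m)}\theta^\ast+\bar{b}^{(m)}\|^2
= \frac{1}{M^2}\Big[\sum_{i=j}\EE_{m,0}\|\bar{A}^{(m)}_i\theta^\ast+\bar{b}^{(m)}_i\|^2
+ \sum_{i\neq j}\EE_{m,0}\big\langle \bar{A}^{(m)}_i\theta^\ast+\bar{b}^{(m)}_i,\ \bar{A}^{(m)}_j\theta^\ast+\bar{b}^{(m)}_j\big\rangle\Big].
\end{align*}
For the $M$ diagonal terms I would use the uniform pointwise bound $\|\bar{A}^{(m)}_i\theta^\ast+\bar{b}^{(m)}_i\|\le\rho_{\max}[(1+\gamma)R_\theta+r_{\max}](1+\frac{1}{\min|\lambda(C)|})$, which follows from \Cref{ass: boundedness} together with $\|\bar{A}^{(m)}_i\|\le\rho_{\max}(1+\gamma)(1+\frac{1}{\min|\lambda(C)|})$ and $\|\bar{b}^{(m)}_i\|\le\rho_{\max}r_{\max}(1+\frac{1}{\min|\lambda(C)|})$ (the factor $1+\frac{1}{\min|\lambda(C)|}$ arising because $\|C^{(m)}_i\|\le1$ and $\|C^{-1}\|=\frac{1}{\min|\lambda(C)|}$); this contributes $\frac{1}{M}[(1+\gamma)R_\theta+r_{\max}]^2\rho_{\max}^2(1+\frac{1}{\min|\lambda(C)|})^2$.

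The off-diagonal terms are where the Markovian dependence enters. Assuming without loss of generality $i<j$, I would use the tower property to rewrite each cross term as $\EE_{m,0}\big\langle \bar{A}^{(m)}_i\theta^\ast+\bar{b}^{(m)}_i,\ \EE_{m,i}[\bar{A}^{(m)}_j\theta^\ast+\bar{b}^{(m)}_j]\big\rangle$, bound the first factor by the same pointwise constant, and then control $\|\EE_{m,i}[\bar{A}^{(m)}_j\theta^\ast+\bar{b}^{(m)}_j]\|$ using $\bar{A}\theta^\ast+\bar{b}=0$. Expressing this conditional expectation as an integral against $\PP(\cdot\mid s^{(m)}_{j-i})$ and subtracting the stationary integral against $\mu_{\pi_b}$, \Cref{ass: ergodicity} gives $\|\EE_{m,i}\bar{A}^{(m)}_j-\bar{A}\|\le(1+\gamma)\rho_{\max}(1+\frac{1}{\min|\lambda(C)|})\kappa\rho^{j-i}$ and the analogous bound for $\bar{b}$ with $r_{\max}$ replacing $(1+\gamma)R_\theta$. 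Summing the resulting $\kappa\rho^{j-i}$ geometric decay over all ordered pairs yields a bound of $\kappa\frac{2M\rho}{1-\rho}$ (the factor $2$ covering both $i<j$ and $i>j$). Adding the diagonal and off-diagonal contributions and factoring out $\frac{1}{M}$ produces exactly $\frac{K_5}{M}$ with $K_5=[(1+\gamma)R_\theta+r_{\max}]^2\rho_{\max}^2(1+\frac{1}{\min|\lambda(C)|})^2(1+\kappa\frac{2\rho}{1-\rho})$.

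The main obstacle is purely the bookkeeping of pointwise constants, namely verifying the two ergodicity estimates uniformly in the starting state and correctly extracting the total-variation distance from \Cref{ass: ergodicity}; this is identical in spirit to \Cref{lemma: K1} and introduces no new difficulty, the only change being the substitution of the bound $\|C^{(m)}_t\|\le1$ for $\|B^{(m)}_t\|\le\gamma\rho_{\max}$, which accounts for the appearance of $(1+\frac{1}{\min|\lambda(C)|})$ rather than $(1+\frac{\gamma\rho_{\max}}{\min|\lambda(C)|})$ in $K_5$.
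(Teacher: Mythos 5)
Your proposal is correct and follows essentially the same route as the paper's proof: the same diagonal/off-diagonal expansion of $\EE_{m,0}\|\bar{A}^{(m)}\theta^\ast+\bar{b}^{(m)}\|^2$, the same constant-level bound for the $M$ diagonal terms, and the same tower-property-plus-geometric-ergodicity argument for the cross terms, with the $\kappa\rho^{j-i}$ decay summed over ordered pairs to give the $\kappa\frac{2\rho}{1-\rho}$ factor and hence exactly $K_5/M$. Your explicit observation that $\bar{A}=A-CC^{-1}A=0$ and $\bar{b}=b-CC^{-1}b=0$ (so $\bar{A}\theta^\ast+\bar{b}=0$) is precisely the fact the paper uses implicitly when it inserts the subtractions $\EE_{m,i}\bar{A}_j^{(m)}-\bar{A}$ and $\EE_{m,i}\bar{b}_j^{(m)}-\bar{b}$ in its cross-term bound.
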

\begin{proof}
	The proof is similar to that of Lemma \ref{lemma: K1} and we outline the proof below. We obtain that 
	\begin{align*}
	&\EE_{m,0}\|\bar{A}^{(m)}\theta^\ast  + \bar{b}^{(m)} \|^2\\
	\leq & \frac{1}{M^2}\EE_{m,0} \Big[  \sum_{i=j} \|\bar{A}_i^{(m)}\theta^\ast  + \bar{b}_i^{(m)}  \|^2 + \sum_{i\neq j} \langle \bar{A}_i^{(m)}\theta^\ast  + \bar{b}_i^{(m)}, \bar{A}_j^{(m)}\theta^\ast  + \bar{b}_j^{(m)}\rangle \Big] \\
	\leq & \frac{1}{M^2}\Big[ M \cdot \Big[(1+\gamma)R_\theta + r_{\max}\Big]^2\rho_{\max}^2\Big( 1 + \frac{1}{ \min |\lambda(C)|} \Big)^2 + \sum_{i\neq j} \EE_{m,0} \langle \bar{A}_i^{(m)}\theta^\ast  + \bar{b}_i^{(m)}, \bar{A}_j^{(m)}\theta^\ast  + \bar{b}_j^{(m)}\rangle \Big]. \numberthis \label{eq: tmp00}
	\end{align*}
	Cconsider the last term of \cref{eq: tmp00}. Without loss of generality, assume that $i<j$. Then
	\begin{align*}
		&\EE_{m,0} \langle \bar{A}_i^{(m)}\theta^\ast  + \bar{b}_i^{(m)}, \bar{A}_j^{(m)}\theta^\ast  + \bar{b}_j^{(m)}\rangle \\
		= & \EE_{m,0} \langle \bar{A}_i^{(m)}\theta^\ast  + \bar{b}_i^{(m)}, \EE_{m,i} \bar{A}_j^{(m)}\theta^\ast  + \EE_{m,i}  \bar{b}_j^{(m)}\rangle \\
		\leq & \EE_{m,0}\| \bar{A}_i^{(m)}\theta^\ast  + \bar{b}_i^{(m)} \| \cdot \EE_{m,0}\Big[   \| (\EE_{m,i} \bar{A}_j^{(m)} - \bar{A}) \theta^\ast \| + \|\EE_{m,i} \bar{b}_j^{(m)} - \bar{b}\|     \Big] \\ 
		\leq  &\Big[ (1+\gamma)R_\theta  + r_{\max}\Big]^2\rho_{\max}^2\Big( 1 + \frac{1}{ \min |\lambda(C)|} \Big)^2  \cdot  \kappa \rho^{j-i}  
	\end{align*}
	Summing the above inequality over all $i\neq j$ and substituting it into \cref{eq: tmp00}, we obtain that
	\begin{align*}
		&\EE_{m,0}\|\bar{A}^{(m)}\theta^\ast  + \bar{b}^{(m)} \|^2\\
		\leq & \frac{1}{M} \cdot  \Big[(1+\gamma)R_\theta + r_{\max}\Big]^2\rho_{\max}^2\Big( 1 + \frac{1}{ \min |\lambda(C)|} \Big)^2  \cdot \Big( 1   +    \kappa \frac{2\rho}{1-\rho}   \Big).
	\end{align*}
\end{proof} 
 
\begin{lemma}[One-Step Update of $\theta_{t}^{(m)}$] \label{lemma: GVR}
	Under the same assumption as those of Theorem \ref{thm: markov}, the square norm of one-step update of $\theta_{t}^{(m)}$ using Algorithm \ref{alg: markov} is bounded as 
	\begin{align*}
	&  	\EE_{m,0}\| G_{t}^{(m)}(\theta_{t}^{(m)}, z_{t}^{(m)})  - G_{t}^{(m)}(\tilde{\theta}^{(m-1)}, \tilde{z}^{(m-1)})  +  {G}^{(m)}(\tilde{\theta}^{(m-1)}, \tilde{z}^{(m-1)}) \|^2 \\
	\leq & 5(1+\gamma)^2\rho^2_{\max}\Big(1 + \frac{\gamma \rho_{\max}}{ \min |\lambda(C)|}\Big)^2\Big( \EE_{m,0} \| \theta_{t}^{(m)} - \theta^\ast\|^2  + \EE_{m,0}\| \tilde{\theta}^{(m-1)} - \theta^\ast\|^2 \Big) \\
	& + 5 \gamma^2\rho_{\max}^2 \Big(\EE_{m,0}\| z_{t}^{(m)}\|^2  + \EE_{m,0}\|\tilde{z}^{(m-1)}\|^2\Big) + \frac{5K_1}{M} 
	\end{align*}
	where $K_1$
	is specified in (\ref{eq: def-K1-mc}) of Lemma \ref{lemma: K1}.
\end{lemma}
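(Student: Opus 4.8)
The plan is to follow the i.i.d.\ template of \Cref{lemma: iid GVR} and to isolate the single place where the Markovian correlation of the samples in $B_m$ actually matters. First I would substitute the definitions $\widehat{A}_t^{(m)} = A_t^{(m)} - B_t^{(m)}C^{-1}A$ and $\hat{b}_t^{(m)} = b_t^{(m)} - B_t^{(m)}C^{-1}b$ into the variance-reduced increment $G_t^{(m)}(\theta_t^{(m)},z_t^{(m)}) - G_t^{(m)}(\tilde{\theta}^{(m-1)},\tilde{z}^{(m-1)}) + G^{(m)}(\tilde{\theta}^{(m-1)},\tilde{z}^{(m-1)})$ and regroup it around the optimum, inserting and subtracting $\widehat{A}_t^{(m)}\theta^\ast$ and using the identity $\widehat{A}\theta^\ast + \hat{b} = 0$. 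This writes the increment as a sum of five pieces, exactly as in \cref{eq: iid theta-one-step-update}: (i) $\widehat{A}_t^{(m)}(\theta_t^{(m)} - \theta^\ast)$, (ii) the deviation $(\widehat{A}_t^{(m)} - \widehat{A}^{(m)})(\tilde{\theta}^{(m-1)} - \theta^\ast)$, (iii) the batch bias $\widehat{A}^{(m)}\theta^\ast + \hat{b}^{(m)}$, (iv) $B_t^{(m)} z_t^{(m)}$, and (v) $(B_t^{(m)} - B^{(m)})\tilde{z}^{(m-1)}$. Applying the Jensen bound $\|\sum_{i=1}^5 a_i\|^2 \le 5\sum_{i=1}^5 \|a_i\|^2$ and then $\EE_{m,0}$ reduces the problem to bounding the five squared norms.

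For terms (i) and (iv) I would invoke the deterministic operator-norm bounds implied by \Cref{ass: boundedness}, namely $\|\widehat{A}_t^{(m)}\| \le (1+\gamma)\rho_{\max}(1 + \gamma\rho_{\max}/\min|\lambda(C)|)$ and $\|B_t^{(m)}\| \le \gamma\rho_{\max}$, producing the $\|\theta_t^{(m)} - \theta^\ast\|^2$ and $\|z_t^{(m)}\|^2$ contributions with the stated prefactors. Terms (ii) and (v) are deviation-from-conditional-mean terms: since the inner index is drawn uniformly from $B_m$, the batch operators satisfy $\widehat{A}^{(m)} = \EE[\widehat{A}_t^{(m)}\mid B_m]$ and $B^{(m)} = \EE[B_t^{(m)}\mid B_m]$, so the variance-reduction structure is preserved and I can apply $\Var(X)\le \EE X^2$ conditionally on $B_m$ to bound these by $\EE_{m,0}\|\widehat{A}_t^{(m)}\|^2\|\tilde{\theta}^{(m-1)} - \theta^\ast\|^2$ and $\EE_{m,0}\|B_t^{(m)}\|^2\|\tilde{z}^{(m-1)}\|^2$; the same operator-norm bounds then supply the coefficients $5(1+\gamma)^2\rho_{\max}^2(1+\gamma\rho_{\max}/\min|\lambda(C)|)^2$ and $5\gamma^2\rho_{\max}^2$. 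These steps are identical in spirit to the i.i.d.\ case and involve no correlation.

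The one place that genuinely changes, and the main obstacle, is term (iii), $\EE_{m,0}\|\widehat{A}^{(m)}\theta^\ast + \hat{b}^{(m)}\|^2$. Writing $\widehat{A}^{(m)}\theta^\ast + \hat{b}^{(m)} = \frac{1}{M}\sum_{i}(\widehat{A}_i^{(m)}\theta^\ast + \hat{b}_i^{(m)})$ and expanding the square produces $M$ diagonal terms plus the $M(M-1)$ cross terms $\langle \widehat{A}_i^{(m)}\theta^\ast + \hat{b}_i^{(m)}, \widehat{A}_j^{(m)}\theta^\ast + \hat{b}_j^{(m)}\rangle$. In the i.i.d.\ setting these cross terms vanish in expectation because each factor has mean $\widehat{A}\theta^\ast + \hat{b} = 0$; under Markovian sampling they no longer vanish but instead decay geometrically in $|i-j|$. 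This is precisely the content of \Cref{lemma: K1}, whose proof conditions on the earlier sample and uses the geometric-ergodicity bound of \Cref{ass: ergodicity} to sum the cross terms into the factor $1 + \kappa\frac{2\rho}{1-\rho}$, yielding $\EE_{m,0}\|\widehat{A}^{(m)}\theta^\ast + \hat{b}^{(m)}\|^2 \le K_1/M$ with the Markovian constant $K_1$. I would therefore simply cite \Cref{lemma: K1} for term (iii). Assembling the five bounds with the common factor $5$ and collecting coefficients then yields the claimed inequality, since the entire effect of the sample correlation has been absorbed into \Cref{lemma: K1}.
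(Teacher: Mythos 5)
Your proposal is correct and follows essentially the same route as the paper's proof: the identical five-term decomposition around $\theta^\ast$, Jensen's inequality with factor $5$, deterministic operator-norm bounds for the $\theta_t^{(m)}$ and $z_t^{(m)}$ terms, the conditional variance bound $\Var(X)\le \EE X^2$ (valid because the inner sample is drawn uniformly from $B_m$, so the batch averages are exactly the conditional means) for the two deviation terms, and delegation of the only genuinely Markovian term $\EE_{m,0}\|\widehat{A}^{(m)}\theta^\ast+\widehat{b}^{(m)}\|^2\le K_1/M$ to Lemma \ref{lemma: K1}. No gaps.
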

\begin{proof}
		By the definitions of $G_{t}^{(m)}(\cdot)$ and ${G}^{(m)}(\cdot)$ and the one-step update of  $\theta_{t}^{(m)}$, we obtain that
 \begin{align*}
 	 & \| G_{t}^{(m)}(\theta_{t}^{(m)}, z_{t}^{(m)})  - G_{t}^{(m)}(\tilde{\theta}^{(m-1)}, \tilde{z}^{(m-1)})  +  {G}^{(m)}(\tilde{\theta}^{(m-1)}, \tilde{z}^{(m-1)}) \|^2 \\
 	= & \| \widehat{A}_{t}^{(m)}\theta_{t}^{(m)} + \widehat{b}_{t}^{(m)}+ B_{t}^{(m)} z_{t}^{(m)} - \widehat{A}_{t}^{(m)}\tilde{\theta}^{(m-1)} - \widehat{b}_{t}^{(m)}- B_{t}^{(m)} \tilde{z}^{(m-1)}  + \widehat{A}^{(m)}\tilde{\theta}^{(m-1)} + \widehat{b}^{(m)}+ B^{(m)} \tilde{z}^{(m-1)} \|^2\\
 	= & \| \Big(\widehat{A}_{t}^{(m)}\theta_{t}^{(m)} - \widehat{A}_{t}^{(m)} \theta^\ast\Big) + \Big(\widehat{A}_{t}^{(m)} \theta^\ast - \widehat{A}_{t}^{(m)}\tilde{\theta}^{(m-1)}+ \widehat{A}^{(m)}\tilde{\theta}^{(m-1)} -  \widehat{A}^{(m)}\theta^\ast\Big) +  \Big(\widehat{A}^{(m)}\theta^\ast  + \widehat{b}^{(m)}\Big) \\
 	\quad &  + B_{t}^{(m)} z_{t}^{(m)}  - B_{t}^{(m)} \tilde{z}^{(m-1)}+ B^{(m)} \tilde{z}^{(m-1)} \|^2\\
 	\leq &  5 \|\widehat{A}_{t}^{(m)} \|^2 \| \theta_{t}^{(m)} - \theta^\ast\|^2 + 5\| \Big( \widehat{A}_{t}^{(m)}\tilde{\theta}^{(m-1)} - \widehat{A}_{t}^{(m)} \theta^\ast \Big)- \Big(\widehat{A}^{(m)}\tilde{\theta}^{(m-1)} -  \widehat{A}^{(m)}\theta^\ast\Big) \|^2 + 5 \|\widehat{A}^{(m)}\theta^\ast  + \widehat{b}^{(m)} \|^2 \\
 	\quad & + 5\|B_{t}^{(m)}\|^2 \| z_{t}^{(m)}\|^2 + 5 \| B_{t}^{(m)} \tilde{z}^{(m-1)} -  B^{(m)} \tilde{z}^{(m-1)}\|^2  \numberthis \label{eq: tmp13}
 \end{align*}
  where the last step applies Jensen's inequality. Next, we bound the third term of \cref{eq: tmp13} $\|\widehat{A}^{(m)}\theta^\ast  + \widehat{b}^{(m)} \|^2 $ using Lemma \ref{lemma: K1}, and note that the second term of \cref{eq: tmp13} can be bounded as 
 \begin{align*}
 	&\EE_{m,t-1} \| \Big( \widehat{A}_{t}^{(m)}\tilde{\theta}^{(m-1)} - \widehat{A}_{t}^{(m)} \theta^\ast \Big)- \Big(\widehat{A}^{(m)}\tilde{\theta}^{(m-1)} -  \widehat{A}^{(m)}\theta^\ast\Big) \|^2 \\
 	=& \Var_{m, t-1}\Big( \widehat{A}_{t}^{(m)}\tilde{\theta}^{(m-1)} - \widehat{A}_{t}^{(m)} \theta^\ast \Big) \\
 	\leq& \EE_{m,t-1} \Big( \widehat{A}_{t}^{(m)}\tilde{\theta}^{(m-1)} - \widehat{A}_{t}^{(m)} \theta^\ast \Big)^2 \\
 	\leq& \EE_{m,t-1} \| \widehat{A}_{t}^{(m)}\|^2 \| \tilde{\theta}^{(m-1)} - \theta^\ast\|^2, \numberthis \label{eq: tmp12}
 \end{align*} 
 and similarly, the last term of \cref{eq: tmp13} can be bounded as 
 \begin{align}\label{eq: tmp11}
 	\EE_{m,t-1} \| B_{t}^{(m)} \tilde{z}^{(m-1)} -  B^{(m)} \tilde{z}^{(m-1)}\|^2  \leq  \EE_{m,t-1} \| B_{t}^{(m)} \|^2 \|\tilde{z}^{(m-1)}\|^2
 \end{align}
 Combining Lemma \ref{lemma: K1}, (\ref{eq: tmp11}), (\ref{eq: tmp12}), and (\ref{eq: tmp13}), we get the desired bound 
 \begin{align*}
  &  	\EE_{m,0}\| G_{t}^{(m)}(\theta_{t}^{(m)}, z_{t}^{(m)})  - G_{t}^{(m)}(\tilde{\theta}^{(m-1)}, \tilde{z}^{(m-1)})  +  {G}^{(m)}(\tilde{\theta}^{(m-1)}, \tilde{z}^{(m-1)}) \|^2 \\
\leq & 5(1+\gamma)^2\rho^2_{\max}\Big(1 + \frac{\gamma \rho_{\max}}{ \min |\lambda(C)|}\Big)^2\Big( \EE_{m,0} \| \theta_{t}^{(m)} - \theta^\ast\|^2  + \EE_{m,0}\| \tilde{\theta}^{(m-1)} - \theta^\ast\|^2 \Big) \\
& + 5 \gamma^2\rho_{\max}^2 \Big(\EE_{m,0}\| z_{t}^{(m)}\|^2  + \EE_{m,0}\|\tilde{z}^{(m-1)}\|^2\Big)   +    \frac{1}{M} \cdot   5 K_1.
 \end{align*}
\end{proof}

\begin{lemma}[One-Step Update of $z_{t}^{(m)}$] \label{lemma: HVR}
	Under the same assumption as those of Theorem \ref{thm: markov}, the square norm of one-step update of $z_{t}^{(m)}$ using Algorithm \ref{alg: markov} is bounded as
	\begin{align*}
	& \EE_{m,0}\| H_{t}^{(m)}(\theta_{t}^{(m)}, z_{t}^{(m)})  - H_{t}^{(m)}(\tilde{\theta}^{(m-1)}, \tilde{z}^{(m-1)})  +  {H}^{(m)}(\tilde{\theta}^{(m-1)}, \tilde{z}^{(m-1)}) \|^2 \\
	\leq &  5 (1+\gamma)^2 \rho^2 _{\max}\Big( 1 + \frac{1}{ \min |\lambda(C)|} \Big)^2\Big( \EE_{m,0}\| \theta_{t}^{(m)} - \theta^\ast\|^2 + \EE_{m,0}\| \tilde{\theta}^{(m-1)} - \theta^\ast\|^2  \Big) \\
	& + 5\Big(   \EE_{m,0}\| z_{t}^{(m)}\|^2 + \EE_{m,0}\| \tilde{z}^{(m-1)}\|^2\Big)  + \frac{5K_5}{M}.
	\end{align*}
	where $K_5$ is specified in \cref{eq: def-K5-mc} of Lemma \ref{lemma: K5}.
\end{lemma}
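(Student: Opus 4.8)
The plan is to replay the proof of the $\theta$-update bound (Lemma \ref{lemma: GVR}) with $\widehat{A}_t^{(m)}, B_t^{(m)}$ replaced by $\bar{A}_t^{(m)}, C_t^{(m)}$, the only genuinely new ingredient being the batch-bias estimate of Lemma \ref{lemma: K5}. First I would substitute the definition $H_t^{(m)}(\theta,z) = \bar{A}_t^{(m)}\theta + \bar{b}_t^{(m)} + C_t^{(m)} z$ into the variance-reduced increment. Because the stochastic offset $\bar{b}_t^{(m)}$ appears with opposite signs in $H_t^{(m)}(\theta_t^{(m)},z_t^{(m)})$ and $H_t^{(m)}(\tilde\theta^{(m-1)},\tilde z^{(m-1)})$, it cancels, leaving an expression built from $\bar{A}_t^{(m)}$, $C_t^{(m)}$ and the batch averages $\bar{A}^{(m)}, \bar{b}^{(m)}, C^{(m)}$. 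I would keep in mind the structural fact specific to Algorithm \ref{alg: markov}: conditioned on the batch $B_m$, each inner sample $x_t^{(m)}$ is drawn uniformly from $B_m$, so $\EE[\bar{A}_t^{(m)}\mid B_m] = \bar{A}^{(m)}$ and $\EE[\bar{b}_t^{(m)}\mid B_m]=\bar{b}^{(m)}$; this is exactly why the increment is naturally centered at the \emph{batch} operators rather than at the population operators.

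Next I would center everything at $\theta^\ast$ and split the increment into five pieces, mirroring the decomposition in Lemma \ref{lemma: iid HVR}: (i) $\bar{A}_t^{(m)}(\theta_t^{(m)}-\theta^\ast)$, (ii) the variance term $(\bar{A}_t^{(m)}-\bar{A}^{(m)})(\theta^\ast-\tilde\theta^{(m-1)})$, (iii) the batch-bias term $\bar{A}^{(m)}\theta^\ast+\bar{b}^{(m)}$, (iv) $C_t^{(m)}z_t^{(m)}$, and (v) the variance term $(C_t^{(m)}-C^{(m)})\tilde z^{(m-1)}$, then apply Jensen's inequality $\|\sum_{i=1}^5 a_i\|^2\le 5\sum_{i=1}^5\|a_i\|^2$. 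For terms (i) and (iv) I use the uniform operator-norm bounds $\|\bar{A}_t^{(m)}\|\le(1+\gamma)\rho_{\max}(1+\tfrac{1}{\min|\lambda(C)|})$ and $\|C_t^{(m)}\|\le 1$, both of which follow from Assumption \ref{ass: boundedness}. For the variance terms (ii) and (v) I invoke $\Var X\le\EE X^2$ to pass to $\EE_{m,t-1}\|\bar{A}_t^{(m)}\|^2\|\tilde\theta^{(m-1)}-\theta^\ast\|^2$ and $\EE_{m,t-1}\|C_t^{(m)}\|^2\|\tilde z^{(m-1)}\|^2$ respectively, and then apply the same norm bounds. Term (iii) is controlled by Lemma \ref{lemma: K5}, which yields $\EE_{m,0}\|\bar{A}^{(m)}\theta^\ast+\bar{b}^{(m)}\|^2\le K_5/M$. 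Taking $\EE_{m,0}$ throughout and collecting the five contributions gives precisely the stated inequality.

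The hard part — and the only place where the Markovian structure really bites — is term (iii). In the i.i.d. analogue (Lemma \ref{lemma: iid HVR}) the cross products $\langle\bar{A}_i^{(m)}\theta^\ast+\bar{b}_i^{(m)},\,\bar{A}_j^{(m)}\theta^\ast+\bar{b}_j^{(m)}\rangle$ with $i\neq j$ vanish exactly, since distinct samples are independent and $\bar{A}\theta^\ast+\bar{b}=0$, leaving only the diagonal $O(1/M)$ term. Under Markovian sampling these cross terms are nonzero, and bounding their sum requires the geometric-ergodicity hypothesis (Assumption \ref{ass: ergodicity}) to show that each correlation decays like $\kappa\rho^{|i-j|}$; summing the geometric series is what produces the extra factor $(1+\kappa\tfrac{2\rho}{1-\rho})$ inside $K_5$. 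Since this is exactly the content of Lemma \ref{lemma: K5}, once that lemma is in hand the rest of the argument is a routine replay of the $\theta$-update proof, and I would expect no further obstruction.
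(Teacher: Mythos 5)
Your proposal is correct and follows essentially the same route as the paper: the identical five-term decomposition centered at $\theta^\ast$ (with $\bar{b}_t^{(m)}$ cancelling), Jensen's inequality with the factor $5$, uniform norm bounds on $\bar{A}_t^{(m)}$ and $C_t^{(m)}$ for terms (i) and (iv), the variance inequality $\Var X \le \EE X^2$ (valid because $\EE_{m,t-1}[\bar{A}_t^{(m)}] = \bar{A}^{(m)}$ under uniform sampling from the batch) for terms (ii) and (v), and Lemma \ref{lemma: K5} for the batch-bias term, which is indeed where geometric ergodicity enters. No gaps.
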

\begin{proof}
	The proof is very similar to that of Lemma \ref{lemma: GVR} and we outline the proof below. We obtain that
	\begin{align*}
		& \| H_{t}^{(m)}(\theta_{t}^{(m)}, z_{t}^{(m)})  - H_{t}^{(m)}(\tilde{\theta}^{(m-1)}, \tilde{z}^{(m-1)})  +  {H}^{(m)}(\tilde{\theta}^{(m-1)}, \tilde{z}^{(m-1)}) \|^2 \\
		\leq &  5 \|\bar{A}_{t}^{(m)} \|^2 \| \theta_{t}^{(m)} - \theta^\ast\|^2 + 5\| \Big( \bar{A}_{t}^{(m)}\tilde{\theta}^{(m-1)} - \bar{A}_{t}^{(m)} \theta^\ast \Big)- \Big(\bar{A}^{(m)}\tilde{\theta}^{(m-1)} -  \bar{A}^{(m)}\theta^\ast\Big) \|^2 + 5 \|\bar{A}^{(m)}\theta^\ast  + \bar{b}^{(m)} \|^2 \\
		\quad & + 5\|C_{t}^{(m)}\|^2 \| z_{t}^{(m)}\|^2 + 5 \| C_{t}^{(m)} \tilde{z}^{(m-1)} -  C^{(m)} \tilde{z}^{(m-1)}\|^2 .
	\end{align*}
	Taking $\EE_{m,0}$ on both sides of the above inequality yields that
	\begin{align*}
	& \EE_{m,0}\| H_{t}^{(m)}(\theta_{t}^{(m)}, z_{t}^{(m)})  - H_{t}^{(m)}(\tilde{\theta}^{(m-1)}, \tilde{z}^{(m-1)})  +  {H}^{(m)}(\tilde{\theta}^{(m-1)}, \tilde{z}^{(m-1)}) \|^2 \\
	\leq &  5 (1+\gamma)^2 \rho^2 _{\max}\Big( 1 + \frac{1}{ \min |\lambda(C)|} \Big)^2\Big( \EE_{m,0}\| \theta_{t}^{(m)} - \theta^\ast\|^2 + \EE_{m,0}\| \tilde{\theta}^{(m-1)} - \theta^\ast\|^2  \Big) \\
	& + 5\Big(   \EE_{m,0}\| z_{t}^{(m)}\|^2 + \EE_{m,0}\| \tilde{z}^{(m-1)}\|^2\Big)  + \frac{5K_5}{M}.
	\end{align*}
\end{proof}  
\section{Other Lemmas on Constant-Level Bounds}

\begin{lemma}\label{lemma: const-1} The following constant-level bounds hold.
	\begin{itemize}
		\item $\| \widehat{A}_t^{(m)} \| \leq (1+\gamma)\rho_{\max}\big(1 + \frac{\gamma \rho_{\max}}{ \min |\lambda(C)|}\big)$
		\item $\| \widehat{b}^{(m)}\| \leq  \rho_{\max} r_{\max}\big(1 + \frac{\gamma \rho_{\max}}{ \min |\lambda(C)|}\big)$
		\item $\|\bar{A}_t^{(m)} \| \leq (1+\gamma)\rho_{\max}\big( 1 + \frac{1}{ \min |\lambda(C)|} \big)$
		\item $\|\bar{b}_t^{(m)} \| \leq  \rho_{\max} r_{\max}\big( 1 + \frac{1}{ \min |\lambda(C)|} \big)$
	\end{itemize} 
\end{lemma}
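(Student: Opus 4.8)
The plan is to reduce each of the four bounds to elementary per-sample estimates on $A_t^{(m)}, B_t^{(m)}, C_t^{(m)}, b_t^{(m)}$, combined with submultiplicativity and the triangle inequality applied to the definitions $\widehat{A}_t^{(m)} = A_t^{(m)} - B_t^{(m)} C^{-1} A$, $\hat{b}_t^{(m)} = b_t^{(m)} - B_t^{(m)} C^{-1} b$, $\bar{A}_t^{(m)} = A_t^{(m)} - C_t^{(m)} C^{-1} A$, and $\bar{b}_t^{(m)} = b_t^{(m)} - C_t^{(m)} C^{-1} b$ introduced in the Additional Notations section.

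First I would record the building-block bounds. Using $\|\phi(s)\| \le 1$, $r(s,a,s') \le r_{\max}$, and $\rho(s,a) \le \rho_{\max}$ from \Cref{ass: boundedness}, together with the identity $\|uv^\top\| = \|u\|\,\|v\|$ for the spectral norm, the definitions in \cref{eq:def_abc} give $\|A_t^{(m)}\| \le \rho_{\max}\|\phi(s_t)\|\,\|\gamma\phi(s_{t+1}) - \phi(s_t)\| \le (1+\gamma)\rho_{\max}$, and similarly $\|B_t^{(m)}\| \le \gamma\rho_{\max}$, $\|C_t^{(m)}\| \le 1$, and $\|b_t^{(m)}\| \le r_{\max}\rho_{\max}$. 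Taking expectations and invoking convexity of the norm (Jensen) yields the same bounds for the population quantities, in particular $\|A\| \le (1+\gamma)\rho_{\max}$ and $\|b\| \le r_{\max}\rho_{\max}$. Finally, since $C = -\EE_{\mu_{\pi_b}}[\phi(s)\phi(s)^\top]$ is symmetric and invertible under \Cref{ass: solvability}, its operator norm satisfies $\|C^{-1}\| = 1/\min|\lambda(C)|$.

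With these in hand, each of the four claims follows from a single triangle-inequality/submultiplicativity step. For the first, $\|\widehat{A}_t^{(m)}\| \le \|A_t^{(m)}\| + \|B_t^{(m)}\|\,\|C^{-1}\|\,\|A\| \le (1+\gamma)\rho_{\max} + \gamma\rho_{\max}\cdot \tfrac{1}{\min|\lambda(C)|}\cdot(1+\gamma)\rho_{\max}$, which factors as $(1+\gamma)\rho_{\max}\big(1 + \tfrac{\gamma \rho_{\max}}{\min|\lambda(C)|}\big)$. The bound on $\widehat{b}^{(m)}$ is identical with $\|b_t^{(m)}\|$ and $\|b\|$ replacing $\|A_t^{(m)}\|$ and $\|A\|$, giving $r_{\max}\rho_{\max}\big(1 + \tfrac{\gamma \rho_{\max}}{\min|\lambda(C)|}\big)$, while the $\bar{A}_t^{(m)}, \bar{b}_t^{(m)}$ bounds replace the factor $\|B_t^{(m)}\| = \gamma\rho_{\max}$ by $\|C_t^{(m)}\| = 1$, producing the factor $1 + \tfrac{1}{\min|\lambda(C)|}$. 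Where a quantity is written without a $t$-subscript (an averaged version over the batch), convexity of the norm again reduces it to the per-sample bound.

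The argument is essentially routine bookkeeping, so there is no substantive obstacle; the only point meriting care is the identification $\|C^{-1}\| = 1/\min|\lambda(C)|$, which relies on $C$ being symmetric so that its singular values coincide with the absolute values of its eigenvalues. Everything else is a direct application of the triangle inequality and the factorization $\|uv^\top\| = \|u\|\,\|v\|$.
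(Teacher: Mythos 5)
Your proposal is correct and follows essentially the same route as the paper: per-sample bounds on $A_t^{(m)}, B_t^{(m)}, C_t^{(m)}, b_t^{(m)}$ from \Cref{ass: boundedness}, then the triangle inequality plus submultiplicativity applied to the definitions $\widehat{A}_t^{(m)} = A_t^{(m)} - B_t^{(m)}C^{-1}A$, etc., with $\|C^{-1}\| \le 1/\min|\lambda(C)|$. Your added care in justifying $\|C^{-1}\| = 1/\min|\lambda(C)|$ via symmetry of $C$, and in handling the batch-averaged quantity via convexity of the norm, are small refinements of details the paper leaves implicit.
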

\begin{proof}
	We prove the first inequality as an example. For the rest of them, we omit the proof details. Recall that $\widehat{A}_t^{(m)} :=A_{t}^{(m)} - B_{t}^{(m)} C^{-1}A$, from which we obtain that
	\begin{align*}
	\| \widehat{A}_t^{(m)} \| &= \|A_{t}^{(m)} - B_{t}^{(m)} C^{-1}A\| \\
	&\leq \| A_{t}^{(m)}\| + \| B_{t}^{(m)}\| \| C^{-1}\| \|A \| \\
	&\leq (1+\gamma)\rho_{\max} + \gamma \rho_{\max} \frac{1}{ \min |\lambda(C)|} (1+\gamma)\rho_{\max}\\
	&= (1+\gamma)\rho_{\max}\big(1 + \frac{\gamma \rho_{\max}}{ \min |\lambda(C)|}\big),
	\end{align*}
	where the third step applies Assumption \ref{ass: boundedness} to the definitions in \cref{eq:def_abc}, more precisely, $\| A_{t}^{(m)}\| = \| \rho(s_t^{(m)}, a_t^{(m)}) \phi(s_t^{(m)})(\gamma \phi(s_{t+1}^{(m)} - \phi(s_t^{(m)} )^\top \| \leq (1+\gamma) \rho_{\max}$ and $\| B_{t}^{(m)}\| = \| - \gamma \rho(s_t^{(m)}, a_t^{(m)}) \phi(s_{t+1}^{(m)}) \phi(s_t^{(m)})^\top \|$. Similarly, we can obtain the following results: 
	\begin{align*}
	\| \widehat{b}_t^{(m)}\| &= \|b_{t}^{(m)} - B_{t}^{(m)} C^{-1} b \| \\
	&\leq \|b_{t}^{(m)}\| + \| B_{t}^{(m)} \| \| C^{-1}\| \| b \| \\
	&\leq \rho_{\max} r_{\max} + \gamma \rho_{\max}   \frac{1}{ \min |\lambda(C)|} \rho_{\max} r_{\max} \\
	&=  \rho_{\max} r_{\max}\big(1 + \frac{\gamma \rho_{\max}}{ \min |\lambda(C)|}\big);
	\end{align*} 
	\begin{align*}
		\|\bar{A}_t^{(m)} \| &= \|A_{t}^{(m)} - C_{t}^{(m)} C^{-1}A\| \\
		&\leq (1+\gamma)\rho_{\max} +  \frac{1}{ \min |\lambda(C)|} (1+\gamma)\rho_{\max}\\
		&= (1+\gamma)\rho_{\max}\big( 1 + \frac{1}{ \min |\lambda(C)|} \big);
	\end{align*}  
	\begin{align*}
	\|\bar{b}_t^{(m)} \| &= b_{t}^{(m)} - C_{t}^{(m)} C^{-1} b \\
	&\leq \rho_{\max} r_{\max} +  \frac{1}{ \min |\lambda(C)|} \rho_{\max} r_{\max} \\
	&= \rho_{\max} r_{\max}\big( 1 + \frac{1}{ \min |\lambda(C)|} \big).
	\end{align*} 
\end{proof} 
\begin{lemma}\label{lemma: const-2}The following constant-level bound holds.
\begin{align*}
    \|\widehat{A}_t^{(m)}\theta^\ast  + \widehat{b}_t^{(m)} \| \leq \big[ (1+\gamma)R_\theta + r_{\max} \big]\rho_{\max}\big(1 + \frac{\gamma \rho_{\max}}{ \min |\lambda(C)|}\big).
\end{align*} 
\end{lemma}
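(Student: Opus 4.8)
The plan is to reduce the claim to the per-sample operator-norm bounds already recorded in Lemma \ref{lemma: const-1}, combined with the triangle inequality and the standing assumption on the projection radius $R_\theta$. The quantity $\widehat{A}_t^{(m)}\theta^\ast + \widehat{b}_t^{(m)}$ is an affine image of $\theta^\ast$, so the natural route is simply to estimate each piece by its norm and collect constants.

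First I would apply the triangle inequality together with submultiplicativity of the operator norm to split
\begin{align*}
\|\widehat{A}_t^{(m)}\theta^\ast + \widehat{b}_t^{(m)}\| \le \|\widehat{A}_t^{(m)}\|\,\|\theta^\ast\| + \|\widehat{b}_t^{(m)}\|.
\end{align*}
Next I would bound $\|\theta^\ast\| \le R_\theta$, which holds because the paper assumes $R_\theta \ge \max\{\|A\|\|b\|, \|\theta^\ast\|\}$, so $\theta^\ast$ lies in the Euclidean ball of radius $R_\theta$. Then I would invoke the first two bullets of Lemma \ref{lemma: const-1}, namely $\|\widehat{A}_t^{(m)}\| \le (1+\gamma)\rho_{\max}\big(1 + \frac{\gamma\rho_{\max}}{\min|\lambda(C)|}\big)$ and $\|\widehat{b}_t^{(m)}\| \le \rho_{\max}r_{\max}\big(1 + \frac{\gamma\rho_{\max}}{\min|\lambda(C)|}\big)$. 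Substituting these and factoring out the common factor $\rho_{\max}\big(1 + \frac{\gamma\rho_{\max}}{\min|\lambda(C)|}\big)$ yields exactly $\big[(1+\gamma)R_\theta + r_{\max}\big]\rho_{\max}\big(1 + \frac{\gamma\rho_{\max}}{\min|\lambda(C)|}\big)$, the asserted bound.

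There is no genuine obstacle: the entire argument is a one-line triangle-inequality estimate once Lemma \ref{lemma: const-1} is in hand. The only point worth noting is that the optimality relation $\theta^\ast = -A^{-1}b$ from Assumption \ref{ass: solvability} gives $A\theta^\ast + b = 0$, so one could alternatively simplify $\widehat{A}_t^{(m)}\theta^\ast + \widehat{b}_t^{(m)} = A_t^{(m)}\theta^\ast + b_t^{(m)}$ and bound that directly; this sharper route produces the even smaller constant $\big[(1+\gamma)R_\theta + r_{\max}\big]\rho_{\max}$, which confirms that the stated bound is a valid (slightly loose) upper bound and serves as a useful sanity check.
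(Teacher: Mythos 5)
Your proof is correct and takes essentially the same approach as the paper's: the triangle inequality with $\|\theta^\ast\|\le R_\theta$ together with the operator-norm bounds of Lemma \ref{lemma: const-1}. Your side remark that $A\theta^\ast+b=0$ gives $\widehat{A}_t^{(m)}\theta^\ast+\widehat{b}_t^{(m)}=A_t^{(m)}\theta^\ast+b_t^{(m)}$ and hence an even sharper constant is also valid, though the paper does not exploit it here.
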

\begin{proof}
	Combining Lemma \ref{lemma: const-1} and the assumption that $\|\theta^\ast\| \leq R_\theta$, we have
	\begin{align*}
		\|\widehat{A}_t^{(m)}\theta^\ast  + \widehat{b}_t^{(m)} \| &\leq \|\widehat{A}_t^{(m)} \| R_\theta + \| \widehat{b}_t^{(m)}\| \\
		&\leq (1+\gamma)\rho_{\max}\big(1 + \frac{\gamma \rho_{\max}}{ \min |\lambda(C)|}\big) R_\theta +  \rho_{\max} r_{\max}\big(1 + \frac{\gamma \rho_{\max}}{ \min |\lambda(C)|}\big) \\
		&= \big[ (1+\gamma)R_\theta + r_{\max} \big]\rho_{\max}\big(1 + \frac{\gamma \rho_{\max}}{ \min |\lambda(C)|}\big).
	\end{align*}
\end{proof}

\begin{lemma}\label{lemma: const-3}The following constant-level bound holds.
\begin{align*}
    \|\bar{A}^{(m)}\theta^\ast  + \bar{b}^{(m)} \| \leq \big[(1+\gamma)R_\theta + r_{\max}\big]\rho_{\max}\big( 1 + \frac{1}{ \min |\lambda(C)|} \big)
\end{align*} 
\end{lemma}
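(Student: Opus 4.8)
The plan is to mirror the proof of Lemma~\ref{lemma: const-2} almost verbatim, replacing the hatted quantities $\widehat{A}, \widehat{b}$ by the barred ones $\bar{A}, \bar{b}$ and invoking the corresponding pieces of Lemma~\ref{lemma: const-1}. First I would apply the triangle inequality together with the sub-multiplicativity of the operator norm to split
\begin{align*}
\|\bar{A}^{(m)}\theta^\ast + \bar{b}^{(m)}\| \leq \|\bar{A}^{(m)}\|\,\|\theta^\ast\| + \|\bar{b}^{(m)}\|,
\end{align*}
and then use the standing assumption $\|\theta^\ast\| \leq R_\theta$ (guaranteed by $R_\theta \geq \max\{\|A\|\|b\|, \|\theta^\ast\|\}$) to replace $\|\theta^\ast\|$ by $R_\theta$.

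Next I would bound the two operator norms. Here the only point worth noting is that $\bar{A}^{(m)}$ denotes the batch average $\frac{1}{M}\sum_{t} \bar{A}_t^{(m)}$ (and likewise for $\bar{b}^{(m)}$), so the per-sample bounds of Lemma~\ref{lemma: const-1} transfer to the average by one more application of the triangle inequality: since $\|\bar{A}_t^{(m)}\| \leq (1+\gamma)\rho_{\max}\big(1 + \tfrac{1}{\min|\lambda(C)|}\big)$ uniformly in $t$, the convex combination $\bar{A}^{(m)}$ obeys the same bound, and similarly $\|\bar{b}^{(m)}\| \leq \rho_{\max} r_{\max}\big(1 + \tfrac{1}{\min|\lambda(C)|}\big)$.

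Finally I would substitute these two estimates into the split and factor out the common term $\rho_{\max}\big(1 + \tfrac{1}{\min|\lambda(C)|}\big)$, obtaining
\begin{align*}
\|\bar{A}^{(m)}\theta^\ast + \bar{b}^{(m)}\| \leq \big[(1+\gamma)R_\theta + r_{\max}\big]\rho_{\max}\Big(1 + \tfrac{1}{\min|\lambda(C)|}\Big),
\end{align*}
which is exactly the claimed bound. There is essentially no obstacle here: the argument is a purely algebraic triangle-inequality computation, and all the nontrivial work has already been done in Lemma~\ref{lemma: const-1}. The closest thing to a subtlety is keeping track of whether the barred matrices carry the subscript $t$ (single sample) or not (batch average); since Lemma~\ref{lemma: const-1} furnishes a uniform-in-$t$ bound, both readings yield the same final constant, so the distinction does not affect the conclusion.
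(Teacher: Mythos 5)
Your proof is correct and takes essentially the same approach as the paper: triangle inequality with submultiplicativity, the per-sample bounds from Lemma \ref{lemma: const-1}, the assumption $\|\theta^\ast\|\le R_\theta$, and factoring out $\rho_{\max}\big(1+\frac{1}{\min|\lambda(C)|}\big)$. If anything, your explicit remark that the bound transfers from $\bar{A}_t^{(m)}$ to the batch average $\bar{A}^{(m)}$ by convexity is slightly more careful than the paper's proof, which silently replaces the averaged quantities with per-sample norms on the right-hand side.
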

\begin{proof}
	Combining Lemma \ref{lemma: const-1} and $\|\theta^\ast\| \leq R_\theta$, we have
	\begin{align*}
		\|\bar{A}^{(m)}\theta^\ast  + \bar{b}^{(m)} \| &\leq \|\bar{A}_t^{(m)} \| R_\theta + \| \bar{b}_t^{(m)}\| \\
		&\leq (1+\gamma)\rho_{\max}\big( 1 + \frac{1}{ \min |\lambda(C)|} \big) R_\theta + \rho_{\max} r_{\max}\big( 1 + \frac{1}{ \min |\lambda(C)|} \big)\\
		&= \big[(1+\gamma)R_\theta + r_{\max}\big]\rho_{\max}\big( 1 + \frac{1}{ \min |\lambda(C)|} \big).
	\end{align*}
\end{proof}

\begin{lemma}[One-Step Update of $\theta_{t}^{(m)}$] \label{lemma: GVR-const} The upper bound of one-step update of $\theta_{t}^{(m)}$ given in both Algorithm \ref{alg: iid} and Algorithm \ref{alg: markov} is given by
\begin{align*}
    \| G_{t}^{(m)}(\theta_{t}^{(m)}, z_{t}^{(m)})  - G_{t}^{(m)}(\tilde{\theta}^{(m-1)}, \tilde{z}^{(m-1)})  +  {G}^{(m)}(\tilde{\theta}^{(m-1)}, \tilde{z}^{(m-1)}) \| \leq G_{\text{VR}},
\end{align*} 
	where $G_{\text{VR}}$ is defined as 
	\begin{align} \label{eq: def-G-VR}
G_{\text{VR}} := 3 \big[ (1+\gamma)R_\theta + r_{\max} \big]\rho_{\max}\big(1 + \frac{\gamma \rho_{\max}}{ \min |\lambda(C)|}\big).	
\end{align}
\end{lemma}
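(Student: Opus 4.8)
The plan is to establish this purely as a deterministic, worst-case bound: unlike the earlier lemmas, no conditional expectation is needed, and the argument relies only on the uniform boundedness in \Cref{ass: boundedness}, which is why the single statement covers both \Cref{alg: iid} and \Cref{alg: markov}. First I would apply the triangle inequality to split the variance-reduced direction into its three constituent pseudo-gradients, $\|G_t^{(m)}(\theta_t^{(m)},z_t^{(m)}) - G_t^{(m)}(\tilde\theta^{(m-1)},\tilde z^{(m-1)}) + G^{(m)}(\tilde\theta^{(m-1)},\tilde z^{(m-1)})\| \le \|G_t^{(m)}(\theta_t^{(m)},z_t^{(m)})\| + \|G_t^{(m)}(\tilde\theta^{(m-1)},\tilde z^{(m-1)})\| + \|G^{(m)}(\tilde\theta^{(m-1)},\tilde z^{(m-1)})\|$, which is exactly where the factor $3$ in $G_{\text{VR}}$ originates. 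It then suffices to show that every term of the form $\|G_x^{(m)}(\theta,z)\| = \|\widehat A_x^{(m)}\theta + \hat b_x^{(m)} + B_x^{(m)} z\|$ evaluated at a feasible pair is at most $[(1+\gamma)R_\theta + r_{\max}]\rho_{\max}(1 + \frac{\gamma\rho_{\max}}{\min|\lambda(C)|})$; the full-batch term $G^{(m)}(\tilde\theta^{(m-1)},\tilde z^{(m-1)})$ is an average of such single-sample pseudo-gradients, so by convexity of the norm it inherits the same bound.

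Next I would bound a single pseudo-gradient term. The cleanest route is to pass back to the original $w$-coordinates via the identity $\widehat A_x^{(m)}\theta + \hat b_x^{(m)} + B_x^{(m)} z = A_x^{(m)}\theta + b_x^{(m)} + B_x^{(m)} w$ with $w = z - C^{-1}(b + A\theta)$, and then apply the triangle inequality together with the elementary operator-norm estimates $\|A_x^{(m)}\| \le (1+\gamma)\rho_{\max}$, $\|b_x^{(m)}\| \le \rho_{\max}r_{\max}$, $\|B_x^{(m)}\| \le \gamma\rho_{\max}$ (all from \Cref{ass: boundedness} applied to the definitions in \cref{eq:def_abc}) and the feasibility $\|\theta\| \le R_\theta$ guaranteed by the projection $\Pi_{R_\theta}$. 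Equivalently, staying in $z$-coordinates, I would invoke \Cref{lemma: const-1} to get $\|\widehat A_x^{(m)}\| \le (1+\gamma)\rho_{\max}(1+\frac{\gamma\rho_{\max}}{\min|\lambda(C)|})$ and $\|\hat b_x^{(m)}\| \le \rho_{\max}r_{\max}(1+\frac{\gamma\rho_{\max}}{\min|\lambda(C)|})$, so that $\|\widehat A_x^{(m)}\theta + \hat b_x^{(m)}\|$ already matches the target constant — this is precisely the estimate behind \Cref{lemma: const-2} — and then separately account for the residual $\|B_x^{(m)} z\|$.

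The main obstacle is controlling this last tracking-error contribution $B_x^{(m)} z$ so that it is absorbed into the factor $(1+\frac{\gamma\rho_{\max}}{\min|\lambda(C)|})$ rather than producing a loose $R_w$-dependent term. The key observation is that the fast-variable target satisfies $\|C^{-1}(b + A\theta)\| \le \|C^{-1}\|(\|b\| + \|A\|\|\theta\|) \le \frac{\rho_{\max}[(1+\gamma)R_\theta + r_{\max}]}{\min|\lambda(C)|}$ by \Cref{ass: boundedness} and $\|\theta\|\le R_\theta$; combined with the standing requirement $R_w \ge 2\|C^{-1}\|\|A\|R_\theta$ that calibrates the projection radius to the $\theta$-scale, this ties $\|z\|$ (equivalently $\|w\|$) to the same $\theta$-scale, so that $\gamma\rho_{\max}\|z\|$ collapses into the claimed correction factor. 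Verifying this calibration — that the chosen radii $R_\theta,R_w$ make the single-term estimate exactly the \Cref{lemma: const-2} constant — is the only delicate point; once it is in place, summing the three identical bounds yields $G_{\text{VR}}$, and the companion $H_{\text{VR}}$ follows by the structurally identical argument (replacing $\widehat A,B$ with $\bar A,C$ and invoking \Cref{lemma: const-3}).
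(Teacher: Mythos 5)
Your skeleton (triangle inequality splitting the variance-reduced direction into three pseudo-gradient terms, a per-sample constant bound, and Jensen's inequality for the batch term $G^{(m)}$) is the same as the paper's, and the factor $3$ arises identically in both. The genuine gap is your "absorption" step for the residual $\|B_x^{(m)}z\|$, and it cannot be repaired as described. First, the condition $R_w \geq 2\|C^{-1}\|\|A\|R_\theta$ is a \emph{lower} bound on the projection radius (its role in the paper is to guarantee that points of the form $-C^{-1}(b+A\theta)$ lie inside the $R_w$-ball, e.g.\ in \Cref{lemma: iid conv-z}); it gives no upper control on $\|w_t^{(m)}\|$ beyond $\|w_t^{(m)}\|\leq R_w$, and $R_w$ may be taken arbitrarily large, so it cannot "tie $\|z\|$ to the $\theta$-scale" --- the inequality points the wrong way for your purpose. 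Second, there is no budget left to absorb anything: in your $z$-coordinate route, \Cref{lemma: const-1} gives $\|\widehat A_x^{(m)}\theta + \hat b_x^{(m)}\| \leq [(1+\gamma)R_\theta + r_{\max}]\rho_{\max}\big(1+\tfrac{\gamma\rho_{\max}}{\min|\lambda(C)|}\big)$, which already equals the \emph{entire} per-term target, so $\gamma\rho_{\max}\|z\|$ would have to be bounded by zero; in your $w$-coordinate route the leftover budget $\gamma\rho_{\max}\cdot\tfrac{\rho_{\max}[(1+\gamma)R_\theta+r_{\max}]}{\min|\lambda(C)|}$ is exactly consumed by the single point $w=-C^{-1}(b+A\theta)$ (i.e.\ $z=0$), leaving nothing for a generic feasible $w$. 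Even with the minimal admissible radius $R_w = 2\|C^{-1}\|\|A\|R_\theta$, one only gets $\|z\| \leq \|w\| + \|C^{-1}(b+A\theta)\| \leq \|C^{-1}\|(3\|A\|R_\theta + \|b\|)$, and the resulting estimate for $\gamma\rho_{\max}\|z\|$ overshoots the budget by $2\gamma\rho_{\max}^2(1+\gamma)R_\theta/\min|\lambda(C)|$.

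For comparison: the paper's own proof takes your first two steps verbatim, but then simply asserts $\|G_t^{(m)}(\theta,z)\| \leq [(1+\gamma)R_\theta + r_{\max}]\rho_{\max}\big(1+\tfrac{\gamma\rho_{\max}}{\min|\lambda(C)|}\big)$ by citing \Cref{lemma: const-2}, i.e.\ it drops the $B_t^{(m)}z$ contribution without comment. So the term you singled out as the delicate point is exactly the term the paper never controls; your diagnosis is sharper than the paper's proof, but your cure does not close it. Any bound that honestly tracks the tracking error has the form $\|G_t^{(m)}(\theta_t^{(m)},z_t^{(m)})\| \leq [(1+\gamma)R_\theta + r_{\max}]\rho_{\max}\big(1+\tfrac{\gamma\rho_{\max}}{\min|\lambda(C)|}\big) + \gamma\rho_{\max}\|z_t^{(m)}\|$ with $\|z_t^{(m)}\| \leq R_w + \tfrac{\rho_{\max}[(1+\gamma)R_\theta+r_{\max}]}{\min|\lambda(C)|}$, and therefore necessarily carries an additive $\gamma\rho_{\max}R_w$-type term that is not present in $G_{\text{VR}}$; no choice of calibration between $R_\theta$ and $R_w$ removes it.
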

\begin{proof} Recall that $G_{t}^{(m)}(\theta , z ):= \widehat{A}_t^{(m)} \theta +  \widehat{b}_t^{(m)} + C_t^{(m)} w$. Using Lemma \ref{lemma: const-2} and Lemma \ref{lemma: const-3}, we obtain that
	$$\|G_{t}^{(m)}(\theta , z )\| \leq \big[ (1+\gamma)R_\theta + r_{\max} \big]\rho_{\max}\big(1 + \frac{\gamma \rho_{\max}}{ \min |\lambda(C)|}\big).$$
	By the definition of ${G}^{(m)}(\tilde{\theta}^{(m-1)}, \tilde{z}^{(m-1)})$ and Jensen's inequality, we obtain that
	$$\|{G}^{(m)}(\tilde{\theta}^{(m-1)}, \tilde{z}^{(m-1)}) \| \leq \big[ (1+\gamma)R_\theta + r_{\max} \big]\rho_{\max}\big(1 + \frac{\gamma \rho_{\max}}{ \min |\lambda(C)|}\big).$$
	Combining the above upper bounds for $\|G_{t}^{(m)}(\theta , z )\|$ and $\|{G}^{(m)}(\tilde{\theta}^{(m-1)}, \tilde{z}^{(m-1)}) \|$, we further obtain that
	\begin{align*}
		& \| G_{t}^{(m)}(\theta_{t}^{(m)}, z_{t}^{(m)})  - G_{t}^{(m)}(\tilde{\theta}^{(m-1)}, \tilde{z}^{(m-1)})  +  {G}^{(m)}(\tilde{\theta}^{(m-1)}, \tilde{z}^{(m-1)}) \| \\
		\leq& \| G_{t}^{(m)}(\theta_{t}^{(m)}, z_{t}^{(m)}) \| + \|G_{t}^{(m)}(\tilde{\theta}^{(m-1)}, \tilde{z}^{(m-1)})  \| +\| {G}^{(m)}(\tilde{\theta}^{(m-1)}, \tilde{z}^{(m-1)}) \|\\ 
		\leq& 3 \big[ (1+\gamma)R_\theta + r_{\max} \big]\rho_{\max}\big(1 + \frac{\gamma \rho_{\max}}{ \min |\lambda(C)|}\big).
	\end{align*}
\end{proof}

\begin{lemma}[One-Step Update of $w_{t}^{(m)}$] \label{lemma: HVR-const} The upper bound of one-step update of $w_{t}^{(m)}$ given in both Algorithm \ref{alg: iid} and Algorithm \ref{alg: markov} is given by
\begin{align*}
    \| H_{t}^{(m)}(\theta_{t}^{(m)}, z_{t}^{(m)})  - H_{t}^{(m)}(\tilde{\theta}^{(m-1)}, \tilde{z}^{(m-1)})  +  {H}^{(m)}(\tilde{\theta}^{(m-1)}, \tilde{z}^{(m-1)}) \| \leq H_{\text{VR}}
\end{align*}
	where $H_{\text{VR}}$ is defined as 
	\begin{align}\label{eq: def-H-VR}
	    H_{\text{VR}} := 3 \big[(1+\gamma)R_\theta + r_{\max}\big]\rho_{\max}\big( 1 + \frac{1}{ \min |\lambda(C)|} \big).
	\end{align}
\end{lemma}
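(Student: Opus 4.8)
The plan is to mirror the proof of Lemma~\ref{lemma: GVR-const} essentially verbatim, replacing the $\theta$-increment $G_t^{(m)}$ by the $w$-increment $H_t^{(m)}$ and the matrices $\widehat{A}_t^{(m)}, \widehat{b}_t^{(m)}, B_t^{(m)}$ by their counterparts $\bar{A}_t^{(m)}, \bar{b}_t^{(m)}, C_t^{(m)}$ from the change of variables in the Additional Notations section. First I would recall that $H_t^{(m)}(\theta, z) = \bar{A}_t^{(m)}\theta + \bar{b}_t^{(m)} + C_t^{(m)} z$, which by construction equals the original TDC $w$-increment $A_t^{(m)}\theta + b_t^{(m)} + C_t^{(m)} w$ evaluated at the shifted variable $w = z - C^{-1}(b + A\theta)$.

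The core of the argument is a single uniform constant-level bound: for every feasible pair $(\theta, z)$ produced by the projected iterates, $\|H_t^{(m)}(\theta, z)\| \le [(1+\gamma)R_\theta + r_{\max}]\rho_{\max}\big(1 + \tfrac{1}{\min|\lambda(C)|}\big)$. I would obtain this exactly as in Lemma~\ref{lemma: const-3}: bound $\|\bar{A}_t^{(m)}\| \le (1+\gamma)\rho_{\max}\big(1+\tfrac{1}{\min|\lambda(C)|}\big)$ and $\|\bar{b}_t^{(m)}\| \le \rho_{\max} r_{\max}\big(1+\tfrac{1}{\min|\lambda(C)|}\big)$ via Lemma~\ref{lemma: const-1}, use $\|\theta\|\le R_\theta$ supplied by the projection $\Pi_{R_\theta}$, and control the $C_t^{(m)} z$ term using $\|C_t^{(m)}\|\le 1$ together with the radius condition $R_w \ge 2\|C^{-1}\|\|A\|R_\theta$, which keeps $w$ (hence $z$) inside a fixed ball. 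The same bound applies to the batch increment ${H}^{(m)}(\tilde{\theta}^{(m-1)}, \tilde{z}^{(m-1)}) = \frac{1}{M}\sum_{x\in B_m} H_x^{(m)}(\tilde{\theta}^{(m-1)}, \tilde{z}^{(m-1)})$ by Jensen's inequality, since the norm is convex and each summand obeys the constant bound.

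With all three increments bounded by the same constant, I would finish by the triangle inequality applied to the decomposition
\begin{align*}
&\| H_{t}^{(m)}(\theta_{t}^{(m)}, z_{t}^{(m)}) - H_{t}^{(m)}(\tilde{\theta}^{(m-1)}, \tilde{z}^{(m-1)}) + {H}^{(m)}(\tilde{\theta}^{(m-1)}, \tilde{z}^{(m-1)}) \| \\
&\qquad \le \| H_{t}^{(m)}(\theta_{t}^{(m)}, z_{t}^{(m)})\| + \|H_{t}^{(m)}(\tilde{\theta}^{(m-1)}, \tilde{z}^{(m-1)})\| + \|{H}^{(m)}(\tilde{\theta}^{(m-1)}, \tilde{z}^{(m-1)})\|,
\end{align*}
which produces the factor $3$ and yields $H_{\text{VR}} = 3[(1+\gamma)R_\theta + r_{\max}]\rho_{\max}\big(1+\tfrac{1}{\min|\lambda(C)|}\big)$.

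The only genuinely non-routine point — the main obstacle — is verifying the uniform constant bound on $\|H_t^{(m)}(\theta,z)\|$ \emph{independently of the argument}. Because $z$ is the shifted variable $w + C^{-1}(b + A\theta)$ rather than a directly projected quantity, I must argue that projecting $w$ onto the radius-$R_w$ ball, combined with $R_w \ge 2\|C^{-1}\|\|A\|R_\theta$, guarantees an a~priori bound on the $C_t^{(m)} z$ contribution; this is precisely the role of the radius assumption, and it is what lets the bound collapse to the clean $\theta^\ast$-free constant of Lemma~\ref{lemma: const-3}. Everything else is a direct application of Assumption~\ref{ass: boundedness} and the triangle inequality.
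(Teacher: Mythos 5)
Your proposal follows the same route as the paper's own (omitted) proof: the paper proves this lemma exactly as it proves Lemma~\ref{lemma: GVR-const}, namely a triangle inequality splitting the variance-reduced update into three evaluations of $H$, each bounded by the constant of Lemma~\ref{lemma: const-3} (with Jensen's inequality handling the batch average ${H}^{(m)}$), which is where the factor $3$ comes from. Structurally, you have reproduced the intended argument.

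However, the step you yourself single out as the ``main obstacle'' is resolved incorrectly. If you genuinely include the term $C_t^{(m)} z$ and bound it by keeping $z$ in a fixed ball (via $\|w\|\le R_w$ from the projection and $\|C^{-1}(b+A\theta)\|\le \|C^{-1}\|(\|A\|R_\theta+\|b\|)$), what you obtain is
\begin{align*}
\|H_t^{(m)}(\theta,z)\| \le \big[(1+\gamma)R_\theta + r_{\max}\big]\rho_{\max}\Big(1+\tfrac{1}{\min|\lambda(C)|}\Big) + R_w + \|C^{-1}\|\big(\|A\|R_\theta+\|b\|\big),
\end{align*}
which is strictly larger than $H_{\text{VR}}/3$: the radius condition $R_w \ge 2\|C^{-1}\|\|A\|R_\theta$ keeps the extra term finite, but nothing makes it vanish, so the per-evaluation bound cannot ``collapse'' to the $\theta^\ast$-free constant of Lemma~\ref{lemma: const-3} as you claim, and tripling it would exceed the stated $H_{\text{VR}}$. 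For what it is worth, the paper's own proof of Lemma~\ref{lemma: GVR-const} (and hence of this lemma) silently drops this $z$-contribution: it asserts that $\|G_t^{(m)}(\theta,z)\|$ is bounded by the constant of Lemma~\ref{lemma: const-2}, which only controls $\widehat{A}_t^{(m)}\theta^\ast+\widehat{b}_t^{(m)}$. So your decomposition and use of Jensen match the paper, but your justification of the key per-evaluation bound actually proves a weaker statement (a larger constant) than the lemma asserts; to literally arrive at $H_{\text{VR}}$ one must either omit the $C_t^{(m)}z$ term, as the paper implicitly does, or restate the lemma with the enlarged constant.
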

\begin{proof}
	The proof is very similar to that of Lemma \ref{lemma: GVR-const} and we omit the proof.
	\begin{align*}
	& \| H_{t}^{(m)}(\theta_{t}^{(m)}, z_{t}^{(m)})  - H_{t}^{(m)}(\tilde{\theta}^{(m-1)}, \tilde{z}^{(m-1)})  +  {H}^{(m)}(\tilde{\theta}^{(m-1)}, \tilde{z}^{(m-1)}) \| \\
	\leq& \| H_{t}^{(m)}(\theta_{t}^{(m)}, z_{t}^{(m)}) \| + \|H_{t}^{(m)}(\tilde{\theta}^{(m-1)}, \tilde{z}^{(m-1)})  \| +\| {H}^{(m)}(\tilde{\theta}^{(m-1)}, \tilde{z}^{(m-1)}) \|\\ 
	\leq& 3 \big[(1+\gamma)R_\theta + r_{\max}\big]\rho_{\max}\big( 1 + \frac{1}{ \min |\lambda(C)|} \big).
	\end{align*}
\end{proof} 

\end{document}